\newcommand{\tb}{\textbf}
\newtheorem{mythm}{Theorem}
\newtheorem{mylemma}{Lemma}
\crefname{thm}{theorem}{theorems}
\crefname{thm}{Theorem}{Theorems}
\crefname{assum}{assumption}{assumptions}
\Crefname{assum}{Assumption}{Assumptions}
\newcommand{\mb}{\mathbf}
\newcommand{\qirong}[1]{{\color{blue}{}}}
\DeclareRobustCommand\onedot{\futurelet\@let@token\@onedot}
\def\@onedot{\ifx\@let@token.\else.\null\fi\xspace}
\newcommand\ci{\perp\!\!\!\perp}
\begin{document}

\title{Latent Variable Modeling with Diversity-Inducing Mutual Angular Regularization}

\author{\name Pengtao Xie \email pengtaox@cs.cmu.edu \\
\addr Machine Learning Department\\
Carnegie Mellon University\\
Pittsburgh, PA 15213, USA
\AND
\name Yuntian Deng \email yuntiand@cs.cmu.edu \\
\addr Language Technologies Institute\\
Carnegie Mellon University\\
Pittsburgh, PA 15213, USA
\AND
\name Eric Xing \email epxing@cs.cmu.edu \\
\addr School of Computer Science\\
Carnegie Mellon University\\
Pittsburgh, PA 15213, USA}

\editor{}

\maketitle

\begin{abstract}
One central task in machine learning (ML) is to extract underlying patterns, structure and knowledge from observed data, which is essential for making effective use of big data for many applications. Among the various ML models and algorithms designed for pattern discovery, latent variable models (LVMs) are a large family of models providing a principled and effective way to uncover knowledge hidden behind data and have been widely used in text mining, computer vision, speech recognition, computational biology and recommender systems. Due to the dramatic growth of volume and complexity of data, several new challenges have emerged and cannot be effectively addressed by existing LVMs: 1) In the event that the popularity of patterns behind big data is distributed in a power-law fashion, where a few dominant patterns occur frequently whereas most patterns in the long-tail region are of low popularity, standard LVMs are inadequate to capture the long-tail patterns, which can incur significant information loss. 2) To cope with the rapidly growing complexity of patterns present in big data, ML practitioners typically increase the size and capacity of LVMs, which incurs great challenges for model training, inference, storage and maintenance --- how to reduce model complexity without compromising expressivity? 3) There exist substantial redundancy and overlapping amongst patterns discovered by existing LVMs from massive data, making them hard to interpret --- how to promote low redundancy? To addresses the three challenges discussed above, we develop a novel regularization technique for LVMs, which controls the geometry of the latent space during learning to enable the learned latent components of LVMs to be diverse in the sense that they are favored to be mutually different from each other, to accomplish long-tail coverage, low redundancy, and better interpretability. We propose a mutual angular regularizer (MAR) to encourage the components in LVMs to have larger mutual angles. The MAR is non-convex and non-smooth, entailing great challenges for optimization. To cope with this issue, we derive a smooth lower bound of the MAR and optimize the lower bound instead. We show that the monotonicity of the lower bound is closely aligned with the MAR to qualify the lower bound as a desirable surrogate of the MAR. Using neural network (NN) as an instance, we analyze how the MAR affects the generalization performance of NN. On two popular latent variable models --- restricted Boltzmann machine and distance metric learning, we demonstrate that MAR can effectively capture long-tail patterns, reduce model complexity without sacrificing expressivity and improve interpretability.
\end{abstract}

\begin{keywords}
Latent Variable Models, Mutual Angular Regularization, Diversity, Non-Convex Optimization, Generalization Error Analysis
\end{keywords}

\section{Introduction}
\label{sec:intro}
One central task in machine learning (ML) is to extract underlying patterns from observed
data \citep{bishop2006pattern,han2011data,fukunaga2013introduction}, which is essential for making effective use of big data for many applications \citep{council2013frontiers,jordan2015machine}.
For example, in the healthcare domain \citep{sun2013big}, with the
the prevalence of new technologies, data from electronic healthcare records (EHR), sensors, mobile applications, genome, social media is growing in both volume and complexity at an unexpected rate \citep{sun2013big}. Distilling high-value information such as clinical phenotypes \citep{ho2014marble,wang2015rubik}, treatment plans \citep{razali2009generating,martin2002method}, patient similarity \citep{wang2011integrating,sun2012supervised} etc., from such data is of vital importance, but highly challenging. Among the various ML models and algorithms designed for pattern discovery, latent variable models (LVMs) \citep{rabiner1989tutorial,bishop1998latent,knott1999latent,blei2003latent,hinton2006fast,
airoldi2009mixed,blei2014build} or latent space models (LSMs) \citep{rumelhart1985learning,deerwester1990indexing,olshausen1997sparse,
lee1999learning,xing2002distance} are a large family of models providing a principled and effective way to uncover knowledge hidden behind data and have been widely used in text mining \citep{deerwester1990indexing,blei2003latent}, computer vision \citep{olshausen1997sparse,lee1999learning}, speech recognition \citep{rabiner1989tutorial,hinton2012deep}, computational biology \citep{xing2007bayesian,song2009keller} and recommender systems \citep{gunawardana2008tied,koren2009matrix}. For instance, semantic-oriented distillation models such as Latent Semantic Indexing (LSI)~\citep{deerwester1990indexing}, Latent Dirichlet Allocation (LDA)~\citep{blei2003latent}, Sparse Coding~\citep{olshausen1997sparse}, have led to a number of breakthroughs in automatic extraction of topics \citep{blei2003latent}, entity types \citep{shu2009latent}, storylines \citep{ahmed2012timeline} from textual information. Multi-layer neural networks \citep{krizhevsky2012imagenet,bahdanau2014neural} and latent graphical models \citep{salakhutdinov2009deep,mohamed2011deep} have demonstrated great success in automatic learning of low-, middle- and high-level features from raw data and greatly advanced image classification \citep{krizhevsky2012imagenet}, machine translation \citep{bahdanau2014neural} and speech recognition \citep{hinton2012deep}. In healthcare domain, LVMs have also been applied to various analytic applications, including topic model for clinical notes analysis \citep{arnold2012topic,cohen2014redundancy}, restricted Boltzmann machine \citep{hinton2010practical} for patient profile modeling \citep{nguyen2013latent,tran2015learning}, distance metric learning \citep{xing2002distance} for patient similarity measure \citep{sun2012supervised,wang2011integrating}, tensor factorization \citep{cichocki2008nonnegative} for computational phenotyping \citep{ho2014marble,wang2015rubik}, to name a few.

\begin{figure}[t]
\begin{center}
\includegraphics[width=0.5\textwidth]{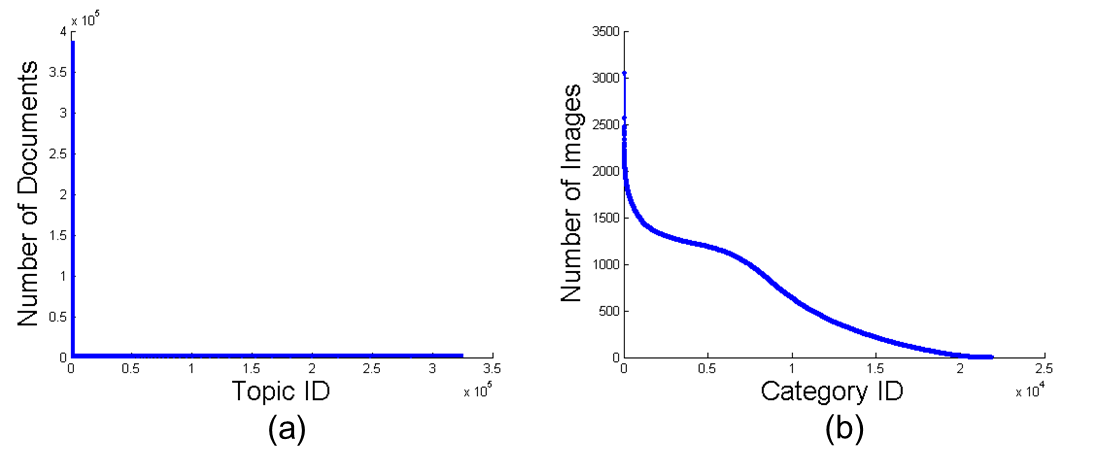}
\end{center}
\caption{(a) The number of documents belonging to each topic in the wikipedia dataset; (b) the number of images in each category of the ImageNet dataset.}
\label{fig:longtail}
\end{figure}

\begin{figure}
\begin{center}
\includegraphics[width=0.5\textwidth]{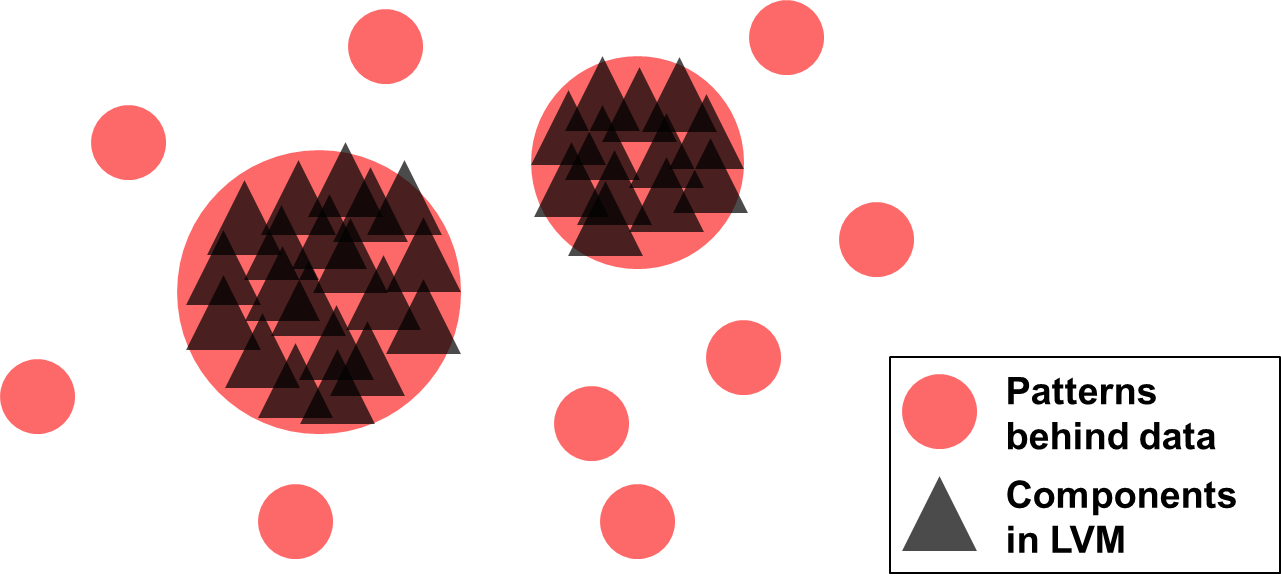}
\end{center}
\caption{Conventional LVMs: circles denote patterns in knowledge and triangles denote the components in an LVM. The size of the circle is proportional to the popularity of the pattern.}
\label{fig:lvm_fail}
\end{figure}

\begin{figure}[t]
\begin{center}
\includegraphics[width=0.5\textwidth]{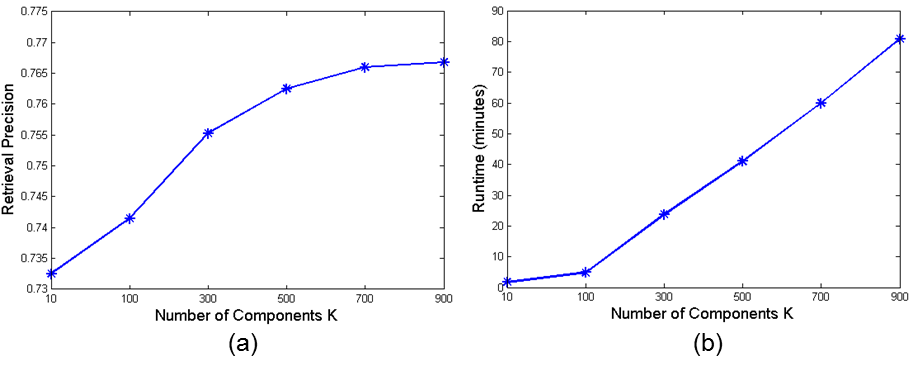}
\end{center}
\caption{ (a) Retrieval precision versus K; (b) Runtime versus K.}
\label{fig:dml}
\end{figure}

Although LVMs have now been widely used, several new challenges have emerged due to the dramatic growth of volume and complexity of data:
1) In the event that the popularity of patterns behind big data is distributed in a power-law fashion, where a few dominant patterns occur frequently whereas most patterns in the long-tail region are of low popularity \citep{wang2014peacock,xie2015diversifying}, standard LVMs are inadequate to capture the long-tail patterns, which can incur significant information loss \citep{wang2014peacock,xie2015diversifying}. For instance, Figure~\ref{fig:longtail}(a) shows the distribution of the number of documents belonging to each topic in the wikipedia \citep{partalas2015lshtc} document collection with 2.4M documents and 0.33M topics. 
Dominant topics such as politics, economics are of high frequency whereas long-tail topics such as symphony, painting are of low popularity. A possible reason for standard LVMs to be inadequate to capture the long-tail patterns may lie in the design of their objective function used for training. For example, a maximum likelihood estimator would reward itself by modeling the dominant patterns well as they are the major contributors of the likelihood function, as illustrated in Figure~\ref{fig:lvm_fail}.
Since dominant patterns denoted by these two large circles are the major contributors of the likelihood function, LVMs would allocate a number of triangles to cover the large circles as best as possible. On the other hand, the long-tail patterns denoted by the small circles contribute less to the likelihood function, thereby it is not very rewarding to model them well and LVMs tend to ignore them.
However, in practice, long-tail patterns are important and ignoring them would incur significant information loss, as we evidence below. First,
the volume of long-tail patterns can be quite large \citep{partalas2015lshtc,deng2009imagenet}. For example, in the Wikipedia dataset (Figure~\ref{fig:longtail}(a))
96.3\% topics are used by less than 100 documents.
The percentage of documents labeled with long-tail topics is 51.8\%.
Second, in some applications \citep{wang2014peacock}, the long-tail patterns can be more interesting and useful. For example, Tencent Inc applied topic models for advertisement, in one application \citep{Jin2014}, they showed that long-tail topics such as \textit{lose weight}, \textit{children nursing} improve the click-through rate by 40\%. 2) To cope with the rapidly growing complexity of patterns present in big data, ML practitioners typically increase the size and capacity of LVMs, which incurs great challenges for model training, inference, storage and maintenance \citep{xie2015learning} --- how to reduce model complexity without compromising expressivity? In LVMs the number of components $K$ incurs a tradeoff between model expressivity and complexity \citep{xie2015learning}. Under a small $K$, the model would have fewer parameters and hence of lower complexity and better computational and statistical efficiency. However, the downside is the expressivity of the model would be low. For a large $K$, the model would have high expressivity, but also high complexity and computational overhead. Figure~\ref{fig:dml} shows in a task \citep{xie2015learning} of document retrieval based on distance metric learning, how the retrieval precision and training time vary as the number of components K grows.
It is interesting to explore whether it is possible to simultaneously achieve high expressivity and low complexity under a small $K$. 3) There exist substantial redundancy and overlapping amongst patterns discovered by existing LVMs from massive data, making them hard to interpret \citep{wang2015rubik}. To better assist human to explore the data and make decisions, it is desirable to learn patterns that are interpretable \citep{wang2015rubik}.
Oftentimes, the patterns extracted by standard LVMs have a lot of redundancy and overlapping \citep{wang2015rubik}, which are ambiguous and difficult to interpret. Such a problem is especially severe in big data where the amount and complexity of both patterns and data are large. For example, in computational phenotyping from EHR, it is observed that the learned phenotypes by standard matrix and tensor factorization have much overlap, causing confusion such as two similar treatment plans are learned for the same type of disease~\citep{wang2015rubik}. It is necessary to control the latent space during learning to make the patterns distinct and interpretable.

\begin{figure}
\begin{center}
\includegraphics[width=0.5\textwidth]{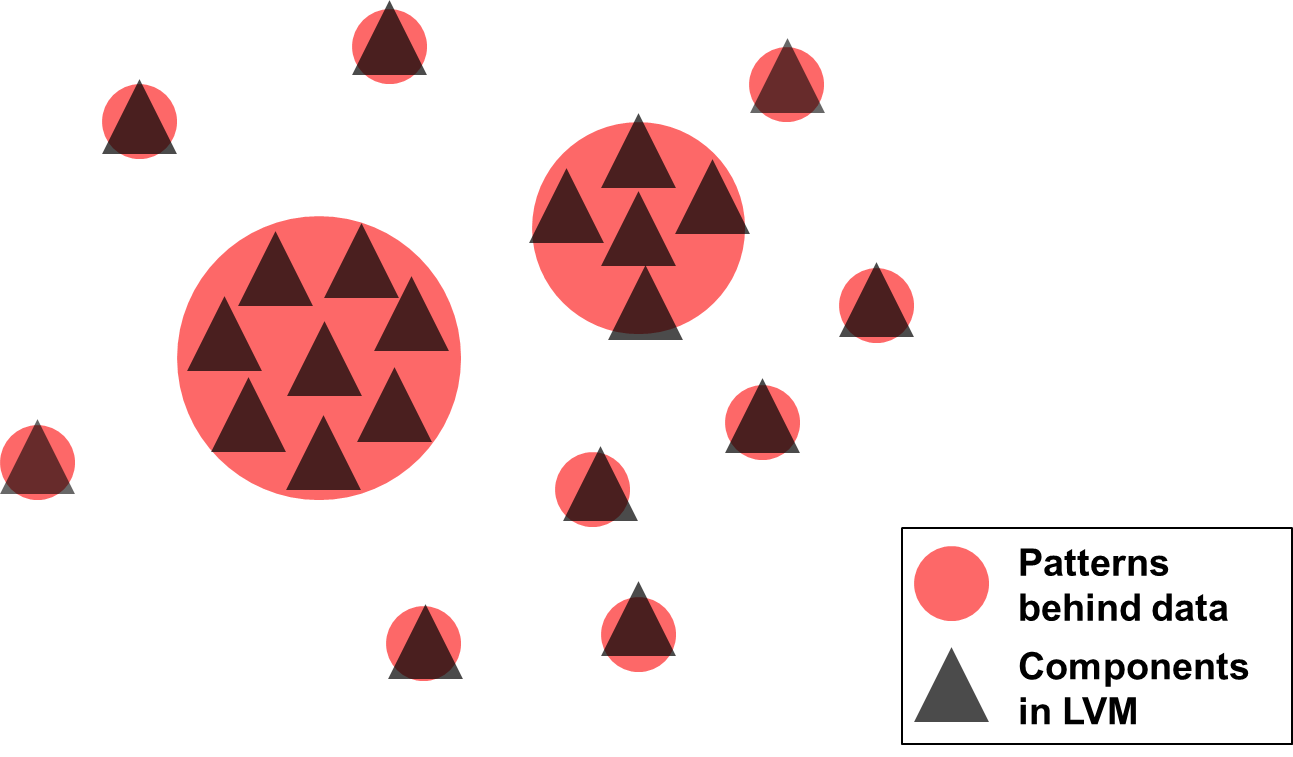}
\end{center}
\caption{ Long-tail LVMs: under diversification, the triangles (latent components) originally concentrated in the large circles (semantic patterns in data) now spread to different regions and cover the small circles.}
\label{fig:schematic_1}
\end{figure}

In this paper, we develop and investigate a novel regularization technique for LVMs, which controls the geometry of the latent space during learning, and simultaneously addresses the three challenges discussed above.
Our proposed methods enable the learned latent components of LVMs to be diverse in the sense that they are favored to be mutually "different" (in the sense to be made mathematically formal and explicit later in the proposal) from each other, to accomplish long-tail coverage, low redundancy, and better interpretability. First, concerning the long-tail phenomenon in extracting latent patterns (e.g., clusters, topics) from data: if the model components are biased to be far apart from each other, then one would expect that such components will tend to be less overlapping and less aggregated over dominant patterns (as one often experiences in standard clustering algorithms \citep{Zou_priorsfor}), and therefore more likely to capture the long-tail patterns, as illustrated in Figure~\ref{fig:schematic_1}.
Second, reducing model complexity without sacrificing expressivity: if the model components are preferred to be different from each other,
then the patterns captured by different components are likely to have less redundancy and hence complementary to each other. Consequently, it is possible to use a small set of components to sufficiently capture a large proportion of patterns, as illustrated in Figure~\ref{fig:schematic_2}. Third, improving the interpretability of the learned components: if model components are encouraged to be distinct from each other and non-overlapping, then it would be cognitively easy for human to associate each component to an object or concept in the physical world.

\begin{figure}
\begin{center}
\includegraphics[width=0.5\textwidth]{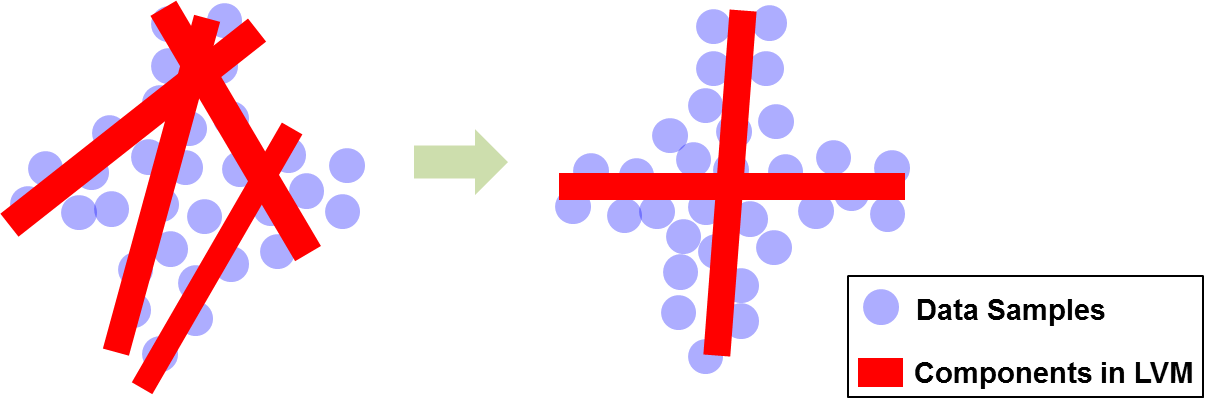}
\end{center}
\caption{ Low-Redundancy LVMs (blue circles denote data samples and red bars denote components): Without diversification, the red bars can be redundant and four bars are needed to cover the blue circles. With diversification, the red bars are forced to be different from each other and only two bars are needed to cover the circles well.
}
\label{fig:schematic_2}
\end{figure}

The major contributions of this paper are:
\begin{itemize}
\item We propose a diversity-promoting regularization approach to solve several key problems in latent variable modeling: capture long-tail patterns, reduce model complexity without compromising expressivity and improve interpretability.
\item We propose a mutual angular regularizer to encourage the components in LVMs to have larger mutual angles.
\item We develop optimization techniques to learn the mutual angle regularized LVMs, which are non-smooth and non-convex, hence are challenging.
\item Using neural network (NN) as a model instance, we analyze how the MAR affects the generalization error bounds of NN.
\item On restricted Boltzmann machine and distance metric learning, we empirically demonstrate that MAR can effectively capture long-tail patterns, reduce model complexity without sacrificing expressivity and improve interpretability.
\end{itemize}


The rest of the paper is organized as follows. In Section 2, we review related works. In Section 3, we propose the mutual angular regularizer and present the optimization techniques. Section 4 analyzes how the MAR affects the generalization errors of neural networks and Section 5 gives empirical evaluations of MAR on restricted Boltzmann machine and distance metric learning. Section 6 concludes the paper.

\section{Related Works}

\paragraph*{Latent Variable Models}
Latent Variable Models (LVMs) \citep{rabiner1989tutorial,bishop1998latent,knott1999latent,blei2003latent,hinton2006fast,
airoldi2009mixed,blei2014build} or more generally Latent Space Models (LSMs) \citep{
rumelhart1985learning,deerwester1990indexing,olshausen1997sparse,
lee1999learning,xing2002distance} are a large family of models in machine learning that are widely utilized for various application domains such as natural language processing \citep{blei2003latent,petrov2007discriminative}, computer vision \citep{fei2005bayesian,krizhevsky2012imagenet}, speech recognition \citep{rabiner1989tutorial,mohamed2011deep}, computational biology \citep{xing2007bayesian,song2009keller}, recommender systems \citep{mnih2007probabilistic,salakhutdinov2007restricted}, social network analysis \citep{airoldi2009mixed,ho2012triangular} and so on.
The utilities of LVMs/LSMs include but not limited to: 1) representation learning (Deep Neural Networks \citep{krizhevsky2012imagenet}, Deep Belief Network \citep{mohamed2011deep}, Restricted Boltzmann Machine \citep{hinton2010practical}); 2) semantic distillation (Latent Semantic Indexing \citep{deerwester1990indexing}, topic models \citep{blei2003latent}, Sparse Coding \citep{olshausen1997sparse}); 3) dimension reduction (Factor Analysis \citep{knott1999latent}, Principal Component Analysis \citep{jolliffe2002principal}, Canonical Component Analysis \citep{bishop2006pattern}, Independent Component Analysis \citep{hyvarinen2004independent}); 4) sequential data modeling (Hidden Markov Model \citep{rabiner1989tutorial}, Kalman Filtering \citep{grewal2014kalman}); 5) data grouping (Gaussian Mixture Model \citep{bishop2006pattern}, Mixture Membership Stochastic Block model \citep{airoldi2009mixed}); 6) latent factor discovery (Matrix Factorization \citep{koren2009matrix}, Tensor Factorization \citep{shashua2005non}). While existing latent variable models have demonstrated effectiveness on small to moderate scale data, they are inadequate to cope with new problems emerged in big data, such as the highly skewed distribution of pattern frequency \citep{wang2014peacock,xie2015diversifying}, the conflict between model complexity and effectiveness \citep{xie2015learning}, the interpretability of large amount of patterns discovered from massive data \citep{wang2015rubik}, as explained in Section \ref{sec:intro}.

\paragraph*{Regularization} Regularization \citep{hoerl1970ridge,tibshirani1996regression,recht2010guaranteed,
wainwright2014structured,srivastava2014dropout} is an important concept and technique in machine learning, which can help alleviate overfitting \citep{hoerl1970ridge,srivastava2014dropout}, reduce model complexity \citep{tibshirani1996regression,recht2010guaranteed}, achieve certain properties of parameters such as sparsity \citep{tibshirani1996regression,friedman2008sparse,jacob2009group,kim2010tree,jenatton2011structured}, low-rankness \citep{fazel2003log,recht2010guaranteed,candes2011robust}, stabilize an optimization problem \citep{wainwright2014structured} and lead to algorithmic speed-ups \citep{wainwright2014structured}.
Commonly used regularizers include squared $L_2$ norm \citep{hoerl1970ridge}, $L_1$ norm \citep{tibshirani1996regression}, group Lasso norm \citep{yuan2006model} for parameters represented by vectors and Frobenius norm \citep{mnih2007probabilistic} and trace norm \citep{recht2010guaranteed} for parameters represented by matrices, and structured Hilbert norms for functions in Hilbert spaces \citep{scholkopf2002learning}. Regularization approaches promoting diversity in the underlying solutions have been studied and applied in ensemble learning \citep{krogh1995neural,kuncheva2003measures,brown2005diversity,banfield2005ensemble,
tang2006analysis,partalas2008focused,yu2011diversity}, latent variable modeling \citep{Zou_priorsfor,xie2015diversifying,xie2015learning}, classification \citep{malkin2008ratio,jalali2015variational}, multitask learning \citep{jalali2015variational}.
Many works \citep{krogh1995neural,kuncheva2003measures,brown2005diversity,banfield2005ensemble,
tang2006analysis,partalas2008focused,yu2011diversity} explored how to select a diverse subset of base classifiers or regressors in ensemble learning, with the aim to improve generalization error and reduce computational complexity. Recently, \citep{Zou_priorsfor,xie2015diversifying,xie2015learning} studied the diversity-inducing regularization of latent variable models, which encourages the individual components in latent variable models to be different from each other, with the goal to capture long-tail knowledge and reduce model complexity. In a multi-class classification problem, \citet{malkin2008ratio} proposed to use the determinant of the covariance matrix to encourage classifiers to be different from each other. \citet{jalali2015variational} proposed a class of convex diversity regularizers and applied them for hierarchical classification and multi-task learning. While these works nicely demonstrate the effectiveness of diversity-promoting regularizers via empirical experiments, a rigorous theoretical analysis is missing. In this work, we aim to bridge this gap.

\paragraph*{Generalization Performance of Neural Networks}
The generalization performance of neural networks, in particular the approximation error and estimation error, has been widely studied in the past several decades. For the approximation error, \citet{cybenko1989approximation} demonstrated that finite linear combinations of compositions
of a fixed, univariate function and a set of affine functionals can uniformly
approximate any continuous function.
\citet{hornik1991approximation} showed that neural networks with a single hidden layer, sufficiently many hidden units and arbitrary bounded and nonconstant activation function are universal approximators. \citet{leshno1993multilayer} proved that multilayer feedforward networks with a non-polynomial activation function can approximate any function.
Various error rates have also been derived based on different assumptions of the target function. \citet{jones1992simple} showed that if the target function is in the hypothesis set formed by neural networks with one hidden layer of $m$ units, then the approximation error rate is $O(1/\sqrt{m})$.
\citet{barron1993universal} showed that neural networks with one layer of $m$ hidden units and sigmoid activation function can achieve approximation error of order $O(1/\sqrt{m})$, where the target function is assumed to have a bound on the first moment of the magnitude distribution of the Fourier transform. \citet{makovoz1998uniform} proved that if the target function is of the form $f(\mb{x})=\int_{Q}c(\mb{w},b)h(\mb{w}^{\mathsf{T}}\mb{x}+b)\mathrm{d}\mu$, where $c(\cdot,\cdot)\in L_{\infty}(Q,\mu)$, then neural networks with one layer of $m$ hidden units can approximate it with an error rate of $n^{-1/2-1/(2d)}\sqrt{\log n}$, where $d$ is the dimension of input $\mb{x}$.
As for the estimation error, please refer to \citep{anthony1999neural} for an extensive review, which introduces various estimation error bounds based on VC-dimension, flat-shattering dimension, pseudo dimension and so on.

\section{Latent Variable Modeling with Mutual Angular Regularization}
In this section, we begin with a review of latent variable models, then propose the mutual angular regularizer and utilize it to regularize LVMs. We present optimization techniques to learn mutual angle regularized LVMs.
\subsection{Latent Variable Models}
An LVM consists of two types of variables: the observed ones are utilized to model the observed data and the latent ones are used to characterize the hidden patterns. The interaction between observed and latent variables encodes the correlation between data and patterns. Under an LVM, extracting patterns from data corresponds to inferring the value of latent variables given the observed ones \citep{blei2014build}. For example, in topic models \citep{blei2003latent,hinton2009replicated} which are widely employed to extract topics from documents, the observed variables are used to model words and the latent variables are used to capture topics. The knowledge and structures hidden behind data are usually composed of multiple \textit{patterns}. For instance, the semantics underlying documents contains a set of \textit{themes} \citep{hofmann1999probabilistic,blei2003latent}, such as politics, economics and education.
Accordingly, latent variable models are parametrized by multiple \textit{components} where each component aims to capture one pattern in the knowledge and is represented with a parameter vector. For instance, the components in Latent Dirichlet Allocation \citep{blei2003latent} are called \textit{topics} and each topic is parametrized by a multinomial vector.

\subsection{Mutual Angular Regularizer}
\begin{figure}
\begin{center}
\includegraphics[width=0.4\textwidth]{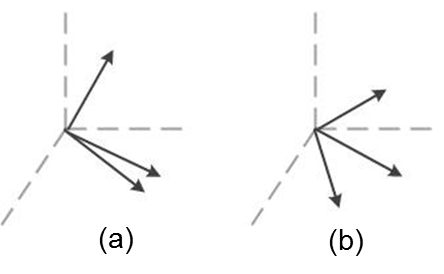}
\end{center}
\caption{The mean of pairwise angles between vectors in (a) is close to (b), but the variance of angles in (a) is much larger.
}
\label{fig:angle_variance}
\end{figure}
Motivated by the problems stated in Section \ref{sec:intro}, we propose to regularize the components in LVMs, to encourage them to diversely spread out. To measure the diversity of components, we define a mutual angular regularizer, which is a score bearing a larger value if the components are more diverse. Before quantifying the diversity of a set of components, we first measure the dissimilarity between two components (vectors). A good dissimilarity measure between two vectors would be invariant to scaling, translation, rotation and orientation of the two vectors. Commonly used metrics such as Euclidean distance, L1 distance, cosine similarity are not ideal since they are either sensitive to scaling or orientation. In this work, we utilize the non-obtuse angle $\theta$ as the dissimilarity measure of two vectors $\mb{x}$ and $\mb{y}$, which is defined
$\theta=\textrm{arccos}(\frac{|\mb{x}\cdot\mb{y}|}{\|\mb{x}\|\|\mb{y}\|})$. The non-obtuse angle differs from the ordinary definition of angle in that it is always acute or right, which is preferred due to its insensitiveness of vector orientation.
Given this pairwise dissimilarity measure, we can measure the diversity of a vector set. Let $\mb{A}\in\mathbb{R}^{K\times D}$ denote a set of $K$ components where each row of $\mb{A}$ is a $D$-dimensional component vector and the $K$ vectors are assumed to be linearly independent. 
We first take each pair of vectors $(\mb{a}_i,\mb{a}_j)$ from $\mb{A}$, compute their non-obtuse angle $\theta_{ij}$. Given all these pairwise angles, the mutual angular regularizer $\Omega(\mb{A})$ is defined as the mean of these angles minus the variance of these angles
\begin{equation}
\Omega(\mb{A})=\frac{1}{K(K-1)}\sum_{i=1}^{K}\sum_{j=1,j\neq i}^{K}\theta_{ij}-\gamma\frac{1}{K(K-1)}\sum_{i=1}^{K}\sum_{j=1,j\neq i}^{K}(\theta_{ij}-\frac{1}{K(K-1)}\sum_{p=1}^{K}\sum_{q=1,q\neq p}^{K}\theta_{pq})^{2}
\end{equation}
where $\gamma>0$ is a tradeoff parameter between mean and variance.
The mean term summarizes how these vectors are different from each other on the whole. A larger mean indicates these vectors share a larger angle in general, hence are more diverse. The variance term is utilized to encourage the vectors to evenly spread out to different directions. A smaller variance indicates that each vector is uniformly different from all other vectors. Encouraging the variance to be small can prevent the phenomenon that the vectors fall into several groups where vectors in the same group have small angles and vectors between groups have large angles. Such a phenomenon renders the vectors to be redundant and less diverse, and hence should be prohibited.
Figure \ref{fig:angle_variance} shows two set of vectors, where the mean of the pairwise angles of the first set (Figure \ref{fig:angle_variance}(a)) is roughly the same as that of the second set (Figure \ref{fig:angle_variance}(b)). But the variance of these angles are quite different in these two sets. In the first set, two vectors are very close to each other, while the third one is different from them. Hence the variance of the angles is large. In the second set, the vectors evenly point to different directions, hence the variance of angles is small. The first set has redundant vectors which is contradictory to diversity, hence it is desirable to prohibit such cases by encouraging the variance of the vectors to be small.

\subsection{Latent Variable Models with Mutual Angular Regularization}

Given this mutual angular regularizer, we can apply it to regularize latent variable models and control the geometry of the latent space during learning. Specifically it is employed to encourage the components to be diverse. Let $\mathcal{L}(\mb{A})$ denote model-specific objective function, such as likelihood (e.g., in topic models \citep{blei2003latent}, restricted Boltzmann machine \citep{hinton2010practical}), negative squared-loss (sparse coding \citep{olshausen1997sparse}), etc. Without loss of generality, we assume it is to be maximized. To diversify the components in the LVM, we augment the objective function with $\Omega(\mb{A})$ and define a mutual angle regularized LVM (MAR-LVM) problem as
\begin{equation}
\label{eq:p1}
\begin{array}{lll}
\mb{(P1)}&
\textrm{max}_{\mb{A}}\quad \mathcal{L}(\mb{A})+\lambda \Omega(\mb{A})
\end{array}
\end{equation}
where $\lambda$ is a tradeoff parameter. $\lambda$ plays an important role in balancing the diversity of model components and their fitness to data. Under a small $\lambda$, the components are learned to best fit data and their diversity is ignored. As discussed earlier, such components have high redundancy and may not be able to cover long-tail patterns effectively and are not amenable for optimization. Under a large $\lambda$, components are regularized to have high diversity, but may not fit well to data. 
To sum up, a proper $\lambda$ needs to be chosen to achieve the optimal balance.


\subsection{Optimization}
The mutual angular regularizer $\Omega(\mb{A})$ is non-smooth and non-convex, entailing great challenges for solving problem $\mb{(P1)}$ defined in Eq.(\ref{eq:p1}). In this section, we discuss how to address this issue. The basic strategy is to derive a smooth approximation $\Gamma(\mb{A})$ of $\Omega(\mb{A})$ and optimize $\Gamma(\mb{A})$ instead.
For reasons that will be clear later, we first reformulate $\mb{(P1)}$ by decomposing a component into its direction and magnitude. Let $\mb{A}=\textrm{diag}(\mb{g})\mb{\widetilde{A}}$, where $\mb{g}$ is a vector and $g_{i}$ denotes the $L_{2}$ norm of the $i$th row of $\mb{A}$, then the $L_{2}$ norm of each row vector $\mb{\tilde{a}}$ in $\mb{\widetilde{A}}$ is 1. Based on the definition of the mutual angular regularizer, we have $\Omega(\mb{A})=\Omega(\mb{\widetilde{A}})$. Accordingly, $\mb{(P1)}$ can be reformulated as
\begin{equation}
\label{eq:drbm2}
\begin{array}{lll}
\mb{(P2)}& \textrm{max}_{\mb{\widetilde{A}},\mb{g}}&\mathcal{L}(\textrm{diag}(\mb{g})\mb{\widetilde{A}})+\lambda\Omega(\mb{\widetilde{A}})\\
& s.t. & \forall i=1,\cdots, K, \|\mb{\tilde{a}}_{i}\|=1, g_i>0
\end{array}
\end{equation}
$\mb{(P2)}$ can be solved by alternating between $\mb{g}$ and $\mb{\widetilde{A}}$: optimizing $\mb{g}$ with $\mb{\widetilde{A}}$ fixed and optimizing $\mb{\widetilde{A}}$ with $\mb{g}$ fixed. With $\mb{\widetilde{A}}$ fixed, the problem defined over $\mb{g}$ is
\begin{equation}
\label{eq:drbm_g}
\begin{array}{ll}
\textrm{max}_{\mb{g}}&\mathcal{L}(\textrm{diag}(\mb{g})\mb{\widetilde{A}})\\
s.t.&\forall i=1,\cdots,K, g_i>0
\end{array}
\end{equation}
which can be efficiently solved with many optimization methods such as projected gradient descent \citep{boyd2004convex}, barrier method \citep{boyd2004convex}, etc. Fixing $\mb{g}$, the problem defined over $\mb{\widetilde{A}}$ is
\begin{equation}
\label{eq:drbma}
\begin{array}{ll}
\textrm{max}_{\mb{\widetilde{A}}}&\mathcal{L}(\textrm{diag}(\mb{g})\mb{\widetilde{A}})+\lambda\Omega(\mb{\widetilde{A}})\\
s.t. & \forall i=1,\cdots, K, \|\mb{\tilde{a}}_{i}\|=1
\end{array}
\end{equation}
which is still non-smooth and non-convex, entailing great obstacles for optimization. To address this issue, we derive a smooth lower bound $\Gamma(\mb{\widetilde{A}})$ of the regularizer and use the lower bound as a surrogate of $\Omega(\mb{\widetilde{A}})$ for optimization
\begin{equation}
\label{eq:drbma2}
\begin{array}{ll}
\textrm{max}_{\mb{\widetilde{A}}}&\mathcal{L}(\textrm{diag}(\mb{g})\mb{\widetilde{A}})+\lambda\Gamma(\mb{\widetilde{A}})\\
s.t. & \forall i=1,\cdots, K, \|\mb{\tilde{a}}_{i}\|=1
\end{array}
\end{equation}
Since $\Gamma(\mb{\widetilde{A}})$ is smooth, the optimization problem in Eq.(\ref{eq:drbma2}) is much easier than that in Eq.(\ref{eq:drbma}) and many algorithms can be applied, such as projected gradient descent \citep{boyd2004convex}. The overall algorithm is summarized in Algorithm \ref{opt}.
\begin{algorithm}[t]
\caption{Algorithm for solving MAR-LVM. }\label{opt} 
\begin{algorithmic}
\STATE {\bfseries Input:} $K$,$\lambda$
\REPEAT
\STATE Fixing $\mb{\widetilde{A}}$, learn $\mb{g}$ by solving the problem in Eq.(\ref{eq:drbm_g})
\STATE Fixing $\mb{g}$, learn $\mb{\widetilde{A}}$ by solving the problem in Eq.(\ref{eq:drbma2})
\UNTIL converge
\end{algorithmic}
\end{algorithm}
The lower bound is given in Theorem \ref{thm:lb}.

\begin{mythm}
\label{thm:lb}
Let $\det(\mb{\widetilde{A}}^{\mathsf{T}}\mb{\widetilde{A}})$ denote the determinant of the Gram matrix of $\mb{\widetilde{A}}$, then $0< \det(\mb{\widetilde{A}}^{\mathsf{T}}\mb{\widetilde{A}})\leq 1$. Let $\Gamma(\mb{\widetilde{A}})=\arcsin(\sqrt{\det(\mb{\widetilde{A}}^{\mathsf{T}}
\mb{\widetilde{A}})})-(\frac{\pi}{2}-\arcsin(\sqrt{\det(\mb{\widetilde{A}}^{\mathsf{T}}\mb{\widetilde{A}})}))^2$, then $\Gamma(\mb{\widetilde{A}})$ is a lower bound of $\Omega(\mb{\widetilde{A}})$. $\Gamma(\mb{\widetilde{A}})$ and $\Omega(\mb{\widetilde{A}})$ have the same global optimal.
\end{mythm}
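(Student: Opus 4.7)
The plan is to decompose the theorem into three claims and verify them in sequence: the determinantal bounds, the pointwise inequality $\Gamma \le \Omega$, and the shared global optimum.

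First, for $0 < \det(\widetilde{A}^{\mathsf{T}}\widetilde{A}) \le 1$: positivity follows from the assumed linear independence of the rows of $\widetilde{A}$, which makes the Gram matrix $G$ (with entries $G_{ij} = \tilde{\mathbf{a}}_i^{\mathsf{T}} \tilde{\mathbf{a}}_j$) strictly positive definite. The upper bound is an immediate application of Hadamard's inequality for positive semidefinite matrices, $\det(G) \le \prod_i G_{ii} = \prod_i \|\tilde{\mathbf{a}}_i\|^2 = 1$, using that each row is a unit vector; equality holds exactly when $G$ is diagonal, i.e., when the rows are mutually orthogonal.

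Second, and most importantly, I would prove the pointwise inequality $\Gamma(\widetilde{A}) \le \Omega(\widetilde{A})$. Write $\beta := \arcsin(\sqrt{\det G}) \in (0, \pi/2]$, so $\Gamma = \beta - (\pi/2 - \beta)^2$, and let $\bar\theta$ and $v$ denote the mean and variance of the pairwise non-obtuse angles $\{\theta_{ij}\}$, so $\Omega = \bar\theta - \gamma v$. My strategy is to establish two complementary estimates: (a) $\beta \le \bar\theta$, and (b) $(\pi/2 - \beta)^2 \ge \gamma v$. For (a), the key tool is the Gram--Schmidt volume identity $\sqrt{\det G} = \prod_{i=1}^{K} \sin\alpha_i$, where $\alpha_i$ is the angle between $\tilde{\mathbf{a}}_i$ and the linear span of $\tilde{\mathbf{a}}_1,\ldots,\tilde{\mathbf{a}}_{i-1}$; since the angle from a vector to a subspace is at most its angle to any single vector in that subspace, $\alpha_i \le \theta_{ij}$ for every $j<i$. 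A symmetric averaging over orderings of the rows, combined with Jensen's inequality applied to a suitable concave transformation of the $\sin$ factors, should yield $\beta \le \bar\theta$; the $K=2$ case collapses to equality and serves as a sanity check. For (b), I would Taylor-expand $(\pi/2 - \arcsin\sqrt{\det G})^2$ around the orthonormal configuration ($\det G = 1$): the deviation of $\det G$ from $1$ is controlled by the off-diagonal Gram entries $\cos\phi_{ij}$, and these are precisely the quantities whose distance from $0$ pushes $\theta_{ij}$ away from $\pi/2$ and inflates $v$, so a second-order estimate should produce the required quadratic lower bound.

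Third, the shared global optimum then follows almost for free. Both functionals are bounded above by $\pi/2$, and equality in Hadamard forces $G = I$, equivalently all $\theta_{ij} = \pi/2$ and $v = 0$. At any such orthonormal configuration, $\Omega = \pi/2$ and $\beta = \pi/2$, so $\Gamma = \pi/2 - 0 = \pi/2$ as well; together with the pointwise bound $\Gamma \le \Omega$ this pins down a common maximum value and a common maximizer.

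The main obstacle I anticipate is part (b) of the pointwise inequality: reducing a single scalar invariant $\det G$ to a statement about the dispersion of all $\binom{K}{2}$ pairwise angles is intuitively natural but not routine, and the argument will likely need either a worst-case analysis over configurations with fixed $\bar\theta$, or a specific calibration of the constant $\gamma$ absorbed into the Taylor remainder. Part (a), by contrast, is a classical volume estimate and should follow cleanly from the Gram--Schmidt identity together with standard concavity arguments.
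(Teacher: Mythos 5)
Your overall decomposition (Hadamard for the determinant bounds, then separate lower bound on the mean and upper bound on the variance) matches the paper's skeleton, and the Hadamard step for $0 < \det \le 1$ is a cleaner substitute for the paper's repeated column-elimination argument. The shared-global-optimum argument is also essentially the paper's.

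The genuine gap is in how you propose to prove the two inequalities (a) and (b), and you have correctly flagged (b) as the obstacle. Your plan for (a) — Gram--Schmidt volume identity, then average over orderings and invoke Jensen — would, at best, bound the mean $\bar\theta$ from below, but it does not give a lower bound on each individual angle. Your plan for (b) is a Taylor expansion of $(\pi/2-\arcsin\sqrt{\det G})^2$ around $\det G = 1$, but that only yields a local estimate near the orthonormal configuration; the theorem requires a global inequality on all of $\det G \in (0,1]$, and a second-order remainder does not supply that without a convexity or monotonicity argument you have not identified. Moreover, it is not clear that the deviation of $\det G$ from $1$ controls the dispersion of the angles in the quantitative way you need; there exist configurations with $\det G$ close to $1$ and nontrivial variance, and your sketch does not rule out that $(\pi/2-\beta)^2$ falls below $\gamma v$ there.

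The key idea you are missing is a \emph{pointwise} per-pair bound, which kills both (a) and (b) at once. Write $\tilde{\mathbf a}_i = \mathbf x_i + l_i \mathbf e_i$ with $\mathbf x_i$ in the span of the other rows and $\mathbf e_i$ a unit vector in its orthogonal complement; one shows $\det(\widetilde{A}^{\mathsf T}\widetilde{A}) = \det(\widetilde{A}_{-i}^{\mathsf T}\widetilde{A}_{-i})\, l_i^2$, and since the reduced determinant is at most $1$ (Hadamard again), $l_i^2 \ge \det(\widetilde{A}^{\mathsf T}\widetilde{A})$. Because $\mathbf e_i$ is orthogonal to every $\tilde{\mathbf a}_j$ with $j \ne i$, one has $|\tilde{\mathbf a}_i \cdot \tilde{\mathbf a}_j| = |\mathbf x_i \cdot \tilde{\mathbf a}_j| \le \|\mathbf x_i\| = \sqrt{1 - l_i^2} \le \sqrt{1 - \det}$, hence $\theta_{ij} \ge \arccos\sqrt{1-\det} = \arcsin\sqrt{\det} =: \beta$ for \emph{every} pair $(i,j)$. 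This immediately gives (a): the mean of numbers all $\ge \beta$ is $\ge \beta$. It also immediately gives (b): every $\theta_{ij}$, and hence also their mean $\bar\theta$, lies in the interval $[\beta,\pi/2]$, so every deviation satisfies $|\theta_{ij} - \bar\theta| \le \pi/2 - \beta$ and the variance is at most $(\pi/2 - \beta)^2$. No Taylor expansion or worst-case optimization over configurations is needed. (One small point the paper glosses over: the proof treats the variance weight $\gamma$ as $1$; the argument as written gives $\Omega \ge \beta - \gamma(\pi/2-\beta)^2$, which dominates $\Gamma$ only for $\gamma \le 1$.)
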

\begin{proof}
To prove Theorem \ref{thm:lb}, the following lemma is needed\footnote{The proofs of lemmas utilized in this section are deferred to Appendix \ref{apd:opt}.}.
\begin{mylemma}
\label{lem:expand}
Let the parameter vector $\mb{\tilde{a}}_{i}$ of component $i$ be decomposed into $\mb{\tilde{a}}_{i}=\mb{x}_{i}+l_i\mb{e}_{i}$, where $\mb{x}_{i}=\sum_{j=1,j\neq i}^{K}\alpha_{j}\mb{\tilde{a}}_{j}$ lies in the subspace $L$ spanned by $\{\mb{\tilde{a}}_{1},\cdots,\mb{\tilde{a}}_{K}\}\backslash\{\mb{\tilde{a}}_{i}\}$, $\mb{e}_{i}$ is in the orthogonal complement of $L$, $\|\mb{e}_i\|=1$, $\mb{e}_{i}\cdot \mb{\tilde{a}}_{i}>0$, $l_i$ is a scalar. Then $\textrm{det}(\mathbf{\widetilde{A}}^\mathsf{T} \mathbf{\widetilde{A}}) = \textrm{det}(\mathbf{\widetilde{A}}_{-i}^\mathsf{T} \mathbf{\widetilde{A}}_{-i}) (l_i\mb{e}_{i}\cdot \mathbf{\tilde{a}}_i)$, where $\mb{\widetilde{A}}_{-i}=[\mb{\tilde{a}}_{1},\cdots,\mb{\tilde{a}}_{i-1},\mb{\tilde{a}}_{i+1},\cdots,\mb{\tilde{a}}_{K}]$ with $\mb{\tilde{a}}_{i}$ excluded..
\end{mylemma}

The mutual angular regularizer $\Omega(\mathbf{\widetilde{A}})$ comprises of two terms: $\Omega(\mathbf{\widetilde{A}}) = \Psi(\mathbf{\widetilde{A}}) - \Pi(\mathbf{\widetilde{A}})$, in which $\Psi(\mathbf{\widetilde{A}})$ and $\Pi(\mathbf{\widetilde{A}})$ measure the mean and variance of the pairwise angles respectively. We bound the two terms separately.
We first bound the mean $\Psi (\mathbf{\widetilde{A}})$.
Since the component vectors are assumed to be linearly independent, we have $\textrm{det}(\mathbf{\widetilde{A}}^\mathsf{T} \mathbf{\widetilde{A}})>0$.
As $l_i\mb{e}_{i}\cdot \mathbf{\tilde{a}}_i \le \|l_i\mb{e}_{i} \| \| \mathbf{\tilde{a}}_i\| \le 1$ and $\textrm{det}(\mathbf{\widetilde{A}}^\mathsf{T} \mathbf{\widetilde{A}}) = \textrm{det}(\mathbf{\widetilde{A}}_{-i}^\mathsf{T} \mathbf{\widetilde{A}}_{-i}) (l_i\mb{e}_{i}\cdot \mathbf{\tilde{a}}_i)$ (according to Lemma \ref{lem:expand}), we have
$\textrm{det}(\mathbf{\widetilde{A}}^\mathsf{T} \mathbf{\widetilde{A}}) \le \textrm{det}(\mathbf{\widetilde{A}}_{-i}^\mathsf{T} \mathbf{\widetilde{A}}_{-i})$. As $\forall j, \textrm{det}(\mathbf{\tilde{a}}_j^\mathsf{T} \mathbf{\tilde{a}}_j) = 1$, we can eliminate the columns of $\mathbf{\widetilde{A}}_{-i}$ and apply the inequality repeatedly to draw the conclusion that $\textrm{det}(\mathbf{\widetilde{A}}_{-i}^\mathsf{T} \mathbf{\widetilde{A}}_{-i}) \le 1$ (and $\textrm{det}(\mathbf{\widetilde{A}}^\mathsf{T} \mathbf{\widetilde{A}}) \le 1$). So $l_i\mb{e}_{i} \cdot \mathbf{\tilde{a}}_i =\|l_i\mb{e}_{i}\|^2 \ge \textrm{det}(\mathbf{\widetilde{A}}^\mathsf{T} \mathbf{\widetilde{A}})$.
For any $j \ne i$, the pairwise angle between $\mathbf{\tilde{a}}_i$ and $\mathbf{\tilde{a}}_j$ is:
\begin{equation}
\begin{array}{lll}
\theta(\mathbf{\tilde{a}}_i, \mathbf{\tilde{a}}_j)& =& \textrm{arccos} (|\mb{\tilde{a}}_i \cdot \mathbf{\tilde{a}}_j|) \\
&=& \textrm{arccos} (|\mathbf{x}_{i} \cdot \mathbf{\tilde{a}}_j|) \\
&\le& \textrm{arccos} (\| \mathbf{x}_{i} \| \| \mathbf{\tilde{a}}_j\|)\\
& = &\textrm{arccos} (\| \mathbf{x}_{i} \|)\\
&=& \textrm{arccos} (\sqrt{1 - \|l_i\mb{e}_{i}\|^2}) \\
&\le& \textrm{arccos} (\sqrt{1 - \textrm{det}(\mathbf{\widetilde{A}}^\mathsf{T} \mathbf{\widetilde{A}})}) \\
&=& \textrm{arcsin} (\sqrt{\textrm{det}(\mathbf{\widetilde{A}}^\mathsf{T} \mathbf{\widetilde{A}})})
\end{array}
\end{equation}
Thus $\Psi (\mathbf{\widetilde{A}}) \geq \textrm{arcsin} (\sqrt{\textrm{det}(\mathbf{\widetilde{A}}^\mathsf{T} \mathbf{\widetilde{A}})})$. Now we bound the variance $\Pi(\mathbf{\widetilde{A}})$. For any $i \ne j$, we have proved that $\theta(\mathbf{\tilde{a}}_i, \mathbf{\tilde{a}}_j) \ge \textrm{arcsin} (\sqrt{\textrm{det}(\mathbf{\widetilde{A}}^\mathsf{T} \mathbf{\widetilde{A}})})$. From the definition of $\theta(\mathbf{\tilde{a}}_i, \mathbf{\tilde{a}}_j)$, we also have $\theta(\mathbf{\tilde{a}}_i, \mathbf{\tilde{a}}_j) \le \frac{\pi}{2}$. As $\Psi (\mathbf{\widetilde{A}})$ is the mean value of all pairwise $\theta(\mathbf{\tilde{a}}_i, \mathbf{\tilde{a}}_j)$, we have $\textrm{arcsin} (\sqrt{\textrm{det}(\mathbf{\widetilde{A}}^\mathsf{T} \mathbf{\widetilde{A}})})
\leq \Psi (\mathbf{\widetilde{A}}) \leq \frac{\pi}{2}$. So $|\theta(\mathbf{\tilde{a}}_i, \mathbf{\tilde{a}}_j) - \Psi (\mathbf{\widetilde{A}})| \le \frac{\pi}{2} - \textrm{arcsin} (\sqrt{\textrm{det}(\mathbf{\widetilde{A}}^\mathsf{T} \mathbf{\widetilde{A}})})$. So $\Pi (\mathbf{\widetilde{A}}) \le (\frac{\pi}{2} - \textrm{arcsin} (\sqrt{\textrm{det}(\mathbf{\widetilde{A}}^\mathsf{T} \mathbf{\widetilde{A}})}))^2$.
Combining the lower bound of $\Psi (\mathbf{\widetilde{A}})$ and upper bound of $\Pi(\mathbf{\widetilde{A}})$, we have $\Omega(\mb{\widetilde{A}})\geq \Gamma(\mb{\widetilde{A}})= \textrm{arcsin} (\sqrt{\textrm{det}(\mathbf{\widetilde{A}}^\mathsf{T} \mathbf{\widetilde{A}})})-(\frac{\pi}{2}-\textrm{arcsin} (\sqrt{\textrm{det}(\mathbf{\widetilde{A}}^\mathsf{T} \mathbf{\widetilde{A}})}))^2$. Both $\Omega(\mb{\widetilde{A}})$ and $\Gamma(\mb{\widetilde{A}})$ obtain the optimal value of $\pi/2$ when vectors in $\mb{\widetilde{A}}$ are orthogonal to each other. The proof completes.
\end{proof}

Here we present an intuitive understanding of the lower bound. $\det(\mb{\widetilde{A}}^{\mathsf{T}}\mb{\widetilde{A}})$ is the volume of the parallelipiped formed by the vectors in $\mb{\widetilde{A}}$. The volume of a parallelipiped depends on both the length of vectors and the angles between vectors. Since vectors in $\mb{\widetilde{A}}$ are of unit-length, the angles determine the volume. The larger $\det(\mb{\widetilde{A}}^{\mathsf{T}}\mb{\widetilde{A}})$ is, the more likely\footnote{This is not for sure.} these vectors share larger angles. Let $g(x)=\arcsin(\sqrt{x})-(\frac{\pi}{2}-\arcsin(\sqrt{x}))^2$, which is increasing function, then $\Gamma(\mb{\widetilde{A}})=g(\det(\mb{\widetilde{A}}^{\mathsf{T}}\mb{\widetilde{A}}))$,
which is increasing w.r.t $\det(\mb{\widetilde{A}}^{\mathsf{T}}\mb{\widetilde{A}})$.
This implies that the larger $\Gamma(\mb{\widetilde{A}})$ is, the more likely the vectors in $\mb{\widetilde{A}}$ have larger angles, and accordingly the more likely that the mutual angular regularizer is larger.

While in general increasing a lower bound of a function cannot ensure the function itself is increased, in our case it can be proved that the monotonicity of $\Gamma(\mb{\widetilde{A}})$ is closely aligned with $\Omega(\mb{\widetilde{A}})$. Specifically, at each point, the gradient of $\Gamma(\mb{\widetilde{A}})$ is an ascent direction of $\Omega(\mb{\widetilde{A}})$, which is formally stated in Theorem \ref{thm:opt_main}. This property qualifies $\Gamma(\mb{\widetilde{A}})$ to be a desirable surrogate of $\Omega(\mb{\widetilde{A}})$.

\begin{mythm}
\label{thm:opt_main}
Let $\mb{G}$ be the gradient of $\Gamma(\mb{\widetilde{A}})$ w.r.t $\mb{\widetilde{A}}$. $\exists \tau>0$, such that $\forall \eta\in(0,\tau)$, $\Omega(\mathcal{P}(\mb{\widetilde{A}}+\eta \mb{G}))\geq \Omega(\mb{\widetilde{A}}) $ where $\mathcal{P}(\cdot)$ denotes the projection to the unit sphere.
\end{mythm}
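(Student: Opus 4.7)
The plan is to prove the theorem by a first-order directional-derivative analysis of $\Omega$ along $\mb{G}$, closed out by a continuity/Taylor argument to produce the threshold $\tau$.

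First I would make $\mb{G}$ explicit. Since $\Gamma(\mb{\widetilde{A}}) = g(\det(\mb{\widetilde{A}}^{\mathsf{T}}\mb{\widetilde{A}}))$ with $g(x) = \arcsin(\sqrt{x}) - (\tfrac{\pi}{2} - \arcsin(\sqrt{x}))^{2}$ strictly increasing on $(0,1]$, the rowwise direction of $\mb{G}$ matches that of $\nabla_{\mb{\widetilde{A}}}\det(\mb{\widetilde{A}}^{\mathsf{T}}\mb{\widetilde{A}})$. Using the orthogonal decomposition $\mb{\tilde{a}}_{i}=\mb{x}_{i}+l_{i}\mb{e}_{i}$ from \cref{lem:expand} and differentiating $\det(\mb{\widetilde{A}}^{\mathsf{T}}\mb{\widetilde{A}})=\det(\mb{\widetilde{A}}_{-i}^{\mathsf{T}}\mb{\widetilde{A}}_{-i})\,\|l_{i}\mb{e}_{i}\|^{2}=\det(\mb{\widetilde{A}}_{-i}^{\mathsf{T}}\mb{\widetilde{A}}_{-i})(1-\|\mb{x}_{i}\|^{2})$ with respect to $\mb{\tilde{a}}_{i}$, a short calculation gives the row-$i$ gradient as a positive multiple of $-\mb{x}_{i}$, i.e.\ each row is pushed away from the hyperplane spanned by the others.

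Second I would track how $\mathcal{P}(\mb{\widetilde{A}}+\eta\mb{G})$ alters each pairwise inner product. The identities $\mb{e}_{i}\perp\mb{\tilde{a}}_{k}$ for $k\neq i$ and $\mb{x}_{i}\cdot\mb{\tilde{a}}_{j}=\mb{\tilde{a}}_{i}\cdot\mb{\tilde{a}}_{j}$ reduce the computation to bookkeeping: after renormalization to the unit sphere, moving only row $i$ multiplies $|\mb{\tilde{a}}_{i}\cdot\mb{\tilde{a}}_{j}|$ by $1-\eta c_{i}l_{i}^{2}+O(\eta^{2})$ with $c_{i}>0$ collecting the scalar factors from $g'$ and $\det(\mb{\widetilde{A}}_{-i}^{\mathsf{T}}\mb{\widetilde{A}}_{-i})$. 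First-order contributions from the different rows add, so each $|\mb{\tilde{a}}_{i}\cdot\mb{\tilde{a}}_{j}|$ is scaled by $1-\eta(c_{i}l_{i}^{2}+c_{j}l_{j}^{2})+O(\eta^{2})$, yielding the positive first-order angle increment $\Delta\theta_{ij}=\eta(c_{i}l_{i}^{2}+c_{j}l_{j}^{2})\cot\theta_{ij}+O(\eta^{2})$.

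Third I would combine these per-pair increments. Because every $\Delta\theta_{ij}>0$, the mean $\Psi$ strictly increases to first order. Because $\cot$ is decreasing on $(0,\pi/2]$, pairs with the smallest angles receive the largest upward kicks, which intuitively contracts the distribution of pairwise angles and drives the variance $\Pi$ down. A Chebyshev-sum-inequality applied to the anti-sorted sequences $(\theta_{ij})$ and $(\cot\theta_{ij})$ would give a non-positive first-order variance change in the uniform-weight case, whence the first-order change of $\Omega=\Psi-\gamma\Pi$ is strictly positive. Taylor expansion then supplies a $\tau>0$ such that $\Omega(\mathcal{P}(\mb{\widetilde{A}}+\eta\mb{G}))\geq\Omega(\mb{\widetilde{A}})$ for all $\eta\in(0,\tau)$.

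The hard part I foresee is handling the non-uniform weights $w_{ij}=c_{i}l_{i}^{2}+c_{j}l_{j}^{2}$. Decomposing the variance sum as $2\sum_{i}w_{i}\sum_{j\neq i}(\theta_{ij}-\bar\theta)\cot\theta_{ij}$, the inner sum uses the global mean $\bar\theta$ rather than a per-row mean, introducing cross-row terms of the form $(\bar\theta_{i}-\bar\theta)\,\overline{\cot\theta}_{i}$ whose signs are not clear a priori. I would attempt to close this either by a weighted Chebyshev inequality adapted to the additive structure $w_{i}+w_{j}$, or by absorbing the ambiguous cross-row terms into the safely-positive first-order $\Psi$-contribution through a magnitude bound in terms of $\|\mb{x}_{i}\|$. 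A minor secondary issue is non-smoothness of $\Omega$ on the measure-zero set where some $\theta_{ij}$ equals $0$ or $\pi/2$; this I would handle by arguing at a generic $\mb{\widetilde{A}}$ and extending by continuity.
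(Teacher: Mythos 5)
Your overall strategy is the same as the paper's: decompose $\Omega=\Psi-\gamma\Pi$, expand each $\Delta\theta_{ij}$ to first order under the projected gradient step, then argue separately that $\Psi$ increases and $\Pi$ decreases. Your increment formula $\Delta\theta_{ij}=\eta(c_il_i^2+c_jl_j^2)\cot\theta_{ij}+O(\eta^2)$ is right and is Lemma~\ref{lem:m2} in disguise (with $c_il_i^2=l_ik_i$). The paper's variance step goes through a bespoke rearrangement lemma (Lemma~\ref{lem:seq}) rather than Chebyshev's sum inequality, but these are essentially the same monotone-rearrangement argument.

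The genuine gap is precisely the one you flag in your last paragraph and do not close. You treat the weights $w_{ij}=c_il_i^2+c_jl_j^2$ as genuinely pair-dependent and sketch two workarounds --- a weighted Chebyshev inequality or a magnitude bound --- without carrying either out. Neither is needed, because the weights are all equal: Lemma~\ref{lem:expand} gives $\det(\mb{\widetilde{A}}^{\mathsf{T}}\mb{\widetilde{A}})=\det(\mb{\widetilde{A}}_{-i}^{\mathsf{T}}\mb{\widetilde{A}}_{-i})\,l_i^2$ for \emph{every} $i$, so $\det(\mb{\widetilde{A}}_{-i}^{\mathsf{T}}\mb{\widetilde{A}}_{-i})\,l_i^2$ is the same quantity for all $i$, and hence $c_il_i^2=g'(\det)\det_{-i}l_i^2=g'(\det)\det$ does not depend on $i$. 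Therefore $w_{ij}\equiv 2g'(\det)\det$ is a single positive constant, the increment is a fixed decreasing function of $\theta_{ij}$ alone, and the ordinary Chebyshev (or rearrangement) step closes the variance argument immediately. As written, your proof does not terminate because you do not observe this; once observed, it terminates with no extra machinery.

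Two smaller issues. First, the (unconstrained) row-$i$ gradient is a positive multiple of $\mb{e}_i$, not of $-\mb{x}_i$ (Lemma~\ref{lem:grad}); you appear to have differentiated $\det=\det_{-i}(1-\|\mb{x}_i\|^2)$ without accounting for the fact that this identity only holds on the unit sphere. After projection by $\mathcal{P}$ the two directions share the same tangent component, so your subsequent bookkeeping happens to be correct, but the stated $\mb{G}$ is not the gradient. Second, the plan to handle pairs with $\theta_{ij}=\tfrac{\pi}{2}$ by ``arguing at a generic $\mb{\widetilde{A}}$ and extending by continuity'' does not deliver the per-point $\tau>0$ the theorem asserts; the paper treats those pairs head-on (Lemma~\ref{lem:m1}), showing their contribution is $o(\eta)$. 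Your own $\cot$ formula already yields this once you note $\cot(\tfrac{\pi}{2})=0$, so you should argue it directly rather than via a genericity handwave.
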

To prove Theorem \ref{thm:opt_main}, we show that at each point the gradient of $\Gamma(\mb{\widetilde{A}})$ is an ascent direction of the mean of angles $\Psi(\mb{\widetilde{A}})$ and is a descent direction of the variance of angles $\Pi(\mb{\widetilde{A}})$, which are formally stated in Theorem \ref{thm:mean} and \ref{thm:variance}. Since $\Omega(\mb{\widetilde{A}})$ is the difference between $\Psi(\mb{\widetilde{A}})$ and $\Pi(\mb{\widetilde{A}})$, the gradient of $\Gamma(\mb{\widetilde{A}})$ is a descent direction of $\Omega(\mb{\widetilde{A}})$.

\begin{mythm}
\label{thm:mean}
Let $\mb{G}$ be the gradient of $\Gamma(\mb{\widetilde{A}})$ w.r.t $\mb{\widetilde{A}}$. $\exists \tau_1>0$, such that $\forall \eta\in(0,\tau_1)$, $\Psi(\mathcal{P}(\mb{\widetilde{A}}+\eta \mb{G}))\geq \Psi(\mb{\widetilde{A}}) $ where $\mathcal{P}(\cdot)$ denotes the projection to the unit sphere.
\end{mythm}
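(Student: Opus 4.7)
The strategy is to (i) compute the Euclidean gradient $\mathbf{G}=\nabla\Gamma(\mathbf{\widetilde{A}})$ row by row using Lemma \ref{lem:expand}, (ii) pull out its tangential component after projection onto the product of unit spheres, (iii) check term by term that the first-order variation of each pairwise angle $\theta_{ij}$ along this tangent direction is non-negative, and (iv) lift the resulting infinitesimal inequality to a finite-step bound through a Taylor-remainder argument, yielding the desired $\tau_1$.

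For step (i), write $\Gamma(\mathbf{\widetilde{A}})=g(\det(\mathbf{\widetilde{A}}^{\mathsf{T}}\mathbf{\widetilde{A}}))$ with $g(x)=\arcsin\sqrt{x}-(\pi/2-\arcsin\sqrt{x})^2$, which is strictly increasing on $(0,1]$. Lemma \ref{lem:expand} together with $\mathbf{e}_i\cdot \mathbf{\tilde{a}}_i=l_i$ gives the identity $\det(\mathbf{\widetilde{A}}^{\mathsf{T}}\mathbf{\widetilde{A}})=\det(\mathbf{\widetilde{A}}_{-i}^{\mathsf{T}}\mathbf{\widetilde{A}}_{-i})\,l_i^{2}$, and differentiating in $\mathbf{\tilde{a}}_i$ (with $\mathbf{e}_i$ depending only on the remaining rows) yields $\mathbf{G}_i=c_i\mathbf{e}_i$ with $c_i:=2g'(\det)\det(\mathbf{\widetilde{A}}_{-i}^{\mathsf{T}}\mathbf{\widetilde{A}}_{-i})\,l_i>0$ (linear independence forces $l_i>0$). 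For step (ii), since $\|\mathbf{\tilde{a}}_i\|=1$, the first-order expansion of the projection reads $\mathcal{P}(\mathbf{\tilde{a}}_i+\eta c_i\mathbf{e}_i)=\mathbf{\tilde{a}}_i+\eta c_i\mathbf{t}_i+O(\eta^2)$, where $\mathbf{t}_i:=\mathbf{e}_i-l_i\mathbf{\tilde{a}}_i$ is the tangential projection; using $\mathbf{e}_i\perp \mathbf{\tilde{a}}_j$ for $j\ne i$ gives the key identity $\mathbf{t}_i\cdot\mathbf{\tilde{a}}_j=-l_i(\mathbf{\tilde{a}}_i\cdot\mathbf{\tilde{a}}_j)$.

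For step (iii), set $u_{ij}:=\mathbf{\tilde{a}}_i\cdot\mathbf{\tilde{a}}_j$ and differentiate $\theta_{ij}=\arccos|u_{ij}|$ along the displacement, summing the contributions from the motion of both $\mathbf{\tilde{a}}_i$ and $\mathbf{\tilde{a}}_j$. After the two signs cancel, this evaluates at $\eta=0$ to
\begin{equation*}
\left.\frac{d\theta_{ij}}{d\eta}\right|_{\eta=0}=\frac{(c_i l_i+c_j l_j)\,|u_{ij}|}{\sqrt{1-u_{ij}^{2}}}\ge 0,
\end{equation*}
and averaging over pairs delivers $(d\Psi/d\eta)|_{\eta=0}\ge 0$. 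For step (iv), I would invoke a dichotomy: if the directional derivative above is strictly positive, Taylor's theorem supplies a neighborhood $(0,\tau_1)$ on which $\Psi(\mathcal{P}(\mathbf{\widetilde{A}}+\eta\mathbf{G}))>\Psi(\mathbf{\widetilde{A}})$; otherwise, term-wise non-negativity forces $u_{ij}=0$ for every pair, so the rows are already mutually orthogonal, $\mathbf{e}_i=\mathbf{\tilde{a}}_i$ and hence $\mathbf{t}_i=\mathbf{0}$ for every $i$, and the update $\mathcal{P}(\mathbf{\widetilde{A}}+\eta\mathbf{G})=\mathbf{\widetilde{A}}$ preserves $\Psi=\pi/2$ exactly.

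The chief technical hazard I expect is the non-smoothness of $\arccos|\cdot|$ at $u_{ij}=0$ and the related worry that a second-order decrease of an already-maximal $\theta_{ij}$ could overwhelm the first-order increase of other angles for finite $\eta$. The dichotomy above is precisely what resolves this: whenever even a single $u_{ij}\ne 0$, the first-order gain is strictly positive and dominates the quadratic remainder on a small interval, while the purely orthogonal configuration is a fixed point of the update. A subordinate nuisance is uniformity of the Taylor remainder of $\mathcal{P}$ across the $K$ rows, which follows from smoothness of $\mathcal{P}$ away from the origin and the bounded magnitude of $\mathbf{G}$ on the compact product sphere.
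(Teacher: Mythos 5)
Your proposal is correct and follows essentially the same route as the paper: identify the gradient as $k_i\mathbf{e}_i$ (the paper's Lemma~\ref{lem:grad}), first-order expand the sphere projection, and show that for each non-orthogonal pair $d\theta_{ij}/d\eta\big|_{\eta=0}=\frac{(k_il_i+k_jl_j)\,|u_{ij}|}{\sqrt{1-u_{ij}^2}}>0$ (this is precisely the paper's $c_{ij}$ in Lemma~\ref{lem:m2}), while the orthogonal pairs contribute only $o(\eta)$ (Lemma~\ref{lem:m1}). One small merit of your version: the paper concludes $\lim_{\eta\to 0}(\Psi^{(t+1)}-\Psi^{(t)})/\eta=\sum_{(i,j)\in N}c_{ij}>0$, which tacitly assumes $N\neq\emptyset$; your explicit dichotomy — observing that if every $u_{ij}=0$ then $\mathbf{e}_i=\tilde{\mathbf{a}}_i$, $\mathbf{t}_i=\mathbf{0}$ and the projected update is the identity, preserving $\Psi=\pi/2$ — closes that gap cleanly.
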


\begin{proof} 
We first introduce some notations.
Let $V = \{(i,j)|1\leq i,j\leq K, i\neq j, \mb{\tilde{a}}_{i}^{(t)} \cdot \mb{\tilde{a}}_{j}^{(t)} = 0\}$, $N = \{(i,j)|1\leq i,j\leq K, i\neq j,\mb{\tilde{a}}_{i}^{(t)} \cdot \mb{\tilde{a}}_{j}^{(t)} \neq 0 \}$, where $\mb{\tilde{a}}_{i}^{(t)}$ is the $i$th row of $\mb{A}^{(t)}$.
Let $\Delta V = \sum_{(i, j) \in V} (\theta(\mb{\tilde{a}}_{i}^{(t+1)}, \mb{\tilde{a}}_{j}^{(t+1)}) - \theta(\mb{\tilde{a}}_{i}^{(t)}, \mb{\tilde{a}}_{j}^{(t)}))$, $\Delta N = \sum_{(i, j) \in N} (\theta(\mb{\tilde{a}}_{i}^{(t+1)}, \mb{\tilde{a}}_{j}^{(t+1)}) - \theta(\mb{\tilde{a}}_{i}^{(t)}, \mb{\tilde{a}}_{j}^{(t)}))$, then $\Omega(\mb{\widetilde{A}}^{(t+1)}) - \Omega(\mb{\widetilde{A}}^{(t)}) = \Delta V + \Delta N$.
Let $x_{ij}^{(t)} = |\mb{\tilde{a}}_{i}^{(t)} \cdot \mb{\tilde{a}}_{j}^{(t)}|$, $x_{ij}^{(t+1)} =|\mb{\tilde{a}}_i^{(t+1)}\cdot \mb{\tilde{a}}_j^{(t+1)}|$. 

The following lemmas are needed to prove Theorem \ref{thm:mean}.
\begin{mylemma}
\label{lem:grad}
Let the parameter vector $\mb{\tilde{a}}_{i}$ of component $i$ be decomposed into $\mb{\tilde{a}}_{i}=\mb{x}_{i}+l_i\mb{e}_{i}$, where $\mb{x}_{i}=\sum_{j=1,j\neq i}^{K}\alpha_{j}\mb{\tilde{a}}_{j}$ lies in the subspace $L$ spanned by $\{\mb{\tilde{a}}_{1},\cdots,\mb{\tilde{a}}_{K}\}\backslash\{\mb{\tilde{a}}_{i}\}$, $\mb{e}_{i}$ is in the orthogonal complement of $L$, $\|\mb{e}_i\|=1$, $\mb{e}_{i}\cdot \mb{\tilde{a}}_{i}>0$, $l_i$ is a scalar. Then the gradient of $\Gamma(\mb{\widetilde{A}})$ w.r.t $\mb{a}_{i}$ is $k_{i}\mb{e}_{i}$, where $k_{i}$ is a positive scalar.
\end{mylemma}

\begin{mylemma}
\label{lem:m1}
$\forall (i,j) \in V$, we have $\theta(\mb{\tilde{a}}_{i}^{(t+1)}, \mb{\tilde{a}}_{j}^{(t+1)}) - \theta(\mb{\tilde{a}}_{i}^{(t)}, \mb{\tilde{a}}_{j}^{(t)}) = o(\eta)$, where $\lim\limits_{\eta \to 0} \frac{o(\eta)}{\eta} = 0$.
\end{mylemma}

\begin{mylemma}
\label{lem:m2}
$\forall (i,j) \in N$, $\exists c_{ij} > 0$, such that $\theta(\mb{\tilde{a}}_{i}^{(t+1)}, \mb{\tilde{a}}_{j}^{(t+1)}) - \theta(\mb{\tilde{a}}_{i}^{(t)}, \mb{\tilde{a}}_{j}^{(t)}) = c_{ij} \eta + o(\eta)$, where $\lim\limits_{\eta \to 0} \frac{o(\eta)}{\eta} = 0$.
\end{mylemma}

According to Lemma \ref{lem:grad}, $\mb{\tilde{a}}_{i}^{(t+1)}=\frac{\mb{\tilde{a}}^{(t)}_{i}+\eta k_{i}\mb{e}_{i}}{\|\mb{\tilde{a}}^{(t)}_{i}+\eta k_{i} \mb{e}_{i}\|} $, $\mb{\tilde{a}}_{j}^{(t+1)}=\frac{\mb{\tilde{a}}^{(t)}_{j}+\eta k_{j} \mb{e}_{j}}{\|\mb{\tilde{a}}^{(t)}_{j}+\eta k_{j} \mb{e}_{j}\|}$ and $\mb{e}_{i}\cdot \mb{\tilde{a}}_{j}^{(t)}=0$, $\mb{e}_{j}\cdot \mb{\tilde{a}}_{i}^{(t)}=0$. Since $\mb{\tilde{a}}_{i}=\mb{x}_{i}+l_i\mb{e}_{i}$, we have $\|\mb{\tilde{a}}^{(t)}_{i}+\eta k_{i} \mb{e}_{i}\|=\sqrt{1+2 l_i k_i\eta + k_i^2 \eta^2}$, and
$x_{ij}^{(t+1)}= \frac{| \mb{\tilde{a}}_i^{(t)}\cdot \mb{\tilde{a}}_j^{(t)} + \eta^2 k_i k_j \mb{e}_i^{(t)}\cdot \mb{e}_j^{(t)}|}{ \sqrt{1+2 l_i k_i\eta + k_i^2 \eta^2} \sqrt{1 + 2 l_j k_j\eta + k_j^2 \eta^2}}$. We can prove Theorem \ref{thm:mean} now.
\begin{equation}
\begin{array}{lll}
\Psi(\mb{\widetilde{A}}^{(t+1)}) - \Omega(\mb{\widetilde{A}}^{(t)})&=& \Delta V + \Delta N \\
&=& \sum_{(i,j)\in V} o(\eta) + \sum_{(i,j)\in N} (c_{ij}\eta + o(\eta)) \\
&=& o(\eta)+\sum_{(i,j)\in N} c_{ij}\eta
\end{array}
\end{equation}
$\lim\limits_{\eta \to 0} \frac{\Psi(\mb{\widetilde{A}}^{(t+1)}) - \Psi(\mb{\widetilde{A}}^{(t)})}{\eta} = \lim\limits_{\eta \to 0} \frac{o(\eta)+\sum_{(i,j)\in N} c_{ij}\eta }{\eta} = \sum_{(i,j)\in N} c_{ij}>0$.
So $\exists \tau > 0$ such that $\forall \eta \in (0,\tau)$ we have $\frac{\Psi(\mb{\widetilde{A}}^{(t+1)}) - \Psi(\mb{\widetilde{A}}^{(t)}) }{\eta} \ge \frac{1}{2}\sum_{(i,j)\in N} c_{ij}>0$. The proof completes.
\end{proof}

\begin{mythm}
\label{thm:variance}
Let $\mb{G}$ be the gradient of $\Gamma(\mb{\widetilde{A}})$ w.r.t $\mb{\widetilde{A}}$. $\exists \tau_2>0$, such that $\forall \eta\in(0,\tau_2)$, $\Pi(\mathcal{P}(\mb{\widetilde{A}}+\eta \mb{G}))\leq \Pi(\mb{\widetilde{A}}) $ where $\mathcal{P}(\cdot)$ denotes the projection to the unit sphere.
\end{mythm}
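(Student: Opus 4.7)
The plan is to mirror the first-order analysis carried out in the proof of Theorem \ref{thm:mean}, reusing the same pair-decomposition $V, N$ and the parameterization $\mb{\tilde{a}}_i = \mb{x}_i + l_i \mb{e}_i$. By Lemmas \ref{lem:m1} and \ref{lem:m2} we already have $\theta_{ij}^{(t+1)} - \theta_{ij}^{(t)} = o(\eta)$ for $(i,j) \in V$ and $\theta_{ij}^{(t+1)} - \theta_{ij}^{(t)} = c_{ij}\eta + o(\eta)$ with $c_{ij} > 0$ for $(i,j) \in N$, so the task reduces to showing that the corresponding first-order change in the variance $\Pi$ is non-positive.

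First I would expand $\Pi(\mb{\widetilde{A}}^{(t+1)}) - \Pi(\mb{\widetilde{A}}^{(t)})$ and isolate the leading order. Writing $\Delta\theta_{ij} = \theta_{ij}^{(t+1)} - \theta_{ij}^{(t)}$ and $\Delta\mu = \mu^{(t+1)} - \mu^{(t)}$, the cross term $-2\Delta\mu\sum_{i\ne j}(\theta_{ij}^{(t)} - \mu^{(t)})$ vanishes because the centered sum is zero, the $V$-pairs contribute only at $o(\eta)$, and the surviving leading term is $\frac{2\eta}{K(K-1)}\sum_{(i,j)\in N} c_{ij}(\theta_{ij}^{(t)} - \mu^{(t)})$. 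Second, I would pin down $c_{ij}$ explicitly. Redoing the Taylor expansion of the dot-product formula used in the proof of Lemma \ref{lem:m2} gives $c_{ij} = \cot(\theta_{ij}^{(t)})(k_i l_i + k_j l_j)$. Then, applying the chain rule to $\Gamma = g(\det(\mb{\widetilde{A}}^{\mathsf{T}}\mb{\widetilde{A}}))$ with $g(x) = \arcsin(\sqrt{x}) - (\tfrac{\pi}{2} - \arcsin(\sqrt{x}))^2$, and invoking Lemma \ref{lem:expand} in the form $\det(\mb{\widetilde{A}}^{\mathsf{T}}\mb{\widetilde{A}}) = l_i^2 \det(\mb{\widetilde{A}}_{-i}^{\mathsf{T}}\mb{\widetilde{A}}_{-i})$, I would obtain $k_i = 2 u g'(u)/l_i$ with $u = \det(\mb{\widetilde{A}}^{\mathsf{T}}\mb{\widetilde{A}})$. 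The crucial consequence is that $k_i l_i = 2 u g'(u) =: \kappa$ is the \emph{same} positive constant for every $i$, so $c_{ij} = 2\kappa\cot(\theta_{ij}^{(t)})$ depends on the pair only through the angle itself.

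With this clean form for $c_{ij}$, I would finish via the strict monotonicity of $\cot$ on $(0, \pi/2]$. Since $\cot$ is strictly decreasing, each term $(\theta_{ij} - \mu)(\cot(\theta_{ij}) - \cot(\mu)) \le 0$ pointwise, so $\sum_{(i,j)\in N} \cot(\theta_{ij})(\theta_{ij} - \mu) \le \cot(\mu)\sum_{(i,j)\in N}(\theta_{ij} - \mu) = -|V|\cot(\mu)(\tfrac{\pi}{2} - \mu) \le 0$, where the equality uses $\theta_{ij} = \pi/2$ on $V$ together with $\sum_{i\ne j}(\theta_{ij} - \mu) = 0$, and the last bound uses $\mu \in (0, \pi/2]$. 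Plugging $c_{ij} = 2\kappa\cot(\theta_{ij}^{(t)})$ back in, the first-order derivative of $\Pi$ at $\eta = 0$ is non-positive, and a suitable $\tau_2 > 0$ follows by continuity, exactly in the same way as in the proof of Theorem \ref{thm:mean}.

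The hard part, I expect, will be isolating the structural identity $k_i l_i \equiv \kappa$. Without that cancellation, the per-pair weights $(k_i l_i + k_j l_j)$ could weight pairs above and below $\mu$ in uncontrolled ways and block any monotonicity-based bound on the sum, leaving only a much more delicate (and not obviously true) weighted-Chebyshev argument. Once the explicit form $c_{ij} = 2\kappa\cot(\theta_{ij}^{(t)})$ is in hand, the rest is a short Chebyshev-type inequality combined with the tight identity $\sum_{(i,j)\in N}(\theta_{ij} - \mu) = -|V|(\tfrac{\pi}{2} - \mu)$ inherited from the definition of $\mu$ and the fact that $V$-angles are exactly $\pi/2$.
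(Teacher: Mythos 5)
Your route is genuinely different from the paper's and in most respects cleaner. The paper proves the decrease of the variance through Lemma~\ref{lem:seq}, an ad hoc combinatorial lemma about sorted sequences updated by a strictly decreasing function, together with an intricate telescoping of intermediate sequences. You replace all of that with a direct first-order expansion of $\Pi$ plus a Chebyshev/rearrangement inequality, and the structural identity you isolate is correct and is the real reason either approach works: from Lemma~\ref{lem:expand}, $\det(\mb{\widetilde{A}}_{-i}^{\mathsf{T}}\mb{\widetilde{A}}_{-i}) = u/l_i^2$ with $u = \det(\mb{\widetilde{A}}^{\mathsf{T}}\mb{\widetilde{A}})$, and Lemma~\ref{lem:grad} gives $k_i = g'(u)\det(\mb{\widetilde{A}}_{-i}^{\mathsf{T}}\mb{\widetilde{A}}_{-i})l_i$, so $k_i l_i = u g'(u) =: \kappa$ is independent of $i$, and from Lemma~\ref{lem:m2}'s computation $c_{ij} = \cot(\theta_{ij})(k_i l_i + k_j l_j) = 2\kappa\cot(\theta_{ij})$. (Incidentally the paper's own proof silently relies on exactly this fact when it writes $c_k\eta = g(\theta_k^{(t)})$ as a function of the angle alone, but never says so; you have made it explicit, which is an improvement. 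Your constant is $2\kappa$ vs.\ your claimed $k_i l_i = 2ug'(u)$ — there may be a stray factor of 2 depending on how one reads the paper's Lemma~\ref{lem:grad}, but the sign is all that matters.) Your closing Chebyshev bound and the identity $\sum_{(i,j)\in N}(\theta_{ij}-\mu) = -|V|(\tfrac{\pi}{2}-\mu)$ are correct; a short check shows that whenever $\Pi > 0$ the first-order coefficient is \emph{strictly} negative, so the ``by continuity'' step is legitimate there.

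However, there is one genuine gap. When $\Pi(\mb{\widetilde{A}}^{(t)}) = 0$ — all pairwise angles equal — every inequality in your Chebyshev chain becomes an equality, the first derivative vanishes, and ``by continuity'' no longer delivers anything: a zero first derivative together with a possibly positive second-order term would let $\Pi$ increase for small $\eta > 0$. Since $\Pi \ge 0$ always, the claim $\Pi(\mathcal{P}(\mb{\widetilde{A}}+\eta\mb{G})) \le \Pi(\mb{\widetilde{A}}) = 0$ in this case is equivalent to showing the variance stays \emph{exactly} zero, i.e.\ that the gradient step plus projection preserves the ``all pairwise dot products equal'' structure. The paper handles this explicitly (its second case, $\theta_1^{(t)} = \theta_n^{(t)}$) by writing $(\mb{\widetilde{A}}^{(t)})^{\mathsf{T}}\mb{\widetilde{A}}^{(t)} = (p_2-p_1)\mb{I}_K + p_1\mb{1}_K\mb{1}_K^{\mathsf{T}}$, applying Sherman--Morrison to get the inverse, and checking that the updated Gram matrix $(\mb{\widetilde{A}}^{(t+1)})^{\mathsf{T}}\mb{\widetilde{A}}^{(t+1)} = (\mb{\widetilde{A}}^{(t)})^{\mathsf{T}}\mb{\widetilde{A}}^{(t)} + 2c\mb{I} + c^2((\mb{\widetilde{A}}^{(t)})^{\mathsf{T}}\mb{\widetilde{A}}^{(t)})^{-1}$ retains the same form, which after the row-normalizing projection leaves all off-diagonal angles equal again. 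You need to add this case (or an equivalent symmetry argument) to close the proof.
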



\begin{proof} To prove Theorem \ref{thm:variance}, we need the following lemma.
\begin{mylemma}
\label{lem:seq}
Given a nondecreasing sequence $b=(b_i)_{i = 1}^{n}$ and a strictly decreasing function $g(x)$ which satisfies
$
0 \le g(b_i) \le \textrm{min}\{b_{i+1} - b_i: i = 1, 2, \cdots, n-1, b_{i+1} \ne b_i\}
$,
we define a sequence $c=(c_i)_{i = 1}^{n}$ where $c_i = b_i + g(b_i)$. If $b_1 < b_n$, then $\textrm{var}(c) < \textrm{var}(b)$, where $var(\cdot)$ denotes the variance of a sequence. Furthermore, let $n' = \textrm{max}\{j: b_j \ne b_n\}$, we define a sequence $b'=(b'_i)_{i = 1}^{n}$ where $b_i' = b_i + g(b_n) + (g(b_{n'}) - g(b_n))\mathbb{I}(i \le n')$ and $\mathbb{I}(\cdot)$ is the indicator function, then $\textrm{var}(c) \le \textrm{var}(b') < \textrm{var}(b)$.
\end{mylemma}

The intuition behind the proof of Theorem \ref{thm:variance} is: when the stepsize $\eta$ is sufficiently small, we can make sure the changes of smaller angles (between consecutive iterations) are larger than the changes of larger angles, then Lemma \ref{lem:seq} can be used to prove that the variance decreases.
Let $\theta_{ij}^{(t)}$ denote $\theta(\mb{\tilde{a}}_{i}^{(t)}, \mb{\tilde{a}}_{j}^{(t)})$. We sort $\theta_{ij}^{(t)}$ in nondecreasing order and denote the resultant sequence as $\theta^{(t)}=(\theta^{(t)}_k)_{k=1}^{n}$, then $\textrm{var}((\theta_{ij}^{(t)})) = \textrm{var}(\theta^{(t)})$. We use the same order to index $\theta_{ij}^{(t+1)}$ and denote the resultant sequence as $\theta^{(t+1)}=(\theta_k^{(t+1)})_{k=1}^{n}$, then
$\textrm{var}((\theta_{ij}^{(t+1)}))=\textrm{var}(\theta^{(t+1)}) $. Let $g(\theta_{ij}^{(t)}) = \frac{2\textrm{cos}(\theta_{ij}^{(t)})}{\sqrt{1-{\textrm{cos}(\theta_{ij}^{(t)})}^2}}\eta$ if $\theta_{ij}^{(t)} < \frac{\pi}{2}$ and $0$ if $\theta_{ij}^{(t)} = \frac{\pi}{2}$, then $g(\theta_{ij}^{(t)})$ is a strictly decreasing function.
Let $\tilde{\theta}_{k}^{(t)} = \theta_{k}^{(t)}+ c_{k}\eta = \theta_{k}^{(t)}+g(\theta_{k}^{(t)})$. It is easy to see when $\eta$ is sufficiently small, $0 \le g(\theta_k^{(t)}) \le \textrm{min}\{\theta_{k+1}^{(t)}-\theta_k^{(t)}: k=1,2,\cdots,n-1, \theta_{k+1}^{(t)}\ne \theta_k^{(t)}\}$. We continue the proof from two complementary cases: (1) $\theta_1^{(t)} < \theta_n^{(t)}$; (2) $\theta_1^{(t)} = \theta_n^{(t)}$.
If $\theta_1^{(t)} < \theta_n^{(t)}$, then according to Lemma \ref{lem:seq}, we have $\textrm{var}(\tilde{\theta}^{(t)}) < \textrm{var}({\theta}^{(t)})$, where $\tilde{\theta}^{(t)}=(\tilde{\theta}^{(t)}_{k})_{k=1}^{n}$.
Furthermore, let $n'=\textrm{max}\{j:\theta_j^{(t)} \ne \theta_n^{(t)}\}$, $\theta_k'^{(t)} = \theta_k^{(t)} + g(\theta_n^{(t)})+(g(\theta_{n'}^{(t)})-g(\theta_n^{(t)}))\mathbb{I}(k\le n')$, then $\textrm{var}(\tilde{\theta}^{(t)}) \le \textrm{var}(\theta'^{(t)}) < \textrm{var}(\theta^{(t)})$, where $\theta'^{(t)}=(\theta'^{(t)}_{k})_{k=1}^{n}$. $\textrm{var}(\theta'^{(t)})$ can be written as:
\begin{equation}
\begin{array}{lll}
&=&\frac{1}{n} \sum_{i=1}^n (\theta_i'^{(t)} - \frac{1}{n} \sum_{j=1}^n \theta_j'^{(t)})^2 \\
&=& \frac{1}{n} \sum_{i=1}^n (\theta_i^{(t)} + (g(\theta_{n'}^{(t)}) - g(\theta_n^{(t)}))\mathbb{I}(i\le n') - \frac{1}{n} \sum_{j=1}^n \theta_j^{(t)}- \frac{n'}{n}(g(\theta_{n'}^{(t)}) - g(\theta_n^{(t)})) )^2 \\
&= &\textrm{var}(\theta^{(t)}) + 2(\frac{1}{n} \sum_{i=1}^n (\theta_i^{(t)} -\frac{1}{n} \sum_{j=1}^n \theta_j^{(t)})(g(\theta_{n'}^{(t)}) - g(\theta_n^{(t)}))(\mathbb{I}(i\le n') - \frac{n'}{n}) \\
&&+ \frac{1}{n} \sum_{j=1}^n (g(\theta_{n'}^{(t)})
- g(\theta_n^{(t)}))^2 (\mathbb{I}(i\le n') - \frac{n'}{n})^2
\end{array}
\end{equation}

Let
$\lambda = 2(\frac{1}{n} \sum_{i=1}^n (\theta_i^{(t)} - \frac{1}{n} \sum_{j=1}^n \theta_j^{(t)})
(\mathbb{I}(i\le n') - \frac{n'}{n})$, it can be further written as
\begin{equation}
\begin{array}{lll}
&=&\frac{2}{n}(\sum_{i=1}^{n'} (\theta_i^{(t)} - \frac{1}{n} \sum_{j=1}^n \theta_j^{(t)})(1 - \frac{n'}{n}) + \frac{2}{n}(\sum_{i=n'+1}^n (\theta_i^{(t)} - \frac{1}{n} \sum_{j=1}^n \theta_j^{(t)})
( - \frac{n'}{n}) \\
&=& \frac{2}{n}((\sum_{i=1}^{n'}\theta_i^{(t)} - \frac{n'}{n} \sum_{j=1}^n
\theta_j^{(t)})(1 - \frac{n'}{n}) +
\frac{2}{n}
( (\sum_{i=n'+1}^n
\theta_i^{(t)} - \frac{n-n'}{n} \sum_{j=1}^n \theta_j^{(t)})( - \frac{n'}{n}) \\
&=& \frac{2n'(n-n')}{n^2}(\frac{1}{n'}\sum_{i=1}^{n'}\theta_i^{(t)}
-\frac{1}{n-n'}
\sum_{i=n'+1}^n \theta_i^{(t)})
\end{array}
\end{equation}
As $\theta_k^{(t)}$ is nondecreasing and $\theta_n^{(t)} \ne \theta_{n'}^{(t)}$, we have $\lambda < 0$.
Let $\mu = \frac{2\textrm{cos}(\theta_{n'}^{(t)})}{\sqrt{1-{\textrm{cos}(\theta_{n'}^{(t)})}^2}}
- \frac{2\textrm{cos}(\theta_n^{(t)})}{\sqrt{1-{\textrm{cos}(\theta_n^{(t)})}^2}}$ when $\theta_n^{(t)} < \frac{\pi}{2}$ and $\mu = \frac{2\textrm{cos}(\theta_{n'}^{(t)})}{\sqrt{1-{\textrm{cos}(\theta_{n'}^{(t)})}^2}}$ when $\theta_n^{(t)}=\frac{\pi}{2}$, then $g(\theta_{n'}^{(t)}) - g(\theta_n^{(t)}) = \mu \eta$ and $\mu > 0$. Substituting $\lambda$ and $\mu$ into $\textrm{var}(\theta'^{(t)})$, we can obtain:
\begin{equation*}
\begin{array}{lll}
\textrm{var}(\theta'^{(t)}) &=& \textrm{var} (\theta^{(t)}) + \lambda \mu \eta + \frac{1}{n} \sum_{j=1}^n (\mathbb{I}(i\le n') - \frac{n'}{n})^2 \mu^2 \eta^2 \\
&=& \textrm{var}(\theta^{(t)}) + \lambda \mu \eta + o(\eta)
\end{array}
\end{equation*}
Note that $\lambda < 0$ and $\mu > 0$, so $\exists \delta_1$, such that $\eta < \delta_1 \Rightarrow \textrm{var}(\theta'^{(t)})< \textrm{var}(\theta^{(t)}) + \frac{\lambda \mu}{2} \eta$. As $\textrm{var}(\tilde{\theta}^{(t)}) < \textrm{var} (\theta'^{(t)})$, we can draw the conclusion that $\textrm{var}(\tilde{\theta}^{(t)}) < \textrm{var}(\theta^{(t)}) + \frac{\lambda \mu}{2} \eta$. On the other hand,
\begin{equation*}
\begin{array}{lll}
\textrm{var}(\theta^{(t+1)}) &=& \frac{1}{n}\sum_{i=1}^n(\theta_i^{(t+1)} - \frac{1}{n}\sum_{j=1}^n \theta_j^{(t+1)})^2 \\
&=& \frac{1}{n}\sum_{i=1}^n(\tilde{\theta}_i^{(t)} + o(\eta) - \frac{1}{n}\sum_{j=1}^n \tilde{\theta}_j^{(t)} + o(\eta))^2 \\
&=& \frac{1}{n}\sum_{i=1}^n(\tilde{\theta}_i^{(t)} - \frac{1}{n}\sum_{j=1}^n \tilde{\theta}_j^{(t)})^2 + o(\eta)\\
&=&\textrm{var}(\tilde{\theta}^{(t)}) + o(\eta)
\end{array}
\end{equation*}
So $\exists \delta_2 >0$ such that
$\eta < \delta_2 \Rightarrow \textrm{var}(\theta^{(t+1)})< \textrm{var}(\tilde{\theta}^{(t)}) - \frac{\lambda \mu}{4} \eta
$.
Let $\delta = \textrm{min}\{\delta_1,\delta_2\}$, then
\begin{equation*}
\begin{array}{lll}
\eta<\delta
&\Rightarrow& \textrm{var}(\theta^{(t+1)})
<\textrm{var}(\theta^{(t)}) +\frac{\lambda \mu}{4} \eta
< \textrm{var}(\theta^{(t)})\\
&\Rightarrow& \textrm{var}((\theta^{(t+1)})) < \textrm{var}(\theta^{(t)})
\end{array}
\end{equation*}
For the second case $\theta_1^{(t)} = \theta_n^{(t)}$, i.e.,
$\forall (i_1,j_1), (i_2,j_2)\in N \cup V$, $\theta_{i_1 j_1}^{(t)} = \theta_{i_2 j_2}^{(t)}$, we prove that $\textrm{var}(\theta^{(t+1)}) = \textrm{var}(\theta^{(t)})$. In this case, $\forall (i_1,j_1), (i_2,j_2) \in N \cup V$, $((\mb{\widetilde{A}}^{(t)})^\mathsf{T} \mb{\widetilde{A}}^{(t)})_{i_1 j_1} = \mb{\tilde{a}}^{(t)}_{i_1} \cdot\mb{\tilde{a}}^{(t)}_{j_1} = \mb{\tilde{a}}^{(t)}_{i_2} \cdot\mb{\tilde{a}}^{(t)}_{j_2} = ((\mb{\widetilde{A}}^{(t)})^\mathsf{T} \mb{\widetilde{A}}^{(t)})_{i_2 j_2}$.
Denote $p_1 = \mb{\tilde{a}}^{(t)}_{i} \cdot\mb{\tilde{a}}^{(t)}_{j}$ for $i \ne j$ and $p_2 = \mb{\tilde{a}}^{(t)}_{i} \cdot\mb{\tilde{a}}^{(t)}_{j}$ for $i=j$.
As $\mb{\widetilde{A}}^{(t+1)} = \mb{\widetilde{A}}^{(t)} + c\mb{\widetilde{A}}^{(t)} ((\mb{\widetilde{A}}^{(t)})^\mathsf{T} \mb{\widetilde{A}}^{(t)})^{-1}$, where $c=2\eta g'(\det((\mb{\widetilde{A}}^{(t)})^\mathsf{T} \mb{\widetilde{A}}^{(t)}))
\det((\mb{\widetilde{A}}^{(t)})^\mathsf{T} \mb{\widetilde{A}}^{(t)})$ and $g(x)=\arcsin(\sqrt{x})-(\frac{\pi}{2}-\arcsin(\sqrt{x}))^2$,
we have $(\mb{\widetilde{A}}^{(t+1)})^\mathsf{T} \mb{\widetilde{A}}^{(t+1)} = (\mb{\widetilde{A}}^{(t)})^\mathsf{T} \mb{\widetilde{A}}^{(t)} + 2c \mb{I} + c^2 ((\mb{\widetilde{A}}^{(t)})^\mathsf{T} \mb{\widetilde{A}}^{(t)})^{-1}$.
It is clear that $\forall (i_1,j_1), (i_2,j_2) \in N \cup V$, $((\mb{\widetilde{A}}^{(t)})^\mathsf{T} \mb{\widetilde{A}}^{(t)} + 2c \mb{I} )_{i_1 j_1} = ((\mb{\widetilde{A}}^{(t)})^\mathsf{T} \mb{\widetilde{A}}^{(t)} + 2c \mb{I} )_{i_2 j_2}$. For $c^2 ((\mb{\widetilde{A}}^{(t)})^\mathsf{T} \mb{\widetilde{A}}^{(t)})^{-1}$, write it as $c^2 ((p_2 - p_1)\mb{I}_K + p_1 \mb{1}_K \mb{1}_K^\mathsf{T})^{-1}$, where $\mb{I}_K$ is the identity matrix and $\mb{1}_K$ is a vector of $1$s whose length is $K$. Applying Sherman-Morrison formula, we can obtain that $((\mb{\widetilde{A}}^{(t)})^\mathsf{T} \mb{\widetilde{A}}^{(t)})^{-1}=
((p_2-p_1)^{-1}\mb{I}_K - \frac{(p_2-p_1)^{-1}\mb{1}_K \mb{1}_K^T}{1+K(p_2-p_1)})$
which implies that $\forall (i_1,j_1), (i_2,j_2) \in N \cup V$, $((\mb{\widetilde{A}}^{(t)})^\mathsf{T} \mb{\widetilde{A}}^{(t)})^{-1}_{i_1 j_1} = ((\mb{\widetilde{A}}^{(t)})^\mathsf{T} \mb{\widetilde{A}}^{(t)})^{-1}_{i_2 j_2}$, so $((\mb{\widetilde{A}}^{(t+1)})^\mathsf{T} \mb{\widetilde{A}}^{(t+1)})_{i_1 j_1} = ((\mb{\widetilde{A}}^{(t+1)})^\mathsf{T} \mb{\widetilde{A}}^{(t+1)})_{i_2 j_2}$, so $\textrm{var}(\theta^{(t+1)}) = 0 = \textrm{var}(\theta^{(t)})$.

Putting these two cases together, we conclude that $\exists \tau_2>0$, such that $\forall \eta\in(0,\tau_2)$, $\Pi(\mb{\widetilde{A}}^{(t+1)})\leq \Pi(\mb{\widetilde{A}}^{(t)})$.
\end{proof}

\section{Case Studies}
In this section, we instantiate the general framework of mutual angle regularized LVMs (MAR-LVMs) to three specific latent variable models: restricted Boltzmann machine, distance metric learning and neural networks, which are utilized to carry out the theoretical and empirical study later on.
\subsection{Restricted Boltzmann Machine}
\begin{figure}[t]
\begin{center}
\centerline{\includegraphics[width=0.5\columnwidth]{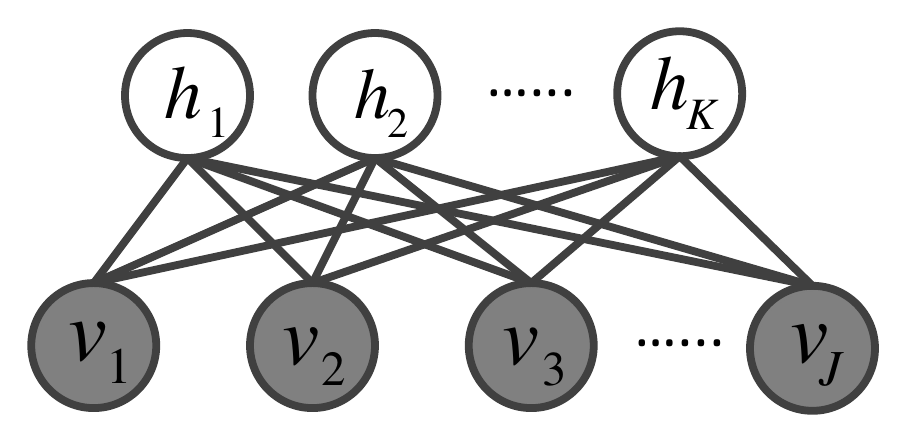}
}
\caption{Restricted Boltzmann Machine}
\label{fig:rbm}
\end{center}
\end{figure}
Restricted Boltzmann Machine (RBM) \citep{smolensky1986information,hinton2009replicated,srivastava2013modeling} is a primary model for representation learning and has been widely utilized in document modeling, information retrieval, collaborative filtering, etc. RBM is a two-layer undirected graphical model (Figure \ref{fig:rbm}) consisting of a layer of hidden units $\mb{h}=\{h_{k}\}_{k=1}^{K}$ used to capture latent features and a layer of visible units $\mb{v}=\{v_{j}\}_{j=1}^{D}$ utilized to represent observed data. Each hidden unit is connected with all visible units with undirected and weighted edges. The energy function defined over $\mb{h}$ and $\mb{v}$ is
\begin{equation}
E(\mb{h},\mb{v})=-\sum_{j=1}^{D}\alpha_{j}v_{j}-\sum_{k=1}^{K}
\beta_{k}h_{k}-\sum\limits_{k=1}^{K}\sum\limits_{j=1}^{D}A_{kj}h_{k}v_{j}
\end{equation}
where $\boldsymbol\alpha=\{\alpha_{j}\}_{j=1}^{J}$ and $\boldsymbol\beta=\{\beta_{j}\}_{j=1}^{D}$ are the biases associated with the visible and hidden units respectively. $\mb{A}\in \mathbb{R}^{K\times D}$ contain the weights on the edges connecting two set of units. The $k$-th row of $\mb{A}$ is the weight vector corresponding to hidden unit $k$. The energy function induces a probability distribution $p(\mb{h},\mb{v})=\exp(-E(\mb{h},\mb{v}))/Z$ where $Z$ is the partition function. The model parameters $\boldsymbol\alpha$, $\boldsymbol\beta$, $\mb{A}$ can be learned by maximizing data likelihood using gradient ascent method where the gradient is approximately computed via the contrastive divergence \citep{hinton2002training} method.

As stated early, the popularity of latent patterns underlying data is in a power-law distribution: a few dominant patterns appear very frequently while those in the long-tail region are of low frequency. The standard RBM tends to learn many redundant hidden units to cover the dominant patterns and ignore the long-tail patterns. To address this issue, we use the mutual angular regularizer to diversify the hidden units in RBM, encouraging them to diversely spread out so that the long-tail patterns get a better chance to be covered. A mutual angle regularized RBM (MAR-RBM) can be defined as follows
\begin{equation}
\begin{array}{ll}
\textrm{max}& \sum_{i=1}^{N}\log p(\mb{v}_i;\boldsymbol\alpha,\boldsymbol\beta,\mb{A})+\lambda \Omega(\mb{A})
\end{array}
\end{equation}
where $\sum_{i=1}^{N}\log p(\mb{v}_i;\boldsymbol\alpha,\boldsymbol\beta,\mb{A})$ is the log-likelihood of RBM and $\Omega(\mb{A})$ is the mutual angular regularizer encouraging the weight vectors of hidden units to be different from each other.

\subsection{Distance Metric Learning}
A proper distance measure is of vital importance for many distance based tasks and applications, such as retrieval \citep{zhang2014supervised}, clustering \citep{xing2002distance} and classification \citep{weinberger2005distance}. Distance metric learning (DML) \citep{xing2002distance,weinberger2005distance,davis2007information} takes pairs of data points which are labeled either as similar or dissimilar and learns a distance metric such that similar data pairs will be placed close to each other while dissimilar pairs will be separated apart. DML can be formulated as a latent variable model, which learns a projection matrix $\mb{A}\in\mathbb{R}^{K\times D}$ to project the data pair $(\mb{x},\mb{y})$ from the original $D$-dimensional feature space to a $K$-dimensional latent space and measure the squared Euclidean distance $\|\mb{A}\mb{x}-\mb{A}\mb{y}\|_2^2$ therein. Each row of $\mb{A}$ corresponds to a latent component (or one dimension of the latent space). Given data pairs labeled as similar $\mathcal{S}=\{(\mb{x}_{i},\mb{y}_{i})\}_{i=1}^{|\mathcal{S}|}$ and dissimilar $\mathcal{D}=\{(\mb{x}_{i},\mb{y}_{i})\}_{i=1}^{|\mathcal{D}|}$, DML learns the projection matrix $\mb{A}$ by minimizing the distance of similar pairs while separating dissimilar pairs apart with a certain margin
\begin{equation}
\label{eq:dml_opt_2}
\begin{array}{ll}
\textrm{min}_{\mb{A}}&\frac{1}{|\mathcal{S}|}\sum\limits_{(\mb{x},\mb{y})\in \mathcal{S}}\|\mb{Ax}-\mb{Ay}\|^{2}\\
s.t.&\|\mb{Ax}-\mb{Ay}\|^{2}\geq1, \forall (\mb{x},\mb{y})\in\mathcal{D}\\
\end{array}
\end{equation}

In choosing the number $K$ of latent components, there is an inherent tradeoff between the effectiveness of the distance matrix $\mb{A}$ and computational efficiency. A larger $K$ would bestow $\mb{A}$ more expressiveness and power in measuring distances. However, the resultant latent representations would be of high dimensionality, which incurs high computational complexity and inefficiency. This is especially true for retrieval where performing nearest neighbor search on high-dimensional representations is largely difficult. On the other hand, while a smaller $K$ can reduce the computational cost, it would render the distance matrix less effective.

To address this dilemma, we utilize the mutual angular regularizer to learn a compact but effective distance matrix: $K$ is reduced to facilitate computational efficiency without sacrificing the effectiveness of measuring distances. The MAR encourages the components in $\mb{A}$ to have larger mutual angles, which renders the components to be less correlated and each component captures information that cannot be captured by others. As a result, a small number of components are sufficient to capture a large proportion of information. The mutual angle regularized DML (MAR-DML) problem can be defined as follows
\begin{equation}
\label{eq:mar_dml}
\begin{array}{ll}
\textrm{min}_{\mb{A}}&\frac{1}{|\mathcal{S}|}\sum\limits_{(\mb{x},\mb{y})\in \mathcal{S}}\|\mb{Ax}-\mb{Ay}\|^{2}-\lambda\Omega(\mb{A})\\
s.t.&\|\mb{Ax}-\mb{Ay}\|^{2}\geq1, \forall (\mb{x},\mb{y})\in\mathcal{D}\\
\end{array}
\end{equation}

\subsection{Neural Network with Mutual Angular Regularization}

Neural networks (NNs) have shown great success in many applications, such as speech recognition \citep{hinton2012deep}, image classification \citep{krizhevsky2012imagenet}, machine translation \citep{bahdanau2014neural}, etc. NNs are composed of multiple layers of computing units and units in adjacent layers are connected with weighted edges. NNs are a typical type of LVMs where each hidden unit is a component aiming to capture the latent features underlying data and is characterized by a vector of weights connecting to units in the lower layer.

We instantiate the general framework of diversity-regularized LVM to neural network and utilize the mutual angular regularizer to encourage the hidden units (precisely their weight vectors) to be different from each other, which could lead to several benefits: (1) better capturing of long-tail latent features; (2) reducing the size of NN without compromising modeling power.
Let $\mathcal{L}(\{\mb{A}_{i}\}_{i=0}^{l-1})$ be the loss function of a neural network with $l$ layers where $\mb{A}_{i}$ are the weights between layer $i$ and layer $i+1$, and each row of $\mb{A}_{i}$ corresponds to a unit. A diversified NN with mutual angular regularization (MAR-NN) can be defined as
\begin{equation}
\begin{array}{ll}
\textrm{min}_{\{\mb{A}_{i}\}_{i=0}^{l-1}}&\mathcal{L}(\{\mb{A}_{i}\}_{i=0}^{l-1})-\lambda \sum_{i=0}^{l-2}\Omega(\mb{A}_{i})
\end{array}
\end{equation}
where $\Omega(\mb{A}_{i})$ is the mutual angular regularizer and $\lambda>0$ is a tradeoff parameter. Note that the regularizer is not applied to $\mb{A}_{l-1}$ since in the last layer are output units which are not latent components.

\section{Analysis}
In this section, we aim to formally understand why and how introducing diversity into LVMs can lead to better modeling effects. Specifically, we analyze how the mutual angular regularizer affects the generalization performance of supervised latent variable models. We choose neural network (NN) as a model instance to carry out the analysis while noting that the analysis could be extended to other LVMs such as restricted Boltzmann machine and Distance Metric Learning.

Let $L(f)=\mathbb{E}_{(\mb{x},y)\sim p^*}[\ell(f(\mb{x}),y)]$ denote the generalization error of hypothesis $f$, where $p^*$ is the distribution of input-output pair $(\mb{x},y)$ and $\ell(\cdot)$ is the loss function. Let $f^*\in \textrm{argmin}_{f\in \mathcal{F}}L(f)$ be the expected risk minimizer. Let $\hat{L}(f)=\frac{1}{n}\sum_{i=1}^{n}\ell(f(\mb{x}^{(i)}),y^{(i)})$ be the training error and $\hat{f}\in \textrm{argmin}_{f\in \mathcal{F}}\hat{L}(f)$ be the empirical risk minimizer. We are interested in the generalization error $L(\hat{f})$ of the empirical risk minimizer $\hat{f}$, which can be decomposed into two parts $L(\hat{f})=L(\hat{f})-L(f^*)+L(f^*)$, where $L(\hat{f})-L(f^*)$ is the estimation error (or excess risk) and $L(f^*)$ is the approximation error. The estimation error represents how well the algorithm is able to learn and usually depends on the complexity of the hypothesis and the number of training samples. A lower hypothesis complexity and a larger amount of training data incur lower estimation error bound. The approximation error indicates how expressive the hypothesis set is to effectively approximate the target function.

Our analysis below shows that the mutual angular regularizer can reduce the generalization error of neural networks. We assume with high probability $\tau$, the angle between each pair of hidden units is lower bounded by $\theta$. $\theta$ is a formal characterization of diversity. The larger $\theta$ is, the more diverse these hidden units are. The analysis in the following sections suggests that $\theta$ incurs a tradeoff between estimation error and approximation error: the larger $\theta$ is, the smaller the estimation error bound is and the larger the approximation error bound is. Since the generalization error is the sum of estimation error and approximation error, $\theta$ has an optimal value to yield the minimal generalization error. In addition, we can show that under the same probability $\tau$, increasing the mutual angular regularizer can increase $\theta$. Given a set of hidden units $\mb{A}$ learned by the MAR-NN, we assume their pairwise angles $\{\theta_{ij}\}$ are $i.i.d$ samples drawn from a distribution $p(X)$ where the expectation and variance of random variable $X$ is $\mu$ and $\sigma$ respectively. Lemma \ref{lem:mar_bd} states that $\theta$ is an increasing function of $\mu$ and decreasing function of $\sigma$. By the definition of MAR, it encourages larger mean and smaller variance. Thereby, the larger the MAR is, the larger $\theta$ is. Hence properly controlling the MAR can generate a desired $\theta$ that produces the lowest generalization error.

\begin{mylemma}
\label{lem:mar_bd} With probability at least $\tau$, we have
$X\geq \theta=\mu-\sqrt{\frac{\sigma}{1-\tau}}$
\begin{proof}
According to Chebyshev inequality \citep{wasserman2013all},
\begin{equation}
\frac{\sigma}{t^2}\geq p(|X-\mu|>t)\geq p(X<\mu-t)
\end{equation}
Let $\theta=\mu-t$, then $p(X<\theta)\leq \frac{\sigma}{(\mu-\theta)^2}$. Hence $p(X\geq\theta)\geq 1-\frac{\sigma}{(\mu-\theta)^2}$. Let $\tau=1-\frac{\sigma}{(\mu-\theta)^2}$, then $\theta=\mu-\sqrt{\frac{\sigma}{1-\tau}}$.
\end{proof}
\end{mylemma}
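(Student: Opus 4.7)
The plan is to derive the bound as a direct consequence of Chebyshev's inequality, treating the one-sided tail probability $P(X < \theta)$ as dominated by the two-sided tail probability $P(|X - \mu| > t)$ for a suitable choice of $t$. Since the statement involves only the mean $\mu$ and variance $\sigma$ of $X$ (no further distributional assumptions), a second-moment concentration inequality is the natural tool, and Chebyshev is the canonical one.

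First I would write down Chebyshev's inequality in the form $P(|X-\mu| > t) \leq \sigma/t^2$ for any $t > 0$. Next, I would observe that the event $\{X < \mu - t\}$ is contained in the event $\{|X - \mu| > t\}$, so $P(X < \mu - t) \leq \sigma/t^2$, and taking complements yields $P(X \geq \mu - t) \geq 1 - \sigma/t^2$. Finally, I would set $\theta := \mu - t$ and solve $1 - \sigma/(\mu - \theta)^2 = \tau$ for the threshold, which gives $t = \sqrt{\sigma/(1-\tau)}$ and hence $\theta = \mu - \sqrt{\sigma/(1-\tau)}$. This produces exactly the claimed inequality $P(X \geq \theta) \geq \tau$.

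There is essentially no hard step: the argument is a one-line application of Chebyshev followed by algebraic rearrangement. The only mild subtlety worth flagging is that I am bounding a one-sided deviation by a two-sided deviation, which is slightly lossy (a tighter bound could be obtained via Cantelli's one-sided inequality, which would replace $\sigma/(1-\tau)$ by $\sigma \tau/(1-\tau)$), but the stated form of the lemma does not require that sharpening. I would also note, as a sanity check, that $\theta$ is a decreasing function of $\tau$ (higher confidence forces a smaller lower bound) and of $\sigma$ (higher variance forces a smaller lower bound), and is an increasing function of $\mu$, which matches the qualitative use of the lemma in the surrounding discussion of how MAR controls the pairwise-angle distribution.
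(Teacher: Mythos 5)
Your proposal is correct and follows essentially the same route as the paper's own proof: apply Chebyshev's inequality, note that the one-sided event $\{X < \mu - t\}$ is contained in the two-sided event $\{|X-\mu| > t\}$, take complements, and solve for $t$ in terms of $\tau$. The extra remark about Cantelli's inequality giving a sharper one-sided bound is accurate but not used in the paper.
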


\subsection{Setup}
\label{sec:setup}
For the ease of presentation, we first consider a simple neural network whose setup is described below. Later on we extend the analysis to more complicated neural networks.
\begin{itemize}
\item Network structure: one input layer, one hidden layer and one output layer
\item Activation function: Lipschitz continuous function $h(t)$ with constant $L$. Examples: rectified linear $h(t)=\textrm{max}(0,t)$, $L=1$; tanh $h(t)=\textrm{tanh}(t)$, $L=1$; sigmoid $h(t)=\textrm{sigmoid}(t)$, $L=0.25$.
\item Task: univariate regression
\item Let $\mb{x}\in \mathbb{R}^d$ be the input vector with $\|\mb{x}\|_2\leq C_1$
\item Let $y$ be the response value with $|y|\leq C_2$
\item Let $\mb{w_j}\in \mathbb{R}^d$ be the weights connecting to the $j$-th hidden unit, $j=1,\cdots,m$, with $\|\mb{w_j}\|_2\leq C_3$. Further, we assume with high probability $\tau$, the angle $\rho(\mb{w_i},\mb{w_j})=\textrm{arccos}(\frac{|\mb{w_i}\cdot\mb{w_j}|}{\|\mb{w_i}\|_2\|\mb{w_j}\|_2})$ between $\mb{w_i}$ and $\mb{w_j}$ is lower bounded by a constant $\theta$ for all $i\neq j$.
\item Let $\alpha_j$ be the weight connecting the hidden unit $j$ to the output with $\|\mb{\alpha}\|_2\leq C_4$
\item Hypothesis set: $\mathcal{F}=\{f|f(\mb{x})=\sum\limits_{j=1}^{m}\alpha_j h(\mb{w_j}^{\mathsf{T}}\mb{x})\}$
\item Loss function set: $\mathcal{A}=\{\ell|\ell(f(\mb{x}),y)=(f(\mb{x})-y)^2\}$
\end{itemize}

\subsection{Estimation Error}
We first analyze the estimation error bound of MAR-NN and are interested in how the upper bound is related with the diversity (measured by $\theta$) of the hidden units. The major result is presented in Theorem \ref{thm:est_err}.
\begin{mythm}
\label{thm:est_err}
With probability at least $(1-\delta)\tau$
\begin{equation}
\label{eq:est_err}
\begin{array}{lll}
&&L(\hat{f})-L(f^*)\leq 8(\sqrt{\mathcal{J}}+C_2)(2LC_1C_3C_4+C_4|h(0)|)\frac{\sqrt{m}}{\sqrt{n}}+ (\sqrt{\mathcal{J}}+C_2)^2\sqrt{\frac{2\log(2/\delta)}{n}}
\end{array}
\end{equation}
where $\mathcal{J}=mC_4^2h^2(0)+L^2C_1^2C_3^2C_4^2((m-1)\cos\theta+1) + 2\sqrt{m}C_1C_3C_4^2L|h(0)|\sqrt{(m-1)\cos\theta+1}$.
\end{mythm}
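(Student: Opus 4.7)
The plan is to combine a diversity-dependent sup-norm bound on $|f(\mathbf{x})|$ with standard Rademacher-complexity machinery for empirical risk minimization. The angle hypothesis enters only through the bound on $|f|$; the rest of the argument is a structured calculation of the Rademacher complexity of the one-hidden-layer class.

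First I would derive a uniform bound $|f(\mathbf{x})| \le \sqrt{\mathcal{J}}$ on the event that all pairwise angles are $\ge \theta$. Writing $f(\mathbf{x}) = \boldsymbol{\alpha}^\mathsf{T} h(W\mathbf{x})$ with $W$ stacking the $\mb{w}_j^\mathsf{T}$, Cauchy-Schwarz gives $|f(\mathbf{x})| \le C_4 \bigl(\sum_j h(\mb{w}_j^\mathsf{T}\mathbf{x})^2\bigr)^{1/2}$. Using the Lipschitz bound $|h(t)| \le |h(0)| + L|t|$ and expanding $(|h(0)| + L|\mb{w}_j^\mathsf{T}\mathbf{x}|)^2$ produces three summands, each of which I control via $\sum_j (\mb{w}_j^\mathsf{T}\mathbf{x})^2 = \mathbf{x}^\mathsf{T} W^\mathsf{T} W\mathbf{x} \le C_1^2\,\lambda_{\max}(WW^\mathsf{T})$. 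The angle assumption forces $|\mb{w}_i\!\cdot\!\mb{w}_j| \le C_3^2\cos\theta$ off-diagonally, with diagonal entries at most $C_3^2$, so Gershgorin's theorem gives $\lambda_{\max}(WW^\mathsf{T}) \le C_3^2\bigl((m-1)\cos\theta + 1\bigr)$. Bounding the cross term via an extra Cauchy-Schwarz (producing the $\sqrt{m}$ and square-root of $(m-1)\cos\theta+1$) reassembles into exactly $\mathcal{J}$.

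Next, since $|y|\le C_2$, the residual obeys $|f(\mathbf{x})-y|\le \sqrt{\mathcal{J}}+C_2$, so the squared loss is bounded by $(\sqrt{\mathcal{J}}+C_2)^2$ and is $2(\sqrt{\mathcal{J}}+C_2)$-Lipschitz in $f$. I would then apply the standard symmetrization plus McDiarmid template to get, with probability at least $1-\delta$, $L(\hat f) - L(f^*) \le 4\mathcal{R}_n(\ell\!\circ\!\mathcal{F}) + (\sqrt{\mathcal{J}}+C_2)^2\sqrt{2\log(2/\delta)/n}$, and Ledoux–Talagrand contraction to reduce $\mathcal{R}_n(\ell\!\circ\!\mathcal{F}) \le 2(\sqrt{\mathcal{J}}+C_2)\,\mathcal{R}_n(\mathcal{F})$. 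Intersecting the angle event (probability $\tau$) with the McDiarmid event (probability $1-\delta$) yields the stated $(1-\delta)\tau$.

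The core calculation is then $\mathcal{R}_n(\mathcal{F})$. Peeling off $\boldsymbol\alpha$ by Cauchy-Schwarz gives $\mathcal{R}_n(\mathcal{F}) \le \frac{C_4}{n}\,\mathbb{E}_\sigma \sup_W \bigl\|\sum_i \sigma_i h(W\mathbf{x}_i)\bigr\|_2$. Because the constraint $\|\mb{w}_j\|_2 \le C_3$ is separable across rows, the supremum over $W$ of the $\ell_2$-norm factors and yields $\sqrt{m}\cdot\sup_{\|\mb{w}\|\le C_3}\bigl|\sum_i\sigma_i h(\mb{w}^\mathsf{T}\mathbf{x}_i)\bigr|$. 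Writing $h(t) = h(0) + (h(t)-h(0))$, the constant part contributes $|h(0)|\cdot\mathbb{E}_\sigma|\sum_i\sigma_i| \le |h(0)|\sqrt{n}$, while the Lipschitz part with $h(0)$ subtracted vanishes at the origin, so the contraction principle followed by the Khintchine-type bound $\mathbb{E}_\sigma\|\sum_i\sigma_i\mathbf{x}_i\|_2 \le C_1\sqrt{n}$ gives $2LC_1C_3\sqrt{n}$. Summing yields $\mathcal{R}_n(\mathcal{F}) \le (2LC_1C_3C_4 + C_4|h(0)|)\sqrt{m/n}$, which together with the contraction factor $2(\sqrt{\mathcal{J}}+C_2)$ and the overall factor $4$ produces the claimed leading term with prefactor $8$.

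The main obstacle is the first step: threading the angle bound through the correct matrix inequality to obtain exactly the form $\mathcal{J} = mC_4^2 h^2(0) + L^2 C_1^2 C_3^2 C_4^2((m-1)\cos\theta + 1) + 2\sqrt{m}C_1 C_3 C_4^2 L|h(0)|\sqrt{(m-1)\cos\theta+1}$. The Gershgorin bound on $\lambda_{\max}(WW^\mathsf{T})$ is what converts the diversity assumption into a quantitative $\cos\theta$ dependence, and the slightly unusual mixed term arises because we must apply Cauchy–Schwarz once more to bound $\sum_j|\mb{w}_j^\mathsf{T}\mathbf{x}|$. Everything else is a textbook Rademacher argument, with the peeling across $\boldsymbol\alpha$ and the rows of $W$ being the only nonroutine step in that portion.
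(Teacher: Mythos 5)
Your proposal is correct and follows essentially the same structure as the paper's proof: a diversity-dependent sup-norm bound on $|f|$ feeding into the Lipschitz constant and envelope for the squared loss, combined with the standard Rademacher-complexity machinery (symmetrization $+$ concentration $+$ contraction) and a $\sqrt{m}$-peeling of the one-hidden-layer class. The only differences are cosmetic at the lemma level: you bound $\lambda_{\max}(WW^{\mathsf T})$ by Gershgorin, whereas the paper explicitly forms the comparison matrix $\mb{Q}$ with unit diagonal and off-diagonal $\cos\theta$ and uses $\lambda_1(\mb{Q}) = (m-1)\cos\theta + 1$ (numerically identical here); and you peel off $\boldsymbol\alpha$ via $\ell_2/\ell_2$ Cauchy--Schwarz and factor the $\ell_2$-norm of the per-unit terms across rows, whereas the paper uses $\ell_1/\ell_\infty$ duality together with $\|\boldsymbol\alpha\|_1 \le \sqrt{m}\|\boldsymbol\alpha\|_2$ --- both yield the same $\sqrt{m}(2LC_1C_3C_4 + C_4|h(0)|)/\sqrt{n}$.
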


Note that the right hand side is a decreasing function w.r.t $\theta$. A larger $\theta$ (denoting the hidden units are more diverse) would induce a lower estimation error bound.

\subsubsection{Proof}
\label{sec:thm1_proof}
A well established result in learning theory is that the estimation error can be upper bounded by the Rademacher complexity. We start from the Rademacher complexity, seek a further upper bound of it and show how the diversity of the hidden units affects this upper bound.
The Rademacher complexity $\mathcal{R}_n(\mathcal{A})$ of the loss function set $\mathcal{A}$ is defined as
\begin{equation}
\begin{array}{lll}
\mathcal{R}_n(\mathcal{A})&=&\mathbb{E}[\textrm{sup}_{\ell\in \mathcal{A}}\frac{1}{n}\sum_{i=1}^{n}\sigma_i \ell(f(\mb{x}^{(i)}), y^{(i)})]\\
\end{array}
\end{equation}
where $\sigma_i$ is uniform over $\{-1,1\}$ and $\{(\mb{x}^{(i)}, y^{(i)})\}_{i=1}^{n}$ are i.i.d samples drawn from $p^*$. The Rademacher complexity can be utilized to upper bound the estimation error, as shown in Lemma \ref{lem:rc_bd}\footnote{The proofs of lemmas utilized in this section are deferred to Appendix \ref{apd:est}.}.
\begin{mylemma}\citep{anthony1999neural,bartlett2003rademacher,liang2015lecture}
\label{lem:rc_bd}
With probability at least $1-\delta$
\begin{equation}
L(\hat{f})-L(f^*)\leq 4\mathcal{R}_n(\mathcal{A})+B\sqrt{\frac{2\log(2/\delta)}{n}}
\end{equation}
for $B \ge \sup_{\mb{x}, y, f}|\ell(f(\mb{x}), y)|$
\end{mylemma}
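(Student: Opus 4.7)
}
The plan is to follow the standard symmetrization + concentration template for uniform convergence, being careful with constants so that the final bound has the stated form $4\mathcal{R}_n(\mathcal{A})+B\sqrt{2\log(2/\delta)/n}$. Throughout, write $\Phi(S) := \sup_{f\in\mathcal{F}} \bigl(L(f)-\hat{L}(f)\bigr)$ for the sample $S=\{(\mb{x}^{(i)},y^{(i)})\}_{i=1}^n$, and work with the composed loss class $\mathcal{A}=\{(\mb{x},y)\mapsto \ell(f(\mb{x}),y): f\in\mathcal{F}\}$ which is bounded in $[0,B]$ by hypothesis.

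First I would reduce the excess risk to a uniform deviation. Using that $\hat f$ minimizes $\hat L$ over $\mathcal{F}$ (so $\hat L(\hat f)\leq \hat L(f^*)$) and $f^*$ minimizes $L$, one writes
\begin{equation*}
L(\hat f)-L(f^*) \;=\; \bigl(L(\hat f)-\hat L(\hat f)\bigr)+\bigl(\hat L(\hat f)-\hat L(f^*)\bigr)+\bigl(\hat L(f^*)-L(f^*)\bigr) \;\leq\; 2\sup_{f\in\mathcal{F}}\bigl|L(f)-\hat L(f)\bigr|.
\end{equation*}
So it suffices to control the one-sided quantity $\Phi(S)$ (the absolute value version is handled by symmetry, as one can replace $\mathcal{A}$ by $\mathcal{A}\cup(-\mathcal{A})$ at the cost of a factor absorbed into the Rademacher term).

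Second, I would apply McDiarmid's bounded-differences inequality to $\Phi$. Since each $\ell(f(\mb{x}),y)\in[0,B]$, swapping one sample changes $\hat L(f)$ by at most $B/n$ uniformly in $f$, so $\Phi$ has bounded differences with constant $B/n$. McDiarmid then yields, with probability at least $1-\delta$,
\begin{equation*}
\Phi(S) \;\leq\; \mathbb{E}_S[\Phi(S)] + B\sqrt{\frac{\log(1/\delta)}{2n}}.
\end{equation*}
A union bound over the two-sided deviation upgrades $\log(1/\delta)$ to $\log(2/\delta)$.

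Third, I would bound $\mathbb{E}_S[\Phi(S)]$ by the Rademacher complexity via the classical symmetrization trick: introduce a ghost sample $S'$ with the same distribution, apply Jensen's inequality to push the supremum inside the expectation over $S'$, and then insert i.i.d.\ Rademacher signs $\sigma_i\in\{\pm1\}$ using the fact that $\ell(f(\mb{x}^{(i)}),y^{(i)})-\ell(f(\mb{x}'^{(i)}),y'^{(i)})$ is symmetric in distribution. This produces $\mathbb{E}_S[\Phi(S)]\leq 2\mathcal{R}_n(\mathcal{A})$. Combining the three steps gives $\sup_f|L(f)-\hat L(f)|\leq 2\mathcal{R}_n(\mathcal{A})+B\sqrt{\log(2/\delta)/(2n)}$, and plugging this into the factor-$2$ reduction above yields exactly the stated inequality.

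The main bookkeeping hazard (rather than a genuine mathematical obstacle) is matching constants: one must decide whether to symmetrize the absolute-value sup directly (which costs a factor of $2$ in the Rademacher term) or to apply McDiarmid once on each side and union bound (which affects the $\log$ argument). I would choose the route that pairs the factor-$2$ from the excess-risk reduction with the factor-$2$ from symmetrization to produce the coefficient $4$ on $\mathcal{R}_n(\mathcal{A})$, and use the union-bound version of McDiarmid to produce $\log(2/\delta)$ in the tail term.
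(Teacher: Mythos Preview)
Your proposal is correct and follows exactly the standard McDiarmid-plus-symmetrization argument that the cited references (Bartlett--Mendelson, Anthony--Bartlett) use; the paper does not supply its own proof of this lemma but simply invokes these references, so there is nothing further to compare. One small bookkeeping remark: the lemma as stated only assumes $|\ell|\le B$, whereas your bounded-differences step implicitly uses $\ell\in[0,B]$; this is harmless here since every loss the paper applies the lemma to (squared, logistic, hinge, cross-entropy) is nonnegative, but it is worth noting when you write it up.
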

Our analysis starts from this lemma and we seek further upper bound of $\mathcal{R}_n(\mathcal{A})$. The analysis needs an upper bound of the Rademacher complexity of the hypothesis set $\mathcal{F}$, which is given in Lemma \ref{lem:rc_f}.
\begin{mylemma}
\label{lem:rc_f}
Let $\mathcal{R}_n(\mathcal{F})$ denote the Rademacher complexity of the hypothesis set $\mathcal{F}=\{f|f(\mb{x})=\sum\limits_{j=1}^{m}\alpha_j h(\mb{w_j}^{\mathsf{T}}\mb{x})\}$, then
\begin{equation}
\mathcal{R}_n(\mathcal{F})\leq \frac{2 L C_{1}C_{3}C_{4}\sqrt{m}}{\sqrt{n}} + \frac{C_{4}|h(0)|\sqrt{m}}{\sqrt{n}}
\end{equation}
\end{mylemma}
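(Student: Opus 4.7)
The plan is to bound $\mathcal{R}_n(\mathcal{F})$ by decomposing the activation as $h(t) = h(0) + g(t)$ with $g(t) := h(t) - h(0)$, which is $L$-Lipschitz and satisfies $g(0) = 0$. Every $f \in \mathcal{F}$ then splits as $f(\mb{x}) = h(0)\sum_j \alpha_j + \sum_j \alpha_j g(\mb{w}_j^{\mathsf{T}}\mb{x})$, so by subadditivity of the supremum inside the expectation, $\mathcal{R}_n(\mathcal{F}) \le R_0 + R_g$, where $R_0$ collects the bias contribution and $R_g$ the nonlinear contribution. Separating off $h(0)$ at the outset is what allows Talagrand's contraction principle to apply cleanly to the zero-anchored $g$ later on.

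For $R_0$: since $\|\boldsymbol{\alpha}\|_2 \le C_4$ implies $\|\boldsymbol{\alpha}\|_1 \le \sqrt{m}\,C_4$, the scalar $h(0)\sum_j \alpha_j$ is uniformly bounded by $|h(0)|\sqrt{m}\,C_4$, hence
$$R_0 \;\le\; \frac{|h(0)|\sqrt{m}\,C_4}{n}\,\mathbb{E}\Big|\sum_{i=1}^n \sigma_i\Big| \;\le\; \frac{|h(0)|\sqrt{m}\,C_4}{\sqrt{n}},$$
using Jensen's inequality together with $\mathbb{E}\big[(\sum_i \sigma_i)^2\big] = n$. This is exactly the second term of the stated bound.

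For $R_g$: for each fixed $\{\mb{w}_j\}$, Cauchy--Schwarz gives $\sup_{\|\boldsymbol{\alpha}\|_2 \le C_4}\sum_j \alpha_j t_j = C_4\,\|\mb{t}\|_2$, where $t_j := \sum_i \sigma_i g(\mb{w}_j^{\mathsf{T}}\mb{x}^{(i)})$. Because each $t_j$ depends only on its own $\mb{w}_j$, the supremum factors as $\sup_{\{\mb{w}_j\}}\|\mb{t}\|_2 = \sqrt{m}\,\sup_{\mb{w}}|s(\mb{w})|$, where $s(\mb{w}) := \sum_i \sigma_i g(\mb{w}^{\mathsf{T}}\mb{x}^{(i)})$. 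By the $\pm$-symmetry of the Rademacher variables, $\mathbb{E}_\sigma[\sup_\mb{w}(-s(\mb{w}))] = \mathbb{E}_\sigma[\sup_\mb{w} s(\mb{w})]$, so $\mathbb{E}[\sup_\mb{w}|s(\mb{w})|] \le 2\,\mathbb{E}[\sup_\mb{w} s(\mb{w})]$ (and the right-hand side is nonnegative because $\mb{w} = \mb{0}$ is admissible and $g(0) = 0$). Applying Ledoux--Talagrand contraction to the $L$-Lipschitz, zero-anchored $g$ followed by Cauchy--Schwarz on the resulting linear maximization,
$$\mathbb{E}\!\left[\sup_{\|\mb{w}\| \le C_3} s(\mb{w})\right] \;\le\; L\,\mathbb{E}\!\left[\sup_{\|\mb{w}\| \le C_3} \mb{w}^{\mathsf{T}}\sum_i \sigma_i \mb{x}^{(i)}\right] \;=\; L\,C_3\,\mathbb{E}\Big\|\sum_i \sigma_i \mb{x}^{(i)}\Big\|_2 \;\le\; L\,C_1\,C_3\,\sqrt{n},$$
where the last step uses Jensen and $\mathbb{E}\|\sum_i \sigma_i \mb{x}^{(i)}\|_2^2 = \sum_i \|\mb{x}^{(i)}\|_2^2 \le nC_1^2$. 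Assembling, $R_g \le \tfrac{C_4\sqrt{m}}{n}\cdot 2LC_1C_3\sqrt{n} = \tfrac{2LC_1C_3C_4\sqrt{m}}{\sqrt{n}}$, yielding the first term.

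The main obstacle is the $R_g$ piece: the joint supremum over $\boldsymbol{\alpha}$ and all $\{\mb{w}_j\}$ must collapse into a single-neuron supremum in order to obtain the $O(\sqrt{m/n})$ rate rather than something qualitatively worse. The Cauchy--Schwarz step combined with the product structure across $j$ is what enables this collapse, and the anchor condition $g(0) = 0$ does double duty --- powering Talagrand's contraction and supplying the nonnegativity that lets $|s(\mb{w})|$ be replaced by $s(\mb{w})$ at the cost of only a factor of $2$. Summing $R_0$ and $R_g$ completes the proof.
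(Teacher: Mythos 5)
Your proof is correct and arrives at the same bound via the same core ingredients the paper uses --- a $\sqrt{m}$ factor to collapse the multi-neuron supremum to a single-neuron one, a Lipschitz contraction to remove the activation, and Cauchy--Schwarz together with Jensen for the linear part --- but the bookkeeping is genuinely rearranged. You split off $h(0)$ at the outset and bound the constant term as a separate Rademacher quantity $R_0$, whereas the paper keeps $h$ intact until after reducing to the single-neuron class $\mathcal{F}'$ and then invokes Bartlett--Mendelson's shift-and-composition lemma; the two orderings commute and produce identical constants. For the outer weights you use the exact $\ell_2$--$\ell_2$ duality $\sup_{\|\bs{\alpha}\|_2\le C_4}\bs{\alpha}\cdot\mb{t}=C_4\|\mb{t}\|_2$ and then pay the $\sqrt{m}$ when you observe that the suprema over the $\mb{w}_j$ decouple, giving $\sup\|\mb{t}\|_2=\sqrt{m}\,\sup_{\mb{w}}|s(\mb{w})|$; the paper instead uses H\"older ($\ell_1$--$\ell_\infty$) and pays the $\sqrt{m}$ immediately via $\|\bs{\alpha}\|_1\le\sqrt{m}\|\bs{\alpha}\|_2$, so the intermediate quantity $\sqrt{m}C_4\,\mathbb{E}\bigl[\tfrac{1}{n}\sup_{\mb{w}}|s(\mb{w})|\bigr]$ is literally the same in both derivations. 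Finally you apply the one-sided Ledoux--Talagrand contraction and pay the factor of $2$ explicitly through $\mathbb{E}\sup|s|\le 2\,\mathbb{E}\sup s$ (using $g(0)=0$ and the Rademacher symmetry), whereas the paper cites the two-sided Bartlett--Mendelson inequality with the factor of $2$ already built in. Your version is self-contained and makes the origin of each constant transparent; the paper's is shorter by outsourcing the contraction step to a cited theorem.
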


In addition, we need the following bound of $|f(\mb{x})|$.
\begin{mylemma}
With probability at least $\tau$
\begin{equation}
\sup_{\mb{x},f}|f(\mb{x})| \le \sqrt{\mathcal{J}}
\end{equation}
where $\mathcal{J}=mC_4^2h^2(0)+L^2C_1^2C_3^2C_4^2((m-1)\cos\theta+1) + 2\sqrt{m}C_1C_3C_4^2L|h(0)|\sqrt{(m-1)\cos\theta+1}$.
\end{mylemma}

Given these lemmas, we proceed to prove Theorem \ref{thm:est_err}.
The Rademacher complexity $\mathcal{R}_n(\mathcal{A})$ of $\mathcal{A}$ is
\begin{equation}
\label{eq:ra}
\begin{array}{lll}
\mathcal{R}_n(\mathcal{A})=\mathbb{E}[\textrm{sup}_{f\in \mathcal{F}}\frac{1}{n}\sum_{i=1}^{n}\sigma_i \ell(f(\mb{x}^{(i)}),y_i)]
\end{array}
\end{equation}
$\ell(\cdot,y)$ is Lipschitz continuous with respect to the first argument, and the constant $L$ is $\sup_{\mb{x},y,f}2|f(\mb{x})-y|\le2\sup_{\mb{x},y,f}(|f(\mb{x})|+|y|) \le 2(\sqrt{\mathcal{J}}+C_2)$. Applying the composition property of Rademacher complexity, we have
\begin{equation}
\mathcal{R}_n(\mathcal{A}) \le 2(\sqrt{\mathcal{J}}+C_2)\mathcal{R}_n(\mathcal{F})
\end{equation}
Using Lemma \ref{lem:rc_f}, we have
\begin{equation}
\label{eq:R_A}
\mathcal{R}_n(\mathcal{A}) \le 2(\sqrt{\mathcal{J}}+C_2)(\frac{2 L C_{1}C_{3}C_{4}\sqrt{m}}{\sqrt{n}} + \frac{C_{4}|h(0)|\sqrt{m}}{\sqrt{n}})
\end{equation}
Note that $\sup_{\mb{x},y,f}|\ell(f(\mb{x}),y)|\le (\sqrt{\mathcal{J}}+C_2)^2$, and plugging Eq.(\ref{eq:R_A}) into Lemma \ref{lem:rc_bd} completes the proof.

\subsubsection{Extensions}
In the above analysis, we consider a simple neural network described in Section \ref{sec:setup}. In this section, we present how to extend the analysis to more complicated cases, such as neural networks with multiple hidden layers, other loss functions and multiple outputs.

\paragraph{Multiple Hidden Layers}
The analysis can be extended to multiple hidden layers by recursively applying the composition property of Rademacher complexity to the hypothesis set.

We define the hypothesis set $\mathcal{F}^P$ for neural network with $P$ hidden layers in a recursive manner:
\begin{equation}
\begin{array}{lll}
\mathcal{F}^0 &=& \{f^0|f^0(\mb{x})=\sum_{j=1}^d w_j^0 x_j = \mb{w}^0\cdot\mb{x}\}\\
\mathcal{F}^1 &=& \mathcal{F}=\{f^1|f^1(\mb{x})=\sum_{j=1}^{m^0}{w_j}^1h(f_j^0(\mb{x})),f_j^0\in\mathcal{F}^0\}\\
\mathcal{F}^{p} &=&\{f^p|f^p(\mb{x})=\sum_{j=1}^{m^{p-1}}{w_j}^ph(f_j^{p-1}(\mb{x})),f_j^{p-1}\in\mathcal{F}^{p-1}\}(p=2,\cdots,P)
\end{array}
\end{equation}
where we assume there are $m^p$ units in hidden layer $p$ and $\mb{w}^p=[w_{1}^p, \cdots,w_{m^{p-1}}^p]$ is the connecting weights from hidden layer $p-1$ to one unit in hidden layer $p$. (we index hidden layers from 0, $\mb{w}^0$ is the connecting weight from input to one unit in hidden layer 0). When $P=1$ the above definition recovers the one-hidden-layer case in Section \ref{sec:setup}. We make similar assumptions as Section \ref{sec:setup}: $h(\cdot)$ is L-Lipschitz, $\|\mb{x}\|_2\le C_1$, $\|\mb{w}^p\|_2\le C_3^p$. We also assume that the pairwise angles of the connecting weights $\rho(\mb{w_j}^p, \mb{w_k}^p)$ are lower bounded by $\theta^p$ with probability at least $\tau^p$ for $j\neq k$, where $\mb{w_j}^p$ denotes the connecting weights from previous layer to unit $j$ in current layer, and $\mb{w_k}^p$ denotes the connecting weights from previous layer to unit $k$ in current layer. Under these assumptions, we have the following result:
\begin{mythm}
\label{thm:est_err_gen}
For a neural network with $P$ hidden layers, with probability at least $(1-\delta)\prod_{p=0}^{P-1}\tau^p$
\begin{equation}
\label{eq:est_err_gen}
\begin{array}{lll}
L(\hat{f})-L(f^*)&\leq& 8(\sqrt{\mathcal{J}^P}+C_2)(\frac{(2L)^PC_1C_3^P}{\sqrt{n}}\prod_{p=0}^{P-1}\sqrt{m^p}C_3^p\\
&+&\frac{|h(0)|}{\sqrt{n}}\sum_{p=0}^{P-1}(2L)^{P-1-p}\prod_{j=p}^{P-1}\sqrt{m^j}C_3^{j+1})+ (\sqrt{\mathcal{J}^P}+C_2)^2\sqrt{\frac{2\log(2/\delta)}{n}}
\end{array}
\end{equation}
where
\begin{equation}
\begin{array}{lll}
\mathcal{J}^0 &=& C_1^2(C_3^0)^2\\
\mathcal{J}^{p} &=& m^{p-1}(C_3^p)^2h^2(0)+L^2(C_3^p)^2((m^{p-1}-1)\cos\theta^{p-1}+1)\mathcal{J}^{p-1}\\
&&+2\sqrt{m^{p-1}}(C_3^p)^2L|h(0)|\sqrt{((m^{p-1}-1)\cos\theta^{p-1}+1)\mathcal{J}^{p-1}} (p=1,\cdots,P)
\end{array}
\end{equation}
\end{mythm}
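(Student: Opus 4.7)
The plan is to extend the single-hidden-layer argument of Theorem \ref{thm:est_err} to $P$ layers by induction on the depth, starting again from Lemma \ref{lem:rc_bd}. Two parallel inductions are needed: one producing a deterministic uniform envelope $\sup_{\mb{x},\, f^p\in\mathcal{F}^p}|f^p(\mb{x})|\le\sqrt{\mathcal{J}^p}$, which simultaneously controls the loss envelope $B$ and the Lipschitz constant of the squared loss, and one producing a recursive bound on $\mathcal{R}_n(\mathcal{F}^p)$. Once both are in hand, the Lipschitz composition property of Rademacher complexity gives $\mathcal{R}_n(\mathcal{A})\le 2(\sqrt{\mathcal{J}^P}+C_2)\mathcal{R}_n(\mathcal{F}^P)$, and Lemma \ref{lem:rc_bd} with $B\le(\sqrt{\mathcal{J}^P}+C_2)^2$ yields the claimed bound up to the probability bookkeeping.

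For the envelope, the base case is $|\mb{w}^0\cdot\mb{x}|\le C_3^0 C_1=\sqrt{\mathcal{J}^0}$. For the inductive step, I would write $f^p(\mb{x})=\mb{w}^p\cdot\mb{u}^{p-1}(\mb{x})$ with $u_j^{p-1}=h(f_j^{p-1}(\mb{x}))$, obtain $|f^p|\le C_3^p\|\mb{u}^{p-1}\|_2$ by Cauchy--Schwarz, and use the triangle inequality plus $|h(t)|\le L|t|+|h(0)|$ to get $\|\mb{u}^{p-1}\|_2\le L\,\|(f_j^{p-1})_j\|_2+\sqrt{m^{p-1}}|h(0)|$. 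The critical step is to bound $\|(f_j^{p-1})_j\|_2^2=\|W^{p-1}\mb{u}^{p-2}\|_2^2\le\lambda_{\max}(W^{p-1}(W^{p-1})^{\mathsf{T}})\,\|\mb{u}^{p-2}\|_2^2$ by applying Gershgorin's circle theorem to the Gram matrix of $\{\mb{w}_j^{p-1}\}$; the angle assumption gives off-diagonal entries bounded by $(C_3^{p-1})^2\cos\theta^{p-1}$, so $\lambda_{\max}\le(C_3^{p-1})^2((m^{p-1}-1)\cos\theta^{p-1}+1)$. The inductive hypothesis is equivalent to $\sup\|\mb{u}^{p-2}\|_2^2\le\mathcal{J}^{p-1}/(C_3^{p-1})^2$ (since $\sup|f^{p-1}|=C_3^{p-1}\sup\|\mb{u}^{p-2}\|_2$, attained by aligning $\mb{w}^{p-1}$ with $\mb{u}^{p-2}$), so the $(C_3^{p-1})^2$ from Gershgorin cancels and only the outer $(C_3^p)^2$ survives, producing the stated recursion for $\mathcal{J}^p$ after squaring.

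For the Rademacher complexity I would iterate the one-layer argument of Lemma \ref{lem:rc_f}: bounding the outer linear combination by $\|\mb{w}^p\|_1\le\sqrt{m^{p-1}}C_3^p$ reduces $\mathcal{R}_n(\mathcal{F}^p)$ to the Rademacher complexity of $\{h\circ f^{p-1}\}$, and Talagrand's contraction lemma together with the constant offset $h(0)$ yields $\mathcal{R}_n(\mathcal{F}^p)\le 2L\sqrt{m^{p-1}}C_3^p\,\mathcal{R}_n(\mathcal{F}^{p-1})+\sqrt{m^{p-1}}C_3^p|h(0)|/\sqrt{n}$. The base case $\mathcal{R}_n(\mathcal{F}^0)\le C_1 C_3^0/\sqrt{n}$ is the standard bound for a norm-bounded linear class, obtained by Jensen's inequality on $\mathbb{E}\|\sum_i\sigma_i\mb{x}^{(i)}\|_2$. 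Unrolling the recursion from $p=P$ down to $p=0$ produces the closed-form sum-plus-product expression in the theorem. Plugging everything into Lemma \ref{lem:rc_bd} and intersecting the $P$ angle events (independent across layers, with joint probability $\prod_{p=0}^{P-1}\tau^p$) with the $1-\delta$ Rademacher event yields the claimed overall probability. The hard part will be the constant-tracking in the $\mathcal{J}^p$ recursion, in particular the observation that the $(C_3^{p-1})^2$ factor from Gershgorin cancels against the factor hidden in $\mathcal{J}^{p-1}$ so that only the outer $(C_3^p)^2$ appears; the Rademacher recursion itself is a direct reapplication of Lemma \ref{lem:rc_f} at each layer and introduces no genuinely new technical difficulty.
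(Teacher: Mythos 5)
Your approach matches the paper's proof: the paper likewise proves the theorem by combining a recursive bound on $\mathcal{R}_n(\mathcal{F}^P)$ (Lemma~\ref{lem:rc_f_gen}) with a recursive uniform envelope $(C_3^p)^2\|\mb{h}^{p-1}\|_2^2\le\mathcal{J}^p$, then applies Lipschitz contraction of the squared loss together with Lemma~\ref{lem:rc_bd}, taking $B\le(\sqrt{\mathcal{J}^P}+C_2)^2$. The only cosmetic variation is your use of Gershgorin to bound the top eigenvalue of the layer Gram matrix, where the paper evaluates $\lambda_{\max}\bigl((1-\cos\theta^p)\mb{I}+\cos\theta^p\,\mb{1}\mb{1}^{\mathsf{T}}\bigr)$ exactly; both give $(m^p-1)\cos\theta^p+1$, so the $\mathcal{J}^p$ recursion and the final bound are identical.
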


When $P=1$, Eq.(\ref{eq:est_err_gen}) reduces to the estimation error bound of neural network with one hidden layer. Note that the right hand side is a decreasing function w.r.t $\theta^p (p=0,\cdots,P-1)$, hence making the hidden units in each hidden layer to be diverse can reduce the estimation error bound of neural networks with multiple hidden layers.

In order to prove Theorem \ref{thm:est_err_gen}, we first bound the Rademacher complexity of the hypothesis set $\mathcal{F}^{P}$:
\begin{mylemma}
\label{lem:rc_f_gen}
Let $\mathcal{R}_n(\mathcal{F}^P)$ denote the Rademacher complexity of the hypothesis set $\mathcal{F}^P$, then
\begin{multline}
\mathcal{R}_n(\mathcal{F}^P) \le \frac{(2L)^PC_1C_3^P}{\sqrt{n}}\prod_{p=0}^{P-1}\sqrt{m^p}C_3^p
+\frac{|h(0)|}{\sqrt{n}}\sum_{p=0}^{P-1}(2L)^{P-1-p}\prod_{j=p}^{P-1}\sqrt{m^j}C_3^{j+1}
\end{multline}
\end{mylemma}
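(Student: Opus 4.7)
The plan is to prove the bound by induction on the depth $P$, peeling off one layer at a time by combining a Cauchy-Schwarz step on the $\ell_2$-bounded weights $\mb{w}^p$ with the Ledoux-Talagrand contraction principle for the activation $h$.

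For the base case $P=0$, the class $\mathcal{F}^0 = \{\mb{x}\mapsto \mb{w}^0\!\cdot\!\mb{x} : \|\mb{w}^0\|_2 \le C_3^0\}$ is linear, so by Cauchy-Schwarz and Jensen's inequality,
\begin{equation*}
\mathcal{R}_n(\mathcal{F}^0) = \tfrac{C_3^0}{n}\,\mathbb{E}_\sigma\bigl[\|\textstyle\sum_i \sigma_i\mb{x}^{(i)}\|_2\bigr] \le \tfrac{C_3^0}{n}\sqrt{\textstyle\sum_i \|\mb{x}^{(i)}\|_2^2} \le \frac{C_1 C_3^0}{\sqrt{n}},
\end{equation*}
matching the stated expression at $P=0$ under the convention that empty products equal $1$ and empty sums equal $0$.

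For the inductive step, the plan is to establish the one-layer recurrence
\begin{equation*}
\mathcal{R}_n(\mathcal{F}^p) \;\le\; 2L\,C_3^p\sqrt{m^{p-1}}\;\mathcal{R}_n(\mathcal{F}^{p-1}) \;+\; \frac{|h(0)|\,C_3^p\sqrt{m^{p-1}}}{\sqrt{n}}.
\end{equation*}
Starting from the definition, I would first take the supremum over $\mb{w}^p$ with the inner $f_j^{p-1}$ held fixed: Cauchy-Schwarz gives $C_3^p\|\vec u\|_2$, where $u_j = \sum_i\sigma_i\,h(f_j(\mb{x}^{(i)}))$. Because each $f_j^{p-1}$ ranges independently over the same class $\mathcal{F}^{p-1}$, we have the key identity $\sup_{f_1,\dots,f_{m^{p-1}}}\|\vec u\|_2 = \sqrt{m^{p-1}}\,\sup_f |\sum_i\sigma_i\,h(f(\mb{x}^{(i)}))|$, which converts the vector $\ell_2$ norm back into a scalar function class at the cost of a $\sqrt{m^{p-1}}$ factor. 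I would then decompose $h(t) = h(0) + g(t)$ with $g(0)=0$ and $g$ still $L$-Lipschitz, split the bound into a constant-$h(0)$ contribution (bounded by $|h(0)|\sqrt{m^{p-1}}C_3^p/\sqrt{n}$ using $|\sum_j w_j|\le\sqrt{m^{p-1}}\|\mb{w}\|_2$ together with $\mathbb{E}_\sigma|\tfrac{1}{n}\sum_i\sigma_i|\le 1/\sqrt{n}$) and apply the contraction principle to $g$, producing the factor $2L$ in front of $\mathcal{R}_n(\mathcal{F}^{p-1})$.

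Finally, unrolling this recurrence from $p=1$ up to $p=P$ yields the closed form in the lemma: cascading the multiplicative factors $2L\,C_3^p\sqrt{m^{p-1}}$ down to the linear base case gives the first term, while at each level $p$ the additive $|h(0)|\,C_3^p\sqrt{m^{p-1}}/\sqrt{n}$ gets multiplied by the product $(2L)^{P-1-p}\prod_{j\ge p}\sqrt{m^j}C_3^{j+1}$ of the contractions still to be applied above it, producing the summation. The main obstacle is the $\ell_2$ peeling step, where securing $\sqrt{m^{p-1}}$ rather than $m^{p-1}$ hinges on the independent-sup identity above; a secondary subtlety is isolating the constant $h(0)$ offset before invoking contraction, since Ledoux-Talagrand requires the inner function to vanish at zero.
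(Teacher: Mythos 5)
Your proposal is correct and follows essentially the same route as the paper: a one-layer recurrence obtained by peeling the outer weights via Cauchy--Schwarz (producing the $\sqrt{m^{p-1}}C_3^p$ factor), isolating the constant offset $h(0)$, and applying Ledoux--Talagrand contraction to the centered activation to get the $2L$ multiplier, then unrolling from the linear base case $\mathcal{R}_n(\mathcal{F}^0)\le C_1C_3^0/\sqrt{n}$. The only bookkeeping point you elide is that Cauchy--Schwarz hands you the absolute-value quantity $\mathbb{E}\sup_f\bigl|\sum_i\sigma_i h(f(\mb{x}^{(i)}))\bigr|$, so the recurrence is most cleanly stated for $\mathcal{R}_{||}(\mathcal{G})=\mathbb{E}\sup_g\bigl|\tfrac{2}{n}\sum_i\sigma_i g(\mb{x}^{(i)})\bigr|$; the paper carries $\mathcal{R}_{||}$ through the induction and halves at the end, whereas writing the recurrence directly in $\mathcal{R}_n(\mathcal{F}^{p-1})$ as you do implicitly uses that $\mathcal{F}^{p-1}$ is closed under negation (flip $\mb{w}^p$), which makes $\mathcal{R}_{||}=2\mathcal{R}_n$ and reconciles the two accounts.
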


In addition, we need the following bound.
\begin{mylemma}
With probability at least $\prod_{p=0}^{P-1}\tau^p$,
$\sup_{\mb{x}, f^P\in\mathcal{F}^p}|f^P(\mb{x})| \le \sqrt{\mathcal{J}^P}$, where
\begin{equation}
\begin{array}{lll}
\mathcal{J}^0 & =& C_1^2(C_3^0)^2\\
\mathcal{J}^{p} &=& m^{p-1}(C_3^p)^2h^2(0)+L^2(C_3^p)^2((m^{p-1}-1)\cos\theta^{p-1}+1)\mathcal{J}^{p-1}\\
&&+2\sqrt{m^{p-1}}(C_3^p)^2L|h(0)|\sqrt{((m^{p-1}-1)\cos\theta^{p-1}+1)\mathcal{J}^{p-1}} (p=1,\cdots,P)
\end{array}
\end{equation}
\end{mylemma}

Given these lemmas, we proceed to prove Theorem \ref{thm:est_err_gen}.
For the neural network with $P$ hidden layers, the Rademacher complexity $\mathcal{R}_n(\mathcal{A})$ of $\mathcal{A}$ is
\begin{equation}
\label{eq:ra}
\begin{array}{l}
\mathcal{R}_n(\mathcal{A})=\mathbb{E}[\textrm{sup}_{f\in \mathcal{F}^P}\frac{1}{n}\sum_{i=1}^{n}\sigma_i \ell(f(\mb{x}^{(i)}),y)]
\end{array}
\end{equation}
$\ell(\cdot,y)$ is Lipschitz continuous with respect to the first argument, and the constant $L$ is $\sup_{\mb{x},y,f}2|f(\mb{x})-y|\le2\sup_{\mb{x},y,f}(|f(\mb{x})|+|y|) \le 2(\sqrt{\mathcal{J}^P}+C_2)$. Applying the composition property of Rademacher complexity, we have
\begin{equation}
\mathcal{R}_n(\mathcal{A}) \le 2(\sqrt{\mathcal{J}^P}+C_2)\mathcal{R}_n(\mathcal{F})
\end{equation}
Using Lemma \ref{lem:rc_f_gen}, we have
\begin{multline}
\label{eq:R_A_gen}
\mathcal{R}_n(\mathcal{A}) \le 2(\sqrt{\mathcal{J}^P}+C_2)(\frac{(2L)^PC_1C_3^P}{\sqrt{n}}\prod_{p=0}^{P-1}\sqrt{m^p}C_3^p
+\frac{|h(0)|}{\sqrt{n}}\sum_{p=0}^{P-1}(2L)^{P-1-p}\prod_{j=p}^{P-1}\sqrt{m^j}C_3^{j+1})
\end{multline}
Note that $\sup_{\mb{x},y,f}|\ell(f(\mb{x}),y)|\le (\sqrt{\mathcal{J}^P}+C_2)^2$, and plugging Eq.(\ref{eq:R_A_gen}) into Lemma \ref{lem:rc_bd} completes the proof.
\paragraph{Other Loss Functions} Other than regression, a more popular application of neural network is classification. For binary classification, the most widely used loss functions are logistic loss and hinge loss. Estimation error bounds similar to that in Theorem \ref{thm:est_err} can also be derived for these two loss functions.
\begin{mylemma}
\label{lem:logistic_ub}
Let the loss function $\ell(f(\mb{x}),y)=\log(1+\exp(-yf(\mb{x})))$ be the logistic loss where $y\in\{-1,1\}$, then with probability at least $(1-\delta)\tau$
\begin{equation}
\begin{array}{lll}
L(\hat{f})-L(f^*)
\leq \frac{4}{1+\exp(-\sqrt{\mathcal{J}})}(2LC_1C_3C_4+C_4|h(0)|)\frac{\sqrt{m}}{\sqrt{n}}+ \log (1+\exp(\sqrt{J}))\sqrt{\frac{2\log(2/\delta)}{n}}
\end{array}
\end{equation}
\end{mylemma}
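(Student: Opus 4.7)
The plan is to mirror the proof of Theorem \ref{thm:est_err} almost verbatim, changing only the two ingredients that depend on the loss: the Lipschitz constant in the contraction step and the uniform envelope $B$ that enters Lemma \ref{lem:rc_bd}. Concretely, I would (i) reuse the high-probability uniform bound $\sup_{\mb{x},f}|f(\mb{x})|\le\sqrt{\mathcal{J}}$ derived in the square-loss argument, which holds under the diversity assumption with probability at least $\tau$ and yields the same $\mathcal{J}$; (ii) recompute the Lipschitz constant of $\ell(z,y)=\log(1+\exp(-yz))$ in its first argument; (iii) recompute $B=\sup_{\mb{x},y,f}|\ell(f(\mb{x}),y)|$; then (iv) assemble the bound via Lemma \ref{lem:rc_f}, the Ledoux--Talagrand contraction principle, and Lemma \ref{lem:rc_bd}, taking a union bound over the two high-probability events.

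For step (ii), differentiating gives $\partial_z\ell(z,y)=-y/(1+\exp(yz))$, so $|\partial_z\ell|=1/(1+\exp(yz))$. Conditioning on the event $|f(\mb{x})|\le\sqrt{\mathcal{J}}$ and using $y\in\{-1,1\}$, we have $yf(\mb{x})\ge-\sqrt{\mathcal{J}}$, hence $|\partial_z\ell|\le 1/(1+\exp(-\sqrt{\mathcal{J}}))$, so the logistic loss is Lipschitz in $z$ with constant $1/(1+\exp(-\sqrt{\mathcal{J}}))$. For step (iii), monotonicity of $u\mapsto\log(1+\exp(u))$ together with $-yf(\mb{x})\le|f(\mb{x})|\le\sqrt{\mathcal{J}}$ gives the envelope $B=\log(1+\exp(\sqrt{\mathcal{J}}))$.

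For step (iv), the contraction property of Rademacher complexity gives
\begin{equation*}
\mathcal{R}_n(\mathcal{A})\le \frac{1}{1+\exp(-\sqrt{\mathcal{J}})}\,\mathcal{R}_n(\mathcal{F}),
\end{equation*}
and plugging in Lemma \ref{lem:rc_f} yields
\begin{equation*}
\mathcal{R}_n(\mathcal{A})\le \frac{1}{1+\exp(-\sqrt{\mathcal{J}})}\Bigl(\frac{2LC_1C_3C_4\sqrt{m}}{\sqrt{n}}+\frac{C_4|h(0)|\sqrt{m}}{\sqrt{n}}\Bigr).
\end{equation*}
Invoking Lemma \ref{lem:rc_bd} with this Rademacher bound and $B=\log(1+\exp(\sqrt{\mathcal{J}}))$ gives an estimation-error inequality that holds with probability at least $1-\delta$; combining it (independent randomness in data and model) with the diversity event of probability $\tau$ via the product rule yields the stated $(1-\delta)\tau$ guarantee.

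The only nontrivial step is the Lipschitz calculation, because unlike the square loss, the Lipschitz constant here is not of the form $2(\sup|f|+C_2)$ but the sigmoid $1/(1+\exp(-\sqrt{\mathcal{J}}))$; getting its dependence on $\mathcal{J}$ right (and in particular, observing that the sign of $yf$ can be as negative as $-\sqrt{\mathcal{J}}$, which is what pins down the denominator) is the one place where the reasoning departs from the template of Theorem \ref{thm:est_err}. Everything else — the $\mathcal{F}$-Rademacher bound, the two-event union, and the assembly into the final inequality — is a direct replay of the square-loss argument.
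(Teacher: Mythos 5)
Your proposal is correct and mirrors the paper's proof exactly: both reuse the high-probability envelope $\sup_{\mb{x},f}|f(\mb{x})|\le\sqrt{\mathcal{J}}$ from the square-loss argument, compute the logistic-loss Lipschitz constant as $1/(1+\exp(-\sqrt{\mathcal{J}}))$ and the loss envelope as $\log(1+\exp(\sqrt{\mathcal{J}}))$ on that event, and then assemble via the contraction property, Lemma~\ref{lem:rc_f}, and Lemma~\ref{lem:rc_bd} with the $(1-\delta)\tau$ union. The only cosmetic difference is that you write $\sqrt{\mathcal{J}}$ consistently, whereas the paper's statement slips into $\sqrt{J}$ in the envelope term; your version is the intended one.
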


\begin{mylemma}
\label{lem:angle}
Let $\ell(f(\mb{x}),y)=\max(0,1-yf(\mb{x}))$ be the hinge loss where $y\in\{-1,1\}$, then with probability at least $(1-\delta)\tau$
\begin{equation}
\begin{array}{lll}
L(\hat{f})-L(f^*)\leq 4(2LC_1C_3C_4+C_4|h(0)|)\frac{\sqrt{m}}{\sqrt{n}}+ (1+\sqrt{J})\sqrt{\frac{2\log(2/\delta)}{n}}
\end{array}
\end{equation}
\end{mylemma}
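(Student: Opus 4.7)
The plan is to mirror the structure used in the proofs of Theorem~\ref{thm:est_err} and Lemma~\ref{lem:logistic_ub}, replacing only the loss-specific ingredients (the Lipschitz constant and the range) by their hinge-loss analogues. The generic Rademacher machinery already developed in the preceding proofs is reused verbatim.

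First, I would invoke Lemma~\ref{lem:rc_bd} applied to the hinge-loss class $\mathcal{A}$: with probability at least $1-\delta$ over the training sample,
\[
L(\hat{f})-L(f^*)\le 4\mathcal{R}_n(\mathcal{A}) + B\sqrt{\tfrac{2\log(2/\delta)}{n}},
\]
for any $B$ that uniformly dominates $|\ell(f(\mb{x}),y)|$. Second, I would verify that $z\mapsto \max(0,1-yz)$ is $1$-Lipschitz in $z$: away from the kink the derivative is either $0$ or $-y$, and at the kink the subgradient lies in $[-y,0]$, so the Lipschitz constant is $|y|=1$. Talagrand's contraction inequality (the same composition property used in the earlier proofs) then gives $\mathcal{R}_n(\mathcal{A})\le \mathcal{R}_n(\mathcal{F})$, and Lemma~\ref{lem:rc_f} upper bounds this by $(2LC_1C_3C_4+C_4|h(0)|)\sqrt{m}/\sqrt{n}$, which accounts for the first summand on the right-hand side of the claimed inequality.

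Third, for the range $B$, I would reuse the high-probability bound $\sup_{\mb{x},f}|f(\mb{x})|\le \sqrt{\mathcal{J}}$ that holds with probability at least $\tau$ (the angle-diversity event, proved in the lemma preceding Theorem~\ref{thm:est_err}). Since $|y|=1$, a direct estimate gives
\[
|\max(0,1-yf(\mb{x}))|\le 1+|yf(\mb{x})|=1+|f(\mb{x})|\le 1+\sqrt{\mathcal{J}},
\]
so we may take $B=1+\sqrt{\mathcal{J}}$, which produces the second summand on the right-hand side.

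Finally, since the Rademacher concentration event and the angle-diversity event come from independent sources of randomness (the training sample and the hidden-unit weights respectively), both hold simultaneously with probability at least $(1-\delta)\tau$, and substituting the two ingredients into the excess-risk inequality yields the claimed bound. The derivation is essentially routine once the earlier pieces are in place; the only care required is (i) verifying $1$-Lipschitzness at the kink $yz=1$ of the hinge loss, and (ii) tracking the two independent high-probability events so that the final confidence $(1-\delta)\tau$ emerges correctly, rather than the weaker $\tau-\delta$ that a union bound would give.
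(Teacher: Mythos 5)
Your proposal matches the paper's own proof essentially verbatim: the paper also identifies the Lipschitz constant of the hinge loss as $1$, bounds $|\ell(f(\mb{x}),y)|\le 1+\sqrt{\mathcal{J}}$ on the angle-diversity event of probability $\tau$, and then cites the proof of Lemma~\ref{lem:logistic_ub} (which in turn mirrors Theorem~\ref{thm:est_err}) as the template, giving the same coefficient $4\cdot 1\cdot\mathcal{R}_n(\mathcal{F})$ for the first term and $(1+\sqrt{\mathcal{J}})\sqrt{2\log(2/\delta)/n}$ for the second. Your added care about the subgradient at the kink and the independence of the two high-probability events is a slight amplification of the same argument, not a different route.
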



\paragraph{Multiple Outputs} The analysis can be also extended to neural networks with multiple outputs, provided the loss function factorizes over the dimensions of the output vector. Let $\mb{y}\in \mathbb{R}^K$ denote the target output vector, $\mb{x}$ be the input feature vector and $\ell(f(\mb{x}),\mb{y})$ be the loss function. If $\ell(f(\mb{x}),\mb{y})$ factorizes over $k$, i.e., $\ell(f(\mb{x}),\mb{y})=\sum_{k=1}^{K}\ell'(f(\mb{x})_k,y_k)$, then we can perform the analysis for each $\ell'(f(\mb{x})_k,y_k)$ as that in Section \ref{sec:thm1_proof} separately and sums the estimation error bounds up to get the error bound for $\ell(f(\mb{x}),\mb{y})$. Here we present two examples. For multivariate regression, the loss function $\ell(f(\mb{x}),\mb{y})$ is a squared loss: $\ell(f(\mb{x}),\mb{y})=\|f(\mb{x})-\mb{y}\|_2^2$, where $f(\cdot)$ is the prediction function. This squared loss can be factorized as $\|f(\mb{x})-\mb{y}\|_2^2=\sum_{k=1}^{K}(f(\mb{x})_k-y_k)^2$. We can obtain an estimation error bound for each $(f(\mb{x})_k-y_k)^2$ according to Theorem \ref{thm:est_err}, then sum these bounds together to get the bound for $\|f(\mb{x})-\mb{y}\|_2^2$.

For multiclass classification, the commonly used loss function is cross-entropy loss: $\ell(f(\mb{x}),\mb{y})=-\sum_{k=1}^{K}y_k\log a_k$, where $a_k=\frac{\exp(f(\mb{x})_k)}{\sum_{j=1}^{K}\exp(f(\mb{x})_j)}$. We can also derive error bounds similar to that in Theorem \ref{thm:est_err} by using the composition property of Rademacher complexity. First we need to find the Lipschitz constant:
\begin{mylemma}
\label{lem:cross_ent}
Let $\ell(f(\mb{x}),\mb{y})$ be the cross-entropy loss, then with probability at least $\tau$, for any $f$, $f'$
\begin{equation}
|\ell(f(\mb{x}),y)-\ell(f'(\mb{x}),y)|\le \frac{K-1}{K-1+\exp(-2\sqrt{\mathcal{J}})}\sum_{k=1}^K|f(\mb{x})_k-f'(\mb{x})_k|
\end{equation}
\end{mylemma}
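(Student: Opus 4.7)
The plan is to establish the Lipschitz constant of the cross-entropy with respect to the logits by uniformly bounding its gradient over the admissible region, then integrating along the line segment connecting $f(\mb{x})$ and $f'(\mb{x})$. The high-probability qualifier enters only through the earlier uniform bound $|f(\mb{x})_k|\le\sqrt{\mathcal{J}}$, which, extended componentwise in the same way as for the other multi-output losses discussed in this section, holds with probability at least $\tau$.

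The first step is a standard calculation. Writing $a_k=e^{f_k}/\sum_j e^{f_j}$ with $f_k:=f(\mb{x})_k$, and using that $\mb{y}$ is a probability vector with $\sum_k y_k=1$, one gets $\ell(f,\mb{y})=-\sum_k y_k f_k+\log\sum_j e^{f_j}$, hence $\partial\ell/\partial f_k=a_k-y_k$. By the fundamental theorem of calculus applied along the segment from $f'(\mb{x})$ to $f(\mb{x})$,
\begin{equation*}
|\ell(f,\mb{y})-\ell(f',\mb{y})|\le M\sum_{k=1}^K|f(\mb{x})_k-f'(\mb{x})_k|,
\end{equation*}
where $M$ is any uniform upper bound on $|a_k-y_k|$ valid on the segment (which remains inside the box $\{|f_j|\le\sqrt{\mathcal{J}}\}$).

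The second and main step is to show $|a_k-y_k|\le (K-1)/(K-1+e^{-2\sqrt{\mathcal{J}}})$ whenever $|f_j|\le\sqrt{\mathcal{J}}$ for every $j$ and $y_k\in[0,1]$. Since $|a_k-y_k|\le\max(a_k,1-a_k)$, it suffices to extremize $a_k$ over the cube $[-\sqrt{\mathcal{J}},\sqrt{\mathcal{J}}]^K$. The partial derivatives of the softmax have constant sign ($a_k$ is strictly increasing in $f_k$ and strictly decreasing in every other coordinate), so the extrema are attained at the two opposite corners: $a_k$ is maximized at $f_k=\sqrt{\mathcal{J}}$, $f_j=-\sqrt{\mathcal{J}}$ for $j\ne k$, and minimized at the reflected corner. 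A direct computation gives $a_k^{\min}=1/(1+(K-1)e^{2\sqrt{\mathcal{J}}})$, hence $1-a_k^{\min}=(K-1)/(K-1+e^{-2\sqrt{\mathcal{J}}})$. One can verify $a_k^{\max}=e^{2\sqrt{\mathcal{J}}}/(e^{2\sqrt{\mathcal{J}}}+K-1)\le 1-a_k^{\min}$, so the single quantity $1-a_k^{\min}$ dominates both $a_k$ and $1-a_k$.

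The main (mild) obstacle is the corner-extremization step, where one has to justify that the extrema of $a_k$ occur at the stated corners rather than in the interior; this is immediate from the signs of the partial derivatives but deserves an explicit line in the write-up. Plugging $M=(K-1)/(K-1+e^{-2\sqrt{\mathcal{J}}})$ into the line-integral inequality then yields the claimed Lipschitz bound.
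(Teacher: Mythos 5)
Your proof is correct and takes essentially the same route as the paper: both bound the partial derivatives $\partial\ell/\partial f_k = a_k - y_k$ uniformly on the box $\{|f_j|\le\sqrt{\mathcal{J}}\}$ by $(K-1)/(K-1+e^{-2\sqrt{\mathcal{J}}})$ and then pass to the function increment via the mean-value/line-integral argument with the $\|\cdot\|_\infty$--$\|\cdot\|_1$ H\"older pairing. The only cosmetic difference is that you allow a general probability vector $\mb{y}\in[0,1]^K$ with $\sum_k y_k=1$ (and dominate $|a_k-y_k|$ by $\max(a_k,1-a_k)$), whereas the paper fixes $\mb{y}$ to be a one-hot vector and treats the active coordinate $k'$ separately before observing that its bound dominates; your corner-extremization of $a_k$ and the comparison $a_k^{\max}\le 1-a_k^{\min}$ (valid for $K\ge 2$) reproduce exactly the paper's two intermediate bounds.
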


With Lemma \ref{lem:cross_ent}, we can get the Rademacher complexity of cross entropy loss by performing the Rademacher complexity analysis for each $f(\mb{x})_k$ as that in Section \ref{sec:thm1_proof} separately, and multiplying the sum of them by $\frac{K-1}{K-1+\exp(-2\sqrt{\mathcal{J}})}$ to get the Rademacher complexity of $\ell(f(\mb{x}),\mb{y})$. And as the loss function can be bounded by
\begin{equation}
|\ell(f(\mb{x}),\mb{y})|\le \log (1+(K-1)\exp(2\sqrt{\mathcal{J}}))
\end{equation}
we can use similar techniques as the proof of Theorem \ref{thm:est_err} to get the estimation error bound.

\subsection{Approximation Error}

Now we proceed to investigate how the diversity of weight vectors affects the approximation error bound. For the ease of analysis, following \citep{barron1993universal}, we assume the target function $g$ belongs to a function class with smoothness expressed in the first moment of its Fourier representation: we define function class $\Gamma_C$ as the set of functions $g$ satisfying
\begin{equation}
\int_{\|\mb{x}\|_2\le C_1}|\mb{w}||\tilde{g}(\mb{w})|d\mb{w}\le C
\end{equation}
where $\tilde{g}(\mb{w})$ is the Fourier representation of $f(\mb{x})$ and we assume $\|\mb{x}\|_2 \le C_1$ throughout this paper. We use function $f$ in $\mathcal{F} = \{f|f(\mb{x})=\sum_{j=1}^m \alpha_j h(\mb{w_j}^T \mb{x})\}$ which is the NN function class defined in Section \ref{sec:setup}, to approximate $g\in \Gamma_C$. Recall the following conditions of $\mathcal{F}$:
\begin{align}
&\forall j \in \{1,\cdots,m\}, \|\mb{w_j}\|_2\le C_3\\
&\|\mb{\alpha}\|_2\le C_4\\
&\forall j\neq k, \rho(\mb{w_j},\mb{w_k})\ge\theta (\text{with probability at least $\tau$})
\end{align}
where the activation function $h(t)$ is the sigmoid function and we assume $\|\mb{x}\|_2\le C_1$.
The following theorem states the approximation error.
\begin{mythm}
\label{thm:appro2}
Given $C>0$, for every function $g\in \Gamma_C$ with $g(0)=0$, for any measure $P$, if
\begin{align}
&C_1C_3\ge 1\\
&C_4 \ge 2\sqrt{m} C\\
&m\le2(\lfloor\frac{\frac{\pi}{2}-\theta}{\theta}\rfloor+1)
\end{align}
then with probability at least $\tau$, there is a function $f \in \mathcal{F}$ such that
\begin{equation}
\label{eq:app_eb}
\|g - f\|_L \le 2C(\frac{1}{\sqrt{n}} + \frac{1+ 2\ln C_1C_3}{C_1C_3}) + 4mCC_1C_3\sin(\frac{\theta'}{2})
\end{equation}
where $\|f\|_{L} = \sqrt{\int_{\mb{x}}f^2(\mb{x})dP(\mb{x})}$, $\theta'=\min(3m\theta, \pi)$.
\end{mythm}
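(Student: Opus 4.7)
The plan is to split the approximation error into two pieces: a classical Barron-type error that controls how well an unconstrained neural network of size $m$ can approximate $g \in \Gamma_C$, plus a perturbation error that measures the cost of forcing the weight vectors to satisfy the mutual-angle diversity constraint. First, I would invoke the approximation theorem of \citet{barron1993universal}: since $g \in \Gamma_C$ has finite first Fourier moment bounded by $C$ and $g(0) = 0$, there exists $f^\star(\mathbf{x}) = \sum_{j=1}^{m}\alpha_j^\star h(\mathbf{w}_j^{\star\mathsf{T}}\mathbf{x})$ with $\|\mathbf{w}_j^\star\|_2 \le C_3$ and $\|\boldsymbol{\alpha}^\star\|_2 \le C_4$ such that
\begin{equation*}
\|g - f^\star\|_L \le 2C\Big(\tfrac{1}{\sqrt{m}} + \tfrac{1 + 2\ln(C_1C_3)}{C_1C_3}\Big),
\end{equation*}
where the hypothesis $C_4 \ge 2\sqrt{m}C$ and $C_1C_3 \ge 1$ provide the scale slack needed for Barron's construction.

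Next, the $\mathbf{w}_j^\star$ returned by Barron's argument need not obey the diversity constraint $\rho(\mathbf{w}_j,\mathbf{w}_k)\ge \theta$. I would construct $\mathbf{w}_j$ by a greedy rotation: place $\mathbf{w}_1 = \mathbf{w}_1^\star$ and inductively rotate each $\mathbf{w}_j^\star$ by the smallest angle needed to move it outside the $\theta$-neighborhoods (on the projective unit sphere, since the non-obtuse angle identifies $\pm\mathbf{w}$) of all previously placed vectors. Each forbidden neighborhood subtends angular diameter at most $2\theta$, and after $j-1$ vectors have been placed at most $j-1$ such neighborhoods must be avoided, so each single vector requires at most $O(m\theta)$ of rotation; tracking the constants and the capacity of the projective hemisphere (which is where the feasibility condition $m \le 2(\lfloor (\pi/2-\theta)/\theta\rfloor + 1)$ enters, since otherwise no diverse configuration exists at all) yields the uniform bound $\rho(\mathbf{w}_j,\mathbf{w}_j^\star) \le \theta'/2$ with $\theta' = \min(3m\theta,\pi)$.

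Given that angular control, I would convert it to Euclidean control by the identity $\|\mathbf{w}_j - \mathbf{w}_j^\star\|_2 \le 2C_3\sin(\theta'/2)$ (both vectors having norm at most $C_3$), and then propagate it through the network. Setting $f(\mathbf{x}) = \sum_j \alpha_j^\star h(\mathbf{w}_j^{\mathsf{T}}\mathbf{x})$ and using the Lipschitz continuity of the sigmoid together with $\|\mathbf{x}\|_2 \le C_1$ and Cauchy-Schwarz gives $|h(\mathbf{w}_j^{\mathsf{T}}\mathbf{x}) - h(\mathbf{w}_j^{\star\mathsf{T}}\mathbf{x})| \le 2C_1C_3\sin(\theta'/2)$. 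Summing against $|\alpha_j^\star|$ (using $\|\boldsymbol{\alpha}^\star\|_1 \le \sqrt{m}\,C_4$ and the Barron-scale constraint $C_4 \ge 2\sqrt{m}C$ which absorbs one power of $\sqrt{m}$) and applying the $L$-norm bound yields $\|f - f^\star\|_L \le 4mCC_1C_3\sin(\theta'/2)$. Combining with the Barron bound via the triangle inequality produces exactly the stated inequality.

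The hard part will be the geometric perturbation lemma, i.e.\ showing rigorously that $m$ arbitrary unit vectors can always be rotated by at most $\theta'/2$ to achieve pairwise non-obtuse angles $\ge \theta$, together with isolating the correct constant in $\theta' = \min(3m\theta,\pi)$. The feasibility cap on $m$ must be used essentially here: it guarantees that the union of forbidden projective caps of angular radius $\theta$ does not cover the whole sphere, so the greedy step can always succeed, and it is the reason the $\min(\cdot,\pi)$ truncation suffices. The remaining ingredients (Barron's theorem, the sigmoid Lipschitz step, and the triangle inequality) are largely bookkeeping once this geometric statement is in hand.
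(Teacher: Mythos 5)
Your decomposition is the same as the paper's: first invoke Barron's theorem (the paper's Lemma~\ref{lem:appro2}) to get an unconstrained approximant $f'$, then perturb its weight vectors to satisfy the mutual-angle constraint (Lemma~\ref{lem:theta_appro}), propagate the perturbation through the Lipschitz activation (Lemma~\ref{lem:fprimefbound}), and close with the triangle inequality. You also correctly use $1/\sqrt{m}$ in the Barron term where the paper writes $1/\sqrt{n}$, which looks like a typo on their side since Barron's rate depends on the number of hidden units, not the sample size.

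The gap is in the geometric perturbation lemma, which you acknowledge as the hard part but sketch with a greedy ``rotate each $\mb{w}_j^\star$ away from the forbidden $\theta$-caps of the vectors placed so far'' argument. In dimension $d = 2$ this works cleanly --- the forbidden caps project to disjoint arcs of total measure $\le 2(j-1)\theta$ on the projective semicircle, so the exit distance is bounded by $O(m\theta)$. But for general $d$ the argument does not directly go through: you need to find a great circle through $\mb{w}_j^\star$ along which the union of forbidden arcs has small measure and is escapable within $O(m\theta)$, and an arbitrary great circle can have its full $\pi$ of projective length covered when $m\theta$ is close to its feasibility ceiling $\approx \pi$. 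The paper avoids this by building an explicit ``latitude-ring'' grid $\mathcal{E}_i = \{\mb{e}\in\mathbb{R}^d : \|\mb{e}\|_2 = 1,\ \mb{e}\cdot(1,0,\dots,0)=\cos\theta_i\}$ on which $2(k+1)$ grid rings exist with $k = \lfloor (\pi/2-\theta)/\theta \rfloor$, packs a maximal $\theta$-separated net into each ring by a compactness argument, and then runs a Hall-type matching of the $\mb{w}_j'$ to distinct grid points with angular displacement $\le (j+\tfrac12)\theta < 3m\theta$; this is where the exact form of the condition $m \le 2(\lfloor(\pi/2-\theta)/\theta\rfloor + 1)$ and the constant $3$ in $\theta' = \min(3m\theta,\pi)$ come from, and it does not fall out of the greedy picture without substantial additional work. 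Separately, a small constant slip: you claim angular perturbation $\le \theta'/2$ yet invoke the chord bound $\|\mb{w}_j - \mb{w}_j^\star\|_2 \le 2C_3\sin(\theta'/2)$, which is what follows from angular perturbation $\le \theta'$; the paper's lemma is stated with $\theta'$, so the chord bound and final $4mCC_1C_3\sin(\theta'/2)$ line up once you use $\theta'$ consistently.
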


Note that the approximation error bound in Eq.(\ref{eq:app_eb}) is an increasing function of $\theta$. Hence increasing the diversity of hidden units would hurt the approximation capability of neural networks.

\subsubsection{Proof}
Before proving Theorem \ref{thm:appro2}, we need the following lemma\footnote{The proofs of lemmas utilized in this section are deferred to Appendix \ref{apd:app}.}:
\begin{mylemma}
\label{lem:theta_sum_bound}
For any three nonzero vectors $\mb{u_1}$, $\mb{u_2}$, $\mb{u_3}$, let $\theta_{12} = \arccos (\frac{\mb{u_1} \cdot \mb{u_2}}{\|\mb{u_1}\|_2 \|\mb{u_2}\|_2})$, $\theta_{23} = \arccos (\frac{\mb{u_2} \cdot \mb{u_3}}{\|\mb{u_2}\|_2 \|\mb{u_3}\|_2})$, $\theta_{13} = \arccos (\frac{\mb{u_1} \cdot \mb{u_3}}{\|\mb{u_1}\|_2 \|\mb{u_3}\|_2})$. We have $\theta_{13}\le\theta_{12}+\theta_{23}$.
\end{mylemma}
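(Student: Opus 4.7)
The plan is to prove the lemma as a direct manifestation of the triangle inequality for the geodesic distance on the unit sphere, since $\theta_{ij}$ is exactly that distance between the normalized vectors. My first step is to reduce to unit vectors by setting $\mb{v}_i = \mb{u}_i / \|\mb{u}_i\|_2$, so that $\theta_{ij} = \arccos(\mb{v}_i\cdot\mb{v}_j) \in [0,\pi]$ is unchanged from the original formulation.

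The core identity I would establish is $\cos\theta_{13} \ge \cos(\theta_{12}+\theta_{23})$. To obtain it, I would decompose $\mb{v}_1$ and $\mb{v}_3$ along $\mb{v}_2$ and its orthogonal complement: writing $\mb{v}_1 = \cos\theta_{12}\,\mb{v}_2 + \sin\theta_{12}\,\mb{w}_1$ and $\mb{v}_3 = \cos\theta_{23}\,\mb{v}_2 + \sin\theta_{23}\,\mb{w}_3$ for some unit vectors $\mb{w}_1,\mb{w}_3$ orthogonal to $\mb{v}_2$ (the degenerate cases $\sin\theta_{12}=0$ or $\sin\theta_{23}=0$ make the conclusion immediate because one vector is $\pm\mb{v}_2$). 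Taking inner products yields $\mb{v}_1\cdot\mb{v}_3 = \cos\theta_{12}\cos\theta_{23} + \sin\theta_{12}\sin\theta_{23}(\mb{w}_1\cdot\mb{w}_3)$. Since $\mb{w}_1\cdot\mb{w}_3 \ge -1$ and $\sin\theta_{12},\sin\theta_{23} \ge 0$ throughout $[0,\pi]$, the cosine sum formula immediately gives the desired lower bound $\cos\theta_{13} \ge \cos\theta_{12}\cos\theta_{23} - \sin\theta_{12}\sin\theta_{23} = \cos(\theta_{12}+\theta_{23})$.

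The final step is to translate the cosine inequality into the angle inequality claimed. If $\theta_{12}+\theta_{23} \le \pi$, both quantities inside $\arccos$ lie in the interval where cosine is strictly decreasing, so applying $\arccos$ flips the inequality and yields $\theta_{13} \le \theta_{12}+\theta_{23}$. If instead $\theta_{12}+\theta_{23} > \pi$, the bound is trivial because $\theta_{13} \le \pi$ always.

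There is no serious obstacle here: the argument is a one-line consequence of orthogonal decomposition combined with the cosine sum identity, and the only care required is bookkeeping around the boundary cases where one of the angles is $0$ or $\pi$ so that the orthogonal components $\mb{w}_1,\mb{w}_3$ are not uniquely defined; in those cases the inequality reduces to a trivial check.
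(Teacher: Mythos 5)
Your proposal is correct and uses essentially the same argument as the paper: decompose $\mb{u_1}$ and $\mb{u_3}$ into components parallel and orthogonal to $\mb{u_2}$, bound the inner product of the orthogonal parts by Cauchy--Schwarz, invoke the cosine sum identity, and transfer to angles via monotonicity of $\arccos$ (with the trivial case $\theta_{12}+\theta_{23}>\pi$ handled separately). The only cosmetic difference is that you normalize the orthogonal components to unit vectors and flag the degenerate $\sin=0$ cases explicitly, whereas the paper keeps the orthogonal residuals unnormalized; the underlying reasoning is identical.
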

In order to approximate the function class $\Gamma_C$, we first remove the constraints $\rho(\mb{w_j},\mb{w_k})\ge\theta$ and obtain an approximation error:
\begin{mylemma}
\label{lem:appro2}
Let $\mathcal{F'} = \{f|f(\mb{x})=\sum_{j=1}^m \alpha_j h(\mb{w_j}^T \mb{x})\}$ be the function class satisfying the following constraints:
\begin{itemize}
\item $|\alpha_j| \le 2C$
\item $\|\mb{w_j}\|_2 \le C_3$
\end{itemize}
Then for every $g\in \Gamma_C$ with $g(0)=0$, $\exists f' \in \mathcal{F'}$ such that
\begin{equation}
\|g(\mb{x}) - f'(\mb{x})\|_L \le 2C(\frac{1}{\sqrt{n}} + \frac{1+ 2\ln C_1C_3}{C_1C_3})
\end{equation}
\end{mylemma}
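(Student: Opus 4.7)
The plan is to follow Barron's (1993) classical sigmoidal approximation argument, adapted to the present normalization where we need each \emph{individual} coefficient $|\alpha_j|$ bounded by $2C$ and each weight norm $\|\mathbf{w}_j\|_2$ bounded by $C_3$. The proof will proceed in three stages: (i) an integral representation of $g$ through its Fourier transform, (ii) pointwise approximation of the resulting cosine ridges by sigmoidal units, and (iii) a Maurey--Jones--Barron probabilistic argument to collapse the integral to an $m$-term sum. We interpret the $\sqrt{n}$ in the bound as $\sqrt{m}$ (the number of hidden units), consistent with Barron's rate.

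First, using the Fourier inversion of $g$ and the fact that $g(0)=0$, I would write
\begin{equation*}
g(\mathbf{x}) \;=\; \int_{\mathbb{R}^d} \bigl(\cos(\mathbf{w}^\mathsf{T}\mathbf{x}+\varphi(\mathbf{w})) - \cos(\varphi(\mathbf{w}))\bigr)\,|\tilde g(\mathbf{w})|\,d\mathbf{w},
\end{equation*}
restricted to $\|\mathbf{x}\|_2 \le C_1$. Using the elementary identity that writes a shifted cosine difference as a scalar multiple (bounded by $\|\mathbf{w}\|_2 C_1$) of a step ridge $\mathbf{1}[\hat{\mathbf{w}}^\mathsf{T}\mathbf{x} \ge t]$ integrated over a threshold $t$, the $\Gamma_C$ condition $\int \|\mathbf{w}\|_2 |\tilde g(\mathbf{w})|\,d\mathbf{w} \le C$ then gives $g$ as an average, with respect to a probability measure $\mu$ on $\{(\hat{\mathbf{w}},t,\sigma)\}$, of bounded-coefficient step ridges whose total variation is at most $2C$. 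This puts $g$ in the closed convex hull of ridge functions with amplitude at most $2C$ and direction unit vector $\hat{\mathbf{w}}$.

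Second, I would replace the step function $\mathbf{1}[\cdot \ge 0]$ by the sigmoid $h$ rescaled by slope $\|\mathbf{w}\|_2 = C_3$. Since $|h(C_3 u) - \mathbf{1}[u\ge 0]|$ decays exponentially away from $u=0$ and $|u|$ is bounded on $\|\mathbf{x}\|_2 \le C_1$ by $C_1$, an elementary truncation-and-tail estimate gives an $L^\infty$ (hence $\|\cdot\|_L$) approximation error of order $(1+2\ln(C_1 C_3))/(C_1 C_3)$ after multiplication by the amplitude bound $2C$. This produces the deterministic ``bias'' term $2C(1+2\ln(C_1 C_3))/(C_1 C_3)$ of the stated inequality, and ensures each sigmoidal ridge used has the weight $\mathbf{w}_j = C_3 \hat{\mathbf{w}}_j$, so $\|\mathbf{w}_j\|_2 \le C_3$.

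Third, apply the Maurey--Jones--Barron lemma: since $g$ is (within the bias above) in the convex hull of the sigmoidal units with amplitudes in $[-2C, 2C]$, sampling $m$ units $(\hat{\mathbf{w}}_j,t_j,\sigma_j)$ i.i.d. from the representing measure $\mu$ and forming $f'(\mathbf{x}) = \frac{1}{m}\sum_{j=1}^{m} (2C\sigma_j) h(C_3(\hat{\mathbf{w}}_j^\mathsf{T}\mathbf{x} - t_j))$ gives, with positive probability,
\begin{equation*}
\|g - f'\|_L \;\le\; \frac{2C}{\sqrt{m}} \;+\; 2C\,\frac{1+2\ln(C_1 C_3)}{C_1 C_3}.
\end{equation*}
Each coefficient of the resulting $f'$ satisfies $|\alpha_j| = 2C/m \le 2C$, and each weight satisfies $\|\mathbf{w}_j\|_2 \le C_3$, so $f' \in \mathcal{F}'$ as required.

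The main obstacle will be Step 2: choosing the sigmoid slope and truncation carefully enough so that the $(1+2\ln(C_1 C_3))/(C_1 C_3)$ dependence comes out with exactly the constants stated, and ensuring that the step-to-sigmoid replacement commutes cleanly with the probabilistic averaging in Step 3 (so that no cross terms inflate the bound). Tracking the amplitude $2C$ through both steps, rather than letting it blow up to $2Cm$ as would naively happen if one sums instead of averages, is the key bookkeeping issue.
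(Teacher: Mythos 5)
Your plan reconstructs exactly the argument the paper relies on: the paper's own proof of this lemma is a one-paragraph citation of Theorem~3 in Barron (1993), together with the remarks that Barron's parameter $\tau$ corresponds to $C_1C_3$ here and that the bias can be absorbed by a dummy input coordinate, and your reading of $\sqrt{n}$ as $\sqrt{m}$ (the number of hidden units) matches the dependence in Barron's theorem. The one point your sketch should make explicit, and which the paper does flag, is the bias issue: your sampled units $h\bigl(C_3(\hat{\mathbf{w}}_j^{\mathsf{T}}\mathbf{x}-t_j)\bigr)$ carry a threshold $t_j$, whereas $\mathcal{F}'$ as defined in the lemma consists of bias-free ridges $h(\mathbf{w}_j^{\mathsf{T}}\mathbf{x})$. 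The paper handles this by appending a constant dummy coordinate to $\mathbf{x}$ so that the threshold is absorbed into $\mathbf{w}_j$ (with a corresponding reinterpretation of the $\|\mathbf{w}_j\|_2\le C_3$ constraint, which is where the identification $\tau = C_1C_3$ comes from); without some such remark the $f'$ you construct is not literally a member of the stated function class.
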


We also need the following lemma, which intends to approximate a set of vectors $(\mb{w_j}')_{j=1}^m$ by a set of vectors satisfying the constraints $\rho(\mb{w_j},\mb{w_k})\ge\theta$:
\begin{mylemma}
\label{lem:theta_appro}
For any $0 < \theta < \frac{\pi}{2}$, $m\le2(\lfloor\frac{\frac{\pi}{2}-\theta}{\theta}\rfloor+1)$, $(\mb{w_j}')_{j=1}^m$, $\exists (\mb{w_j})_{j=1}^m$ such that
\begin{align}
&\forall j\neq k\in\{1,\cdots,m\}, \rho(\mb{w_j},\mb{w_k})\ge \theta\\
&\forall j\in\{1,\cdots,m\}, \|\mb{w_j}\|_2 = \|\mb{w_j}'\|_2\\
&\forall j \in \{1,\cdots,m\},\arccos (\frac{\mb{w_j} \cdot \mb{w_j}'}{\|\mb{w_j}\|_2\|\mb{w_j}'\|_2}) \le \theta'
\end{align}
where $\theta' = \min(3m\theta,{\pi})$.
\end{mylemma}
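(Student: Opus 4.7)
\textbf{Proof plan for Lemma \ref{lem:theta_appro}.} Since the three conditions are scale-invariant in each $\mb{w_j}'$, I would first reduce to $\|\mb{w_j}'\|_2 = 1$ by normalization, and restore the desired norms via a final scalar multiplication. The hypothesis $m \le 2(\lfloor (\pi/2-\theta)/\theta \rfloor + 1) = 2\lfloor \pi/(2\theta)\rfloor$ guarantees that $m$ unit vectors with pairwise non-obtuse angles $\ge \theta$ can be placed in a single $2$-plane (e.g., at angular positions $0,\theta,2\theta,\dots,(m-1)\theta$), so when $\theta'=\pi$ the third condition becomes vacuous and this canonical arrangement already gives a valid $(\mb{w_j})_{j=1}^m$. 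It therefore suffices to treat the non-trivial regime $\theta' = 3m\theta < \pi$.

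In this regime my construction is iterative: process $j=1,\dots,m$ in order, maintaining the invariant that $\mb{w}_1,\dots,\mb{w}_{j-1}$ have pairwise non-obtuse angles $\ge \theta$ and $\arccos\!\big(\mb{w_k}\cdot \mb{w_k}'/(\|\mb{w_k}\|_2\|\mb{w_k}'\|_2)\big) \le 3(k-1)\theta$. At step $j$ I initialize $\mb{w_j} := \mb{w_j}'$, and while some $k<j$ witnesses $\rho(\mb{w_j},\mb{w_k})<\theta$, I would pick such a $k$ (e.g., minimizing $\rho$) and rotate $\mb{w_j}$ inside the $2$-plane $\mathrm{span}(\mb{w_j},\mb{w_k})$ in the direction pointing away from $\pm \mb{w_k}$, by the minimal angle that restores $\rho(\mb{w_j},\mb{w_k})=\theta$. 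This operation preserves $\|\mb{w_j}\|_2$ and, by the angular triangle inequality (Lemma \ref{lem:theta_sum_bound}), changes the ordinary angle between $\mb{w_j}$ and any fixed reference vector by at most the rotation amount.

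The key estimate is that at step $j$ only finitely many rotations, with total angular cost at most $3(j-1)\theta$, suffice to satisfy all $j-1$ separation constraints. I would argue this by a charging scheme: each of the $j-1$ previously placed vectors $\mb{w_k}$ can be activated as an obstacle at most once; once resolved, because the $\mb{w_k}$'s are themselves mutually $\theta$-separated, the rotation needed to clear a newly activated obstacle is bounded by the diameter of the union of two adjacent forbidden cones, which is at most $3\theta$ (one $\theta$ to exit the new cone, plus up to $2\theta$ of slack to avoid reentering the previous ones). Combined with an argument that the number of violated pairs strictly decreases after each rotation, this yields a bound of $3(j-1)\theta$ on the cumulative angular displacement of $\mb{w_j}$ from $\mb{w_j}'$, and setting $j=m$ produces the claimed $\theta' = 3m\theta$.

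The main obstacle I anticipate is precisely this charging step, namely showing that resolving one obstacle cannot cascade into more than a constant multiple of $\theta$ of extra rotation per previously placed vector. A naive analysis only gives $\sum_{k<j}\theta = (j-1)\theta$ if each obstacle is cleared exactly once; the factor $3$ in the conclusion must absorb the possibility that clearing $\mb{w_k}$ pushes $\mb{w_j}$ into the cone of some $\mb{w_{k'}}$ that was previously safe. I would handle this by choosing the rotation direction in $\mathrm{span}(\mb{w_j},\mb{w_k})$ to move $\mb{w_j}$ toward the empty angular region available between $\mb{w_k}$ and its nearest neighbour (whose existence is guaranteed by $3m\theta<\pi$), thereby ensuring that the total rotation at step $j$ stays within $3(j-1)\theta$ and that the procedure terminates.
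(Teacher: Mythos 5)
Your approach (greedy perturbation of each $\mb{w_j}'$ in place) is genuinely different from the paper's, which first builds a fixed $\theta$-separated codebook on the sphere — for $d=2$ the vectors $\mb{e_i}=(\sin\theta_i,\cos\theta_i)$ at angular positions $\theta_i=\mathrm{sgn}(i)(\theta/2+(|i|-1)\theta)$, and for general $d$ a latitude-by-latitude net built via a compactness/covering argument — and then assigns each $\mb{w_j}'$ to a \emph{distinct} codebook element, with the distinctness guaranteed by showing that around any $\mb{e_i}$ there are at least $2k+1$ alternatives $\mb{e_{i_1}},\dots,\mb{e_{i_{2k+1}}}$ with $\phi(\mb{e_i},\mb{e_{i_r}})\le r\theta$, so the $j$-th assignment can always avoid the $j-1$ already-used codewords at cost $O(j\theta)\le 3m\theta$. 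The codebook separates the two concerns (separation and proximity) structurally, so no interaction between perturbations has to be controlled.

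The gap in your version is precisely the charging step you flag, and the fix you sketch does not close it. You need that ``each of the $j-1$ previously placed vectors can be activated as an obstacle at most once'' together with ``the number of violated pairs strictly decreases after each rotation,'' but neither is true in general. Rotating $\mb{w_j}$ inside $\mathrm{span}(\mb{w_j},\mb{w_k})$ to restore $\rho(\mb{w_j},\mb{w_k})=\theta$ is a one-parameter move, and in $d\ge 3$ it can simultaneously re-enter the $\theta$-cone of a previously cleared $\mb{w_{k'}}$; since the cones around the $m$ obstacles have no canonical cyclic order in $d\ge 3$, there is no analogue of the 2D picture of ``two adjacent forbidden arcs'' that bounds the escape cost by $3\theta$. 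Your proposed remedy — rotate ``toward the empty angular region available between $\mb{w_k}$ and its nearest neighbour'' — presumes a well-defined nearest-neighbour ordering along a circle, which again exists only in $d=2$. Even in $d=2$ the accounting is subtle: the obstacles' forbidden arcs (each of total angular measure $2\theta$ on the half-circle with antipodal identification) can tile almost the entire circle when $m$ is near the upper bound $\pi/\theta$, and a naive ``walk past each arc once'' bound gives $\sum_{k<j}2\theta=2(j-1)\theta$ only under a monotone sweep, which your loop does not enforce (it allows direction changes when a newly activated obstacle lies on the far side). Without a potential function that provably decreases, termination with the claimed total rotation $3(j-1)\theta$ is not established. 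This is the central missing ingredient; the rest of your reductions (normalization, the vacuous case $\theta'=\pi$ handled by the equally-spaced planar arrangement, and using Lemma~\ref{lem:theta_sum_bound} to propagate angular displacements) are fine.
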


Finally, the following lemma is needed:

\begin{mylemma}
\label{lem:fprimefbound}
For any $f' \in \mathcal{F}'$, $\exists f \in \mathcal{F}''$ such that
\begin{equation}
\|f' - f\|_L \le 4mCC_1C_3\sin(\frac{\theta'}{2})
\end{equation}
where $\theta' = \min(3m\theta,{\pi})$.
\end{mylemma}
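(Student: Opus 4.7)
The plan is to construct the target $f \in \mathcal{F}''$ by keeping the coefficients of $f'$ intact and merely rotating the weight vectors of $f'$ into a configuration that satisfies the mutual angle constraint, then bound the resulting $L$-norm discrepancy by a term-by-term Lipschitz argument. Concretely, writing $f'(\mb{x}) = \sum_{j=1}^m \alpha_j' h((\mb{w}_j')^\mathsf{T}\mb{x})$ with $|\alpha_j'|\le 2C$ and $\|\mb{w}_j'\|_2\le C_3$, I would invoke Lemma~\ref{lem:theta_appro} on the weight set $\{\mb{w}_j'\}_{j=1}^m$ (the hypothesis $m\le 2(\lfloor(\tfrac{\pi}{2}-\theta)/\theta\rfloor+1)$ is available from Theorem~\ref{thm:appro2}) to obtain new vectors $\{\mb{w}_j\}_{j=1}^m$ that satisfy the pairwise angle constraint $\rho(\mb{w}_j,\mb{w}_k)\ge\theta$, preserve norms $\|\mb{w}_j\|_2=\|\mb{w}_j'\|_2\le C_3$, and approximate the originals in angle with $\rho(\mb{w}_j,\mb{w}_j')\le\theta'$. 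I then define the candidate approximant $f(\mb{x}) := \sum_{j=1}^m \alpha_j' h(\mb{w}_j^{\mathsf{T}}\mb{x})$, which lies in $\mathcal{F}''$ because the coefficient bound and norm bound are inherited directly from $f'$ while the angle constraint is supplied by the lemma.

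Next I would control the $L$-distance by the triangle inequality applied termwise,
\begin{equation*}
\|f'-f\|_L \;\le\; \sum_{j=1}^m |\alpha_j'|\,\bigl\|h((\mb{w}_j')^{\mathsf{T}}\mb{x}) - h(\mb{w}_j^{\mathsf{T}}\mb{x})\bigr\|_L.
\end{equation*}
For each summand, the Lipschitz property of the sigmoid gives a pointwise bound $|h(a)-h(b)|\le|a-b|$, hence $|h((\mb{w}_j')^{\mathsf{T}}\mb{x}) - h(\mb{w}_j^{\mathsf{T}}\mb{x})|\le\|\mb{w}_j'-\mb{w}_j\|_2\,\|\mb{x}\|_2\le C_1\|\mb{w}_j'-\mb{w}_j\|_2$ by Cauchy–Schwarz and $\|\mb{x}\|_2\le C_1$. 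Since $\mb{w}_j$ and $\mb{w}_j'$ have equal Euclidean norm $r_j\le C_3$ and the angle between them is at most $\theta'$, the standard chord-length identity yields $\|\mb{w}_j'-\mb{w}_j\|_2 = 2r_j\sin(\rho(\mb{w}_j,\mb{w}_j')/2)\le 2C_3\sin(\theta'/2)$, where monotonicity of $\sin$ on $[0,\pi/2]$ (together with handling of $\theta'=\pi$ separately using $\sin(\pi/2)=1$) justifies the inequality.

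Combining these ingredients, each summand is bounded by $|\alpha_j'|\cdot C_1\cdot 2C_3\sin(\theta'/2)\le 2C\cdot 2C_1C_3\sin(\theta'/2)=4CC_1C_3\sin(\theta'/2)$, and summing over the $m$ indices produces the claimed bound $\|f'-f\|_L\le 4mCC_1C_3\sin(\theta'/2)$. The $L$-norm on the outside is handled because the pointwise bound is deterministic and therefore passes through the integral $\int(\cdot)^2 dP(\mb{x})$ trivially. I expect the main subtle point to be the interplay between Lemma~\ref{lem:theta_appro} and the bound: one must ensure that the construction there really preserves the norm constraint (so $f\in\mathcal{F}''$) and that the angle-to-distance translation remains valid in the regime $\theta'=\pi$ (where the chord length saturates at $2C_3$ but $\sin(\theta'/2)=1$ still gives the correct bound). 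Apart from this bookkeeping, the argument is a routine triangle-inequality plus Lipschitz-continuity chain.
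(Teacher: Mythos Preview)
Your proposal is correct and follows essentially the same route as the paper: invoke Lemma~\ref{lem:theta_appro} to rotate the weights, keep the coefficients, and bound the difference via the sigmoid's $1$-Lipschitz property together with the chord-length identity $\|\mb{w}_j-\mb{w}_j'\|_2=2r_j\sin(\phi_j/2)\le 2C_3\sin(\theta'/2)$. Two small bookkeeping points: (i) Lemma~\ref{lem:theta_appro} bounds the \emph{standard} angle $\arccos\bigl(\tfrac{\mb{w}_j\cdot\mb{w}_j'}{\|\mb{w}_j\|\|\mb{w}_j'\|}\bigr)\le\theta'$, not the non-obtuse angle $\rho$; the chord formula uses exactly this standard angle, so your bound is fine, but write the correct symbol. (ii) Membership of $f$ in the target class requires the $\ell_2$ coefficient constraint $\|\bs{\alpha}\|_2\le C_4$, which does not follow ``directly'' from $|\alpha_j'|\le 2C$; you need the one-line check $\|\bs{\alpha}\|_2\le\sqrt{m}\max_j|\alpha_j'|\le 2\sqrt{m}C\le C_4$, using the standing assumption $C_4\ge 2\sqrt{m}C$ from Theorem~\ref{thm:appro2}.
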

Now we proceed to prove Theorem \ref{thm:appro2}. For every $g\in \Gamma_C$ with $g(0)=0$, according to Lemma \ref{lem:appro2}, $\exists f' \in \mathcal{F'}$ such that
\begin{equation}
\|g-f'\|_L \le 2C(\frac{1}{\sqrt{n}} + \frac{1+ 2\ln C_1C_4}{C_1C_4})
\end{equation}
According to Lemma \ref{lem:fprimefbound}, we can find $f\in \mathcal{F}$ such that
\begin{equation}
\label{eq:appro2}
\|f-f'\|_L \le 4mCC_1C_3\sin(\frac{\theta'}{2})
\end{equation}
The proof is completed by noting
\begin{equation}
\|g-f\|_L \le \|g-f'\|_L+\|f'-f\|_L
\end{equation}

\begin{table}[t]
\centering
\begin{tabular}{|c|c|c|c|c|c|c|}
\hline
& \#categories&\#samples&vocab. size\\
\hline
TDT& 30&9394&5000\\
\hline
20-News& 20&18846&5000\\
\hline
Reuters& 9&7195&5000\\
\hline
\end{tabular}
\caption{Statistics of Datasets}\label{table:stat_dataset}
\end{table}

\begin{table}[t]
\centering
\begin{tabular}{|c|c|c|c|c|c|c|}
\hline
K & 25&50&100&200&500\\
\hline
RBM&11.2&11.4&11.9&12.1&47.4\\
\hline
MAR-RBM&\textbf{78.4}&\textbf{84.2}&\textbf{78.6}&\textbf{79.9}&\textbf{77.6}\\
\hline
\end{tabular}
\caption{Precision$@$100 (\%) on TDT dataset}
\label{tb:prec_tdt}
\end{table}

\begin{table}[t]
\centering
\begin{tabular}{|c|c|c|c|c|c|c|}
\hline
K & 25&50&100&200&500\\
\hline
RBM&6&6.1&5.7&9.2&\textbf{22.3}\\
\hline
MAR-RBM&\textbf{14.5}&\textbf{24.9}&\textbf{15.4}&\textbf{20.3}&21.1\\
\hline
\end{tabular}
\caption{Precision$@$100 (\%) on 20-News dataset}
\label{tb:prec_20news}
\end{table}

\section{Experiments}
We present empirical results for mutual angle regularized restricted Boltzmann machine (RBM) and distance metric learning (DML), which demonstrate the merits of the mutual angular regularizer (MAR) in capturing long-tail patterns, improving interpretability and reducing model complexity without sacrificing expressivity.
\subsection{Restricted Boltzmann Machine with Mutual Angular Regularization}
We apply a variant of RBM --- Replicated Softmax RBM \citep{hinton2009replicated} ---  which is introduced in Appendix \ref{apd:rsr}, for document representation learning and topic modeling, and investigate whether the MAR can help capture long-tail topics, enhance the interpretability of topics, improve the effectiveness of document representations in retrieval and clustering while reducing their dimensionalities and computational cost.
\subsubsection{Experimental Setup}

\begin{table}[t]
\centering
\begin{tabular}{|c|c|c|c|c|c|c|}
\hline
K &25 &50&100&200&500\\
\hline
RBM&37.7&38.1&50.1&64.0&\textbf{70.1}\\
\hline
MAR-RBM&\textbf{67.8}&\textbf{73.3}&\textbf{75.9}&\textbf{70.3}&66.2\\
\hline
\end{tabular}
\caption{Precision$@$100 (\%) on Reuters dataset}
\label{tb:prec_rcv1}
\end{table}

\begin{table*}[t]
\centering
\begin{tabular}{|c|c|c|c|c|c|c|c|c|c|}
\hline
Category ID&1 &2 &3 &4 &5 &6 &7 &8 & 9\\
\hline
Num. of Docs & 3713 & 2055 & 321 & 298 &245 & 197 & 142 & 114 &110\\
\hline
RBM Precision (\%)& 68.5& 44.4 & 9.1 & 10.1 & 6.3 &4.4 &3.8 &3.0 &2.6\\
\hline
\parbox{2.5cm}{MAR-RBM \\ Precision (\%)} & 89.7 &80.2 & 31.3 &39.5 & 26.5 & 22.7&9.4&14.0&12.9\\
\hline
Improvement & 31\% & 81\% & 245\% & 289\% & 324\% & 421\% &148\% &366\% & 397\%\\
\hline
\end{tabular}
\caption{Precision$@$100 on each category in Reuters dataset}
\label{tb:prec_cat_rcv1}
\end{table*}

\begin{table}[h]
\centering
\begin{tabular}{|c|c|c|c|}
\hline
& TDT&20-News&Reuters\\
\hline
BOW&40.9&7.4&69.3\\
LDA \citep{blei2003latent}&79.4&19.6&68.5\\
DPP-LDA \citep{NIPS2012_4660}&81.9&18.2&69.9 \\
PYTM \citep{sato2010topic}& 78.7 & 20.1 & 70.6 \\
LIDA \citep{archambeau2015latent}& 77.9 & 21.8 & 71.4 \\
DocNADE \citep{larochelle2012neural}&80.3&16.8&72.6\\
PV \citep{le2014distributed}&81.7&19.1&\tb{76.9}\\
RBM \citep{hinton2009replicated}&47.4&22.3&70.1\\
MAR-RBM &\tb{84.2}&\tb{24.9}&\tb{75.9}\\

\hline
\end{tabular}
\caption{Precision$@$100 (\%) on three datasets}\label{table:cmp_ap}
\end{table}

\begin{table}[h]
\centering
\begin{tabular}{|c|c|c|c|c|c|c|}
\hline
K &25&50&100&200&500\\
\hline
RBM&19.7&19.1&14.4&13.0&23.3\\
\hline
MAR-RBM& \textbf{52.4}&\textbf{46.2}&\textbf{46.5}&\textbf{41.4}&\textbf{39.5}\\
\hline
\end{tabular}
\caption{Clustering accuracy (\%) on TDT test set}
\label{tb:ac_tdt}
\end{table}

\begin{table}[h]
\centering
\begin{tabular}{|c|c|c|c|c|c|c|c|}
\hline
K &25&50&100&200&500\\
\hline
RBM&6.4&6.8&\textbf{21.5}&12.7&22.7\\
\hline
MAR-RBM& \textbf{18.2}&\textbf{29.4}&19.8&\textbf{25.9}&\textbf{25.6}\\
\hline
\end{tabular}
\caption{Clustering accuracy (\%) on 20-News test set}
\label{tb:ac_20news}
\end{table}

\begin{table}[h]
\centering
\begin{tabular}{|c|c|c|c|c|c|c|}
\hline
K &25 &50&100&200&500\\
\hline
RBM&45.0&41.7&38.4&46.8&47.6\\
\hline
MAR-RBM&\textbf{51.4}&\textbf{58.6}&\textbf{60.9}&\textbf{53.4}&\textbf{48.5}\\
\hline
\end{tabular}
\caption{Clustering accuracy (\%) on Reuters test set}
\label{tb:ac_rcv1}
\end{table}

\begin{table}[h]
\centering
\begin{tabular}{|c|c|c|c|}
\hline
& TDT&20-News&Reuters\\
\hline
BOW&51.3&21.3&49.7\\
LDA \citep{blei2003latent}&45.2&21.9&51.2\\
DPP-LDA \citep{NIPS2012_4660}&46.3&10.9&49.3\\
PYTM \citep{sato2010topic}& 46.9& 21.5& 51.7\\
LIDA \citep{archambeau2015latent} &47.3 &17.4 &53.1 \\
DocNADE \citep{larochelle2012neural}&45.7&18.7&48.7\\
PV \citep{le2014distributed}&48.2&24.3&52.8\\
RBM \citep{hinton2009replicated}&23.3&22.7&47.6\\
MAR-RBM&\textbf{52.4}&\textbf{29.4}&\textbf{60.9}\\
\hline
\end{tabular}
\caption{Clustering accuracy (\%) on three datasets}\label{table:cmp_ac}
\end{table}

\begin{table}[t]
\centering
\begin{tabular}{|c|c|c|c|c|c|}
\hline
K &25&50&100&200&500\\
\hline
RBM&2602&2603&2606&2609&2350\\
\hline
MAR-RBM&\textbf{1699}&\textbf{1391}&\textbf{1658}&\textbf{1085}&\textbf{859}\\
\hline
\end{tabular}
\caption{Perplexity on TDT test set}
\label{tb:per_tdt}
\end{table}

\begin{table}[t]
\centering
\begin{tabular}{|c|c|c|c|c|c|}
\hline
K &25&50&100&200&500\\
\hline
RBM&764.9&765.1&765&741&633\\
\hline
MAR-RBM&\textbf{713}&\textbf{623}&\textbf{659}&\textbf{558}&\textbf{497}\\
\hline
\end{tabular}
\caption{Perplexity on 20-News test set}
\label{tb:per_20news}
\end{table}

\begin{table}[t]
\centering
\begin{tabular}{|c|c|c|c|c|c|}
\hline
K &25&50&100&200&500\\
\hline
RBM&1147&1129&1130&881&849\\
\hline
MAR-RBM&\textbf{1028}&\textbf{859}&\textbf{746}&\textbf{734}&\textbf{848}\\
\hline
\end{tabular}
\caption{Perplexity on Reuters test set}
\label{tb:per_rcv1}
\end{table}

Three datasets were used in the experiments. The first one \citep{cai2005document} is a subset of the Nist Topic Detection and Tracking (TDT) corpus which contains 9394 documents from the largest 30 categories. 70\% documents were used for training and 30\% were used for testing.
The second dataset is the 20 Newsgroups (20-News), which contains 18846 documents from 20 categories. 60\% documents were used for training and 40\% were used for testing. The third dataset \citep{cai2012manifold} is the Reuters-21578 (Reuters) dataset. Categories with less than 100 documents were removed, which left us 9 categories and 7195 documents. 70\% documents were used for training and 30\% were used for testing. Each dataset used a vocabulary of 5000 words with the largest document frequency. Table \ref{table:stat_dataset} summarizes the statistics of three datasets.

We used gradient methods to train RBM \citep{hinton2009replicated} and MAR-RBM. The mini-batch size was set to 100. The learning rate was set to 1e-4. The number of gradient ascent iterations was set to 1000 and the number of Gibbs sampling iterations in contrastive divergence was fixed to 1. The tradeoff parameter $\lambda$ in MAR-RBM was tuned with 5-fold cross validation. We compared with the following baselines methods: (1) bag-of-words (BOW); (2) Latent Dirichlet Allocation (LDA) \citep{blei2003latent}; (3) LDA regularized with Determinantal Point Process prior (DPP-LDA) \citep{NIPS2012_4660};
(4) Pitman-Yor Process Topic Model (PYTM) \citep{sato2010topic}; (5) Latent IBP Compound Dirichlet Allocation (LIDA) \citep{archambeau2015latent}; (6) Neural Autoregressive Topic Model (DocNADE) \citep{larochelle2012neural}; (7) Paragraph Vector (PV) \citep{le2014distributed}; (8) Replicated Softmax RBM \citep{hinton2009replicated}. The parameters in baseline methods were tuned using 5-fold cross validation.

\subsubsection{Document Retrieval}
\label{sec:retrieval}

In this section, we evaluate the effectiveness of the learned representations on retrieval. Precision$@$100 is used to assess the retrieval performance. For each test document, we retrieve 100 documents from the training set that have the smallest Euclidean distance with the query document. The distance is computed on the learned representations. Precision$@$100 is defined as $n/100$, where $n$ is the number of retrieved documents that share the same class label with the query document.

Table \ref{tb:prec_tdt}, \ref{tb:prec_20news} and \ref{tb:prec_rcv1} show the precision$@$100 under different number $K$ of hidden units on TDT, 20-News and Reuters dataset respectively.
As can be seen from these tables, MAR-RBM with mutual angular regularization largely outperforms unregularized RBM under various choices of $K$, which demonstrates that diversifying the hidden units can greatly improve the effectiveness of document representation learning. The improvement is especially significant when $K$ is small. For example, on TDT dataset, under $K=25$, MAR-RBM improves the precision$@$100 from 11.2\% to 78.4\%.
Under a small $K$, RBM allocates most (if not all) hidden units to cover dominant topics, thus long-tail topics have little chance to be modeled effectively. MAR-RBM solves this problem by increasing diversity of these hidden units to enforce them to cover not only the dominant topics, but also the long-tail topics. Thereby, the learned representations are more effective in capturing the long-tail semantics and the retrieval performance is greatly improved. As $K$ increases, the performance of RBM increases. This is because under a larger $K$, some hidden units can be spared to model topics in the long-tail region. In this case, enforcing diversity still improves performance, though the significance of improvement diminishes as $K$ increases.

To further examine whether MAR-RBM can effectively capture the long-tail semantics, we show the precision$@$100 on each of the 9 categories in the Reuters dataset in Table \ref{tb:prec_cat_rcv1}. The 2nd row shows the number of documents in each category. The distribution of document frequency is in a power-law fashion, where dominant categories (such as 1 and 2) have a lot of documents while most categories (called long-tail categories) have a small amount of documents. The 3rd and 4th row show the precision@100 achieved by RBM and MAR-RBM on each category. The 5th row shows the relative improvement of MAR-RBM over RBM. The relative improvement is defined as $\frac{P_{drbm}-P_{rbm}}{P_{rbm}}$, where $P_{drbm}$ and $P_{rbm}$ denote the precision@100 achieved by MAR-RBM and RBM respectively. While MAR-RBM improves RBM over all the categories, the improvements on long-tail categories are much more significant than dominant categories. For example, the relative improvements on category 8 and 9 are 366\% and 397\% while the improvements are 31\% and 81\% on category 1 and 2. This indicates that MAR-RBM can effectively capture the long-tail topics, thereby improve the representations learned for long-tail categories significantly.

One great merit of MAR-RBM is that it can achieve notable performance under a small $K$, which is of key importance for fast retrieval. On the TDT dataset, MAR-RBM can achieve a precision$@$100 of 78.4\% with $K=25$, which cannot be achieved by RBM even when $K$ is raised to 500. This indicates that with MAR-RBM, one can perform retrieval on low-dimensional representations, which is usually much easier than on high-dimensional representations. For example, in KD tree \citep{friedman1977algorithm} based nearest neighbor search, while building a KD tree on feature vectors with hundreds of dimensions is extremely hard, feature vectors whose dimension is less than one hundred are much easier to handle.

Table \ref{table:cmp_ap} presents the comparison of MAR-RBM with the state of the art document representation learning methods. As can be seen from this table, our method achieves the best performances on the TDT and 20-News datasets and achieves the second best performance on the Reuters dataset. The bag-of-word (BOW) representation cannot capture the underlying semantics of documents, thus its performance is inferior.
LDA, RBM and neural network based approaches including DocNADE \citep{larochelle2012neural} and PV \citep{le2014distributed} can represent documents into the latent topic space where document retrieval can be performed more accurately. However, they lack the mechanisms to cover long-tail topics, hence the resultant representations are less effective. PYTM \citep{sato2010topic} and LIDA \citep{archambeau2015latent} use power-law priors to encourage new topics to be generated, however, the newly generated topics may still be used to model the dominant semantics rather than those in the long-tail region.
DPP-LDA \citep{NIPS2012_4660} uses a correlation kernel Determinantal Point Process (DPP) prior to diversify the topics in LDA. However, it does not improve LDA too much.

\subsubsection{Clustering}

Another task we study is to perform k-means clustering on the learned representations. In our experiments, the input cluster number of k-means was set to the ground truth
number of categories in each dataset. In each run, k-means was repeated 10 times with different random initializations and the solution with lowest loss value was returned.
Following \citep{cai2011locally}, we used accuracy to measure the clustering performance.
Please refer to \citep{cai2011locally} for the definition of accuracy. Table \ref{tb:ac_tdt}, \ref{tb:ac_20news} and \ref{tb:ac_rcv1} show the clustering accuracy on TDT, 20-News and Reuters test set respectively under different number $K$ of hidden units.

As can be seen from these tables, with diversification, MAR-RBM achieves significantly better clustering accuracy than the standard RBM. On TDT test data, the best accuracy of RBM is 23.3\%. MAR-RBM dramatically improves the accuracy to 52.4\%. On Reuters test set, the best accuracy achieved by MAR-RBM is 60.9\%, which is largely better than the 47.6\% accuracy achieved by RBM.
The great performance gain achieved by MAR-RBM attributes to the improved effectiveness of the learned representations. RBM lacks the ability to learn hidden units to cover long-tail topics, which largely inhibits its ability to learn rich and expressive representations. MAR-RBM uses the MAR to regularize the hidden units to enhance their diversity. The learned hidden units under MAR-RBM can not only represent dominant topics effectively, but also cover long-tail topics properly. Accordingly, the resultant representations can cover diverse semantics and dramatically improve the performance of k-means clustering.

MAR-RBM can achieve a high accuracy with a fairly small number of hidden units, which greatly facilitates computational efficiency. For example, on TDT dataset, with 25 hidden units, MAR-RBM can achieve an accuracy of 52.4\%, which cannot be achieved by RBM with even 500 hidden units. The computational complexity of k-means is linear to the feature dimension. Thus, on this dataset, with the latent representations learned by MAR-RBM, k-means can achieve a significant speed up. Similar observations can be seen from the other two datasets.

Table \ref{table:cmp_ac} presents the comparison of MAR-RBM with the baseline methods on clustering accuracy. As can be seen from this table, our method consistently outperforms the baselines across all three datasets. The analysis of why MAR-RBM is better than the baseline methods follows that presented in Section \ref{sec:retrieval}.

\subsubsection{Perplexity on Testing Data}
Following \citep{hinton2009replicated}, we computed perplexity on the held out test set to assess the document modeling power of RBM and MAR-RBM. Table \ref{tb:per_tdt}, \ref{tb:per_20news} and \ref{tb:per_rcv1} compare the perplexity of RBM and MAR-RBM computed on TDT, 20-News and Reuters dataset respectively. As can be seen from these tables, MAR-RBM can achieve significant lower (better) perplexity than RBM, which corroborates that by diversifying the hidden units, the document modeling power of RBM can be dramatically improved.

\subsubsection{Sensitivity to Parameters}

\begin{figure*}[t]

\begin{center}
\centerline{\includegraphics[width=0.3\columnwidth]{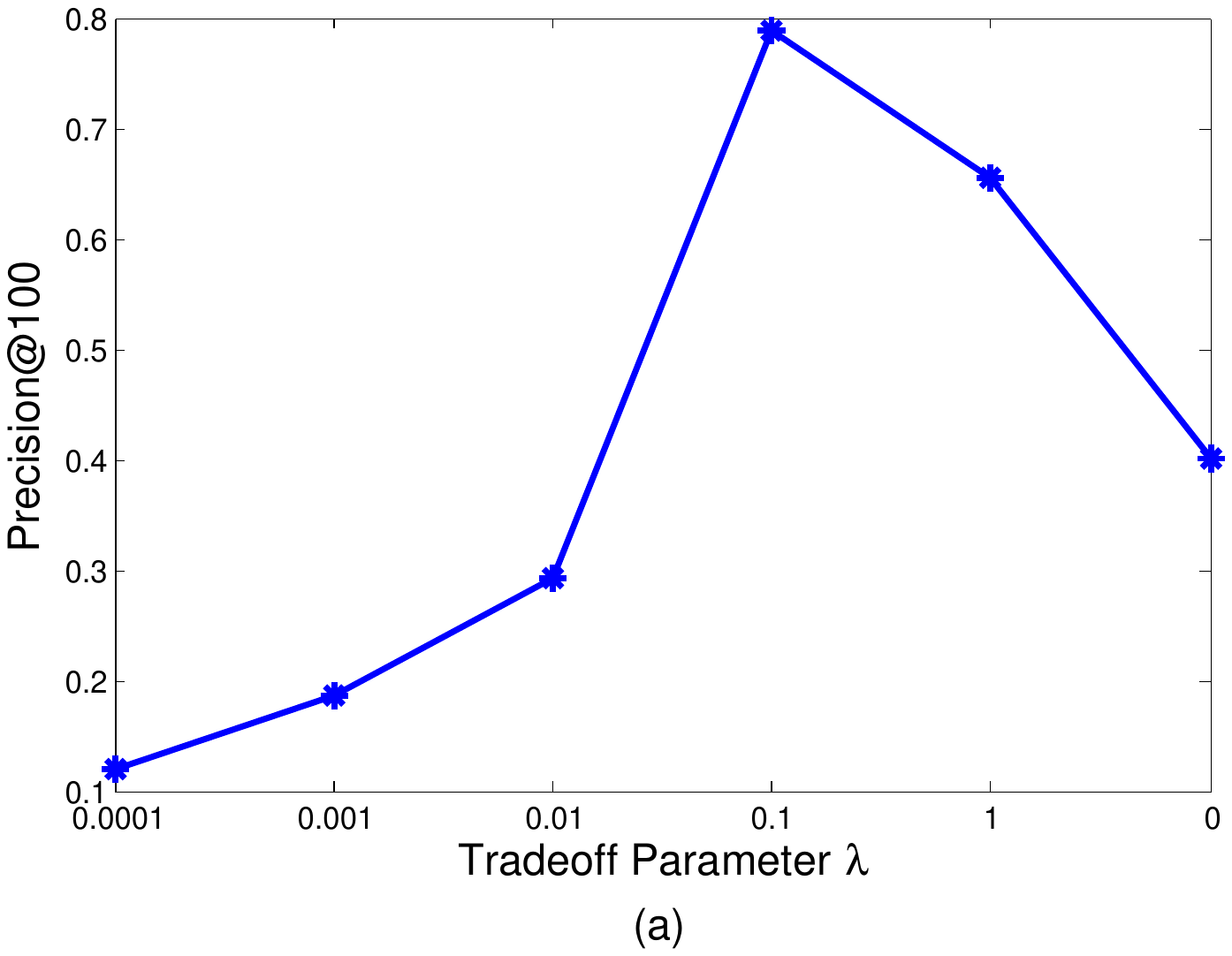}
\includegraphics[width=0.3\columnwidth]{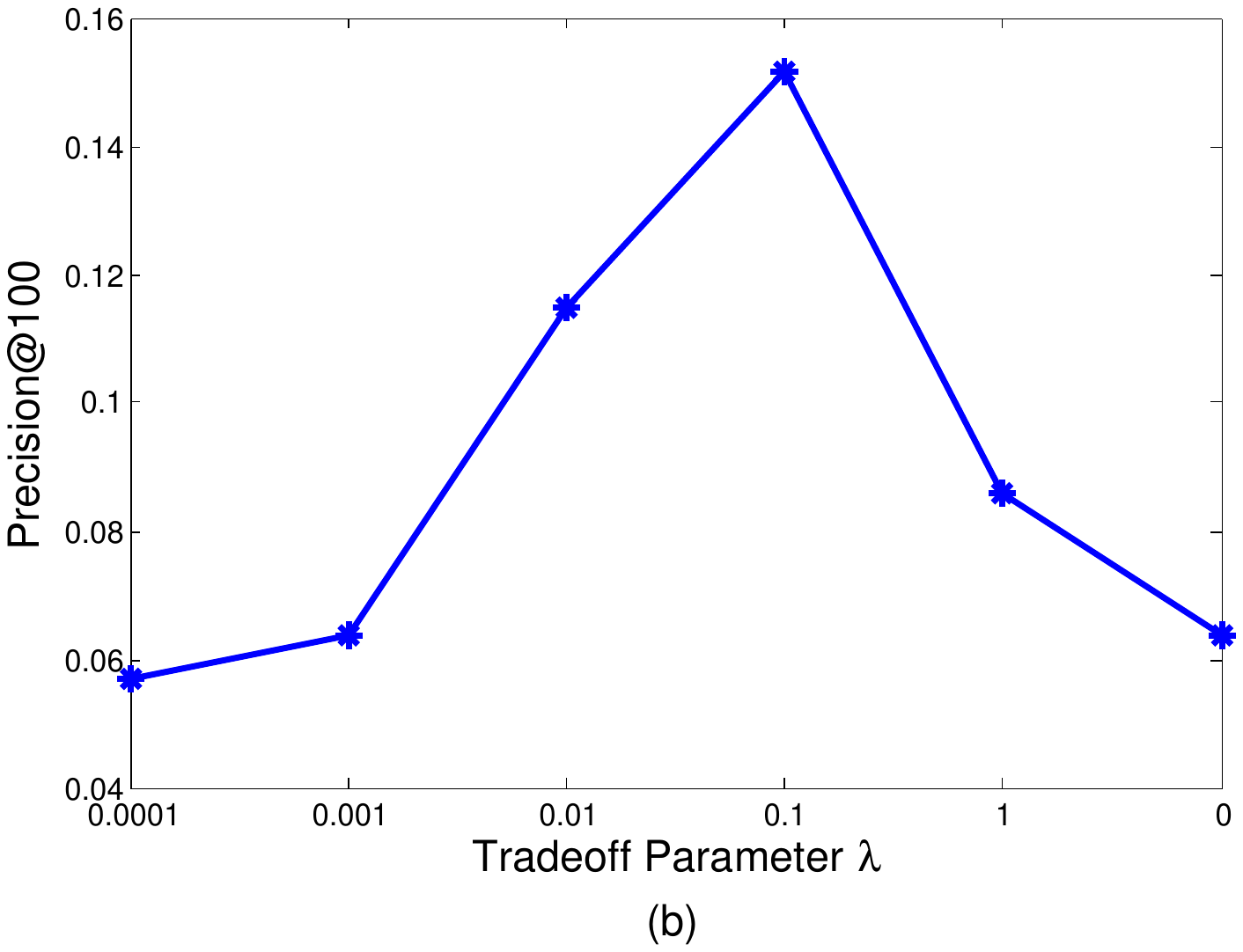}
\includegraphics[width=0.3\columnwidth]{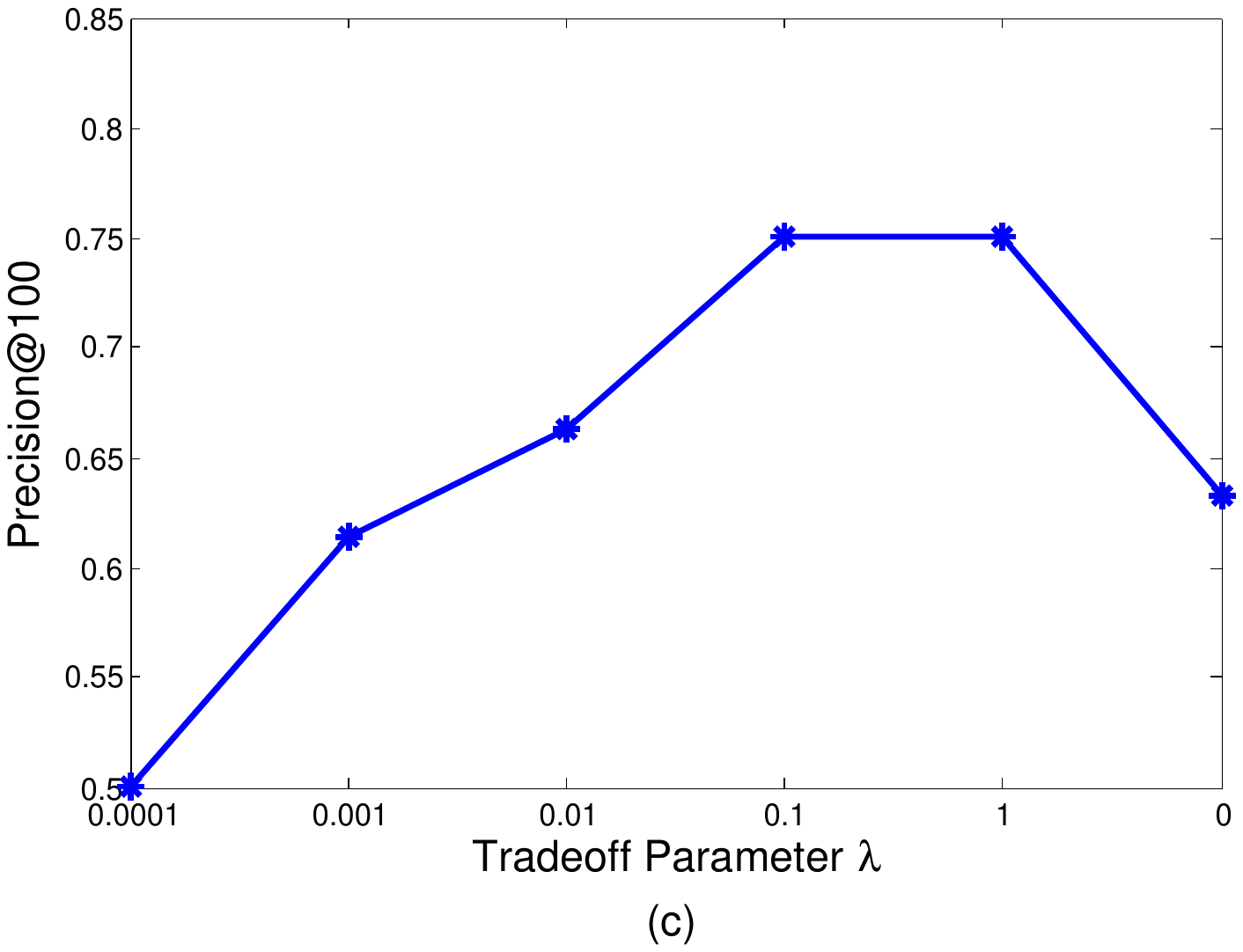}
}
\caption{Sensitivity of MAR-RBM to tradeoff parameter $\lambda$ on (a) TDT dataset (b) 20-News dataset (c) Reuters dataset}
\label{fig:paratune_lambda}
\end{center}
\end{figure*}


\begin{table*}[t]
\caption{Exemplar topics learned By RBM and MAR-RBM}
\label{tb:20news_topics}
\begin{center}
\begin{small}
\begin{tabular}{lllll}
\hline
\multicolumn{5}{c}{RBM} \\
\hline
Topic 1 &Topic 2 &Topic 3&Topic 4 & Topic 5\\
\hline
president&iraq&iraq&olympic&spkr\\
clinton&united&un&games&voice\\
iraq&un&iraqi&nagano&tobacco\\
united&weapons&lewinsky&olympics&olympic\\
spkr&iraqi& saddam& game&games\\
house&nuclear&clinton&team&people\\
people&india&baghdad&gold&olympics\\
lewinsky&minister&inspectors&japan&nagano\\
government&saddam&weapons&medal&game \\
white&military&white &hockey&gold\\
\hline
\multicolumn{5}{c}{MAR-RBM} \\
\hline
Topic 1 &Topic 2 &Topic 3&Topic 4 & Topic 5\\
\hline
president&olympic&iraq&lawyers&students\\
iraq&games&united&kaczynski &japanese\\
clinton&olympics&un&ms&japan\\
united&nagano&weapons&defense&school\\
million&team&iraqi&trial&ms\\
lewinsky&gold&baghdad&judge&united\\
thailand&game&council&people&yen\\
spkr&hockey&inspectors&prosecutors& gm\\
government&medal&nations&kaczynskis& tokyo \\
jones &winter&military&government&south\\
\hline
\end{tabular}
\end{small}
\end{center}
\end{table*}

We study the sensitivity of MAR-RBM to the tradeoff parameter $\lambda$. Figure \ref{fig:paratune_lambda} shows how precision$@$100 in retrieval varies as $\lambda$ increases on the TDT, 20-News and Reuters dataset respectively. The number of hidden units was fixed to 100.
As can be seen from the figures, starting from 0, increasing $\lambda$ improves precision$@$100. That is because a larger $\lambda$ induces more diversity of the hidden units, enabling them to better cover long-tail topics. However, further increasing $\lambda$ causes the precision to drop. This is because, if $\lambda$ is too large, too much emphasis is paid to the diversify regularizer and the data likelihood of RBM is ignored.

\subsubsection{Topic Visualization}
Other than evaluating the learned hidden units of MAR-RBM quantitatively, we also visualize and evaluate them in a qualitative way. We can interpret each hidden unit as a topic and its weight vector as a pseudo distribution over the vocabulary. To visualize each hidden unit, we pick up the top 10 representative words which correspond to the ten largest values in the weight vector. Table \ref{tb:20news_topics} shows 5 topics learned by RBM and 5 topics learned by MAR-RBM. As can be seen from the table, topics learned by RBM have many near-duplicates and are very redundant. In contrast, the topics learned by MAR-RBM are much more diverse, with a broad coverage of various topics including American politics, sports, Iraq war, law and Japanese education. The enhanced diversity makes these topics more interpretable and distinguishable.




%
%

\subsection{Distance Metric Learning with Mutual Angular Regularization}
In this section, on three tasks --- retrieval, clustering and classification --- we corroborate that through diversification it is possible to learn distance metrics that are both compact and effective.
\label{sec:exp}
\begin{table}[t]
\centering
\begin{tabular}{|c|c|c|c|c|c|c|}
\hline
& Feature Dim.&\#training data&\#data pairs\\
\hline
20-News& 5000&11.3K&200K\\
\hline
15-Scenes& 1000&3.2K&200K\\
\hline
6-Activities& 561&7.4K&200K\\
\hline
\end{tabular}
\caption{Statistics of datasets}\label{table:dml_dataset}
\end{table}

\begin{table}[t]
\centering
\begin{tabular}{|c|c|c|c|c|c|c|}
\hline
K & 10&100&300&500&700&900\\
\hline
DML& 72.4&74.0&74.9&75.4&75.8&76.2\\
\hline
MAR-DML& \textbf{76.7}&\textbf{81.0}&\textbf{81.1}&\textbf{79.2}&\textbf{78.3}&\textbf{77.8}\\
\hline
\end{tabular}
\caption{Retrieval average precision (\%) on 20-News dataset}\label{table:retrieval_20news}
\end{table}

\begin{table}[t]
\centering
\begin{tabular}{|c|c|c|c|c|c|}
\hline
K & 10&50&100&150&200\\
\hline
DML& 79.5&80.2&80.7&80.7&80.8\\
\hline
MAR-DML& \textbf{82.4}&\textbf{83.6}&\textbf{83.3}&\textbf{83.1}&\textbf{82.8}\\
\hline
\end{tabular}
\caption{Retrieval average precision (\%) on 15-Scenes dataset}\label{table:retrieval_15scenes}
\end{table}

\begin{table}[t]
\centering
\begin{tabular}{|c|c|c|c|c|c|}
\hline
K & 10&50&100&150&200\\
\hline
DML& 93.2&94.3&94.5&94.5&94.5\\
\hline
MAR-DML& \textbf{96.2}&\textbf{95.5}&\textbf{95.9}&\textbf{95.3}&\textbf{95.1}\\
\hline
\end{tabular}
\caption{Retrieval average precision (\%) on 6-Activities dataset}\label{table:retrieval_6activities}
\label{tb:retrievl_compare}
\end{table}

\begin{table}[t]
\centering
\begin{tabular}{|c|c|c|c|}
\hline
& 20-News&15-Scenes&6-Activities\\
\hline
EUC &62.8&65.3&85.0\\
DML \citep{xing2002distance} &76.2&80.8&94.5\\
LMNN \citep{weinberger2005distance} &67.0&70.3&71.5\\
ITML \citep{davis2007information} &74.7&79.1&94.2\\
DML-eig \citep{ying2012distance} &71.2&71.3&86.7\\
Seraph \citep{niu2012information} &75.8&82.0&89.2\\
MAR-DML &\tb{81.1}&\tb{83.6}&\tb{96.2}\\
\hline
\end{tabular}
\caption{Retrieval average precision (\%) on three datasets}
\label{table:cmp_ap}
\end{table}

\begin{table}[t]
\centering
\begin{tabular}{|c|c|c|c|c|c|c|}
\hline
K & 10&100&300&500&700&900\\
\hline
DML& 23.7	&25.1	&26.2&	26.9&	28.1&	28.4
\\
\hline
MAR-DML& \tb{33.4}&	\tb{42.7}	&\tb{44.6}&	\tb{39.5}&	\tb{40.6}	&\tb{41.3}
\\
\hline
\end{tabular}
\caption{Clustering accuracy (\%) on 20-News dataset}
\label{table:clus_ac_20news_test}
\end{table}

\begin{table}[t]
\centering
\begin{tabular}{|c|c|c|c|c|c|c|}
\hline
K & 10&100&300&500&700&900\\
\hline
DML& 34.1&	35.4&	36.8	&36.9&	38.0&	38.2
\\
\hline
MAR-DML&\tb{ 42.5}&\tb{	49.7}&	\tb{51.1}&	\tb{47.2}	&\tb{47.8}&	\tb{48.1}

\\
\hline
\end{tabular}
\caption{Normalized mutual information (\%) on 20-News dataset}\label{table:nmi_20news_test}
\end{table}

\begin{table}[t]
\centering
\begin{tabular}{|c|c|c|c|c|c|}
\hline
K & 10&50&100&150&200\\
\hline
DML&33.9&36.5&40.1&37.0&	37.8
\\
\hline
MAR-DML &\tb{46.9}&\tb{51.3}	&\tb{46.2}&\tb{46.5}&\tb{49.6}

\\
\hline
\end{tabular}
\caption{Clustering accuracy (\%) on 15-Scenes dataset}\label{table:ac_15scenes_test}
\end{table}

\begin{table}[t]
\centering
\begin{tabular}{|c|c|c|c|c|c|}
\hline
K & 10&50&100&150&200\\
\hline
DML&41.4&	41.0&	42.0	&41.4&41.6
\\
\hline
MAR-DML&\tb{46.7}&\tb{	48.9}	&\tb{47.3}&	\tb{48.8}&	\tb{47.1}
\\
\hline
\end{tabular}
\caption{Normalized mutual information (\%) on 15-Scenes dataset}
\label{table:nmi_15scenes_test}
\end{table}

\begin{table}[t]
\centering
\begin{tabular}{|c|c|c|c|c|c|}
\hline

K & 10&50&100&150&200\\
\hline
DML&75.0	&75.6	&76.1&75.6&		75.7\\
\hline
MAR-DML &\tb{94.9}&\tb{96.3}	&\tb{96.6}&\tb{95.1}&\tb{95.7}
\\
\hline
\end{tabular}
\caption{Clustering accuracy (\%) on 6-Activities dataset}\label{table:ac_6activities_test}
\end{table}

\begin{table}[t]
\centering
\begin{tabular}{|c|c|c|c|c|c|}
\hline
K & 10&50&100&150&200\\
\hline
DML&83.6&	83.5	&84.0&83.5&		83.5\\
\hline
MAR-DML &\tb{90.3}&\tb{91.9}&\tb{91.3}&	\tb{91.4}&\tb{91.1}
\\
\hline
\end{tabular}
\caption{Normalized mutual information (\%) on 6-Activities dataset}\label{table:nmi_6activities_test}
\end{table}

\begin{table}[t]
\centering
\begin{tabular}{|c|c|c|c|}
\hline
& 20-News&15-Scenes&6-Activities\\
\hline
EUC &36.5&29.0&61.6\\
DML \citep{xing2002distance}&28.4&40.1&76.1\\
LMNN \citep{weinberger2005distance}&32.9&33.6&56.9\\
ITML \citep{davis2007information}&34.5&38.2&93.4\\
DML-eig \citep{ying2012distance}&27.3&26.6&63.3\\
Seraph \citep{niu2012information}&\tb{48.1}&48.2&74.8\\
MAR-DML &44.6&\tb{51.3}&\tb{96.6}\\
\hline
\end{tabular}
\caption{Clustering accuracy (\%) on three datasets}\label{table:ac_cmp}
\end{table}

\begin{table}[t]
\centering
\begin{tabular}{|c|c|c|c|}
\hline
& 20-News&15-Scenes&6-Activities\\
\hline
EUC&37.9&28.7&59.9\\
DML \citep{xing2002distance}&38.2&42.0&83.6\\
LMNN \citep{weinberger2005distance}&33.3&34.3&58.2\\
ITML \citep{davis2007information}&39.2&41.5&87.0\\
DML-eig \citep{ying2012distance}&34.0&31.8&58.6\\
Seraph \citep{niu2012information}&49.7&47.5&71.1\\
MAR-DML &\tb{51.1}&\tb{48.9}&\tb{91.9}\\
\hline
\end{tabular}
\caption{Normalized mutual information (\%) on three datasets}\label{table:nmi_cmp}
\end{table}

\subsubsection{Datasets}
We used three datasets in the experiments: 20 Newsgroups\footnote{http://qwone.com/~jason/20Newsgroups/} (20-News), 15-Scenes \citep{lazebnik2006beyond} and 6-Activities \citep{anguita2012human}. The 20-News dataset has 18846 documents from 20 categories, where 60\% of the documents were for training and the rest were for testing. Documents were represented with \textit{tf-idf} (term frequency, inverse document frequency) vectors whose dimensionality is 5000. We randomly generated 100K similar pairs and 100K dissimilar pairs from the training set to learn distance metrics. Two documents were labeled as similar if they belong to the same group and dissimilar otherwise.
The 15-Scenes dataset contains 4485 images belonging to 15 scene classes. 70\% of the images were used for training and 30\% were for testing. Images were represented with bag-of-words vectors whose dimensionality is 1000. Similar to 20-News, we generated 100K similar and 100K dissimilar data pairs for distance learning according to whether two images are from the same scene class or not. The 6-Activities dataset is built from recordings of 30 subjects performing six activities of daily living while carrying a waist-mounted smart phone with embedded inertial sensors. The features are 561-dimensional sensory signals. There are 7352 training instances and 2947 testing instances. Similarly, 100K similar pairs and 100K dissimilar pairs were generated to learn distance metrics. Table \ref{table:dml_dataset} summarizes the statistics of these three datasets.

\subsubsection{Experimental Setup}
Our method MAR-DML contains two key parameters --- the number $K$ of components and the tradeoff parameter $\lambda$ --- both of which were tuned using 5-fold cross validation.
We compared with 6 baseline methods, which were selected according to their popularity and the state of the art performance. They are: (1) Euclidean distance (EUC); (2) Distance Metric Learning (DML) \citep{xing2002distance}; (3) Large Margin Nearest Neighbor (LMNN) metric learning \citep{weinberger2005distance}; (4) Information Theoretical Metric Learning (ITML) \citep{davis2007information}; (5) Distance Metric Learning with Eigenvalue Optimization (DML-eig) \citep{ying2012distance}; (6) Information-theoretic Semi-supervised Metric Learning via Entropy Regularization (Seraph) \citep{niu2012information}. Parameters of the baseline methods were tuned using 5-fold cross validation. Some methods, such as ITML, achieve better performance on lower-dimensional representations which are obtained via Principal Component Analysis. The number of leading principal components were selected via 5-fold cross validation.

\subsubsection{Retrieval}

\begin{table}[t]
\centering
\begin{tabular}{|c|c|c|c|c|c|c|}
\hline
K& 10&100&300&500&700&900\\
\hline
DML& 39.1&	48.0&	53.0	&55.0&	56.4	&57.5
\\
\hline
MAR-DML& \tb{51.3}&\tb{	64.1}	&\tb{64.5}	&\tb{63.3}&	\tb{62.9}&\tb{	61.4}
\\
\hline
\end{tabular}
\caption{3-NN accuracy (\%) on 20-News dataset}\label{table:3nn_20news}
\end{table}


We first applied the learned distance metrics for retrieval. To evaluate the effectiveness of the learned metrics, we randomly sampled 100K similar pairs and 100K dissimilar pairs from 20-News test set, 50K similar pairs and 50K dissimilar pairs from 15-Scenes test set, 100K similar pairs and 100K dissimilar pairs from 6-Activities test set and used the learned metrics to judge whether these pairs were similar or dissimilar. If the distance was greater than some threshold $t$, the pair was regarded as similar. Otherwise, the pair was regarded as dissimilar. We used average precision (AP) to evaluate the retrieval performance.

Table \ref{table:retrieval_20news}, \ref{table:retrieval_15scenes} and \ref{table:retrieval_6activities} show the average precision under different number $K$ of components on 20-News, 15-Scenes and 6-Activities dataset respectively.
As shown in these tables, MAR-DML with a small $K$ can achieve retrieval precision that is comparable to DML with a large $K$. For example, on the 20-News dataset (Table \ref{table:retrieval_20news}), with 10 components, MAR-DML is able to achieve a precision of $76.7\%$, which cannot be achieved by DML with even 900 components. As another example, on the 15-Scenes dataset (Table \ref{table:retrieval_15scenes}), the precision obtained by MAR-DML with $K=10$ is $82.4\%$, which is largely better than the $80.8\%$ precision achieved by DML with $k=200$. Similar behavior is observed on the 6-Activities dataset (Table \ref{table:retrieval_6activities}). This demonstrates that, with diversification, MAR-DML is able to learn a distance metric that is as effective as (if not more effective than) DML, but is much more compact than DML. Such a compact distance metric greatly facilitates retrieval efficiency. Performing retrieval on 10-dimensional latent representations is much easier than on representations with hundreds of dimensions. It is worth noting that the retrieval efficiency gain comes without sacrificing the precision, which allows one to perform fast and accurate retrieval. For DML, increasing $K$ consistently increases the precision, which corroborates that a larger $K$ would make the distance metric to be more expressive and powerful. However, $K$ cannot be arbitrarily large, otherwise the distance matrix would have too many parameters that lead to overfitting. This is evidenced by how the precision of MAR-DML varies as $K$ increases.

Table \ref{table:cmp_ap} presents the comparison with the state of the art distance metric learning methods. As can be seen from this table, our method achieves the best performances across all three datasets. The Euclidean distance does not incorporate distance supervision provided by human, thus its performance is inferior. DML-eig imposes no regularization over the distance metric, which is thus prone to overfitting. To avoid overfitting, ITML utilized a Bregman divergence regularizer and Seraph used a sparsity regularizer. But the performances of both regularizers are inferior to the diversity-promoting MAR utilized by MAR-DML. LMNN is specifically designed for k-NN classification, thus the learned distance metrics cannot guarantee to be effective in retrieval tasks.

\subsubsection{Clustering}
The second task we study is to apply the learned distance metrics for k-means clustering, where the number of clusters was set to the number of categories in each dataset and k-means was run 10 times with random initialization of the centroids. Following \citep{cai2011locally}, we used two metrics to measure the clustering performance: accuracy (AC) and normalized mutual information (NMI). Please refer to \citep{cai2011locally} for their definitions.

Table \ref{table:clus_ac_20news_test},\ref{table:ac_15scenes_test} and \ref{table:ac_6activities_test} show the clustering accuracy on 20-News, 15-Scenes and 6-Activity test set respectively under various number of components $K$. Table \ref{table:nmi_20news_test}, \ref{table:nmi_15scenes_test} and \ref{table:nmi_6activities_test} show the normalized mutual information on 20-News, 15-Scenes and 6-Activity test set respectively.
These tables show that the clustering performance achieved by MAR-DML under a small $K$ is much better than DML under a much larger $K$. For instance, MAR-DML can achieve $33.4\%$ accuracy on the 20-News dataset (Table \ref{table:clus_ac_20news_test}) with 10 components, which is much better than the $28.4\%$ accuracy obtained by DML with 900 components. As another example, the NMI obtained by MAR-DML on the 15-Scenes dataset (Table \ref{table:nmi_15scenes_test}) with $K=10$ is $46.7\%$, which is largely better than the $41.6\%$ NMI achieved by DML with $K=200$.
This again corroborates that the mutual angular regularizer can enable MAR-DML to learn compact and effective distance metrics, which significantly reduce computational complexity while preserving the clustering performance.

Table \ref{table:ac_cmp} and \ref{table:nmi_cmp} present the comparison of MAR-DML with the state of the art methods on clustering accuracy and normalized mutual information. As can be seen from these two tables, our method outperforms the baselines in most cases except that the accuracy on 20-News dataset is worse than the Seraph method. Seraph performs very well on 20-News and 15-Scenes dataset, but its performance is bad on the 6-Activities dataset. MAR-DML achieves consistently good performances across all three datasets.

\begin{table}[t]
\centering
\begin{tabular}{|c|c|c|c|c|c|}
\hline
K & 10&50&100&150&200\\
\hline
DML&47.7	&	47.7	&	50.8	&	51.7	&	51.1\\
\hline
MAR-DML &\tb{57.4}	&\tb{57.5}&\tb{57.9}&\tb{58.8}	&\tb{57.3}\\
\hline
\end{tabular}
\caption{3-NN accuracy (\%) on 15-Scenes dataset}\label{table:3nn_15scenes}
\end{table}


\begin{table}[t]
\centering
\begin{tabular}{|c|c|c|c|c|c|}
\hline
K & 10&50&100&150&200\\
\hline
DML&94.9&	94.8&94.6&95.1&		95.0\\

\hline
MAR-DML &\tb{94.3}	&\tb{96.2}&\tb{96.5}&\tb{95.5}	&\tb{95.9}\\
\hline
\end{tabular}
\caption{3-NN accuracy (\%) on 6-Activities dataset}\label{table:3nn_6activities}
\end{table}

\subsubsection{Classification}
We also apply the learned metrics for k-nearest neighbor classification, which is also an algorithm that largely depends on a good distance measure. For each testing sample, we find its k-nearest neighbors in the training set and use the class labels of the nearest neighbors to classify the test sample. Table \ref{table:3nn_20news}, \ref{table:3nn_15scenes} and \ref{table:3nn_6activities} show the 3-NN classification accuracy on the 20-News, 15-Scenes and 6-Activities dataset.
Similar to retrieval and clustering, MAR-DML with a small $K$ can achieve classification accuracy that is comparable to or better than DML with a large $K$.
Table \ref{table:3nn_cmp} presents the comparison of MAR-DML with the state of the art methods on 3-NN classification accuracy. As can be seen from these two tables, our method outperforms the baselines in most cases except that the accuracy on 20-News dataset is worse than the Seraph method.

\subsubsection{Sensitivity to Parameters}

We study the sensitivity of MAR-DML to the two key parameters: tradeoff parameter $\lambda$ and the number of components $K$. Figure \ref{fig:paratune_lambda} shows how the retrieval average precision (AP) varies as $\lambda$ increases on the 20-News, 15-Scenes and 6-Activities dataset respectively.
The curves correspond to different $K$. As can be seen from the figure, initially increasing $\lambda$ improves AP. The reason is that a larger $\lambda$ encourages the latent factors to be more uncorrelated, thus different aspects of the information can be captured more comprehensively. However, continuing to increase $\lambda$ degrades the precision. This is because if $\lambda$ is too large, the diversify regularizer dominates the distance loss and the resultant distance metric is not tailored to the distance supervision and loses effectiveness in measuring distances.

Figure \ref{fig:paratune_k} shows how AP varies as $K$ increases on the 20-News, 15-Scenes and 6-Activities dataset respectively. The curves correspond to different $\lambda$. When $K$ is small, the average precision is low. This is because a small amount of latent factors are insufficient to capture the inherent complex pattern behind data, hence lacking the capability to effectively measure distances. As $K$ increases, the model capacity increases and the AP increases accordingly. However, further increasing $K$ causes performance to drop. This is because a larger $K$ incurs higher risk of overfitting to training data.

\begin{table}[t]
\centering
\begin{tabular}{|c|c|c|c|}
\hline
& 20-News&15-Scenes&6-Activities\\
\hline
EUC&42.6&42.5&88.7\\
DML \citep{xing2002distance}&57.5&51.7&95.1\\
LMNN \citep{weinberger2005distance}&60.6&53.5&91.5\\
ITML \citep{davis2007information}&50.9&51.9&93.5\\
DML-eig \citep{ying2012distance}&39.2&33.1&82.3\\
Seraph \citep{niu2012information}&\tb{67.9}&55.2&91.4\\
MAR-DML &64.5&\tb{58.8}&\tb{96.5}\\
\hline
\end{tabular}
\caption{3-NN accuracy (\%) on three datasets}\label{table:3nn_cmp}
\end{table}

\begin{figure*}[t]
\begin{center}
\centerline{\includegraphics[width=0.32\columnwidth]{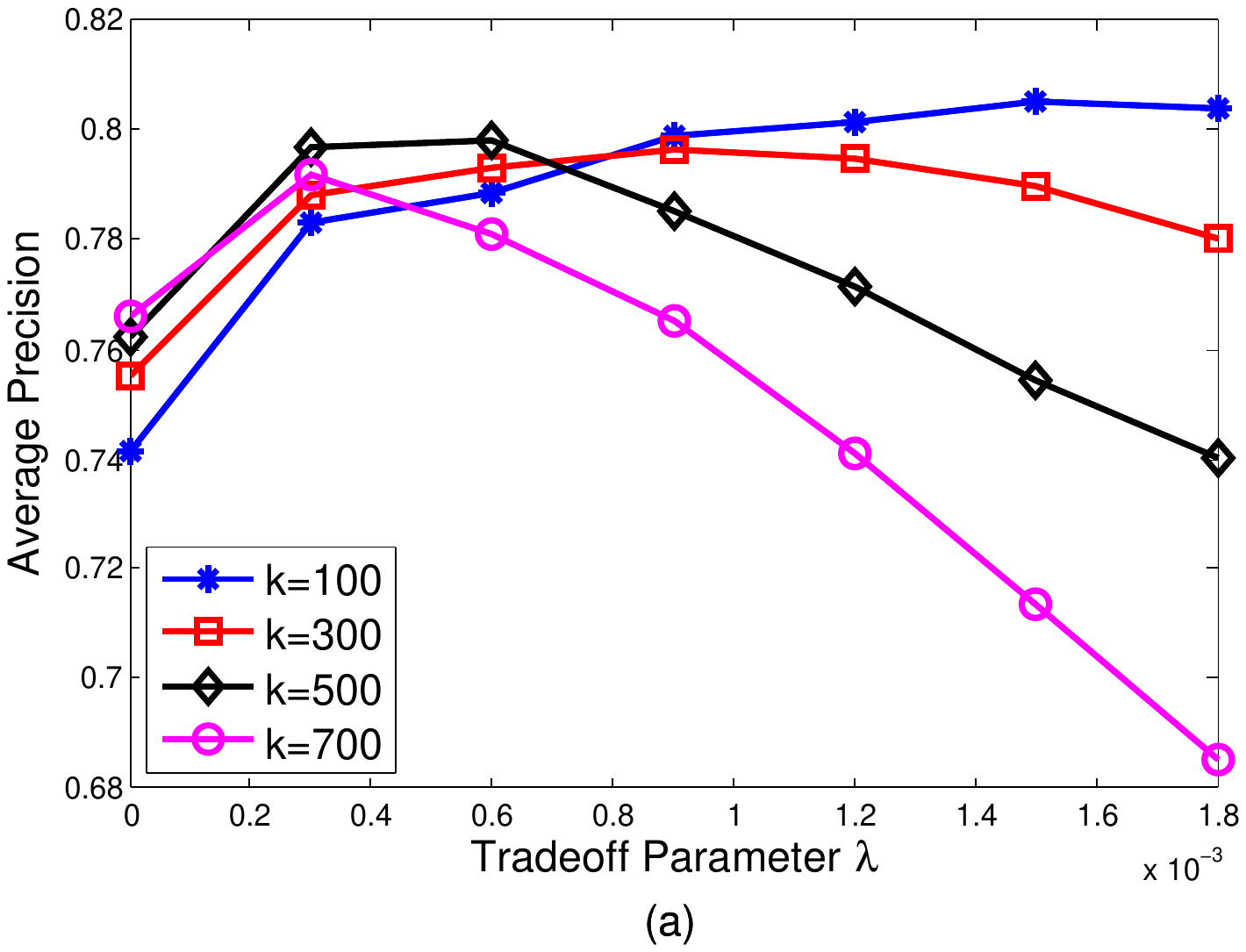}
\includegraphics[width=0.32\columnwidth]{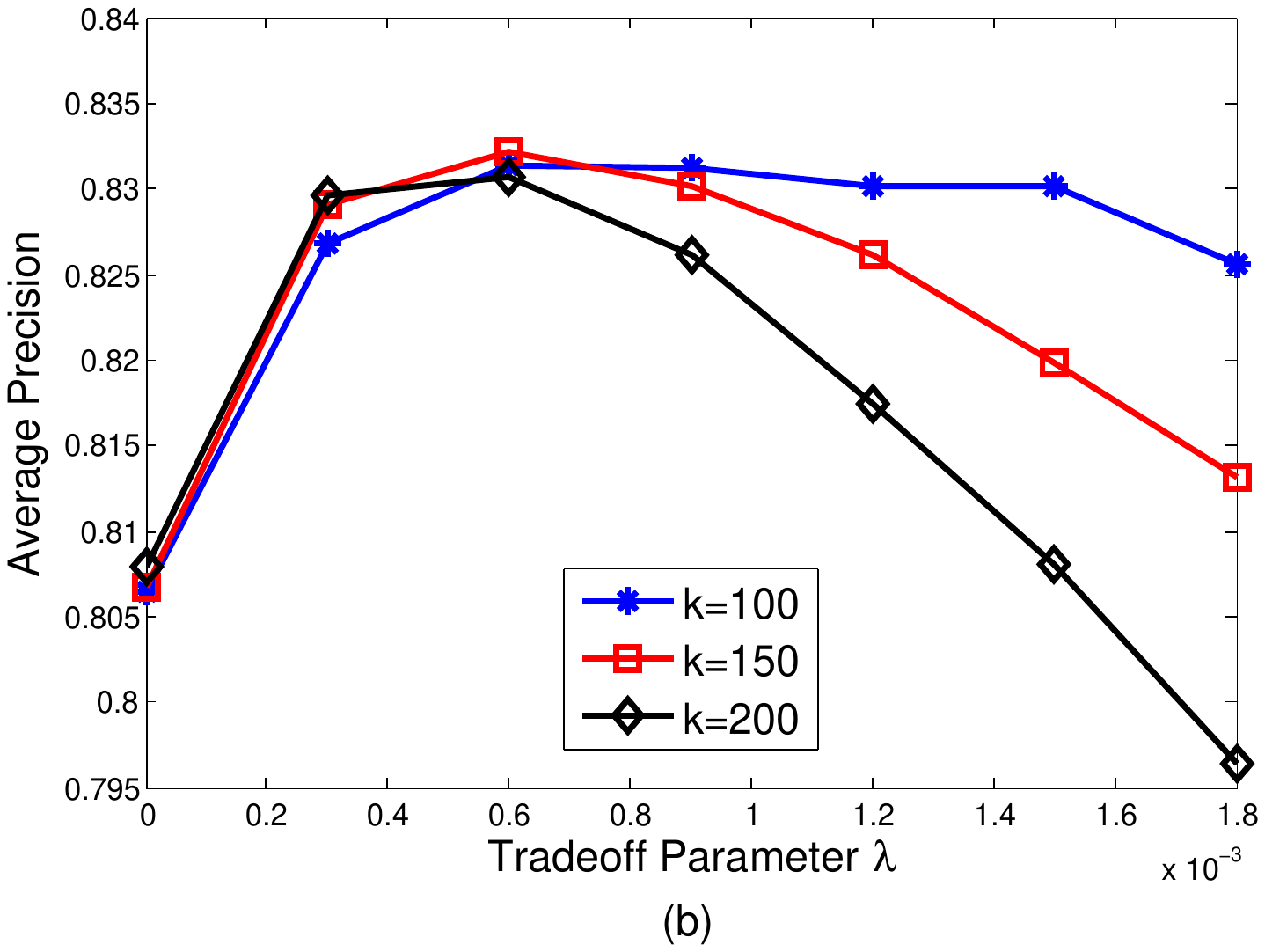}
\includegraphics[width=0.32\columnwidth]{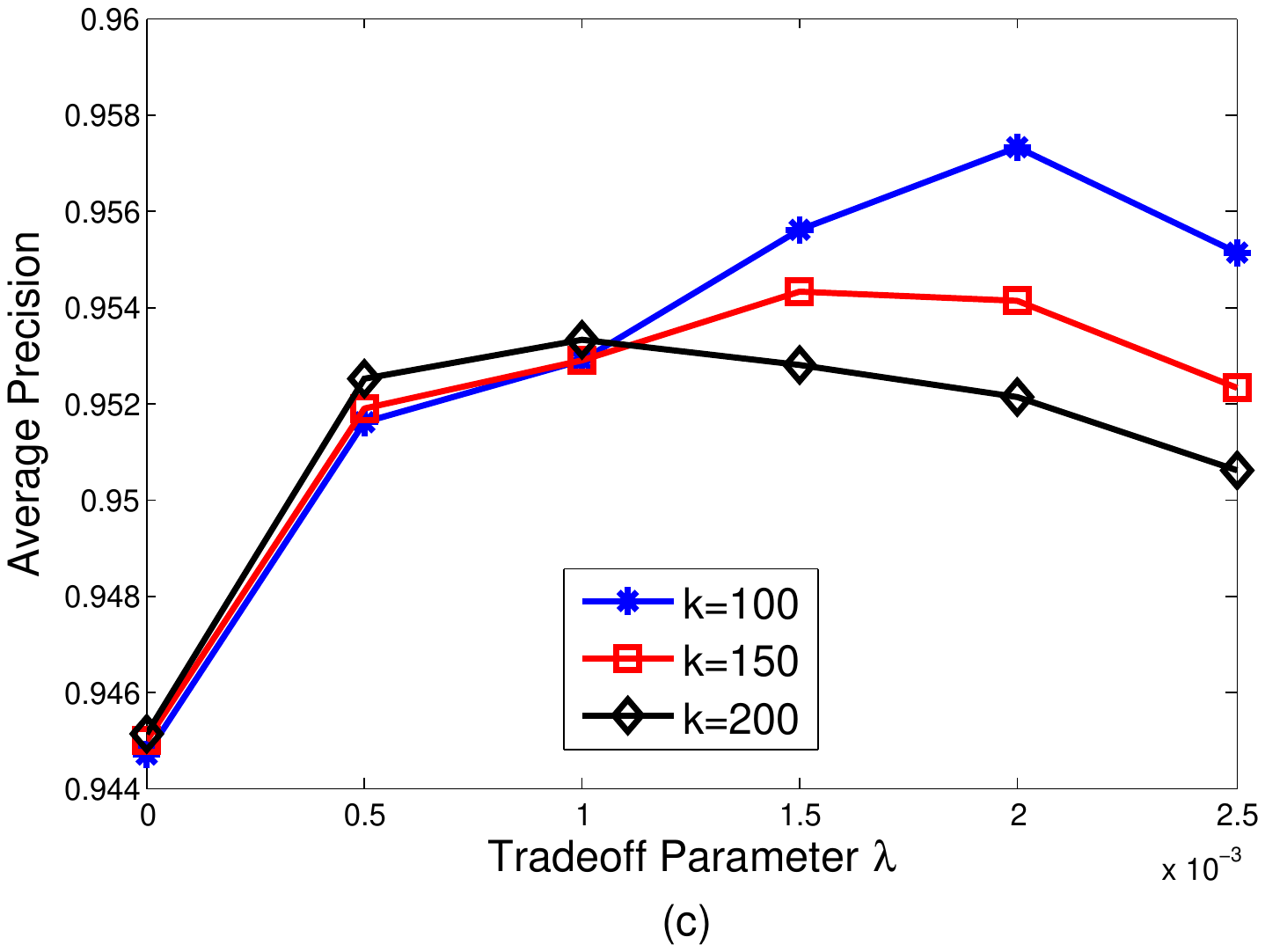}
}
\caption{Sensitivity of MAR-DML to the tradeoff parameter $\lambda$ on (a) 20-News dataset (b) 15-Scenes dataset (c) 6-Activities dataset }
\label{fig:paratune_lambda}
\end{center}
\end{figure*}

\begin{figure*}[t]
\begin{center}
\centerline{\includegraphics[width=0.32\columnwidth]{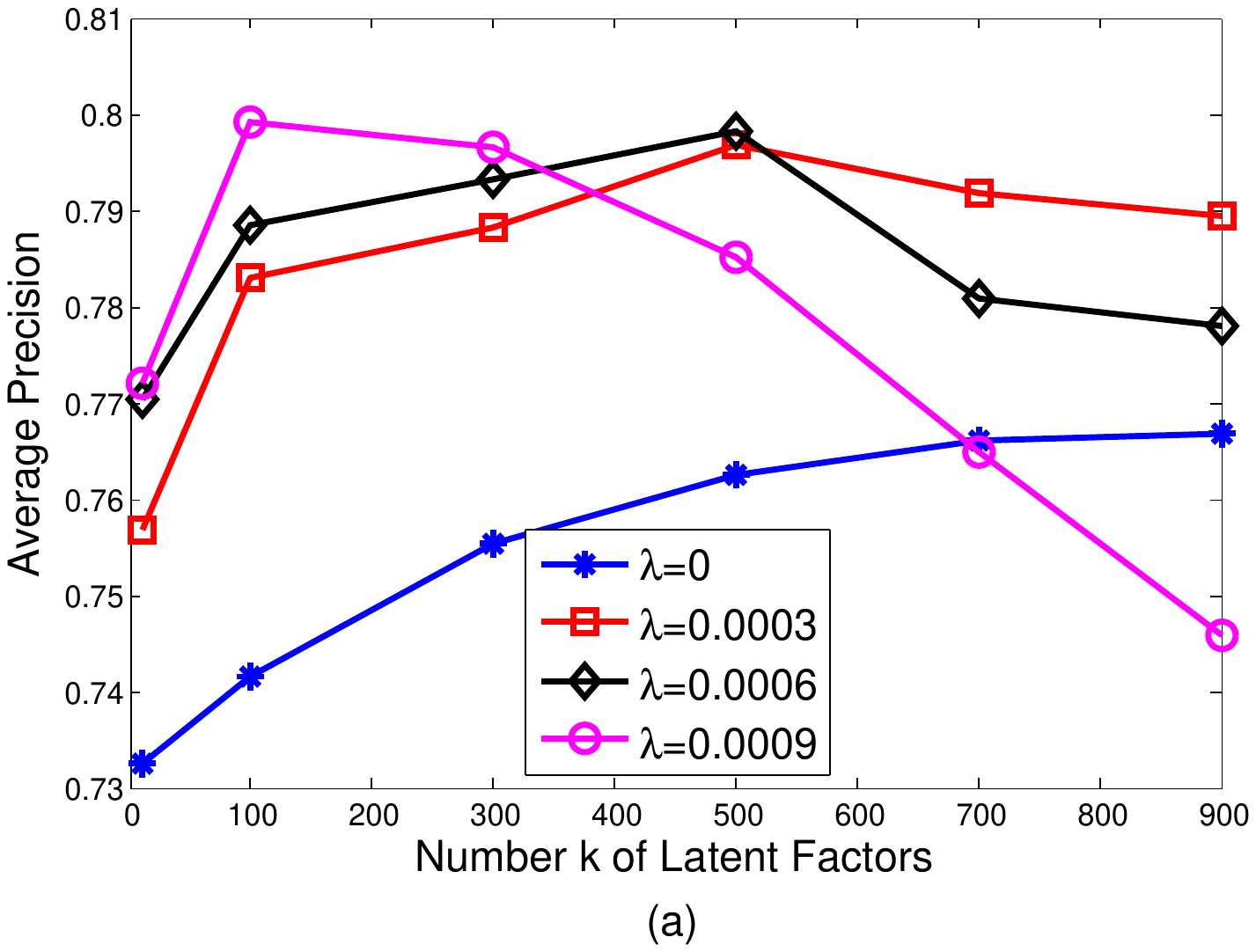}
\includegraphics[width=0.32\columnwidth]{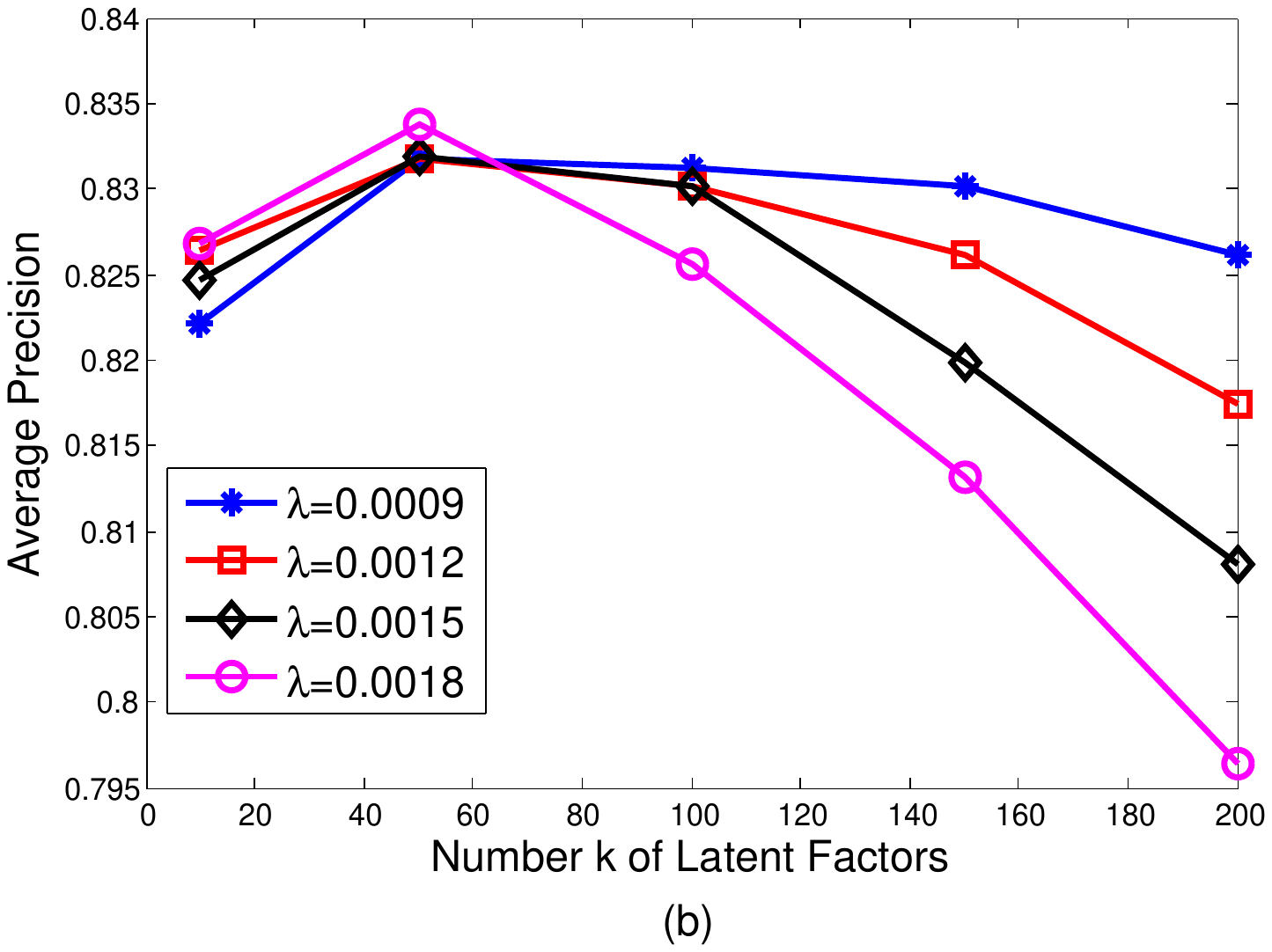}
\includegraphics[width=0.32\columnwidth]{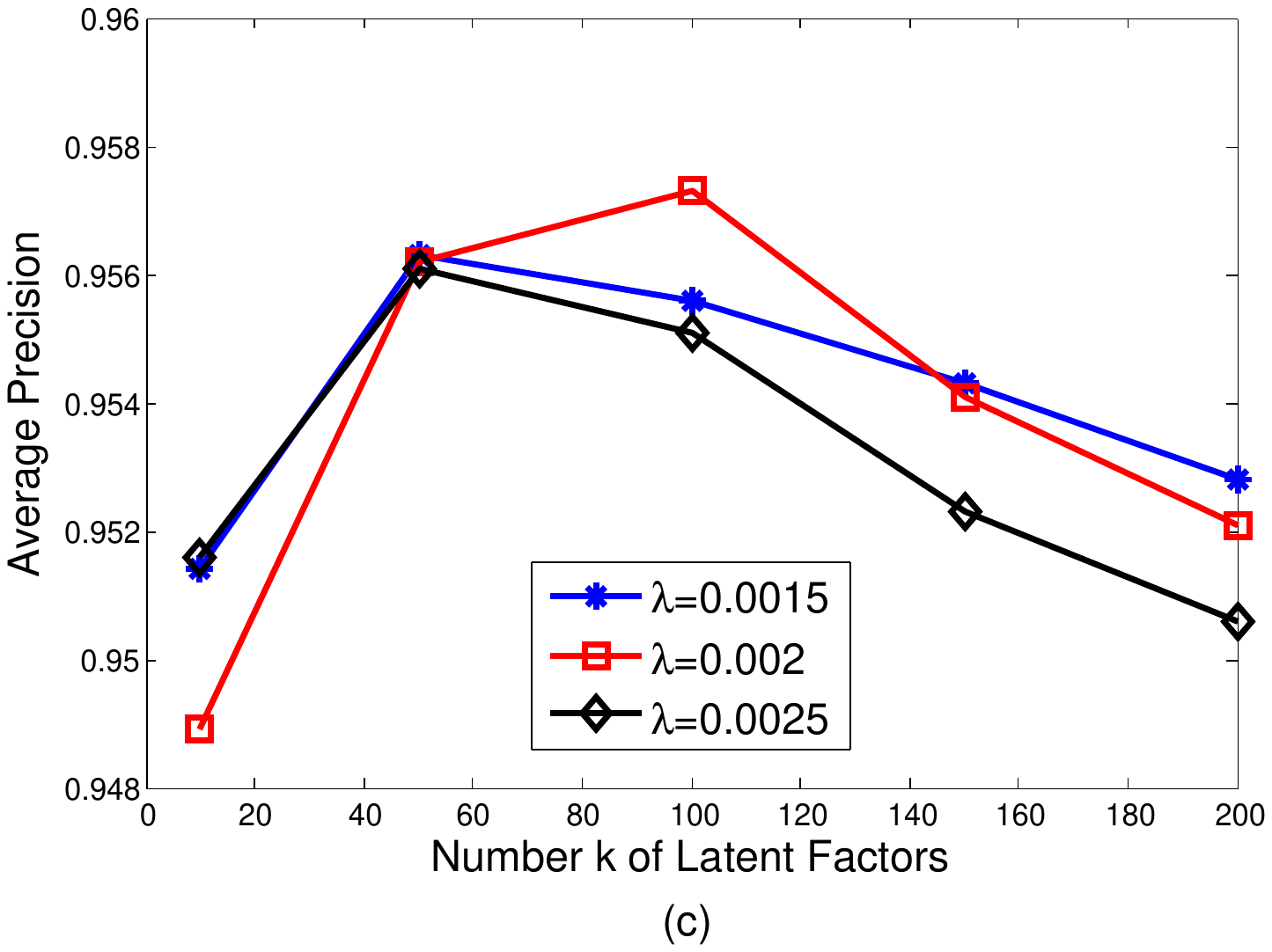}
}
\caption{Sensitivity of MAR-DML to the number of latent factors $k$ on (a) 20-News dataset (b) 15-Scenes dataset (c) 6-Activities dataset}
\label{fig:paratune_k}
\end{center}
\end{figure*}


\begin{figure*}[t]
\begin{center}
\includegraphics[width=0.24\columnwidth]{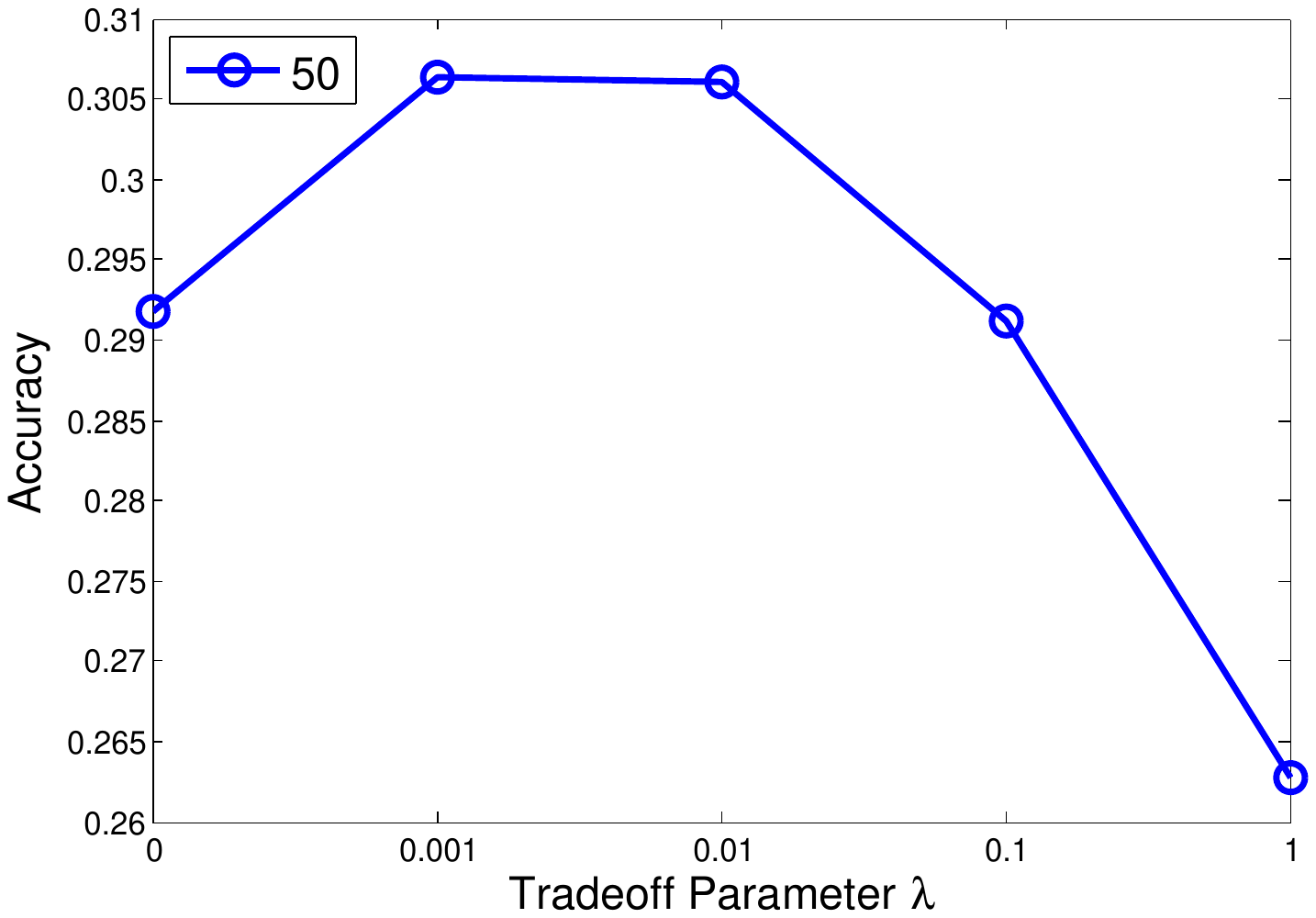}
\includegraphics[width=0.24\columnwidth]{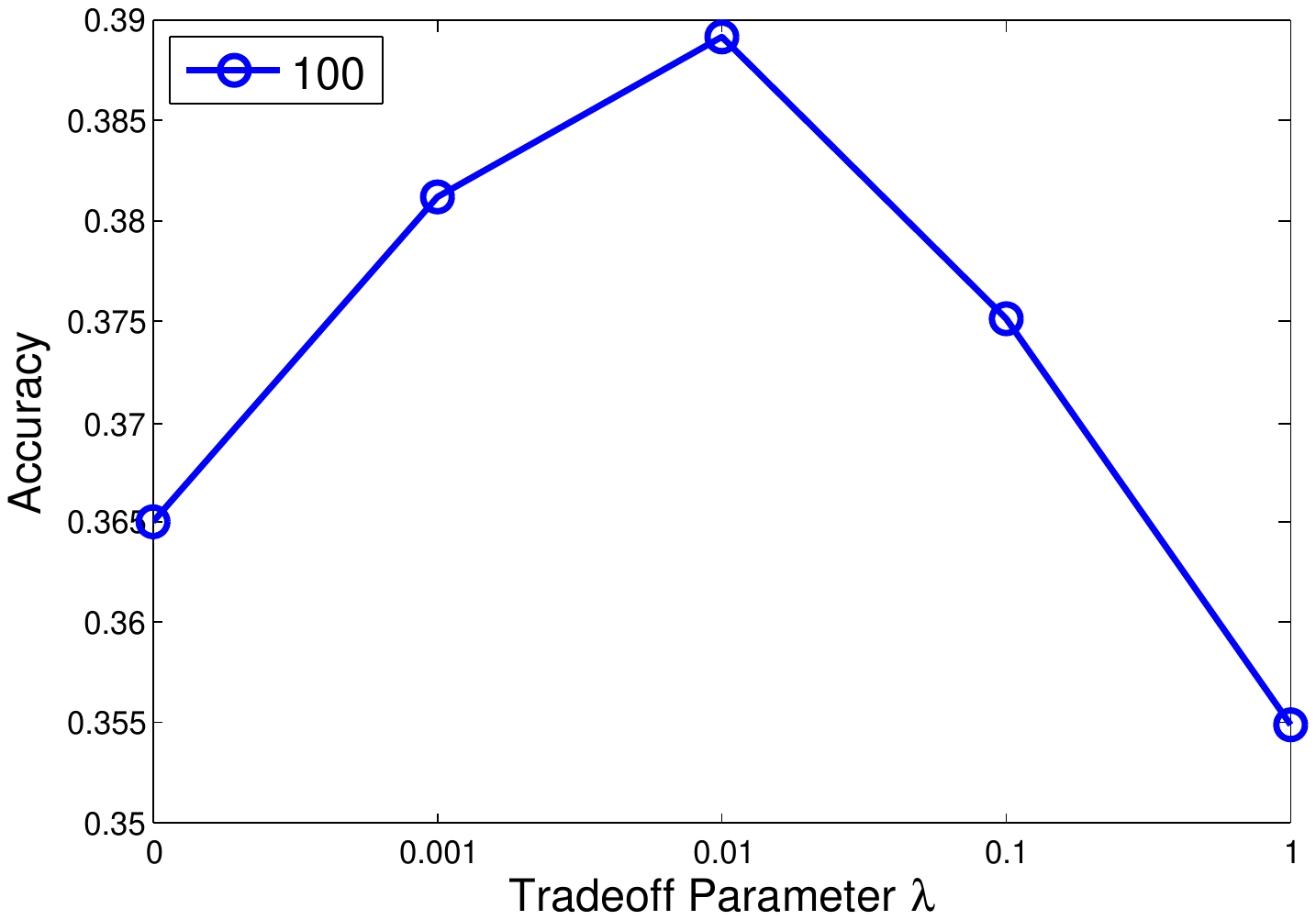}
\includegraphics[width=0.24\columnwidth]{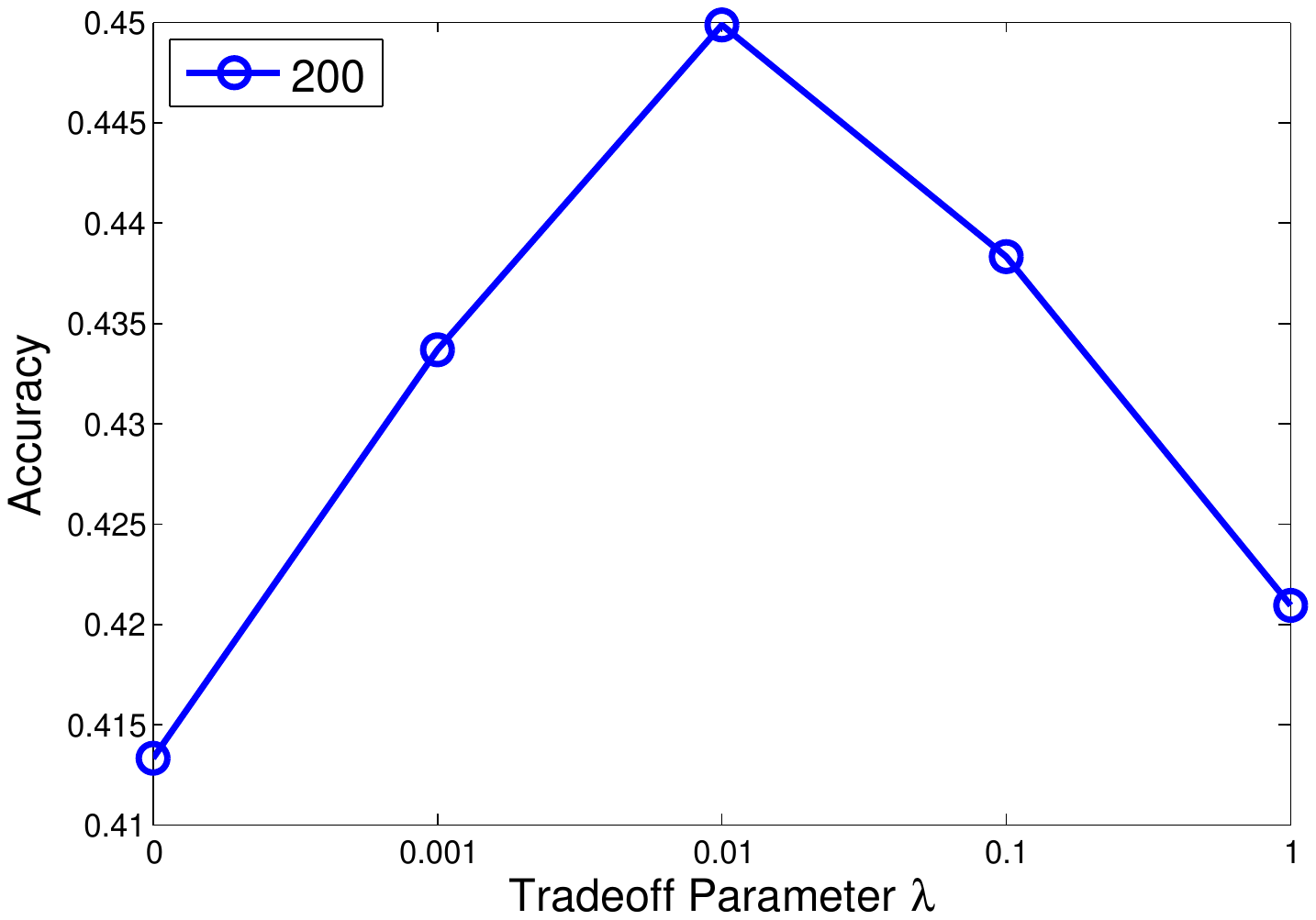}
\includegraphics[width=0.24\columnwidth]{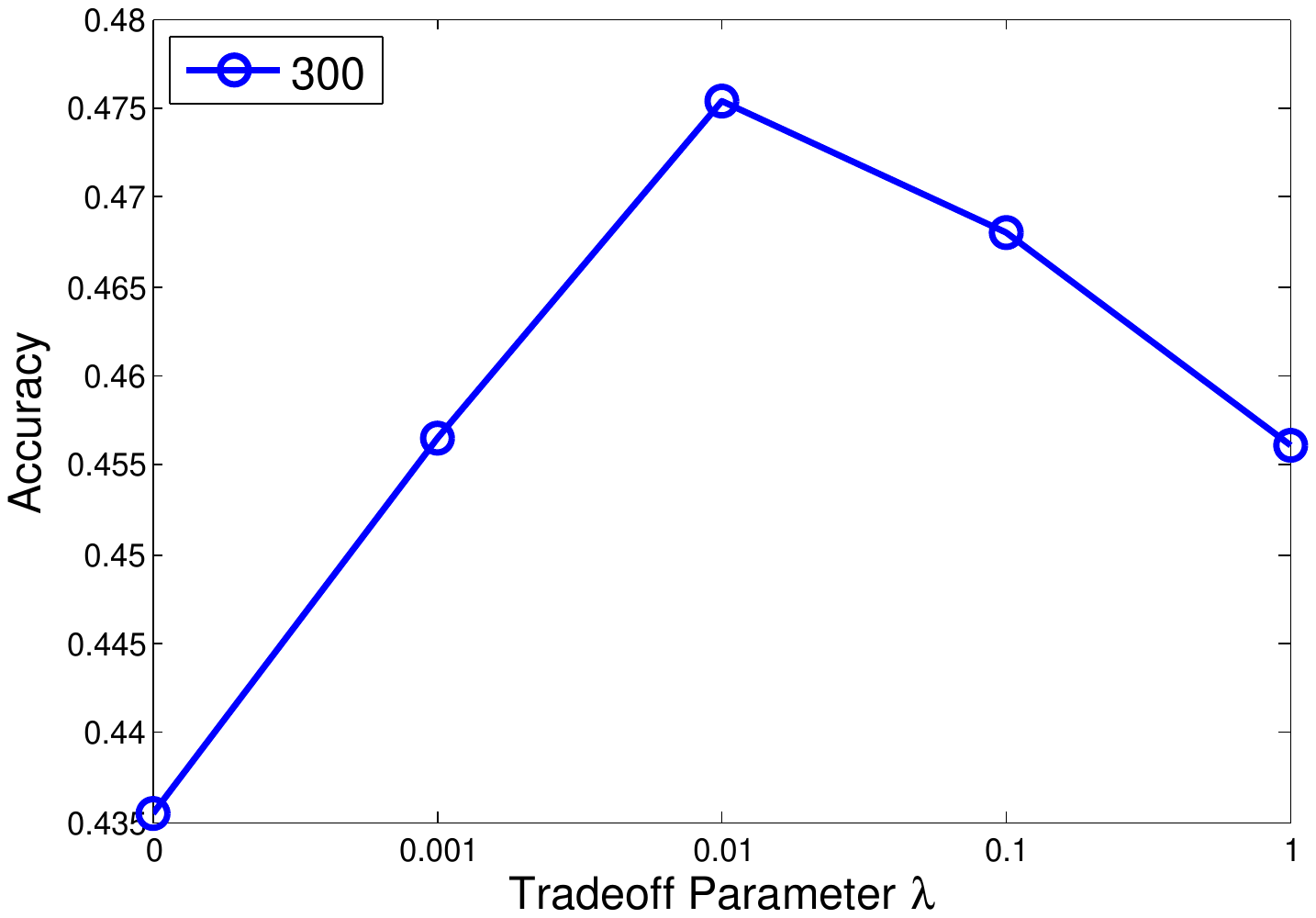}
\caption{Test accuracy versus $\lambda$ for neural networks with one hidden layer.}
\label{fig:acc_lambda_1l_timit}
\end{center}
\end{figure*}

\subsection{Neural Networks with Mutual Angular Regularization}
We conduct empirical study of MAR-NN, to check whether the empirical results are aligned with the theoretical analysis. 
Specifically, we are interested in how the performance of neural networks varies as the tradeoff parameter $\lambda$ in MAR-NN increases. A larger $\lambda$ would induce a stronger regularization, which generates a larger angle lower bound $\theta$. We apply MAR-NN for phoneme classification \citep{mohamed2011deep} on the TIMIT\footnote{\url{https://catalog.ldc.upenn.edu/LDC93S1}} speech dataset.
The inputs are MFCC features extracted with context windows and the outputs are class labels generated by the HMM-GMM model through forced alignment \citep{mohamed2011deep}. The feature dimension is 360 and the number of classes is 2001. There are 1.1 million data instances in total. We use 70\% data for training and 30\% for testing. The activation function is sigmoid and loss function is cross-entropy. The networks are trained with stochastic gradient descent and the minibatch size is 100.

Figure \ref{fig:acc_lambda_1l_timit} shows the testing accuracy versus the tradeoff parameter $\lambda$ achieved by four neural networks with one hidden layer. The number of hidden units varies in $\{50,100,200,300\}$.
As can be seen from these figures, under various network architectures, the best accuracy is achieved under a properly chosen $\lambda$. For example, for the neural network with 100 hidden units, the best accuracy is achieved when $\lambda=0.01$. These empirical observations are aligned with our theoretical analysis that the best generalization performance is achieved under a proper diversity level. Adding this regularizer greatly improves the performance of neural networks, compared with unregularized NNs. For example, in a NN with 200 hidden units, the mutual angular regularizer improves the accuracy from $\sim$0.415 (without regularization) to 0.45.

\section{Conclusions}
In this paper, we propose a diversity-inducing regularization method for latent variable models (LVMs), which encourage the components in LVMs to be diverse to address several issues involved in latent variable modeling: (1) how to capture long-tail patterns underlying data; (2) how to reduce model complexity without sacrificing expressivity; (3) how to improve the interpretability of learned patterns. We begin with defining a mutual angular regularizer (MAR) which encourages the components in LVMs to have larger mutual angles. The MAR is employed to regularize LVMs, which results in non-convex and non-smooth problems that are not amenable for optimization. To cope with this issue, we derive a smooth lower bound to surrogate the MAR and prove that the monotonicity of the lower bound is closely aligned with the MAR. We study the effectiveness of MAR both theoretically and empirically. In theory, we analyze how MAR affects the generalization performance of supervised LVMs, using neural network as a specific instance. It is shown that increasing the MAR can reduce estimation error and would increase approximation error. Empirically, we demonstrate that the MAR can greatly improve the performance of LVMs, using restricted Boltzmann machine and distance metric learning as model instances. In particular, the experiments corroborate that MAR is able to capture long-tail patterns, improve interpretability and reduce model complexity without compromising expressivity.



\section{Acknowledgments}

The authors would like to thank the anonymous reviewers for the helpful suggestions.
This work is supported by the following grants to Eric P. Xing:
ASFOR FA95501010247; NSF IIS1111142, IIS447676.

\appendix
\section{Proof of Lemmas in Optimization}
\label{apd:opt}

\setcounter{mylemma}{0}

\begin{mylemma}
Let the parameter vector $\mb{\tilde{a}}_{i}$ of component $i$ be decomposed into $\mb{\tilde{a}}_{i}=\mb{x}_{i}+l_i\mb{e}_{i}$, where $\mb{x}_{i}=\sum_{j=1,j\neq i}^{K}\alpha_{j}\mb{\tilde{a}}_{j}$ lies in the subspace $L$ spanned by $\{\mb{\tilde{a}}_{1},\cdots,\mb{\tilde{a}}_{K}\}\backslash\{\mb{\tilde{a}}_{i}\}$, $\mb{e}_{i}$ is in the orthogonal complement of $L$, $\|\mb{e}_i\|=1$, $\mb{e}_{i}\cdot \mb{\tilde{a}}_{i}>0$, $l_i$ is a scalar. Then $\textrm{det}(\mathbf{\widetilde{A}}^\mathsf{T} \mathbf{\widetilde{A}}) = \textrm{det}(\mathbf{\widetilde{A}}_{-i}^\mathsf{T} \mathbf{\widetilde{A}}_{-i}) (l_i\mb{e}_{i}\cdot \mathbf{\tilde{a}}_i)$, where $\mb{\widetilde{A}}_{-i}=[\mb{\tilde{a}}_{1},\cdots,\mb{\tilde{a}}_{i-1},\mb{\tilde{a}}_{i+1},\cdots,\mb{\tilde{a}}_{K}]$ with $\mb{\tilde{a}}_{i}$ excluded..
\end{mylemma}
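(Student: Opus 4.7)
The plan is to reduce the claimed identity to the classical Gram determinant decomposition and then execute it by a careful column operation on $\widetilde{\mathbf{A}}$. First, I would note a simplification: since $\mathbf{x}_i \in L$ and $\mathbf{e}_i$ is in the orthogonal complement of $L$, we have $\mathbf{e}_i \cdot \mathbf{x}_i = 0$ and hence $\mathbf{e}_i \cdot \mathbf{\tilde{a}}_i = l_i \|\mathbf{e}_i\|^2 = l_i$. Therefore $l_i \mathbf{e}_i \cdot \mathbf{\tilde{a}}_i = l_i^2$, and the statement reduces to proving $\det(\widetilde{\mathbf{A}}^{\mathsf{T}}\widetilde{\mathbf{A}}) = \det(\widetilde{\mathbf{A}}_{-i}^{\mathsf{T}}\widetilde{\mathbf{A}}_{-i})\, l_i^2$. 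Geometrically, $l_i^2$ is the squared distance from $\mathbf{\tilde{a}}_i$ to the subspace $L$, and the identity is the familiar ``base times height squared'' formula for the squared volume of a parallelepiped.

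To carry this out algebraically, I would form the matrix $\mathbf{B}$ obtained from $\widetilde{\mathbf{A}}$ by replacing column $i$ with $\mathbf{\tilde{a}}_i - \mathbf{x}_i = l_i \mathbf{e}_i$, leaving all other columns unchanged. Since $\mathbf{x}_i = \sum_{j \neq i}\alpha_j \mathbf{\tilde{a}}_j$ is a linear combination of the remaining columns, this operation corresponds to right-multiplying $\widetilde{\mathbf{A}}$ by an elementary matrix $\mathbf{E}$ with $\det(\mathbf{E})=1$, so $\mathbf{B} = \widetilde{\mathbf{A}}\mathbf{E}$. Consequently
\begin{equation*}
\det(\mathbf{B}^{\mathsf{T}}\mathbf{B}) = \det(\mathbf{E}^{\mathsf{T}}\widetilde{\mathbf{A}}^{\mathsf{T}}\widetilde{\mathbf{A}}\mathbf{E}) = \det(\mathbf{E})^2 \det(\widetilde{\mathbf{A}}^{\mathsf{T}}\widetilde{\mathbf{A}}) = \det(\widetilde{\mathbf{A}}^{\mathsf{T}}\widetilde{\mathbf{A}}).
\end{equation*}

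Next I would compute $\mathbf{B}^{\mathsf{T}}\mathbf{B}$ entrywise. For $j, k \neq i$, the $(j,k)$-entry is $\mathbf{\tilde{a}}_j \cdot \mathbf{\tilde{a}}_k$, which is exactly the $(j,k)$-entry of $\widetilde{\mathbf{A}}_{-i}^{\mathsf{T}}\widetilde{\mathbf{A}}_{-i}$. For any $j \neq i$, the $(i,j)$- and $(j,i)$-entries are $l_i\mathbf{e}_i \cdot \mathbf{\tilde{a}}_j = 0$, because $\mathbf{e}_i$ is orthogonal to $L$ and $\mathbf{\tilde{a}}_j \in L$. The $(i,i)$-entry is $\|l_i \mathbf{e}_i\|^2 = l_i^2$. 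Thus, after permuting the $i$-th row and column to the last position, $\mathbf{B}^{\mathsf{T}}\mathbf{B}$ is block diagonal with blocks $\widetilde{\mathbf{A}}_{-i}^{\mathsf{T}}\widetilde{\mathbf{A}}_{-i}$ and $[l_i^2]$, and the corresponding symmetric row/column permutation leaves the determinant unchanged. Therefore $\det(\mathbf{B}^{\mathsf{T}}\mathbf{B}) = l_i^2 \det(\widetilde{\mathbf{A}}_{-i}^{\mathsf{T}}\widetilde{\mathbf{A}}_{-i})$, and combining with the previous display (and rewriting $l_i^2 = l_i\mathbf{e}_i \cdot \mathbf{\tilde{a}}_i$) yields the lemma.

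There is no real obstacle here; the only bookkeeping care needed is to verify that the column subtraction is indeed a determinant-one operation and to justify that the final Gram matrix is block diagonal up to a symmetric permutation of rows and columns. Both are standard, so the proof should be short.
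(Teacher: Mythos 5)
Your proof is correct and follows essentially the same idea as the paper's: a column operation that replaces $\tilde{\mathbf{a}}_i$ by its component $l_i\mathbf{e}_i$ orthogonal to $L$, after which the Gram determinant decouples. The only cosmetic difference is that you perform the operation on $\widetilde{\mathbf{A}}$ itself (so $\mathbf{B}^{\mathsf{T}}\mathbf{B}$ becomes block diagonal), whereas the paper performs it directly on the Gram matrix and then expands by cofactors along the $i$th column; you also usefully observe that $l_i\mathbf{e}_i\cdot\tilde{\mathbf{a}}_i = l_i^2$, which makes the geometric meaning transparent.
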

\begin{proof}
Part of the proof follows \citep{shafarevich2012linear}.
\begin{equation}
\det(\mb{\widetilde{A}}^{\mathsf{T}}\mb{\widetilde{A}})=
\begin{vmatrix}
\mb{\tilde{a}}_1\cdot\mb{\tilde{a}}_1& \cdots &\mb{\tilde{a}}_1\cdot\mb{\tilde{a}}_i & \cdots &\mb{\tilde{a}}_{1}\cdot\mb{\tilde{a}}_K \\
\mb{\tilde{a}}_2\cdot\mb{\tilde{a}}_1& \cdots & \mb{\tilde{a}}_2\cdot\mb{\tilde{a}}_i & \cdots &\mb{\tilde{a}}_2\cdot\mb{\tilde{a}}_K \\
\vdots& \ddots & \vdots &\ddots &\vdots \\
\mb{\tilde{a}}_K\cdot\mb{\tilde{a}}_1& \cdots & \mb{\tilde{a}}_K\cdot\mb{\tilde{a}}_i & \cdots &\mb{\tilde{a}}_K\cdot\mb{\tilde{a}}_K\\
\end{vmatrix}
\end{equation}

Let $\mb{c}_{i}$ denote the $i$th column of the Gram matrix $\mb{\widetilde{A}}^{\mathsf{T}}\mb{\widetilde{A}}$. Subtracting $\sum_{j=1,j\neq i}^{k}\alpha_{j}\mb{c}_{j}$ from $\mb{c}_{i}$ \citep{shafarevich2012linear}, where $\alpha_{j}$ is the linear coefficient in $\mb{x}_{i}$, we get
\begin{equation}
\det(\mb{\widetilde{A}}^{\mathsf{T}}\mb{\widetilde{A}})=
\begin{vmatrix}
\mb{\tilde{a}}_1\cdot\mb{\tilde{a}}_1& \cdots &0 & \cdots &\mb{\tilde{a}}_1\cdot\mb{\tilde{a}}_K \\
\mb{\tilde{a}}_2\cdot\mb{\tilde{a}}_1& \cdots & 0 & \cdots &\mb{\tilde{a}}_2\cdot\mb{\tilde{a}}_K \\
\vdots& \ddots & \vdots &\ddots &\vdots \\
\mb{\tilde{a}}_i\cdot\mb{\tilde{a}}_i& \cdots & l_i\mb{e}_{i}\cdot\mb{\tilde{a}}_i & \cdots &\mb{\tilde{a}}_i\cdot\mb{\tilde{a}}_K \\
\vdots& \ddots & \vdots & \ddots &\vdots \\
\mb{\tilde{a}}_K\cdot\mb{\tilde{a}}_1& \cdots & 0 & \cdots &\mb{\tilde{a}}_K\cdot\mb{\tilde{a}}_K \\
\end{vmatrix}
\end{equation}
Expanding the determinant according to the $i$th column, we get $\textrm{det}(\mathbf{\widetilde{A}}^\mathsf{T} \mathbf{\widetilde{A}}) = \textrm{det}(\mathbf{\widetilde{A}}_{-i}^\mathsf{T} \mathbf{\widetilde{A}}_{-i}) (l_i\mb{e}_{i}\cdot \mathbf{\tilde{a}}_i)$.
\end{proof}

\begin{mylemma}
Let the parameter vector $\mb{\tilde{a}}_{i}$ of component $i$ be decomposed into $\mb{\tilde{a}}_{i}=\mb{x}_{i}+l_i\mb{e}_{i}$, where $\mb{x}_{i}=\sum_{j=1,j\neq i}^{K}\alpha_{j}\mb{\tilde{a}}_{j}$ lies in the subspace $L$ spanned by $\{\mb{\tilde{a}}_{1},\cdots,\mb{\tilde{a}}_{K}\}\backslash\{\mb{\tilde{a}}_{i}\}$, $\mb{e}_{i}$ is in the orthogonal complement of $L$, $\|\mb{e}_i\|=1$, $\mb{e}_{i}\cdot \mb{\tilde{a}}_{i}>0$, $l_i$ is a scalar. Then the gradient of $\Gamma(\mb{\widetilde{A}})$ w.r.t $\mb{a}_{i}$ is $k_{i}\mb{e}_{i}$, where $k_{i}$ is a positive scalar.
\end{mylemma}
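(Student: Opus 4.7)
The plan is to reduce the computation to a one-variable chain rule. Since $\Gamma(\mb{\widetilde{A}}) = g(\det(\mb{\widetilde{A}}^{\mathsf{T}}\mb{\widetilde{A}}))$ with $g(x) = \arcsin(\sqrt{x}) - (\pi/2 - \arcsin(\sqrt{x}))^2$, the gradient factors as $\nabla_{\mb{\tilde{a}}_i}\Gamma = g'(\det(\mb{\widetilde{A}}^{\mathsf{T}}\mb{\widetilde{A}}))\cdot\nabla_{\mb{\tilde{a}}_i}\det(\mb{\widetilde{A}}^{\mathsf{T}}\mb{\widetilde{A}})$. I would then establish two facts: (i) $g'(x)>0$ on the relevant range, and (ii) $\nabla_{\mb{\tilde{a}}_i}\det(\mb{\widetilde{A}}^{\mathsf{T}}\mb{\widetilde{A}}) = c\,\mb{e}_i$ for some $c>0$. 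Combining the two yields $k_i\mb{e}_i$ with $k_i>0$.

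Step (i) is a direct differentiation: $g'(x) = \frac{1 + 2(\pi/2 - \arcsin\sqrt{x})}{2\sqrt{x}\sqrt{1-x}}$, which is strictly positive for $0<x<1$ because $\arcsin\sqrt{x}\in[0,\pi/2]$. Theorem \ref{thm:lb} together with the linear independence of the rows of $\mb{\widetilde{A}}$ places $\det(\mb{\widetilde{A}}^{\mathsf{T}}\mb{\widetilde{A}})$ in $(0,1]$; the boundary value $\det=1$ corresponds to orthonormality (the trivial global optimum where the statement is understood as a one-sided limit), so we may focus on $(0,1)$. Step (ii) begins by invoking Lemma \ref{lem:expand}, which gives $\det(\mb{\widetilde{A}}^{\mathsf{T}}\mb{\widetilde{A}}) = \det(\mb{\widetilde{A}}_{-i}^{\mathsf{T}}\mb{\widetilde{A}}_{-i})\,(l_i\mb{e}_i\cdot\mb{\tilde{a}}_i)$. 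Using the decomposition $\mb{\tilde{a}}_i=\mb{x}_i+l_i\mb{e}_i$ with $\mb{x}_i\perp\mb{e}_i$ and $\|\mb{e}_i\|=1$, we simplify $l_i\mb{e}_i\cdot\mb{\tilde{a}}_i = l_i^2$. I would then rewrite $l_i^2 = \mb{\tilde{a}}_i^{\mathsf{T}}(I-P_L)\mb{\tilde{a}}_i$, where $P_L$ is the orthogonal projector onto $L=\mathrm{span}\{\mb{\tilde{a}}_j: j\neq i\}$. Since $P_L$ depends only on $\mb{\widetilde{A}}_{-i}$, this expresses $l_i^2$ as a pure quadratic form in $\mb{\tilde{a}}_i$ with the other rows fixed, whose gradient is $2(I-P_L)\mb{\tilde{a}}_i = 2l_i\mb{e}_i$. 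Therefore $\nabla_{\mb{\tilde{a}}_i}\det(\mb{\widetilde{A}}^{\mathsf{T}}\mb{\widetilde{A}}) = 2l_i\,\det(\mb{\widetilde{A}}_{-i}^{\mathsf{T}}\mb{\widetilde{A}}_{-i})\,\mb{e}_i$, and chaining with $g'$ gives $k_i = 2l_i\,g'(\det(\mb{\widetilde{A}}^{\mathsf{T}}\mb{\widetilde{A}}))\,\det(\mb{\widetilde{A}}_{-i}^{\mathsf{T}}\mb{\widetilde{A}}_{-i}) > 0$, where positivity follows from $l_i>0$ (because $\mb{e}_i\cdot\mb{\tilde{a}}_i>0$), $g'>0$ by (i), and the Gram determinant being positive by linear independence of the remaining $K-1$ rows.

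The step I expect to be the main conceptual obstacle is the gradient of $l_i^2$. When the orthogonal complement of $L$ has dimension greater than one, $\mb{e}_i$ itself does depend on $\mb{\tilde{a}}_i$, so a naive product-rule application to $l_i\mb{e}_i\cdot\mb{\tilde{a}}_i$ appears to introduce cross terms involving $\partial \mb{e}_i/\partial \mb{\tilde{a}}_i$. Recasting the quantity through the projector $P_L$, which is manifestly independent of $\mb{\tilde{a}}_i$, is what makes those cross terms cancel and delivers the clean answer $2l_i\mb{e}_i$; it is precisely this observation that lets the gradient lie exactly along the $\mb{e}_i$ direction claimed in the lemma.
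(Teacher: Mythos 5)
Your proof follows the same overall route as the paper --- chain rule $\nabla_{\mb{\tilde{a}}_i}\Gamma = g'(\det)\cdot\nabla_{\mb{\tilde{a}}_i}\det$, positivity of $g'$, and Lemma \ref{lem:expand} to factor the Gram determinant --- but you handle the determinant differentiation more carefully, and in a way the paper does not. After writing $\det(\mb{\widetilde{A}}^{\mathsf{T}}\mb{\widetilde{A}}) = \det(\mb{\widetilde{A}}_{-i}^{\mathsf{T}}\mb{\widetilde{A}}_{-i})\,(l_i\mb{e}_i\cdot\mb{\tilde{a}}_i)$, the paper simply reads off $\frac{\partial\det}{\partial\mb{\tilde{a}}_i} = \det(\mb{\widetilde{A}}_{-i}^{\mathsf{T}}\mb{\widetilde{A}}_{-i})\,l_i\mb{e}_i$, implicitly treating $l_i\mb{e}_i$ as a constant with respect to $\mb{\tilde{a}}_i$. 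As you point out, $l_i\mb{e}_i$ is the orthogonal component of $\mb{\tilde{a}}_i$ relative to $L$ and therefore depends on $\mb{\tilde{a}}_i$; a naive product rule would appear to introduce extra terms. Your reformulation $l_i^2 = \mb{\tilde{a}}_i^{\mathsf{T}}(I-P_L)\mb{\tilde{a}}_i$, where the projector $P_L$ depends only on the fixed rows $\{\mb{\tilde{a}}_j\}_{j\neq i}$, makes the dependence explicit and gives the rigorous gradient $2(I-P_L)\mb{\tilde{a}}_i = 2l_i\mb{e}_i$. Checking against a small example (e.g.\ $K=D=2$ with unit rows at angle $\alpha$) confirms your factor of $2$ and shows the paper's unjustified shortcut gives the right direction but the wrong constant; this discrepancy does not affect the lemma's conclusion, since only the sign of $k_i$ matters, but your version is the one that would survive scrutiny. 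The remaining steps --- $l_i>0$ from $\mb{e}_i\cdot\mb{\tilde{a}}_i>0$, positivity of the Gram determinant of the remaining linearly independent rows, and the explicit computation $g'(x) = \frac{1+2(\pi/2-\arcsin\sqrt{x})}{2\sqrt{x}\sqrt{1-x}}>0$ on $(0,1)$ --- match the paper and are correct.
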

\begin{proof}
According to chain rule, the gradient of $\Gamma(\mb{\widetilde{A}})$ w.r.t $\mb{\tilde{a}_{i}}$ can be written as $g'(\det(\mb{\widetilde{A}}^{\mathsf{T}}\mb{\widetilde{A}}))\\\frac{\partial \det(\mb{\widetilde{A}}^{\mathsf{T}}\mb{\widetilde{A}})}{\partial \mb{\tilde{a}_{i}}}$, where $g(x)=\arcsin(\sqrt{x})-(\frac{\pi}{2}-\arcsin(\sqrt{x}))^2$. It is easy to check that $g(x)$ is an increasing function and $g'(x)>0$. Now we discuss the $\frac{\partial \det(\mb{\widetilde{A}}^{\mathsf{T}}\mb{\widetilde{A}})}{\partial \mb{\tilde{a}_{i}}}$ term. According to Lemma \ref{lem:expand}, we have $\det(\mb{\widetilde{A}}^{\mathsf{T}}\mb{\widetilde{A}})=\det(\mb{\widetilde{A}}_{-i}^{\mathsf{T}}\mb{\widetilde{A}}_{-i}) l_i\mb{e}_{i}\cdot\mb{\tilde{a}_{i}}$. From this equation, we have $\frac{\partial \det(\mb{\widetilde{A}}^{\mathsf{T}}\mb{\widetilde{A}})}{\partial \mb{\tilde{a}_{i}}}=\det(\mb{\widetilde{A}}_{-i}^{\mathsf{T}}\mb{\widetilde{A}}_{-i})l_i\mb{e}_{i}$. As assumed earlier, the component vectors in $\mb{\widetilde{A}}$ are linearly independent and hence $\det(\mb{\widetilde{A}}^{\mathsf{T}}\mb{\widetilde{A}})>0$ and $\det(\mb{\widetilde{A}}_{-i}^{\mathsf{T}}\mb{\widetilde{A}}_{-i})>0$. From $\det(\mb{\widetilde{A}}^{\mathsf{T}}\mb{\widetilde{A}})=\det(\mb{\widetilde{A}}_{-i}^{\mathsf{T}}\mb{\widetilde{A}}_{-i}) l_i\mb{e}_{i}\cdot\mb{\tilde{a}}_{i}$ and $\mb{e}_{i}\cdot \mb{\tilde{a}}_{i}>0$, we know $l_i>0$. Overall, the gradient of $\Gamma(\mb{\widetilde{A}})$ w.r.t $\mb{\tilde{a}_{i}}$ can be written as $k_{i}\mb{e}_{i}$, where $k_{i}=g'(\det(\mb{\widetilde{A}}^{\mathsf{T}}\mb{\widetilde{A}}))\det(\mb{\widetilde{A}}_{-i}^{\mathsf{T}}\mb{\widetilde{A}}_{-i}) l_i>0$. The proof completes.
\end{proof}
\begin{mylemma}
$\forall (i,j) \in V$, we have $\theta(\mb{\tilde{a}}_{i}^{(t+1)}, \mb{\tilde{a}}_{j}^{(t+1)}) - \theta(\mb{\tilde{a}}_{i}^{(t)}, \mb{\tilde{a}}_{j}^{(t)}) = o(\eta)$, where $\lim\limits_{\eta \to 0} \frac{o(\eta)}{\eta} = 0$.
\end{mylemma}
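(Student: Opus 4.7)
The plan is to use the gradient formula established in Lemma \ref{lem:grad} together with the orthogonality conditions defining the set $V$ to show that the change in $|\mb{\tilde{a}}_i \cdot \mb{\tilde{a}}_j|$ is $O(\eta^2)$, which then translates to a change of $o(\eta)$ in the angle.

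First I would write down the updated and renormalized vectors explicitly. By Lemma \ref{lem:grad}, the gradient of $\Gamma$ at iteration $t$ has the form $k_i \mb{e}_i$ on component $i$, where $\mb{e}_i$ lies in the orthogonal complement of the subspace spanned by $\{\mb{\tilde{a}}_k^{(t)}\}_{k \ne i}$. After a projected gradient step of size $\eta$ and projection $\mathcal{P}$ onto the unit sphere, the updated iterates are
\begin{equation*}
\mb{\tilde{a}}_i^{(t+1)} = \frac{\mb{\tilde{a}}_i^{(t)} + \eta k_i \mb{e}_i}{\sqrt{1+2 l_i k_i \eta + k_i^2 \eta^2}}, \qquad \mb{\tilde{a}}_j^{(t+1)} = \frac{\mb{\tilde{a}}_j^{(t)} + \eta k_j \mb{e}_j}{\sqrt{1+2 l_j k_j \eta + k_j^2 \eta^2}}.
\end{equation*}

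Next I would exploit the three orthogonality relations that hold precisely for $(i,j) \in V$: by the definition of $V$, $\mb{\tilde{a}}_i^{(t)} \cdot \mb{\tilde{a}}_j^{(t)} = 0$; and by the orthogonal-complement property of $\mb{e}_i, \mb{e}_j$ from Lemma \ref{lem:grad}, $\mb{e}_i \cdot \mb{\tilde{a}}_j^{(t)} = 0$ and $\mb{\tilde{a}}_i^{(t)} \cdot \mb{e}_j = 0$. Expanding the numerator of $\mb{\tilde{a}}_i^{(t+1)} \cdot \mb{\tilde{a}}_j^{(t+1)}$ then kills every term of order $\eta^0$ and $\eta^1$, leaving only $\eta^2 k_i k_j (\mb{e}_i \cdot \mb{e}_j)$. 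Dividing by the denominators, which tend to $1$ as $\eta \to 0$, yields
\begin{equation*}
x_{ij}^{(t+1)} = |\mb{\tilde{a}}_i^{(t+1)} \cdot \mb{\tilde{a}}_j^{(t+1)}| = \frac{\eta^2 |k_i k_j (\mb{e}_i \cdot \mb{e}_j)|}{\sqrt{1+2 l_i k_i \eta + k_i^2 \eta^2}\sqrt{1+2 l_j k_j \eta + k_j^2 \eta^2}} = O(\eta^2),
\end{equation*}
whereas $x_{ij}^{(t)} = 0$.

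Finally I would convert this from cosine change to angle change. Since $\theta(\mb{\tilde{a}}_i^{(t)}, \mb{\tilde{a}}_j^{(t)}) = \arccos(0) = \pi/2$ and the Taylor expansion $\arccos(x) = \pi/2 - x + O(x^3)$ holds near $x=0$, we obtain
\begin{equation*}
\theta(\mb{\tilde{a}}_i^{(t+1)}, \mb{\tilde{a}}_j^{(t+1)}) - \theta(\mb{\tilde{a}}_i^{(t)}, \mb{\tilde{a}}_j^{(t)}) = -x_{ij}^{(t+1)} + O((x_{ij}^{(t+1)})^3) = O(\eta^2) = o(\eta),
\end{equation*}
which is the desired conclusion. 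There is no real obstacle here beyond carefully bookkeeping which cross terms vanish; the main content is simply that the gradient direction $\mb{e}_i$ is orthogonal to every other component vector, so a perturbation in that direction cannot produce a first-order change in an inner product that is already zero.
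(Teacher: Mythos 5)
Your proof is correct and follows essentially the same route as the paper's: both plug the orthogonality relations $\mb{\tilde{a}}_i^{(t)}\cdot\mb{\tilde{a}}_j^{(t)}=0$, $\mb{e}_i\cdot\mb{\tilde{a}}_j^{(t)}=0$, $\mb{e}_j\cdot\mb{\tilde{a}}_i^{(t)}=0$ into the explicit expression for $x_{ij}^{(t+1)}$ to leave only the $\eta^2 k_i k_j (\mb{e}_i\cdot\mb{e}_j)$ term, then convert to an angle change via the Taylor expansion of $\arccos$ at $0$. The only cosmetic difference is that you state the sharper $x_{ij}^{(t+1)}=O(\eta^2)$ and $\arccos(x)=\pi/2-x+O(x^3)$, while the paper contents itself with the weaker $o(\eta)$ and $o(x)$, which is all that is needed.
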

\begin{proof}
For $(i,j) \in V$, $\mb{\tilde{a}}_{i}^{(t)} \cdot \mb{\tilde{a}}_{j}^{(t)} = 0$, thereby $x_{ij}^{(t)} =0$ and $\theta(\mb{\tilde{a}}_{i}^{(t+1)}, \mb{\tilde{a}}_{j}^{(t+1)}) - \theta(\mb{\tilde{a}}_{i}^{(t)}, \mb{\tilde{a}}_{j}^{(t)}) =\textrm{arccos}(x_{ij}^{(t+1)}) - \textrm{arccos}(x_{ij}^{(t)}) = \textrm{arccos}(x_{ij}^{(t+1)}) - \frac{\pi}{2}$. Now we prove $\lim\limits_{\eta \to 0} \frac{\textrm{arccos}(x_{ij}^{(t+1)}) - \frac{\pi}{2}}{\eta} = 0$. Plugging in $\mb{\tilde{a}}_{i}^{(t)} \cdot \mb{\tilde{a}}_{j}^{(t)} = 0$ into $x_{ij}^{(t+1)}$, we have $x_{ij}^{(t+1)} = \frac{| \eta^2 k_i k_j \mb{e}_i^{(t)}\cdot \mb{e}_j^{(t)}|}{ \sqrt{1+2 l_i k_i\eta + k_i^2 \eta^2} \sqrt{1 + 2 l_j k_j\eta + k_j^2 \eta^2}}$. Thereby $\lim\limits_{\eta \to 0} \frac{x_{ij}^{(t+1)}}{\eta}= 0$ (equivalently, $x_{ij}^{(t+1)} = o(\eta)$) and $\lim\limits_{\eta \to 0} x_{ij}^{(t+1)}= 0$. According to the Taylor expansion of $\textrm{arccos}(x)$ at $x=0$, $\textrm{arccos}(x) = \frac{\pi}{2} - x + o(x)$, so $\lim\limits_{x \to 0} \frac{\textrm{arccos}(x)-\frac{\pi}{2}}{x}=-1$.
Since $\lim\limits_{\eta \to 0} x_{ij}^{(t+1)}= 0$, $\lim\limits_{\eta \to 0} \frac{\textrm{arccos}(x_{ij}^{(t+1)})-\frac{\pi}{2}}{x_{ij}^{(t+1)}}=\lim\limits_{x_{ij}^{(t+1)} \to 0} \frac{\textrm{arccos}(x_{ij}^{(t+1)})-\frac{\pi}{2}}{x_{ij}^{(t+1)}} = -1$. Since $\lim\limits_{\eta \to 0} \frac{x_{ij}^{(t+1)}}{\eta} = 0$, we have $\lim\limits_{\eta \to 0} \frac{\textrm{arccos}(x_{ij}^{(t+1)}) - \frac{\pi}{2}}{\eta} = \lim\limits_{\eta \to 0} \frac{\textrm{arccos}(x_{ij}^{(t+1)}) - \frac{\pi}{2}}{x_{ij}^{(t+1)}} \frac{x_{ij}^{(t+1)}}{\eta} = 0$. The proof completes.
\end{proof}

\begin{mylemma}
$\forall (i,j) \in N$, $\exists c_{ij} > 0$, such that $\theta(\mb{\tilde{a}}_{i}^{(t+1)}, \mb{\tilde{a}}_{j}^{(t+1)}) - \theta(\mb{\tilde{a}}_{i}^{(t)}, \mb{\tilde{a}}_{j}^{(t)}) = c_{ij} \eta + o(\eta)$, where $\lim\limits_{\eta \to 0} \frac{o(\eta)}{\eta} = 0$.
\end{mylemma}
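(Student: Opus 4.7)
The plan is to compute the first-order Taylor expansion of $\theta(\mb{\tilde{a}}_i^{(t+1)}, \mb{\tilde{a}}_j^{(t+1)}) = \arccos(x_{ij}^{(t+1)})$ in $\eta$ and read off the linear coefficient $c_{ij}$. The key difference from the case $(i,j)\in V$ treated in Lemma \ref{lem:m1} is that for $(i,j) \in N$ we have $|p| := |\mb{\tilde{a}}_i^{(t)} \cdot \mb{\tilde{a}}_j^{(t)}| \in (0,1)$ strictly, so $x_{ij}^{(t)}$ sits in the interior of the domain of $\arccos$ where the derivative is finite, and standard perturbation works.

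First I would simplify the inner product in the numerator of $x_{ij}^{(t+1)}$. Lemma \ref{lem:grad} places $\mb{e}_i$ in the orthogonal complement of $\mathrm{span}\{\mb{\tilde{a}}_k^{(t)}\}_{k\neq i}$, so $\mb{e}_i \cdot \mb{\tilde{a}}_j^{(t)} = 0$ (and symmetrically for $j$), which kills the $O(\eta)$ terms and leaves $(\mb{\tilde{a}}_i^{(t)}+\eta k_i\mb{e}_i)\cdot(\mb{\tilde{a}}_j^{(t)}+\eta k_j\mb{e}_j) = p + \eta^2 k_i k_j(\mb{e}_i\cdot\mb{e}_j)$. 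For small $\eta$ this has the same sign as $p$, so the absolute value contributes only that fixed sign. Next I would Taylor expand each normalizer, $\sqrt{1+2l_i k_i\eta + k_i^2\eta^2} = 1 + l_i k_i \eta + O(\eta^2)$, so their product in the denominator is $1 + (l_i k_i + l_j k_j)\eta + O(\eta^2)$. Dividing the expansions yields
$$x_{ij}^{(t+1)} = |p| - |p|(l_i k_i + l_j k_j)\,\eta + O(\eta^2).$$

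Then, because $\arccos$ is smooth at the interior point $|p|\in(0,1)$, the chain rule gives
$$\theta_{ij}^{(t+1)} - \theta_{ij}^{(t)} = -\arccos'(|p|)\cdot|p|(l_i k_i + l_j k_j)\,\eta + o(\eta) = \frac{|p|(l_i k_i + l_j k_j)}{\sqrt{1-|p|^2}}\,\eta + o(\eta),$$
so I would set $c_{ij} := |p|(l_i k_i + l_j k_j)/\sqrt{1-|p|^2}$. Strict positivity is immediate: $|p|>0$ since $(i,j)\in N$; $l_i,l_j>0$ by the sign convention $\mb{e}_i\cdot\mb{\tilde{a}}_i>0$ together with $\mb{\tilde{a}}_i = \mb{x}_i + l_i \mb{e}_i$; and $k_i,k_j>0$ from Lemma \ref{lem:grad}.

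The main obstacle is the verification that $|p|<1$ strictly, which is what separates this case from $(i,j)\in V$ and makes the $\arccos'$ factor finite: if $|p|=1$, the Cauchy--Schwarz equality case would force $\mb{\tilde{a}}_i^{(t)} = \pm\mb{\tilde{a}}_j^{(t)}$, contradicting the standing linear-independence hypothesis on the components; otherwise the argument is just bookkeeping. The one subtlety worth flagging is that the crucial cancellation $\mb{e}_i\cdot\mb{\tilde{a}}_j^{(t)}=0$, provided by Lemma \ref{lem:grad}, must be invoked cleanly so that the $O(\eta)$ correction to the numerator vanishes; otherwise an extraneous first-order term would appear and spoil both the form and the sign of $c_{ij}$.
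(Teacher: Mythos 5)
Your proposal is correct and follows essentially the same route as the paper's proof: Taylor-expand the normalizing factors $\sqrt{1+2l_ik_i\eta+k_i^2\eta^2}$, use the orthogonality $\mb{e}_i\cdot\mb{\tilde{a}}_j^{(t)}=0$ from Lemma~\ref{lem:grad} to eliminate the first-order cross terms, and then apply the smooth chain rule for $\arccos$ at the interior point $x_{ij}^{(t)}\in(0,1)$ to arrive at the same coefficient $c_{ij}=x_{ij}^{(t)}(l_ik_i+l_jk_j)/\sqrt{1-(x_{ij}^{(t)})^2}$. You additionally make explicit the point that linear independence of the rows rules out $x_{ij}^{(t)}=1$, and you observe the cleaner derivation of $l_i>0$ directly from $\mb{e}_i\cdot\mb{\tilde{a}}_i=l_i>0$; both are sound and in line with the paper's argument.
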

\begin{proof}
Using the Taylor expansion of $\textrm{arccos}(x)$ at $x = x_{ij}^{(t)}$, we have
\begin{equation}
\begin{array}{l}
\textrm{arccos}(x_{ij}^{(t+1)}) - \textrm{arccos}(x_{ij}^{(t)})
= -\frac{1}{\sqrt{1 - {x_{ij}^{(t)}}^2}} (x_{ij}^{(t+1)} - x_{ij}^{(t)}) + o(x_{ij}^{(t+1)} - x_{ij}^{(t)})
\end{array}
\end{equation}
According to the definition of $x_{ij}^{(t+1)}$, we have
\begin{equation}
\begin{array}{l}
x_{ij}^{(t+1)} = x_{ij}^{(t)} \frac{|1+ \eta^2 \frac{k_i k_j \mb{e}_i^{(t)}\cdot \mb{e}_j^{(t)}}{\mb{\tilde{a}}_i^{(t)} \cdot\mb{\tilde{a}}_j^{(t)}}|}{\sqrt{1+2 l_i k_i\eta + k_i^2 \eta^2} \sqrt{1 + 2 l_j k_j\eta + k_j^2 \eta^2}}
\end{array}
\end{equation}
Using the Taylor expansion of $\frac{1}{\sqrt{1+x}}$ at $x=0$, we can obtain that $\frac{1}{\sqrt{1+2l_i k_i\eta + k_i^2\eta^2}} = 1 - \frac{1}{2}(2l_i k_i\eta + k_i^2\eta^2) + o(2l_i k_i\eta + k_i^2\eta^2)$. As $\eta^2 = o(\eta)$ and $o(2l_i k_i\eta + k_i^2\eta^2) = o(\eta)$, we can obtain that $\frac{1}{\sqrt{1+2l_i k_i\eta + k_i^2\eta^2}} = 1 - l_i k_i\eta + o(\eta)$. Similarly, $\frac{1}{\sqrt{1+2l_j k_j\eta + k_j^2\eta^2}} = 1 - l_j k_j\eta + o(\eta)$. When $\eta \to 0$, $|1+\eta^2 \frac{k_i k_j \mb{e}_i^{(t)}\cdot \mb{e}_j^{(t)}}{\mb{\tilde{a}}_i^{(t)} \cdot\mb{\tilde{a}}_j^{(t)}}| = 1$.
Thereby when $\eta$ is small enough, $|1+\eta^2 \frac{k_i k_j \mb{e}_i^{(t)}\cdot \mb{e}_j^{(t)}}{\mb{\tilde{a}}_i^{(t)} \cdot\mb{\tilde{a}}_j^{(t)}}|=1+\eta^2 \frac{k_i k_j \mb{e}_i^{(t)} \cdot\mb{e}_j^{(t)}}{\mb{\tilde{a}}_i^{(t)}\cdot \mb{\tilde{a}}_j^{(t)}}$, so $|1+\eta^2 \frac{k_i k_j \mb{e}_i^{(t)}\cdot \mb{e}_j^{(t)}}{\mb{a}_i^{(t)} \cdot\mb{a}_j^{(t)}}|=1+o(\eta)$.
Substituting the above equations to $x_{ij}^{(t+1)}$, we can obtain that $x_{ij}^{(t+1)} - x_{ij}^{(t)} = x_{ij}^{(t)}((1+o(\eta))(1-l_i k_i\eta + o(\eta))(1-l_j k_j\eta + o(\eta)) -1) = - x_{ij}^{(t)}(l_i k_i+l_j k_j)\eta + o(\eta)$. So $\lim\limits_{\eta \to 0} \frac{x_{ij}^{(t+1)}-x_{ij}^{(t)}}{\eta} = -x_{ij}^{(t)}(l_i k_i+l_j k_j) $. As $\lim\limits_{\eta \to 0} \frac{o(x_{ij}^{(t+1)} - x_{ij}^{(t)})}{x_{ij}^{(t+1)} - x_{ij}^{(t)}}
= \lim\limits_{x_{ij}^{(t+1)} \to x_{ij}^{(t)}} \frac{o(x_{ij}^{(t+1)} - x_{ij}^{(t)})}{x_{ij}^{(t+1)} - x_{ij}^{(t)}}=0$, we can draw the conclusion that $\lim\limits_{\eta \to 0} \frac{o(x_{ij}^{(t+1)} - x_{ij}^{(t)})}{\eta} = \lim\limits_{\eta \to 0} \frac{o(x_{ij}^{(t+1)} - x_{ij}^{(t)})}{x_{ij}^{(t+1)} - x_{ij}^{(t)}} \frac{x_{ij}^{(t+1)} - x_{ij}^{(t)}}{\eta}=0$, hence $o(x_{ij}^{(t+1)} - x_{ij}^{(t)})=o(\eta)$. So $\textrm{arccos}(x_{ij}^{(t+1)}) - \textrm{arccos}(x_{ij}^{(t)}) = -\frac{1}{\sqrt{1 - {x_{ij}^{(t)}}^2}} (x_{ij}^{(t+1)} - x_{ij}^{(t)}) + o(x_{ij}^{(t+1)} - x_{ij}^{(t)})= -\frac{1}{\sqrt{1 - {x_{ij}^{(t)}}^2}} (-x_{ij}^{(t)} (l_i k_i+l_j k_j)\eta + o(\eta)) + o(\eta) = \frac{x_{ij}^{(t)}(l_i k_i+l_j k_j)}{\sqrt{1 - {x_{ij}^{(t)}}^2}}\eta + o(\eta)$. Let $c_{ij} = \frac{x_{ij}^{(t)}(l_i k_i+l_j k_j)}{\sqrt{1 - {x_{ij}^{(t)}}^2}}$, clearly $c_{ij}>0$. The proof completes.
\end{proof}

\begin{mylemma}
Given a non-decreasing sequence $b=(b_i)_{i = 1}^{n}$ and a strictly decreasing function $g(x)$ which satisfies
$
0 \le g(b_i) \le \textrm{min}\{b_{i+1} - b_i: i = 1, 2, \cdots, n-1, b_{i+1} \ne b_i\}
$,
we define a sequence $c=(c_i)_{i = 1}^{n}$ where $c_i = b_i + g(b_i)$. If $b_1 < b_n$, then $\textrm{var}(c) < \textrm{var}(b)$, where $var(\cdot)$ denotes the variance of a sequence. Furthermore, let $n' = \textrm{max}\{j: b_j \ne b_n\}$, we define a sequence $b'=(b'_i)_{i = 1}^{n}$ where $b_i' = b_i + g(b_n) + (g(b_{n'}) - g(b_n))\mathbb{I}(i \le n')$ and $\mathbb{I}(\cdot)$ is the indicator function, then $\textrm{var}(c) \le \textrm{var}(b') < \textrm{var}(b)$.
\end{mylemma}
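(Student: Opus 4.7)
My plan is to translate both inequalities into the pairwise representation $\textrm{var}(x) = \tfrac{1}{2n^2}\sum_{i,j}(x_i - x_j)^2$, which reduces each to a termwise Chebyshev-style comparison of differences.

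For $\textrm{var}(b') < \textrm{var}(b)$, a constant shift does not affect variance, so I replace $b'$ by $b + h$ with $h_i = \delta\,\mathbb{I}(i \le n')$ and $\delta = g(b_{n'}) - g(b_n) > 0$. Only pairs straddling the cutoff $n'$ contribute nonzero $h_i - h_j$, and a short calculation gives
\begin{equation*}
\textrm{var}(b+h) - \textrm{var}(b) \;=\; \frac{\delta(n-n')}{n^2}\Bigl[n'\delta - 2\sum_{i\le n'}(b_n - b_i)\Bigr].
\end{equation*}
Because $g(b_{n'})$ is bounded by the smallest nonzero consecutive gap of $b$, in particular $g(b_{n'}) \le b_{n'+1} - b_{n'} = b_n - b_{n'}$, I get $\delta \le b_n - b_{n'}$ and hence $n'\delta \le n'(b_n - b_{n'}) \le \sum_{i\le n'}(b_n - b_i)$. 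The latter sum is strictly positive since $b_1 < b_n$, so the bracket is strictly negative.

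For $\textrm{var}(c) \le \textrm{var}(b')$, I write $c_i = b_i' + \epsilon_i$ with $\epsilon_i = (g(b_i) - g(b_{n'}))\mathbb{I}(i \le n')$. From the definitions $\epsilon$ is non-negative and non-increasing in $i$ while $b'$ is non-decreasing (the latter uses $\delta \le b_n - b_{n'}$ again to handle the straddling pairs), so $(b_i' - b_j')(\epsilon_i - \epsilon_j) \le 0$ for every $(i,j)$. The hypothesis $g(b_i) \le m := \min_{k:\,b_{k+1}\ne b_k}(b_{k+1}-b_k)$, together with $|b_i - b_j| \ge m$ whenever $b_i \ne b_j$, yields the contractive bound $|\epsilon_i - \epsilon_j| \le |b_i' - b_j'|$ termwise, via a short case split according to whether $(i,j)$ lies inside $\{1,\ldots,n'\}$ or straddles $n'$. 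Combining the two gives, termwise,
\begin{equation*}
2(b_i' - b_j')(\epsilon_i - \epsilon_j) + (\epsilon_i - \epsilon_j)^2 \;\le\; |\epsilon_i-\epsilon_j|\bigl(|\epsilon_i-\epsilon_j| - 2|b_i'-b_j'|\bigr) \;\le\; -|\epsilon_i-\epsilon_j|\,|b_i'-b_j'| \;\le\; 0,
\end{equation*}
and summing over $(i,j)$ delivers $\textrm{var}(c) - \textrm{var}(b') \le 0$. Chaining with the previous paragraph gives $\textrm{var}(c) \le \textrm{var}(b') < \textrm{var}(b)$, which in particular proves the first assertion of the lemma whenever $b_1 < b_n$.

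The main obstacle is accounting for the dual role of the hypothesis $0 \le g(b_i) \le m$: it has to be simultaneously small enough that the negative first-order covariance term dominates the positive second-order variance-of-perturbation term in the $b \to b'$ step, and tight enough that the contractive inequality $|\epsilon_i - \epsilon_j| \le |b_i' - b_j'|$ survives on straddling pairs in the $b' \to c$ step. Both uses ultimately reduce to the single consequence $g(b_{n'}) \le b_n - b_{n'}$ of the stated bound, so the bookkeeping is tight but not deep.
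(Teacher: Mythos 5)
Your argument is correct, and it takes a genuinely different route from the paper. The paper builds an explicit chain of $m+1$ intermediate sequences $h^{(1)}=b, h^{(2)}, \ldots, h^{(m+1)}=c$ (one per distinct value of $b$, each differing from the last by a constant bump on a prefix), verifies that each intermediate sequence remains non-decreasing, and shows $\textrm{var}(h^{(j+1)}) < \textrm{var}(h^{(j)})$ at every stage by a direct mean-centered expansion; $b'$ is identified as $h^{(3)}$ and the conclusion is read off the chain. You collapse this to a two-step decomposition, $b \to b'$ and $b' \to c$, and work throughout with the pairwise form $\textrm{var}(x) = \tfrac{1}{2n^2}\sum_{i,j}(x_i-x_j)^2$. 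The $b \to b'$ step is essentially one link of the paper's chain (you get the negative first-order covariance dominating the second-order term from $\delta \le b_n - b_{n'}$, which follows from the gap hypothesis at $n'$). What is really different is the $b' \to c$ step: rather than chaining the remaining $m-1$ prefix bumps, you treat the whole perturbation $\epsilon_i = (g(b_i)-g(b_{n'}))\mathbb{I}(i\le n')$ at once, observe that $b'$ is non-decreasing while $\epsilon$ is non-increasing (so their pairwise differences anti-align), and then use the gap hypothesis again to prove the termwise contraction $|\epsilon_i-\epsilon_j| \le |b'_i-b'_j|$, which makes every summand $2(b'_i-b'_j)(\epsilon_i-\epsilon_j)+(\epsilon_i-\epsilon_j)^2$ non-positive. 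Your version buys a cleaner, more conceptual argument (a Chebyshev-sum-inequality flavor, no bookkeeping of intermediate sequences or their monotonicity), at the cost that the $b' \to c$ bound is non-strict, so strictness of $\textrm{var}(c) < \textrm{var}(b)$ has to come entirely from the $b \to b'$ step — which you do correctly note requires $b_1 < b_n$. One point worth stating explicitly, which you use implicitly in the chain of termwise inequalities: whenever $\epsilon_i \neq \epsilon_j$ you also have $b'_i \neq b'_j$ (in fact $|b'_i - b'_j| \ge$ the relevant gap), so the anti-alignment really is a sign cancellation and not a degenerate $0 \cdot \text{(nonzero)}$ case; without this the intermediate inequality $|\epsilon_i-\epsilon_j|\bigl(|\epsilon_i-\epsilon_j|-2|b'_i-b'_j|\bigr) \le -|\epsilon_i-\epsilon_j|\,|b'_i-b'_j|$ could fail.
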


\begin{proof}
The intuition behind the proof is that we can view the difference between corresponding elements of the sequence $b$ and sequence $c$ as "updates", and we can find that the updates lead to smaller elements "catch up" larger elements. Alternatively, we can obtain the new sequence $c$ through a set of updates: First, we update the whole sequence $b$ by the update value of the largest elements, then the largest elements have found their correct values. Then we pick up the elements that are smaller than the largest elements, and update those by the update value of the second largest elements minus the previous update, then the second largest elements have found their correct values. In this manner, we can obtain a sequence of sequences, where the first sequence is $b$, the third sequence is $b'$, the last sequence is $c$, and the adjacent sequences only differ by a simpler update: to the left of some element, each element is updated by a same value; and to the right of the element, each value remains. We can prove that such simpler update can guarantee decreasing of the variance under certain conditions, and we can use that to prove $\textrm{var}(c)\le \textrm{var}(b') < \textrm{var}(b)$.

The formal proof starts here: First, following the intuition stated above, we construct a sequence of sequences with decreasing variance, in which the variance of the first sequence is $\textrm{var}(b)$ and the variance of the last sequence is $\textrm{var}(c)$. We sort the unique values in $b$ in ascending order and denote the resultant sequence as $d=(d_j)_{j=1}^{m}$. Let $l(j) = \textrm{max}\{i:b_i = d_{j}\}$, $u(i) = \{j:d_j=b_i\}$, we construct a sequence of sequences $h^{(j)}=(h_i^{(j)})_{i=1}^{n}$ where $j=1,2,\cdots,m+1$, in the following way:
\begin{itemize}
\item $h_i^{(1)} = b_i$, where $i = 1,2,\cdots,n$
\item $h_i^{(j+1)} = h_i^{(j)}$, where $j = 1,2,\cdots,m$ and $l(m-j+1)< i \le n$
\item $h_i^{(2)} = h_i^{(1)} + g(d_m)$, where $1 \le i \le l(m)$
\item $h_i^{(j+1)} = h_i^{(j)} + g(d_{m-j+1}) - g(d_{m-j+2})$, where $j = 2,3,\cdots,m$ and $1 \le i \le l(m-j+1)$.
\end{itemize}

From the definition of $h^{(j)}$, we know $\textrm{var}(h^{(1)}) = \textrm{var}(b)$. As $b_1 < b_n$, we have $m \ge 2$. Now we prove that $\textrm{var}(h^{(m+1)}) = \textrm{var}(c)$ and $\forall j = 1, 2, \cdots, m, \, \textrm{var}(h^{(j+1)}) < \textrm{var}(h^{(j)})$.

First, we prove $\textrm{var}(h^{(m+1)}) = \textrm{var}(c)$. Actually, we can prove $h^{(m+1)}=c$: \\
\begin{align*}
h_i^{(m+1)} &= \sum_{j=1}^m (h_i^{(j+1)} - h_i^{(j)}) + h_i^{(1)} \\
&= \sum_{j=1}^{m+1-u(i)} (h_i^{(j+1)} - h_i^{(j)}) + \sum_{j=m+2-u(i)}^{m} (h_i^{(j+1)} - h_i^{(j)}) + b_i
\end{align*}
As $j \ge m+2 - u(i) \Longleftrightarrow u(i) \le m+2-j \Longleftrightarrow d_{m+2-j} \ge d_{u(i)} = b_i \Longleftrightarrow l(m+1-j) < i$, we know that
$$
h_i^{(j+1)}=\left\{
\begin{aligned}
&h_i^{(j)} \textrm{, when } j \ge m+2-u(i)\\
&h_i^{(j)} + g(d_{m-j+1})-g(d_{m-j+2}) \textrm{, when } 2 \le j \le m+1-u(i)\\
&h_i^{(j)} + g(d_m) \textrm{, when } j=1
\end{aligned}
\right.
$$
So we have
\begin{align*}
h_i^{(m+1)} & = \sum_{j=1}^{m+1-u(i)} (h_i^{(j+1)} - h_i^{(j)}) + b_i \\
&= g(d_m) + \sum_{j=2}^{m+1-u(i)} (g(d_{m-j+1})-g(d_{m-j+2})) + b_i \\
&= g(d_m)+g(d_{u(i)})-g(d_{m})+b_i\\
& =g(d_{u(i)})+b_i\\
&=g(b_i)+b_i\\
&= c_i
\end{align*}
So $\textrm{var}(h^{(m+1)}) = \textrm{var}(c)$.

Then we prove that $\forall j = 1, 2, \cdots, m, \, \textrm{var}(h^{(j+1)}) < \textrm{var}(h^{(j)})$. First, we need to prove that for any $j$, $h_i^{(j)}$ is a non-decreasing sequence in terms of $i$. In order to prove that, we only need to prove $\forall j = 2, \cdots, n, h_i^{(j+1)} - h_i^{(j)} < h_{l(m-j+1)+1}^{(j)} - h_{l(m-j+1)}^{(j)}$. Then from $h^{(j+1)}$ to $h^{(j)}$, as elements before $l(m-j+1)$ are updated by the same value, and elements after $l(m-j+1)+1$ are unchanged, so if the ${l(m-j+1)}^{th}$ element does not exceed the ${l(m-j+1)+1}^{th}$ element, then the order of the whole sequence remains during the update. The proof is as follows: $\forall j \ge 2$, $h_{l(m-j+1)+1}^{(j)}=\sum_{k=1}^{j-1} (h_{l(m-j+1)+1}^{(k+1)} - h_{l(m-j+1)+1}^{(k)}) + h_{l(m-j+1)}^{(1)}$. As $k \le j-1 \Rightarrow l(m-k+1) \ge l(m-j+2) = l(m-j+1+1) \ge l(m-j+1)+1$, from the definition of the $h$ we know that
$$
h_{l(m-j+1)+1}^{(k+1)} - h_{l(m-j+1)+1}^{(k)}=\left\{
\begin{aligned}
&g(d_{m-k+1}) - g(d_{m-k+2}) \textrm{, when } k \ge 2\\
&g(d_{m}) \textrm{, when } k=1\\
\end{aligned}
\right.
$$
So we have
\begin{align*}
h_{l(m-j+1)+1}^{(j)}&=\sum_{k=1}^{j-1} (h_{l(m-j+1)+1}^{(k+1)} - h_{l(m-j+1)+1}^{(k)}) + h_{l(m-j+1)}^{(1)}\\
& = g(d_m) + \sum_{k=2}^{j-1} (g(d_{m-k+1})-g(d_{m-k+2})) + b_{l(m-j+1)+1}\\
&= g(d_{m-j+2}) + b_{l(m-j+1)+1}
\end{align*}
From the definition of $l(\cdot)$, we have that $b_{l(m-j+1)+1}=d_{m-j+2}$, so $h_{l(m-j+1)+1}^{(j)}=g(b_{l(m-j+1)})+b_{l(m-j+1)+1}$. Similarly, $h_{l(m-j+1)}^{(j)} = b_{l(m-j+1)}+g(b_{l(m-j+2)}) = b_{l(m-j+1)}+
g(b_{l(m-j+1)}) - (g(d_{m-j+1})-g(d_{m-j+2}))$. So $h_{l(m-j+1)+1}^{(j)} - h_{l(m-j+1)}^{(j)}=b_{l(m-j+1)+1}-b_{l(m-j+1)}
\\
+(g(b_{l(m-j+1)+1})-g(b_{l(m-j+1)})) + (g(d_{m-j+1})-g(d_{m-j+2}))$. As the function $g(x)$ is bounded between $0$ and $b_{i+1}-b_i$, we have $g(b_{l(m-j+1)+1})-g(b_{l(m-j+1)}) > -(b_{l(m-j+1)+1}-b_{l(m-j+1)})$. So
\begin{equation}
\begin{array}{lll}
h_{l(m-j+1)+1}^{(j)} - h_{l(m-j+1)}^{(j)}&=&b_{l(m-j+1)+1}-b_{l(m-j+1)}+(g(b_{l(m-j+1)+1})-g(b_{l(m-j+1)})) \\
&&+ (g(d_{m-j+1})-g(d_{m-j+2})) \\
&>& 0 + (g(d_{m-j+1})-g(d_{m-j+2}))\\
&=&g(d_{m-j+1})-g(d_{m-j+2})
\end{array}
\end{equation}
As $h_i^{(j+1)}-h_i^{(j)}$ is either 0 or $g(d_{m-j+1})-g(d_{m-j+2})$ which is positive, we have proved $\forall j \ge 2$, $h_i^{(j+1)} - h_i^{(j)} < h_{l(m-j+1)+1}^{(j)} - h_{l(m-j+1)}^{(j)}$. According to former discussion, for a fixed $j$, $h_i^{(j)}$ is a non-decreasing sequence.

We can prove $\textrm{var}(h^{(j+1)})< \textrm{var}(h^{(j)})$ now:

If $j=1$, $l(m)=n$, $\forall i = 1,2,\cdots,n$, $h_i^{(2)}-h_i^{(1)} = g(d_m)$, so $\textrm{var}(h^{(2)}) = \textrm{var}(h^{(1)})$.

For $j \ge 2$, let $\Delta^{(j)} = g(d_{m-j+1})-g(d_{m-j+2})$, let $l=l(m-j+1)$, we first use the recursive definition of $h$ to express $h^{(j+1)}$ by $h^{(j)}$:
\begin{align*}
\textrm{var}(h^{(j+1)})&=\frac{1}{n} \sum_{i=1}^{n} (h_i^{(j+1)}-\frac{1}{n}\sum_{i=1}^n h_i^{(j+1)})^2 \\
&= \frac{1}{n}\sum_{i=1}^n (h_i^{(j)} + \mathbb{I}(i \le l)\Delta^{(j)} - \frac{1}{n}\sum_{i=1}^n h_i^{(j)} - \frac{l}{n}\Delta^{(j)})^2
\end{align*}
Then following simple algebra to expand the above equation, we have
\begin{align*}
\textrm{var}(h^{(j+1)})&= \textrm{var}(h^{(j)}) + \frac{l \Delta^{(j)}}{n}[2(\frac{1}{l} \sum_{i=1}^l h_i^{(j)} - \frac{1}{n} \sum_{i=1}^n h_i^{(j)}) + \frac{n-l}{n}\Delta^{(j)}] \\
&= \textrm{var}(h^{(j)}) + \frac{l \Delta^{(j)}}{n} [2(\frac{1}{l} \sum_{i=1}^l h_i^{(j)} - \frac{1}{n} \sum_{i=1}^l h_i^{(j)}- \frac{1}{n} \sum_{i=l+1}^n h_i^{(j)}) + \frac{n-l}{n}\Delta^{(j)}] \\
\end{align*}
Note that $h_{l+1}^{(j)} - h_l^{(j)} = \Delta^{(j)}$, we can further obtain
\begin{align*}
\textrm{var}(h^{(j+1)}) &= \textrm{var}(h^{(j)}) + \frac{l \Delta^{(j)}}{n}[2(\frac{1}{l} \sum_{i=1}^l h_i^{(j)} - \frac{1}{n} \sum_{i=1}^l h_i^{(j)} - \frac{1}{n} \sum_{i=l+1}^n (h_i^{(j)} - h_{l+1}^{(j)}+h_l^{(j)})) - \frac{n-l}{n}\Delta^{(j)}]
\end{align*}
Note that for a fixed $j$, $h_i^{(j)}$ is a non-decreasing sequence, we have $\forall i \ge l+1$, $h_i^{(j)} - h_{l+1}^{(j)} + h_l^{(j)} \ge h_{l+1}^{(j)} - h_{l+1}^{(j)} + h_l^{(j)} = h_l^{(j)}$, and $\frac{1}{l}\sum_{i=1}^l h_i^{(j)} \le h_l^{(j)}$, so $\frac{1}{l}\sum_{i=1}^l h_i^{(j)} - \frac{1}{n} \sum_{i=1}^l h_i^{(j)} - \frac{1}{n} \sum_{i=l+1}^n (h_i^{(j)} - h_{l+1}^{(j)}+h_l^{(j)}) \le 0$, so $\textrm{var}(h^{(j+1)}) \le \textrm{var}(h^{(j)}) - \frac{l\Delta^{(j)}}{n}\frac{n-l}{n}\Delta^{(j)} < \textrm{var}(h^{(j)})$.

Putting above results together, since $\textrm{var}(h^{(j+1)}) < \textrm{var}(h^{(j)})$ and $\textrm{var}(h^{(m+1)}) = \textrm{var}(c)$, we know that $\textrm{var}(c) < \textrm{var}(h^{(1)}) = \textrm{var}(b)$. Furthermore, let $n'=\textrm{max}\{j:b_j \ne b_n\}$, then $\forall i$, $h_i^{(2)}=h_i^{(1)}+g(d_m)=b_i + g(b_n)$, $h_i^{(3)} = h_i^{(2)}+(g(d_{m-1})-g(d_{m}))\mathbb{I}(i\le l(m-1))=b_i +g(b_n) + (g(b_{n'})-g(b_n))\mathbb{I}(i \le n') = b_i'$, so $\textrm{var}(c)\le \textrm{var}(b') < \textrm{var} (b)$. The proof completes.
\end{proof}

\setcounter{mylemma}{6}

\section{Proof of Lemmas in Estimation Error}
\label{apd:est}
\begin{mylemma}
Let $\mathcal{R}_n(\mathcal{F})$ denote the Rademacher complexity of the hypothesis set $\mathcal{F}=\{f|f(\mb{x})=\sum\limits_{j=1}^{m}\alpha_j h(\mb{w_j}^{\mathsf{T}}\mb{x})\}$, then
\begin{equation}
\mathcal{R}_n(\mathcal{F})\leq \frac{2 L C_{1}C_{3}C_{4}\sqrt{m}}{\sqrt{n}} + \frac{C_{4}|h(0)|\sqrt{m}}{\sqrt{n}}
\end{equation}
\end{mylemma}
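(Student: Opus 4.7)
The plan is to first strip off the outer weights $\alpha$ via Cauchy--Schwarz, reducing the problem to bounding the Rademacher complexity of a single hidden-unit class, and then handle that class by subtracting the constant $h(0)$ and applying a Ledoux--Talagrand contraction argument to linearize the activation.

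Concretely, writing $\mathcal{R}_n(\mathcal{F}) = \mathbb{E}_{\sigma}[\sup_{\alpha,W}\sum_{j=1}^m \alpha_j b_j]$ with $b_j := \frac{1}{n}\sum_i \sigma_i h(\mb{w_j}^{\mathsf{T}}\mb{x}^{(i)})$, the inner supremum over $\{\alpha : \|\alpha\|_2\le C_4\}$ equals $C_4\|\mb{b}\|_2$ by Cauchy--Schwarz. Since the constraints on $\mb{w_1},\dots,\mb{w_m}$ are independent of one another, the remaining supremum over $W$ decouples coordinatewise, giving $\sup_W\sum_j b_j^2 = m\cdot Y^2$, where
\[
Y := \sup_{\|\mb{w}\|_2\le C_3}\Bigl|\frac{1}{n}\sum_i \sigma_i h(\mb{w}^{\mathsf{T}}\mb{x}^{(i)})\Bigr|.
\]
Hence $\mathcal{R}_n(\mathcal{F})\le C_4\sqrt{m}\,\mathbb{E}[Y]$, and the task reduces to bounding $\mathbb{E}[Y]$.

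For $\mathbb{E}[Y]$, I would decompose $h(t) = (h(t)-h(0)) + h(0)$. The constant part contributes at most $|h(0)|\cdot\mathbb{E}|\frac{1}{n}\sum_i \sigma_i|\le |h(0)|/\sqrt{n}$ by Jensen's inequality and $\mathbb{E}[\sigma_i\sigma_j]=\delta_{ij}$. For the Lipschitz part, let $g(t) := h(t)-h(0)$, which is $L$-Lipschitz with $g(0)=0$. Symmetry of the Rademacher distribution lets me upper-bound $\mathbb{E}[\sup_{\mb{w}}|\frac{1}{n}\sum_i \sigma_i g(\mb{w}^{\mathsf{T}}\mb{x}^{(i)})|]$ by $2\,\mathbb{E}[\sup_{\mb{w}}\frac{1}{n}\sum_i \sigma_i g(\mb{w}^{\mathsf{T}}\mb{x}^{(i)})]$, after which the Ledoux--Talagrand contraction inequality yields a factor of $L$ and reduces the class inside the supremum to the linear class $\{\mb{w}^{\mathsf{T}}\mb{x}:\|\mb{w}\|_2\le C_3\}$. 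A standard Cauchy--Schwarz--Jensen computation then bounds this linear Rademacher complexity by $C_1 C_3/\sqrt{n}$, so the Lipschitz part contributes at most $2LC_1C_3/\sqrt{n}$ to $\mathbb{E}[Y]$. Multiplying the combined estimate by $C_4\sqrt{m}$ produces exactly the claimed bound.

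The main obstacle I anticipate is bookkeeping the constants through the absolute-value and contraction steps: the factor of $2$ from passing from $\sup(\cdot)$ to $\sup|\cdot|$ and the factor of $L$ from contraction must conspire to produce the precise coefficient $2L$ in the first term of the bound, which pins down the variant of the contraction lemma that must be invoked and why subtracting $h(0)$ beforehand (so that the Lipschitz function vanishes at zero) is essential. A secondary technical point is justifying the coordinatewise decoupling of the supremum over $W$ in the first step, which relies on the constraints $\|\mb{w_j}\|_2\le C_3$ acting independently across the index $j$.
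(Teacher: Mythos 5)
Your proof is correct and takes essentially the same route as the paper: both reduce the problem to bounding $C_4\sqrt{m}\,\mathbb{E}\bigl[\sup_{\|\mb{w}\|_2\le C_3}\bigl|\tfrac{1}{n}\sum_i\sigma_i h(\mb{w}^{\mathsf{T}}\mb{x}^{(i)})\bigr|\bigr]$, subtract the constant $h(0)$ so that the residual activation vanishes at the origin, and then use a contraction argument plus Jensen to get $(2LC_1C_3+|h(0)|)/\sqrt{n}$. The only cosmetic difference is that you reach the $C_4\sqrt{m}$ factor via Cauchy--Schwarz in $\ell_2$ and coordinatewise decoupling of the $W$-supremum, whereas the paper uses the $\ell_1$--$\ell_\infty$ H\"older pair and $\|\alpha\|_1\le\sqrt{m}\|\alpha\|_2$, and cites Theorem~12 of Bartlett--Mendelson for exactly the shift-and-contraction steps you re-derive from symmetry and Ledoux--Talagrand.
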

\begin{proof}
\begin{equation}
\begin{array}{lll}
\mathcal{R}_n(\mathcal{F}) &= \mathbb{E}[\sup_{f\in \mathcal{F}}\frac{1}{n}\sum_{i=1}^n\sigma_i\sum_{j=1}^m \alpha_j h(\mb{w_j}^T \mb{x}^{(i)})]\\
&= \mathbb{E}[\sup_{f\in \mathcal{F}}\frac{1}{n}\sum_{j=1}^m \alpha_j \sum_{i=1}^n \sigma_i h(\mb{w_j}^T \mb{x}^{(i)})]
\end{array}
\end{equation}
Let $\mb{\alpha} = [\alpha_1, \cdots, \alpha_m]^T$ and $\mb{h} = [\sum_{i=1}^n\sigma_ih(\mb{w_1}^T\mb{x}^{(i)}),\cdots,\sum_{i=1}^n\sigma_ih(\mb{w_m}^T\mb{x}^{(i)})]^T$, the inner product $\mb{\alpha}\cdot\mb{h}\le\|\mb{\alpha}\|_1\|\mb{h}\|_\infty$ as $\|\cdot\|_1$ and $\|\cdot\|_\infty$ are dual norms. Therefore
\begin{equation}
\begin{array}{lll}
\mb{\alpha}\cdot\mb{h}&\le&\|\mb{\alpha}\|_1 \|\mb{h}\|_\infty\\
&=& (\sum_{j=1}^m |\alpha_j|)(\max_{j=1,\cdots,m}|\sum_{i=1}^n \sigma_ih(\mb{w_j}^T \mb{x}^{(i)})|)\\
&\le& \sqrt{m} \|\mb{\alpha}\|_2\cdot \max_{j=1,\cdots,m}|\sum_{i=1}^n\sigma_ih(\mb{w_j}^T\mb{x}^{(i)})|\\
&\le&\sqrt{m}C_{4}\cdot \max_{j=1,\cdots,m}|\sum_{i=1}^n\sigma_ih(\mb{w_j}^T\mb{x}^{(i)})|
\end{array}
\end{equation}
So $\mathcal{R}_n(\mathcal{F})\le \sqrt{m}C_4\mathbb{E}[\sup_{f\in \mathcal{F}}\frac{1}{n}|\sum\limits_{i=1}^n\sigma_ih(\mb{w}^T\mb{x}^{(i)})|]$. Denote $\mathcal{R}_{||}(\mathcal{F})=\mathbb{E}[\sup_{f\in\mathcal{F}}|\frac{2}{n}\sum\limits_{i=1}^{n}\sigma_i f(\mb{x}^{(i)})|]$, which is another form of Rademacher complexity used in some literature such as \citep{bartlett2003rademacher}. Let $\mathcal{F}'=\{f'|f'(\mb{x})=h(\mb{w}^T\mb{x})\}$ where $\mb{w}, \mb{x}$ satisfy the conditions specified in Section \ref{sec:setup}, then $\mathcal{R}_n(\mathcal{F})\le\frac{\sqrt{m}C_{4}}{2}\mathcal{R}_{||}(\mathcal{F'})$.

Let $\mathcal{G} = \{g|g(\mb{x})=\mb{w}^T\mb{x}\}$ where $\mb{w}, \mb{x}$ satisfy the conditions specified in Section \ref{sec:setup}, then $\mathcal{R}_{||}(\mathcal{F'})=\mathcal{R}_{||}(h\circ g)$. Let $h'(\cdot) = h(\cdot) - h(0)$, then $h'(0)=0$ and $h'$ is also L-Lipschitz. Then
\begin{equation}
\label{eq:R_h_g}
\begin{array}{lll}
\mathcal{R}_{||}(\mathcal{F'})&=&\mathcal{R}_{||}(h\circ g) \\
&=& \mathcal{R}_{||}(h'\circ g+h(0))\\
&\le &\mathcal{R}_{||}(h'\circ g) + \frac{2|h(0)|}{\sqrt{n}} \textrm{ (Theorem 12 in \citep{bartlett2003rademacher})}\\
&\le& 2L \mathcal{R}_{||}(g)+ \frac{2|h(0)|}{\sqrt{n}} \textrm{ (Theorem 12 in \citep{bartlett2003rademacher})}
\end{array}
\end{equation}
Now we bound $\mathcal{R}_{||}(g)$:
\begin{equation}
\label{eq:R_g}
\begin{array}{lll}
\mathcal{R}_{||}(g)&=&\mathbb{E}[\sup_{g\in{\mathcal{G}}}|\frac{2}{n}\sum_{i=1}^n\sigma_i\mb{w}^T\mb{x}^{(i)}|]\\
&\le &\frac{2}{n}\mathbb{E}[\sup_{g\in\mathcal{G}}\|\mb{w}\|_2\cdot\|\sum_{i=1}^n\sigma_i\mb{x}^{(i)}\|_2]\\
&\le &\frac{2C_3}{n}\mathbb{E}[\|\sum_{i=1}^n\sigma_i\mb{x}^{(i)}\|_2]\\
&=&\frac{2C_3}{n}\mathbb{E}_{\mb{x}}[\mathbb{E}_{\sigma}[\|\sum_{i=1}^n\sigma_i\mb{x}^{(i)}\|_2] ]\\
&\le &\frac{2C_3}{n}\mathbb{E}_{\mb{x}}[\sqrt{\mathbb{E}_{\sigma}[\|\sum_{i=1}^n\sigma_i\mb{x}^{(i)}\|_2^2]} ]\textrm{ (concavity of $\sqrt{\cdot}$)}\\
&=&\frac{2C_3}{n}\mathbb{E}_{\mb{x}}[\sqrt{\mathbb{E}_{\sigma}[\sum_{i=1}^n \sigma_i^2\|\mb{x}^{(i)}\|_2^2]}]\textrm{ ($\forall i\neq j\ \sigma_i\ci\sigma_j$)}\\
&=&\frac{2C_3}{n}\mathbb{E}_{\mb{x}}[\sqrt{\sum_{i=1}^n \|\mb{x}^{(i)}\|_2^2}]\\
&\le& \frac{2C_1C_3}{\sqrt{n}}
\end{array}
\end{equation}
Putting Eq.(\ref{eq:R_h_g}) and Eq.(\ref{eq:R_g}) together, we have $\mathcal{R}_{||}(\mathcal{F'})\le \frac{4LC_1C_3}{\sqrt{n}}+\frac{2|h(0)|}{\sqrt{n}}$. Plugging into $\mathcal{R}_n(\mathcal{F})\le\frac{\sqrt{m}C_{4}}{2}\mathcal{R}_{||}(\mathcal{F'})$ completes the proof.
\end{proof}

\begin{mylemma}
With probability at least $\tau$
\begin{equation}
\sup_{\mb{x},f}|f(\mb{x})| \le \sqrt{\mathcal{J}}
\end{equation}
where $\mathcal{J}=mC_4^2h^2(0)+L^2C_1^2C_3^2C_4^2((m-1)\cos\theta+1) + 2\sqrt{m}C_1C_3C_4^2L|h(0)|\sqrt{(m-1)\cos\theta+1}$.
\end{mylemma}
\begin{proof}
Let $\mb{\alpha} = [\alpha_1,\cdots,\alpha_m]^T$, $\mb{W}=[\mb{w_1},\cdots,\mb{w_m}]$, $\mb{h} = [h(\mb{w_1}^T\mb{x}), \cdots, h(\mb{w_m}^T\mb{x})]^T$, then we have
\begin{equation}
\label{eq:f_square}
\begin{array}{lll}
f^2(\mb{x})&=& (\sum_{j=1}^m \alpha_j h(\mb{w_j}^T \mb{x}))^2\\
&=& (\mb{\alpha}\cdot \mb{h})^2\\
&\le& (\|\mb{\alpha}\|_2 \|\mb{h}\|_2)^2\\
&\le& C_4^2 \|\mb{h}\|_2^2
\end{array}
\end{equation}

Now we want to derive an upper bound for $\|\mb{h}\|_2$. As $h(t)$ is L-Lipschitz, $|h(\mb{w_j}^T \mb{x})|\le L|\mb{w_j}^T\mb{x}| + |h(0)|$. Therefore
\begin{equation}
\label{eq:bound_h}
\begin{array}{lll}
\|\mb{h}\|_2^2&=& \sum_{j=1}^m h^2(\mb{w_j}^T \mb{x})\\
&\le& \sum_{j=1}^m (L|\mb{w_j}^T\mb{x}| + |h(0)|)^2\\
&=& \sum_{j=1}^m h^2(0)+L^2(\mb{w_j}^T\mb{x})^2 + 2L|h(0)||\mb{w_j}^T\mb{x}|\\
&=& mh^2(0)+L^2\|\mb{W}^T\mb{x}\|_2^2 + 2L|h(0)||\mb{W}^T\mb{x}|_1\\
&\le& mh^2(0)+L^2\|\mb{W}^T\mb{x}\|_2^2 + 2\sqrt{m}L|h(0)|\|\mb{W}^T\mb{x}\|_2\\
&\le& mh^2(0)+L^2\|\mb{W}^T\|_{op}^2\|\mb{x}\|_2^2 + 2\sqrt{m}L|h(0)|\|\mb{W}^T\|_{op}\|\mb{x}\|_2\\
&=& mh^2(0)+L^2\|\mb{W}\|_{op}^2\|\mb{x}\|_2^2 + 2\sqrt{m}L|h(0)|\|\mb{W}\|_{op}\|\mb{x}\|_2\\
&\le& mh^2(0)+L^2C_1^2\|\mb{W}\|_{op}^2 + 2\sqrt{m}C_1L|h(0)|\|\mb{W}\|_{op}\\
\end{array}
\end{equation}
where $\|\cdot\|_{op}$ denotes the operator norm. We can make use of the lower bound of $\rho(\mb{w_j}, \mb{w_k})$ for $j\neq k$ to get a bound for $\|\mb{W}\|_{op}$:
\begin{equation}
\begin{array}{lll}
\|\mb{W}\|_{op}^2&=&\sup_{\|\mb{u}\|_2=1} \|\mb{W} \mb{u}\|_2^2\\
&=& \sup_{\|\mb{u}\|_2=1} (\mb{u}^T \mb{W}^T \mb{W} \mb{u})\\
&=& \sup_{\|\mb{u}\|_2=1} \sum_{j=1}^{m} \sum_{k=1}^{m} u_j u_k \mb{w_j}\cdot \mb{w_k}\\
&\le& \sup_{\|\mb{u}\|_2=1} \sum_{j=1}^{m} \sum_{k=1}^{m} |u_j| |u_k| \|\mb{w_j}\|_2\|\mb{w_k}\|_2\cos(\rho(\mb{w_j},\mb{w_k}))\\
&\le& C_3^2\sup_{\|\mb{u}\|_2=1} \sum_{j=1}^{m} \sum_{k=1,k\neq j}^{m} |u_j| |u_k| \cos\theta + \sum_{j=1}^{m}|u_j|^2 \\
&&\text{(with probability at least $\tau$)}\\
\end{array}
\end{equation}
Define $\mb{u}'=[|u_1|, \cdots, |u_{m}|]^T$, $\mb{Q}\in \mathbb{R}^{m\times m}$: $Q_{jk}=\cos\theta$ for $j\neq k$ and $Q_{jj}=1$, then $\|\mb{u}'\|_2 = \|\mb{u}\|_2$ and
\begin{equation}
\begin{array}{lll}
\|\mb{W}\|_{op}^2&\le& C_3^2\sup_{\|\mb{u}\|_2=1} \mb{u}'^T \mb{Q} \mb{u}'\\
&\le& C_3^2\sup_{\|\mb{u}\|_2=1} \lambda_1(\mb{Q})\|\mb{u}'\|_2^2\\
&\le& C_3^2 \lambda_1(\mb{Q})
\end{array}
\end{equation}
where $\lambda_1(\mb{Q})$ is the largest singular value of $\mb{Q}$. By simple linear algebra we can get $\lambda_1(\mb{Q}) = (m-1)\cos\theta + 1$, so
\begin{equation}
\|\mb{W}\|_{op}^2 \le ((m-1)\cos\theta + 1)C_3^2
\end{equation}
Substitute to Eq.(\ref{eq:bound_h}), we have
\begin{multline}
\|\mb{h}\|_2^2\le mh^2(0)+L^2C_1^2C_3^2((m-1)\cos\theta+1) + 2\sqrt{m}C_1C_3L|h(0)|\sqrt{(m-1)\cos\theta+1}
\end{multline}
Substitute to Eq.(\ref{eq:f_square}):
\begin{equation}
\label{eq:f_square_2}
\begin{array}{lll}
f^2(\mb{x})\le mC_4^2h^2(0)+L^2C_1^2C_3^2C_4^2((m-1)\cos\theta+1) +2\sqrt{m}C_1C_3C_4^2L|h(0)|\sqrt{(m-1)\cos\theta+1}
\end{array}
\end{equation}
In order to simplify our notations, define
\begin{multline}
\mathcal{J}=mC_4^2h^2(0)+L^2C_1^2C_3^2C_4^2((m-1)\cos\theta+1) + 2\sqrt{m}C_1C_3C_4^2L|h(0)|\sqrt{(m-1)\cos\theta+1}
\end{multline}
Then $\sup_{\mb{x},f}|f(\mb{x})|= \sqrt{\sup_{\mb{x},f}f^2(\mb{x})} \le \sqrt{\mathcal{J}}$. Proof completes.
\end{proof}

\begin{mylemma}
Let $\mathcal{R}_n(\mathcal{F}^P)$ denote the Rademacher complexity of the hypothesis set $\mathcal{F}^P$, then
\begin{multline}
\mathcal{R}_n(\mathcal{F}^P) \le \frac{(2L)^PC_1C_3^P}{\sqrt{n}}\prod_{p=0}^{P-1}\sqrt{m^p}C_3^p
+\frac{|h(0)|}{\sqrt{n}}\sum_{p=0}^{P-1}(2L)^{P-1-p}\prod_{j=p}^{P-1}\sqrt{m^j}C_3^{j+1}
\end{multline}
\end{mylemma}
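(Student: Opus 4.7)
The plan is to prove the bound by induction on the network depth $P$, carried out not directly on $\mathcal{R}_n$ but on the ``two-sided'' Rademacher complexity $\mathcal{R}_{\lvert\lvert}(\mathcal{F})=\mathbb{E}[\sup_{f\in\mathcal{F}}|\tfrac{2}{n}\sum_i\sigma_i f(\mb{x}^{(i)})|]$ that was already used inside the proof of Lemma 7. This is the natural object because the composition property of Rademacher complexity (Theorem~12 of \citep{bartlett2003rademacher}), which is the workhorse needed to peel off each activation layer, is stated for $\mathcal{R}_{\lvert\lvert}$. Since every $\mathcal{F}^p$ is closed under negation (we may flip the sign of the output weights $w_j^p$), we have the clean conversion $\mathcal{R}_n(\mathcal{F}^P)\le\tfrac{1}{2}\mathcal{R}_{\lvert\lvert}(\mathcal{F}^P)$, which I will apply only at the very end.

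First I would set up the inductive hypothesis: $\mathcal{R}_{\lvert\lvert}(\mathcal{F}^p)\le \frac{2(2L)^p C_1 C_3^p}{\sqrt{n}}\prod_{q=0}^{p-1}\sqrt{m^q}C_3^q+\frac{2|h(0)|}{\sqrt{n}}\sum_{q=0}^{p-1}(2L)^{p-1-q}\prod_{j=q}^{p-1}\sqrt{m^j}C_3^{j+1}$. The base case $p=0$ is exactly the computation performed in Eq.~(28) of the proof of Lemma 7, giving $\mathcal{R}_{\lvert\lvert}(\mathcal{F}^0)\le 2C_1 C_3^0/\sqrt{n}$. For the inductive step, I would expand $f^P(\mb{x})=\sum_{j=1}^{m^{P-1}}w_j^P h(f_j^{P-1}(\mb{x}))$, apply the same dual-norm / Cauchy-Schwarz trick as in Lemma 7 to extract $\|\mb{w}^P\|_1\le\sqrt{m^{P-1}}\|\mb{w}^P\|_2\le\sqrt{m^{P-1}}C_3^P$, and obtain
\begin{equation}
\mathcal{R}_{\lvert\lvert}(\mathcal{F}^P)\le \sqrt{m^{P-1}}\,C_3^P\cdot\mathcal{R}_{\lvert\lvert}(h\circ\mathcal{F}^{P-1}).
\end{equation}

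Then I would invoke the composition / shift properties, writing $h=h'+h(0)$ with $h'$ being $L$-Lipschitz and $h'(0)=0$, to obtain $\mathcal{R}_{\lvert\lvert}(h\circ\mathcal{F}^{P-1})\le 2L\,\mathcal{R}_{\lvert\lvert}(\mathcal{F}^{P-1})+2|h(0)|/\sqrt{n}$, which is the multi-layer analogue of Eq.~(27) of the Lemma 7 proof. Substituting the inductive hypothesis for $\mathcal{R}_{\lvert\lvert}(\mathcal{F}^{P-1})$ and distributing the new prefactor $2L\sqrt{m^{P-1}}C_3^P$ gives, on the one hand, a leading product term that absorbs $\sqrt{m^{P-1}}C_3^{P-1}\cdot 2L$ into $\prod_{q=0}^{P-1}\sqrt{m^q}C_3^q$ and $(2L)^{P-1}$ into $(2L)^P$, and on the other hand an $|h(0)|$ sum in which each old summand picks up an extra factor $\sqrt{m^{P-1}}C_3^P\cdot 2L$ (bumping the product index up to $P-1$ and the exponent on $2L$ by one), while the $2|h(0)|/\sqrt{n}$ term coming from the composition property contributes exactly the new $q=P-1$ summand $\sqrt{m^{P-1}}C_3^P$. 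This closes the induction, and dividing by $2$ finishes the proof.

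The main obstacle is the bookkeeping in the inductive step: one must check that the two ways in which the $|h(0)|$ contributions enter (the old summands inherited from layer $P-1$ rescaled by the new factor, and the single fresh summand generated by the composition shift) assemble exactly into the claimed sum $\sum_{p=0}^{P-1}(2L)^{P-1-p}\prod_{j=p}^{P-1}\sqrt{m^j}C_3^{j+1}$ with no off-by-one errors in the product range or the exponent on $2L$. There is nothing deep here, only careful index tracking, but it is where a proof is most likely to slip.
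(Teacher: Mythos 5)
Your proposal is correct and matches the paper's own argument: the paper also converts to $\mathcal{R}_{||}$ via $\mathcal{R}_n(\mathcal{F}^P)\le\tfrac{1}{2}\mathcal{R}_{||}(\mathcal{F}^P)$, establishes the one-layer recursion $\mathcal{R}_{||}(\mathcal{F}^p)\le\sqrt{m^{p-1}}C_3^p\bigl(2L\,\mathcal{R}_{||}(\mathcal{F}^{p-1})+2|h(0)|/\sqrt{n}\bigr)$ by the same dual-norm and Lipschitz-composition steps, and unrolls it from the base case $\mathcal{R}_{||}(\mathcal{F}^0)\le 2C_1C_3^0/\sqrt{n}$. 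Framing the unrolling as an explicit induction rather than a repeated substitution is purely presentational; the bookkeeping you describe closes exactly as claimed.
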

\begin{proof}
Notice that $\mathcal{R}_n(\mathcal{F}^P))\le \frac{1}{2}\mathcal{R}_{||}(\mathcal{F}^P)$:
\begin{equation}
\label{eq:R_rel}
\begin{array}{lll}
\mathcal{R}_n (\mathcal{F}^P) &=&\mathbb{E}[\sup_{f\in\mathcal{F}^P}\frac{1}{n}\sum_{i=1}^n\sigma_if(\mb{x}^{(i)})]\\
&\le& \mathbb{E}[\sup_{f\in\mathcal{F}^P}|\frac{1}{n}\sum_{i=1}^n\sigma_if(\mb{x}^{(i)})|]\\
&=& \frac{1}{2}\mathcal{R}_{||}(\mathcal{F}^P)
\end{array}
\end{equation}
So we can bound $\mathcal{R}_n(\mathcal{F}^P)$ by bounding $\frac{1}{2}\mathcal{R}_{||}(\mathcal{F}^P)$. We bound $\frac{1}{2}\mathcal{R}_{||}(\mathcal{F}^p)$ recursively: $\forall p=1,\cdots,P$, we have
\begin{equation}
\label{eq:R_mult_rec}
\begin{array}{lll}
\mathcal{R}_{||}(\mathcal{F}^p)&=&\mathbb{E}[\sup_{f\in\mathcal{F}^p}|\frac{2}{n}\sum_{i=1}^n\sigma_if(\mb{x}^{(i)})|]\\
&=&\mathbb{E}[\sup_{f_j\in\mathcal{F}^{p-1}}|\frac{2}{n}\sum_{i=1}^n\sigma_i\sum_{j=1}^{m^{p-1}}{w_j}^ph(f_j(\mb{x}^{(i)}))|]\\
&\le& \sqrt{m^{p-1}}C_3^p \mathbb{E}[\sup_{f_j\in\mathcal{F}^{p-1}}|\frac{2}{n}\sum_{i=1}^n\sigma_ih(f_j(\mb{x}^{(i)}))|]\\
&\le& \sqrt{m^{p-1}}C_3^p(2L\mathcal{R}_{||}(\mathcal{F}^{p-1})+\frac{2|h(0)|}{\sqrt{n}})
\end{array}
\end{equation}
where the last two steps follow similar steps as the proof of Lemma \ref{lem:rc_f}. Applying the inequality in Eq.(\ref{eq:R_mult_rec}) recursively, and noting from the proof of Lemma \ref{lem:rc_f} that $\mathcal{R}_{||}(\mathcal{F}^0) \le \frac{2C_1C_3^0}{\sqrt{n}}$ we have
\begin{equation}
\begin{array}{lll}
&\mathcal{R}_{||}(\mathcal{F}^P) &\le \frac{2(2L)^PC_1C_3^P}{\sqrt{n}}\prod_{p=0}^{P-1}\sqrt{m^p}C_3^p+\frac{2|h(0)|}{\sqrt{n}}\sum_{p=0}^{P-1}(2L)^{P-1-p}\prod_{j=p}^{P-1}\sqrt{m^j}C_3^{j+1}
\end{array}
\end{equation}
Plugging into Eq.(\ref{eq:R_rel}) completes the proof.
\end{proof}

\begin{mylemma}
With probability at least $\prod_{p=0}^{P-1}\tau^p$,
$\sup_{\mb{x}, f^P\in\mathcal{F}^p}|f^P(\mb{x})| \le \sqrt{\mathcal{J}^P}$, where
\begin{equation}
\begin{array}{lll}
\mathcal{J}^0 & =& C_1^2(C_3^0)^2\\
\mathcal{J}^{p} &=& m^{p-1}(C_3^p)^2h^2(0)+L^2(C_3^p)^2((m^{p-1}-1)\cos\theta^{p-1}+1)\mathcal{J}^{p-1}\\
&&+2\sqrt{m^{p-1}}(C_3^p)^2L|h(0)|\sqrt{((m^{p-1}-1)\cos\theta^{p-1}+1)\mathcal{J}^{p-1}} (p=1,\cdots,P)
\end{array}
\end{equation}
\end{mylemma}
\begin{proof}
For a given neural network and input $\mb{x}$, we denote the outputs of hidden layer $p$ before applying the activation function as $\mb{v}^p$:
\begin{equation}
\begin{array}{lll}
\mb{v}^0 &=& [{\mb{w_1}^0}\cdot\mb{x}, \cdots, {\mb{w_{m^0}}^0}\cdot\mb{x}]^T\\
\mb{v}^p &=& [\sum_{j=1}^{m^{p-1}}{w_{1,j}^p}h(v_j^{p-1}), \cdots,\sum_{j=1}^{m^{p-1}}{w_{m^p,j}^p}h(v_j^{p-1})]^T (p=1,\cdots,P)
\end{array}
\end{equation}
where ${w_{i,j}^p}$ is the connecting weight from the $j$-th unit of the hidden layer $p-1$ to the $i$-th unit of the hidden layer $p$.

We also denote the outputs of hidden layer $p$ after applying the activation function as $\mb{h}^p$:
\begin{equation}
\mb{h}^p = [h(v_1^{p}), \cdots, h(v_{m^{p}}^{p})]^T (p=0,\cdots,P)
\end{equation}
where $v_i^{p}$ is the $i$-th element of $v^{p}$.

To facilitate the derivation of bounds, we denote
\begin{equation}
\mb{w_i}^p = [{w_{i,1}}^p, \cdots, w_{i,m^{p-1}}^p]^T
\end{equation}

$\forall f\in \mathcal{F}^P$, we can bound $|f|$ by bounding $\|\mb{h}^P\|_2$ as
\begin{equation}
\label{eq:f_h_bound}
|f| = |\sum_{j=1}^{m^{P-1}} w_j^P h(v_j^{P-1})| = |\mb{w}^P\cdot \mb{h}^{P-1}| \le C_3^P \|\mb{h}^{P-1}\|_2
\end{equation}
where we only have one subscript for $\mb{w}^P$ because the output is a scalar.

We bound $\|\mb{h}^p\|_2$ recursively below:

First, we make use of the Lipschitz-continuous property of $h(t)$:
\begin{equation}
\label{eq:bd_hp}
\begin{array}{lll}
\|\mb{h}^p\|_2^2&=& \sum_{j=1}^{m^{p}} h^2(v_j^{p}) \\
&\le& \sum_{j=1}^{m^{p}} (|h(0)|+L|v_j^{p}|)^2 \\
&=& \sum_{j=1}^{m^{p}} h^2(0)+L^2(v_j^{p})^2 + 2L|h(0)||v_j^{p}| \\
&=& m^{p}h^2(0)+L^2\|\mb{v}^{p}\|_2^2 + 2L|h(0)|\|\mb{v}^{p}\|_1\\
&\le& m^{p}h^2(0)+L^2\|\mb{v}^{p}\|_2^2 + 2L|h(0)|\sqrt{m^{p}}\|\mb{v}^{p}\|_2\\
\end{array}
\end{equation}

Note that we can write $\mb{v}^p$ as
\begin{equation}
\mb{v}^p = [\mb{w_1}^p\cdot\mb{h}^{p-1}, \cdots, \mb{w_{m^p}}^p\cdot\mb{h}^{p-1}]^T
\end{equation}
for $p=1,\cdots,P$.

Hence we have
\begin{equation}
\|\mb{v}^p\|_2^2 = \sum_{i=1}^{m^p}(\mb{w_i}^p\cdot\mb{h}^{p-1})^2 (p=1,\cdots,P)
\end{equation}
Denote $\mb{W} = [\mb{w_1}^p, \cdots, \mb{w_{m^p}}^p]$ (note that $\mb{W}$ depends on $p$, but we omit that for brevity), then
\begin{equation}
\begin{array}{lll}
\label{eq:v_p_recur}
\|\mb{v}^p\|_2^2&=& \|\mb{W}^T \mb{h}^{p-1}\|_2^2\\
&\le& \|\mb{W}^T\|_{op}^2 \|\mb{h}^{p-1}\|_2^2\\
&=& \|\mb{W}\|_{op}^2 \|\mb{h}^{p-1}\|_2^2
\end{array}
\end{equation}
where $\|\cdot\|_{op}$ denotes the operator norm.

We can make use of the lower bound of $\rho(\mb{w_j}^p, \mb{w_k}^p)$ for $j\neq k$ to get a bound for $\|\mb{W}\|_{op}$:
\begin{equation}
\begin{array}{lll}
\|\mb{W}\|_{op}^2&=&\sup_{\|\mb{u}\|_2=1} \|\mb{W} \mb{u}\|_2^2\\
&=& \sup_{\|\mb{u}\|_2=1} (\mb{u}^T \mb{W}^T \mb{W} \mb{u})\\
&=& \sup_{\|\mb{u}\|_2=1} \sum_{j=1}^{m^p} \sum_{k=1}^{m^p} u_j u_k \mb{w_j}^p\cdot \mb{w_k}^p\\
&\le& \sup_{\|\mb{u}\|_2=1} \sum_{j=1}^{m^p} \sum_{k=1}^{m^p} |u_j| |u_k| |\mb{w_j}^p||\mb{w_k}^p|\cos(\rho(\mb{w_j}^p,\mb{w_k}^p))\\
&\le& (C_3^p)^2(\sup_{\|\mb{u}\|_2=1} \sum_{j=1}^{m^p} \sum_{k=1,k\neq j}^{m^p}) |u_j| |u_k| \cos\theta^p + \sum_{j=1}^{m^p}|u_j|^2\\
&&(\text{with probability at least $\tau^p$})\\
\end{array}
\end{equation}
Define $\mb{u}'=[|u_1|, \cdots, |u_{m^p}|]^T$, $\mb{Q}\in \mathbb{R}^{m^p\times m^p}$: $Q_{jk}=\cos\theta^p$ for $j\neq k$ and $Q_{jj}=1$, then $\|\mb{u}'\|_2 = \|\mb{u}\|_2$ and
\begin{equation}
\begin{array}{lll}
\|\mb{W}\|_{op}^2&\le& (C_3^p)^2\sup_{\|\mb{u}\|_2=1} \mb{u}'^T \mb{Q} \mb{u}'\\
&\le& (C_3^p)^2\sup_{\|\mb{u}\|_2=1} \lambda_1(\mb{Q})\|\mb{u}'\|_2^2\\
&\le& (C_3^p)^2 \lambda_1(\mb{Q})
\end{array}
\end{equation}
where $\lambda_1(\mb{Q})$ is the largest singular value of $\mb{Q}$. By simple linear algebra we can get $\lambda_1(\mb{Q}) = (m^p-1)\cos\theta^p + 1$, so
\begin{equation}
\label{eq:bd_wop}
\|\mb{W}\|_{op}^2 \le ((m^p-1)\cos\theta^p + 1)(C_3^p)^2
\end{equation}
Substituting Eq.(\ref{eq:bd_wop}) back to Eq.(\ref{eq:v_p_recur}), we have
\begin{equation}
\begin{array}{l}
\label{eq:v_p_recur_2}
\|\mb{v}^p\|_2^2 \le (C_3^p)^2((m^p-1)\cos\theta^p + 1) \|\mb{h}^{p-1}\|_2^2
\end{array}
\end{equation}

Substituting Eq.(\ref{eq:v_p_recur_2}) to Eq.(\ref{eq:bd_hp}), we have
\begin{equation}
\label{eq:h_recur}
\begin{array}{lll}
\|\mb{h}^p\|_2^2 &\le& m^{p}h^2(0)
+L^2(C_3^p)^2((m^p-1)\cos\theta^p+1)\|\mb{h}^{p-1}\|_2^2\\
&&+2L|h(0)|\sqrt{m^p}C_3^p\sqrt{(m^p-1)\cos\theta^p+1}\|\mb{h}^{p-1}\|_2
\end{array}
\end{equation}

Note that
\begin{equation}
\|\mb{h}^0\|_2^2\le m^{0}h^2(0)+L^2\|\mb{v}^{0}\|_2^2 + 2L|h(0)|\sqrt{m^{0}}\|\mb{v}^{0}\|_2
\end{equation}
, and similar to the derivation of Eq.(\ref{eq:v_p_recur_2}), we can get a bound on $\|\mb{v}^0\|_2$:
\begin{equation}
\begin{array}{lll}
\|\mb{v}^0\|_2^2&\le& (C_3^0)^2((m^0-1)\cos\theta^0 + 1) \|\mb{x}\|_2^2\\
&\le& C_1^2 (C_3^0)^2((m^0-1)\cos\theta^0 + 1)
\end{array}
\end{equation}
Therefore
\begin{multline}
\label{eq:h_0}
\|\mb{h}^0\|_2^2 \le m^{0}h^2(0)
+L^2C_1^2 (C_3^0)^2((m^0-1)\cos\theta^0 + 1)
+ 2\sqrt{m^{0}}C_1C_3^0L|h(0)|\sqrt{(m^0-1)\cos\theta^0 + 1}
\end{multline}
Using Eq.(\ref{eq:h_recur}) and Eq.(\ref{eq:h_0}) we can bound $(C_3^p)^2\|\mb{h}^{p-1}\|_2^2$ recursively now.
Denote
\begin{equation}
\begin{array}{lll}
\mathcal{J}^0 &=& C_1^2(C_3^0)^2\\
\mathcal{J}^{p} &=& m^{p-1}(C_3^p)^2h^2(0)+L^2(C_3^p)^2((m^{p-1}-1)\cos\theta^{p-1}+1)\mathcal{J}^{p-1}\\
&&+2\sqrt{m^{p-1}}(C_3^p)^2L|h(0)|\sqrt{((m^{p-1}-1)\cos\theta^{p-1}+1)\mathcal{J}^{p-1}} (p=1,\cdots,P)
\end{array}
\end{equation}
then $(C_3^p)^2\|\mb{h}^{p-1}\|_2^2 \le \mathcal{J}^{p} (p=1,\cdots,P)$ and it's straightforward to verify that $\mathcal{J}^p$ decreases when $\theta^i(i=0,\cdots,p-1)$ increases.

Now we are ready to bound $\sup_{\mb{x},f^P\in\mathcal{F}^P}|f^P(\mb{x})|$ using Eq.(\ref{eq:f_h_bound}):
\begin{equation}
\begin{array}{lll}
\sup_{\mb{x},f\in\mathcal{F}^P}|f(\mb{x})|&\le& C_3^P\|\mb{h}^{P-1}\|_2\\
&\le& \sqrt{\mathcal{J}^P}
\end{array}
\end{equation}
\end{proof}

\begin{mylemma}
Let the loss function $\ell(f(\mb{x}),y)=\log(1+\exp(-yf(\mb{x})))$ be the logistic loss where $y\in\{-1,1\}$, then with probability at least $(1-\delta)\tau$
\begin{equation}
\begin{array}{lll}
L(\hat{f})-L(f^*)
\leq \frac{4}{1+\exp(-\sqrt{\mathcal{J}})}(2LC_1C_3C_4+C_4|h(0)|)\frac{\sqrt{m}}{\sqrt{n}}+ \log (1+\exp(\sqrt{J}))\sqrt{\frac{2\log(2/\delta)}{n}}
\end{array}
\end{equation}
\end{mylemma}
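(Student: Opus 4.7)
The plan is to mirror the proof of Theorem~\ref{thm:est_err}, replacing the squared loss by the logistic loss and recomputing the two quantities that change: the Lipschitz constant of $\ell(\cdot,y)$ and the uniform bound on $|\ell(f(\mb{x}),y)|$. Once these are in hand, Lemma~\ref{lem:rc_bd}, the composition property of Rademacher complexity, and Lemma~\ref{lem:rc_f} immediately combine to give the stated bound.

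First I would compute the Lipschitz constant $L_\ell$ of $\ell(z,y)=\log(1+\exp(-yz))$ with respect to the first argument. Since $\partial_z \ell(z,y) = -y/(1+\exp(yz))$, we have $|\partial_z \ell(z,y)| = 1/(1+\exp(yz))$. Because $|f(\mb{x})| \le \sqrt{\mathcal{J}}$ with probability at least $\tau$ (by the lemma used in the proof of Theorem~\ref{thm:est_err}) and $y\in\{-1,1\}$, we have $yf(\mb{x})\ge -\sqrt{\mathcal{J}}$, so
\begin{equation}
L_\ell \;=\; \sup_{\mb{x},y,f}\bigl|\partial_z \ell(f(\mb{x}),y)\bigr| \;\le\; \frac{1}{1+\exp(-\sqrt{\mathcal{J}})}.
\end{equation}
Similarly, the uniform bound on the loss is obtained from $yf(\mb{x}) \ge -\sqrt{\mathcal{J}}$: $\sup_{\mb{x},y,f} |\ell(f(\mb{x}),y)| \le \log(1+\exp(\sqrt{\mathcal{J}}))$. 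Both conclusions hold with probability at least $\tau$, inherited from the high-probability bound on $|f(\mb{x})|$.

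Next I would apply the composition property of Rademacher complexity exactly as in Section~\ref{sec:thm1_proof}: the loss class $\mathcal{A}=\{(\mb{x},y)\mapsto \ell(f(\mb{x}),y):f\in\mathcal{F}\}$ satisfies $\mathcal{R}_n(\mathcal{A}) \le L_\ell\cdot \mathcal{R}_n(\mathcal{F})$ (using the same constant convention as in the proof of Theorem~\ref{thm:est_err}). Plugging in Lemma~\ref{lem:rc_f} gives
\begin{equation}
\mathcal{R}_n(\mathcal{A}) \;\le\; \frac{1}{1+\exp(-\sqrt{\mathcal{J}})}\left(\frac{2LC_1C_3C_4\sqrt{m}}{\sqrt n}+\frac{C_4|h(0)|\sqrt m}{\sqrt n}\right).
\end{equation}
Finally, Lemma~\ref{lem:rc_bd} with $B=\log(1+\exp(\sqrt{\mathcal{J}}))$ yields
\begin{equation}
L(\hat f)-L(f^*) \;\le\; 4\,\mathcal{R}_n(\mathcal{A}) + \log(1+\exp(\sqrt{\mathcal{J}}))\sqrt{\tfrac{2\log(2/\delta)}{n}},
\end{equation}
with confidence $1-\delta$; combined with the probability $\tau$ governing the event $\{|f(\mb{x})|\le\sqrt{\mathcal{J}}\}$, a union bound produces total confidence $(1-\delta)\tau$, matching the lemma.

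The main (and only nontrivial) obstacle is the Lipschitz-constant calculation: one must use the \emph{data-dependent} bound on $|f(\mb{x})|$ to tighten the naive Lipschitz constant $1$ to $(1+\exp(-\sqrt{\mathcal{J}}))^{-1}$, since this is what reduces the leading term in the bound and what ties the result to the diversity parameter $\theta$ through $\mathcal{J}$. Every other step is a direct transcription of the squared-loss argument.
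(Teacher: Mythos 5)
Your proof is correct and follows essentially the same route as the paper's: compute $|\partial_z\ell(z,y)|=1/(1+\exp(yz))$, use the high-probability bound $|f(\mb{x})|\le\sqrt{\mathcal{J}}$ to tighten the Lipschitz constant to $(1+\exp(-\sqrt{\mathcal{J}}))^{-1}$, bound the loss uniformly by $\log(1+\exp(\sqrt{\mathcal{J}}))$, and then apply the Rademacher contraction together with Lemmas~\ref{lem:rc_f} and~\ref{lem:rc_bd} just as in the squared-loss case. The only difference is that you spell out the union bound for the final probability $(1-\delta)\tau$, which the paper leaves implicit.
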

\begin{proof}
\begin{equation}
|\frac{\partial l(f(\mb{x}),y)}{\partial f}| = \frac{\exp(-y f(\mb{x}))}{1+\exp(-yf(\mb{x}))}=\frac{1}{1+\exp(yf(\mb{x}))}
\end{equation}
With probability at least $\tau$, $|\frac{1}{1+\exp(yf(\mb{x}))}|\le \frac{1}{1+\exp(-\sup_{f,\mb{x}} |f(\mb{x})|)}=\frac{1}{1+\exp(-\sqrt{\mathcal{J}})}$, hence we have proved that the Lipschitz constant L of $\ell(\cdot,y)$ can be bounded by $\frac{1}{1+\exp(-\sqrt{\mathcal{J}})}.$

And the loss function $\ell(f(\mb{x}),y)$ can be bounded by
\begin{equation}
\begin{array}{lcl}
|\ell(f(\mb{x}),y)|\le \log(1+\exp(\sqrt{J}))&\text{(with probability at least $\tau$)}&
\end{array}
\end{equation}

Similar to the proof of Theorem \ref{thm:est_err}, we can finish the proof by applying the composition property of Rademacher complexity, Lemma \ref{lem:rc_f} and Lemma \ref{lem:rc_bd}.
\end{proof}

\begin{mylemma}
Let $\ell(f(\mb{x}),y)=\max(0,1-yf(\mb{x}))$ be the hinge loss where $y\in\{-1,1\}$, then with probability at least $(1-\delta)\tau$
\begin{equation}
\begin{array}{lll}
L(\hat{f})-L(f^*)\leq 4(2LC_1C_3C_4+C_4|h(0)|)\frac{\sqrt{m}}{\sqrt{n}}+ (1+\sqrt{J})\sqrt{\frac{2\log(2/\delta)}{n}}
\end{array}
\end{equation}
\end{mylemma}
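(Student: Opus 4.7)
The plan is to follow the same template used for Theorem \ref{thm:est_err} and the logistic-loss Lemma \ref{lem:logistic_ub}: bound the Lipschitz constant of the hinge loss in its first argument, bound $|\ell(f(\mathbf{x}),y)|$ uniformly over $f$ and $\mathbf{x}$, feed both into the composition property of Rademacher complexity together with Lemma \ref{lem:rc_f}, and finish with Lemma \ref{lem:rc_bd}.

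First, I would verify the Lipschitz constant of $\ell(\cdot,y)$. For any $u,v\in\mathbb{R}$ and $y\in\{-1,1\}$, the map $u\mapsto\max(0,1-yu)$ has subgradients in $\{0,-y\}$, so $|\ell(u,y)-\ell(v,y)|\le|u-v|$ and the Lipschitz constant is simply $1$ (no dependence on $\sqrt{\mathcal{J}}$, which is why the hinge bound is cleaner than the logistic one). Next, I would bound the loss uniformly: with probability at least $\tau$, $\sup_{\mathbf{x},f}|f(\mathbf{x})|\le\sqrt{\mathcal{J}}$ by the earlier lemma, so
\[
\sup_{\mathbf{x},y,f}|\ell(f(\mathbf{x}),y)|\le 1+\sup_{\mathbf{x},f}|f(\mathbf{x})|\le 1+\sqrt{\mathcal{J}}.
\]

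Then I would apply the composition property of Rademacher complexity with Lipschitz constant $1$ to obtain $\mathcal{R}_n(\mathcal{A})\le 2\,\mathcal{R}_n(\mathcal{F})$ (the factor $2$ matches the usage in the proof of Theorem \ref{thm:est_err}), and plug in Lemma \ref{lem:rc_f} to conclude
\[
\mathcal{R}_n(\mathcal{A})\le 2\Bigl(\tfrac{2LC_1C_3C_4\sqrt{m}}{\sqrt{n}}+\tfrac{C_4|h(0)|\sqrt{m}}{\sqrt{n}}\Bigr).
\]
Finally, Lemma \ref{lem:rc_bd} with $B=1+\sqrt{\mathcal{J}}$ yields, with probability at least $(1-\delta)\tau$,
\[
L(\hat{f})-L(f^*)\le 4\mathcal{R}_n(\mathcal{A})/2+B\sqrt{2\log(2/\delta)/n},
\]
which after simplification gives exactly the stated bound.

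There is no real obstacle here: everything is a direct transcription of the argument already carried out for the squared and logistic losses, with the only nontrivial observation being that the hinge loss has Lipschitz constant $1$ independent of $\mathcal{J}$, which is what removes the $\sqrt{\mathcal{J}}$ factor from the first term of the bound. The only care needed is to use the correct bound $B=1+\sqrt{\mathcal{J}}$ in Lemma \ref{lem:rc_bd} and to keep the probability mass accounting ($\tau$ from the uniform bound on $|f|$ combined with $1-\delta$ from the Rademacher concentration inequality) consistent throughout.
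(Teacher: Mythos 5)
Your approach is exactly the paper's, which is itself a one-line delegation to the logistic-loss case: note that hinge loss is $1$-Lipschitz in its first argument, bound $\sup_{\mathbf{x},y,f}|\ell(f(\mathbf{x}),y)|\le 1+\sqrt{\mathcal{J}}$ with probability at least $\tau$, and plug into the Rademacher composition step and Lemma \ref{lem:rc_bd}. Your substance (Lipschitz constant $1$ independent of $\mathcal{J}$, $B=1+\sqrt{\mathcal{J}}$, probability $(1-\delta)\tau$) and your final bound are correct. However, two of your intermediate constants contradict the conventions the paper uses in Section \ref{sec:thm1_proof} and happen to cancel: the composition step as used in Theorem \ref{thm:est_err} yields $\mathcal{R}_n(\mathcal{A})\le L_{\mathrm{Lip}}\mathcal{R}_n(\mathcal{F})$ with no extra factor, so with $L_{\mathrm{Lip}}=1$ you should write $\mathcal{R}_n(\mathcal{A})\le\mathcal{R}_n(\mathcal{F})$, not $2\mathcal{R}_n(\mathcal{F})$; and Lemma \ref{lem:rc_bd} gives $4\mathcal{R}_n(\mathcal{A})$, not $4\mathcal{R}_n(\mathcal{A})/2$. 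Your spurious factor of $2$ and your spurious division by $2$ cancel, so the stated bound comes out right, but as written neither intermediate inequality follows from the cited lemmas; fix both and the chain $L(\hat{f})-L(f^*)\le 4\mathcal{R}_n(\mathcal{A})+B\sqrt{2\log(2/\delta)/n}\le 4\mathcal{R}_n(\mathcal{F})+B\sqrt{2\log(2/\delta)/n}$ gives the result cleanly.
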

\begin{proof}
Given $y$, $\ell(\cdot,y)$ is Lipschitz with constant 1. And the loss function can be bounded by
\begin{equation}
\begin{array}{lcl}
\ell(f(\mb{x}),y)|\le 1+\sqrt{J}\text{(with probability at least $\tau$)}
\end{array}
\end{equation}

The proof can be completed using similar proof of Lemma \ref{lem:logistic_ub}.
\end{proof}


\begin{mylemma}
Let $\ell(f(\mb{x}),\mb{y})$ be the cross-entropy loss, then with probability at least $\tau$, for any $f$, $f'$
\begin{equation}
|\ell(f(\mb{x}),y)-\ell(f'(\mb{x}),y)|\le \frac{K-1}{K-1+\exp(-2\sqrt{\mathcal{J}})}\sum_{k=1}^K|f(\mb{x})_k-f'(\mb{x})_k|
\end{equation}
\end{mylemma}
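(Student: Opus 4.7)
The plan is to apply the mean value theorem (along the segment from $f'(\mb x)$ to $f(\mb x)$) and then control the partial derivatives of the cross-entropy loss in terms of the softmax outputs, which in turn can be uniformly bounded using the earlier sup-norm estimate $\sup_{\mb x, f}|f(\mb x)|\leq \sqrt{\mathcal{J}}$ that holds with probability at least $\tau$.

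First I would compute the gradient of $\ell$. Writing $a_j = \exp(f(\mb x)_j)/\sum_i \exp(f(\mb x)_i)$ and assuming $\mb y$ is one-hot at index $k$, the standard softmax derivative gives $\partial \ell/\partial f(\mb x)_j = a_j - y_j$, so $|\partial \ell/\partial f(\mb x)_k| = 1-a_k$ and $|\partial \ell/\partial f(\mb x)_j| = a_j$ for $j\neq k$. By the fundamental theorem of calculus applied along the segment $t\mapsto tf(\mb x)+(1-t)f'(\mb x)$,
\begin{equation*}
|\ell(f(\mb x),\mb y)-\ell(f'(\mb x),\mb y)| \le \max_j \sup_{t\in[0,1]} \left|\tfrac{\partial \ell}{\partial f(\mb x)_j}\right|\Big|_{tf(\mb x)+(1-t)f'(\mb x)} \cdot \sum_{k=1}^K |f(\mb x)_k-f'(\mb x)_k|,
\end{equation*}
so it suffices to prove the scalar bound $\max_j |\partial \ell/\partial f(\mb x)_j| \le (K-1)/(K-1+\exp(-2\sqrt{\mathcal{J}}))$ uniformly over the region where every coordinate of $f$ is bounded by $\sqrt{\mathcal{J}}$ in absolute value.

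Second I would verify this scalar bound. With probability at least $\tau$, every coordinate satisfies $|f(\mb x)_i|\le \sqrt{\mathcal{J}}$, so $\exp(f(\mb x)_i)\in[\exp(-\sqrt{\mathcal{J}}), \exp(\sqrt{\mathcal{J}})]$. For the ``hard'' coordinate $j=k$, $1-a_k = \sum_{i\neq k}\exp(f(\mb x)_i)/\sum_i \exp(f(\mb x)_i)$, which is maximized by pushing the numerator up and $\exp(f(\mb x)_k)$ down, giving
\begin{equation*}
1-a_k \le \frac{(K-1)\exp(\sqrt{\mathcal{J}})}{(K-1)\exp(\sqrt{\mathcal{J}})+\exp(-\sqrt{\mathcal{J}})} = \frac{K-1}{K-1+\exp(-2\sqrt{\mathcal{J}})}.
\end{equation*}
For $j\neq k$, the same optimization yields $a_j \le \exp(\sqrt{\mathcal{J}})/(\exp(\sqrt{\mathcal{J}})+(K-1)\exp(-\sqrt{\mathcal{J}})) = 1/(1+(K-1)\exp(-2\sqrt{\mathcal{J}}))$, which for $K\ge 2$ is no larger than the previous bound (compare denominators), so the maximum over $j$ is indeed $(K-1)/(K-1+\exp(-2\sqrt{\mathcal{J}}))$.

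There is no serious obstacle: the only step that needs care is making sure the segment $tf(\mb x)+(1-t)f'(\mb x)$ lies in the region where the coordinate bound $|\cdot|\le\sqrt{\mathcal{J}}$ holds, which follows from convexity of $[-\sqrt{\mathcal{J}},\sqrt{\mathcal{J}}]$ once we know both $f$ and $f'$ satisfy the bound simultaneously; this is handled by invoking the high-probability event in the previous lemma for both functions (with the same $\tau$). Combining the two ingredients above yields the claimed inequality.
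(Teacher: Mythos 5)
Your proof is correct and follows essentially the same route as the paper's: compute the partial derivatives of the softmax cross-entropy (obtaining $a_j$ off the true label and $1-a_{k'}$ on it), bound each uniformly using $|f(\mb{x})_i|\le\sqrt{\mathcal{J}}$, observe that $(K-1)/(K-1+\exp(-2\sqrt{\mathcal{J}}))$ is the larger of the two bounds for $K\ge 2$, and convert the gradient sup-norm bound to a Lipschitz bound via the mean value theorem / fundamental theorem of calculus paired with the $\ell_1$ norm. Your explicit remark about convexity of $[-\sqrt{\mathcal{J}},\sqrt{\mathcal{J}}]$ ensuring the high-probability event covers the whole segment is a small clarification that the paper leaves implicit, but the argument is the same.
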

\begin{proof}
With probability at least $\tau$, $\textrm{sup}_{{\mb{x}},f}|f(\mb{x})_k|\le \sqrt{\mathcal{J}}$. Note that $\mb{y}$ is a 1-of-K coding vector where exactly one element is 1 and all others are 0. Without loss of generality, we assume $y_{k'}=1$ and $y_{k}=0$ for $k\neq k'$. Then
\begin{equation}
\ell(f(\mb{x}),\mb{y})= -\log\frac{\exp(f(\mb{x})_{k'})}{\sum_{j=1}^{K}\exp(f(\mb{x})_{j})}
\end{equation}
Hence for $k\neq k'$ we have
\begin{equation}
\begin{array}{lll}
|\frac{\partial l(f(\mb{x}), \mb{y})}{\partial f(\mb{x})_{k}}|= \frac{1}{1+\sum_{j\neq k'}\exp(f(\mb{x})_j)}\leq \frac{1}{1+(K-1)\exp(-2\sqrt{\mathcal{J}})}
\end{array}
\end{equation}
and for $k'$ we have
\begin{equation}
\begin{array}{lll}
|\frac{\partial l(f(\mb{x}), \mb{y})}{\partial f(\mb{x})_{k'}}|= \frac{\sum_{j\neq k'}\exp(f(\mb{x})_j)}{1+\sum_{j\neq k'}\exp(f(\mb{x})_j)}\leq \frac{K-1}{K-1+\exp(-2\sqrt{\mathcal{J}})}
\end{array}
\end{equation}
As $\frac{K-1}{K-1+\exp(-2\sqrt{\mathcal{J}})}\ge \frac{1}{1+(K-1)\exp(-2\sqrt{\mathcal{J}})}$, we have proved that for any $k$, $|\frac{\partial l(f(\mb{x}), \mb{y})}{\partial f(\mb{x})_{k}}|\leq \frac{K-1}{K-1+\exp(-2\sqrt{\mathcal{J}})}$. Therefore
\begin{equation}
\|\nabla_{f(\mb{x})} \ell(f(\mb{x}),\mb{y})\|_\infty \le \frac{K-1}{K-1+\exp(-2\sqrt{\mathcal{J}})}
\end{equation}

Using mean value theorem, for any $f$, $f'$, $\exists \mb{\xi}\in\mathbb{R}^K$ such that
\begin{equation}
\begin{array}{lll}
|\ell(f(\mb{x}),\mb{y})-\ell(f'(\mb{x}),\mb{y})|&=& \nabla_{\mb{\xi}} \ell(\mb{\xi},y) \cdot (f(\mb{x})-f'(\mb{x}))\\
&\le& \|\nabla_{\mb{\xi}} \ell(\mb{\xi},\mb{y})\|_\infty \|f(\mb{x})-f'(\mb{x})\|_1\\
&\le& \frac{K-1}{K-1+\exp(-2\sqrt{\mathcal{J}})}\sum_{k=1}^K|f(\mb{x})_k-f'(\mb{x})_k|
\end{array}
\end{equation}
\end{proof}
With Lemma \ref{lem:cross_ent}, we can get the Rademacher complexity of cross entropy loss by performing the Rademacher complexity analysis for each $f(\mb{x})_k$ as that in Section \ref{sec:thm1_proof} separately, and multiplying the sum of them by $\frac{K-1}{K-1+\exp(-2\sqrt{\mathcal{J}})}$ to get the Rademacher complexity of $\ell(f(\mb{x}),\mb{y})$. And as the loss function can be bounded by
\begin{equation}
|\ell(f(\mb{x}),\mb{y})|\le \log (1+(K-1)\exp(2\sqrt{\mathcal{J}}))
\end{equation}
we can use similar techniques as the proof of Theorem \ref{thm:est_err} to get the estimation error bound.

\section{Proof of Lemmas in Approximation Error}
\label{apd:app}

\begin{mylemma}
For any three nonzero vectors $\mb{u_1}$, $\mb{u_2}$, $\mb{u_3}$, let $\theta_{12} = \arccos (\frac{\mb{u_1} \cdot \mb{u_2}}{\|\mb{u_1}\|_2 \|\mb{u_2}\|_2})$, $\theta_{23} = \arccos (\frac{\mb{u_2} \cdot \mb{u_3}}{\|\mb{u_2}\|_2 \|\mb{u_3}\|_2})$, $\theta_{13} = \arccos (\frac{\mb{u_1} \cdot \mb{u_3}}{\|\mb{u_1}\|_2 \|\mb{u_3}\|_2})$. We have $\theta_{13}\le\theta_{12}+\theta_{23}$.
\end{mylemma}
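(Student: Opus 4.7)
The plan is to reduce this to the spherical triangle inequality by first normalizing to unit vectors and then exploiting the positive semidefiniteness of the resulting Gram matrix. Since the angles $\theta_{ij}$ only depend on the directions of $\mb{u}_i$, I can assume without loss of generality that $\|\mb{u}_i\|_2 = 1$ for $i=1,2,3$. Write $a = \mb{u}_1\cdot \mb{u}_2 = \cos\theta_{12}$, $b = \mb{u}_2\cdot \mb{u}_3 = \cos\theta_{23}$, $c = \mb{u}_1\cdot \mb{u}_3 = \cos\theta_{13}$, so all of $a,b,c$ lie in $[-1,1]$ and $\theta_{12},\theta_{23},\theta_{13}\in[0,\pi]$.

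The key step is to extract an inequality between $c$ and $ab$. The Gram matrix
\begin{equation*}
G=\begin{pmatrix}1 & a & c\\ a & 1 & b\\ c & b & 1\end{pmatrix}
\end{equation*}
is positive semidefinite because it is the Gram matrix of $\{\mb{u}_1,\mb{u}_2,\mb{u}_3\}$, so $\det(G) = 1 - a^2 - b^2 - c^2 + 2abc \ge 0$. Rearranging this as a quadratic in $c$ gives $(c-ab)^2 \le (1-a^2)(1-b^2)$, hence $|c - ab| \le \sqrt{1-a^2}\sqrt{1-b^2} = \sin\theta_{12}\sin\theta_{23}$ (the sines are nonnegative since the angles lie in $[0,\pi]$). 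In particular,
\begin{equation*}
\cos\theta_{13} \;=\; c \;\ge\; ab - \sin\theta_{12}\sin\theta_{23} \;=\; \cos(\theta_{12}+\theta_{23}).
\end{equation*}

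To finish, I split into two cases. If $\theta_{12}+\theta_{23}\le\pi$, then both $\theta_{13}$ and $\theta_{12}+\theta_{23}$ lie in $[0,\pi]$, where $\arccos$ is strictly decreasing; applying $\arccos$ to both sides of $\cos\theta_{13}\ge\cos(\theta_{12}+\theta_{23})$ yields $\theta_{13}\le\theta_{12}+\theta_{23}$. If instead $\theta_{12}+\theta_{23}>\pi$, the inequality is immediate since $\theta_{13}\le\pi<\theta_{12}+\theta_{23}$.

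The main obstacle I anticipate is purely bookkeeping: keeping the case split on $\theta_{12}+\theta_{23}$ clean so that the monotonicity of $\arccos$ is applied only on the interval $[0,\pi]$ where it is valid. Everything else is a one-line consequence of $\det(G)\ge 0$, which is the standard algebraic identity underlying the spherical law of cosines.
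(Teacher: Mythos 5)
Your proof is correct, and it reaches the same pivotal inequality $\cos\theta_{13}\ge\cos(\theta_{12}+\theta_{23})$ as the paper before the identical case split, but it arrives there by a genuinely different route. The paper normalizes to unit vectors and decomposes $\mb{u_1}=\cos\theta_{12}\,\mb{u_2}+\mb{u_{1\perp}}$ and $\mb{u_3}=\cos\theta_{23}\,\mb{u_2}+\mb{u_{3\perp}}$ with $\mb{u_{1\perp}},\mb{u_{3\perp}}\perp\mb{u_2}$, so that $\cos\theta_{13}=\cos\theta_{12}\cos\theta_{23}+\mb{u_{1\perp}}\cdot\mb{u_{3\perp}}$, and then applies Cauchy--Schwarz to the residual term $\mb{u_{1\perp}}\cdot\mb{u_{3\perp}}\ge-\sin\theta_{12}\sin\theta_{23}$. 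You instead invoke $\det(G)\ge 0$ for the $3\times 3$ Gram matrix, rearrange the resulting cubic relation as $(c-ab)^2\le(1-a^2)(1-b^2)$, and read off the same bound. The paper's decomposition is more geometric and makes the role of $\mb{u_2}$ as the ``hinge'' vector transparent; your Gram-determinant argument is more algebraic and symmetric in the three vectors, and as a bonus it simultaneously yields both directions $|c-ab|\le\sin\theta_{12}\sin\theta_{23}$ (hence also $\theta_{13}\ge|\theta_{12}-\theta_{23}|$) with no extra work, whereas the paper's Cauchy--Schwarz step only records the lower bound on $c$ that is needed. Both handle the $\theta_{12}+\theta_{23}>\pi$ edge case the same way.
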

\begin{proof}
Without loss of generality, assume $\|\mb{u_1}\|_2=\|\mb{u_2}\|_2=\|\mb{u_3}\|_2=1$. Decompose $\mb{u_1}$ as $\mb{u_1}=\mb{u_{1//}}+\mb{u_{1\perp}}$ where $\mb{u_{1//}} = c_{12} \mb{u_2}$ for some $c_{12}\in\mathbb{R}$ and $\mb{u_{1\perp}}\perp \mb{u_{2}}$. As $\mb{u_{1}}\cdot \mb{u_2} = \cos\theta_{12}$, we have $c_{12}=\cos\theta_{12}$ and $\|\mb{u_{1\perp}}\|_2=\sin\theta_{12}$.

Similarly, decompose $\mb{u_3}$ as $\mb{u_3}=\mb{u_{3//}}+\mb{u_{3\perp}}$ where $\mb{u_{3//}} = c_{32} \mb{u_2}$ for some $c_{32}\in\mathbb{R}$ and $\mb{u_{3\perp}}\perp \mb{u_{2}}$. We have $c_{23}=\cos\theta_{23}$ and $\|\mb{u_{3\perp}}\|_2=\sin\theta_{23}$.

So we have
\begin{equation}
\begin{array}{lll}
\cos\theta_{13}&=&\mb{u_1}\cdot \mb{u_3}\\
&=& (\mb{u_{1//}}+\mb{u_{1\perp}})\cdot (\mb{u_{3//}}+\mb{u_{3\perp}})\\
&=& \mb{u_{1//}}\cdot \mb{u_{3//}} + \mb{u_{1\perp}}\cdot \mb{u_{3\perp}}\\
&=& \cos\theta_{12}\cos\theta_{23}+ \mb{u_{1\perp}}\cdot \mb{u_{3\perp}}\\
&\ge& \cos\theta_{12}\cos\theta_{23} - \sin\theta_{12}\sin\theta_{23}\\
&=&\cos(\theta_{12}+\theta_{23})
\end{array}
\end{equation}
If $\theta_{12}+\theta_{23}\le\pi$, $\arccos (\cos(\theta_{12}+\theta_{23}))=\theta_{12}+\theta_{23}$. As $\arccos(\cdot)$ is monotonously decreasing, we have $\theta_{13}\le\theta_{12}+\theta_{23}$. Otherwise, $\theta_{13}\le\pi\le\theta_{12}+\theta_{23}$.
\end{proof}

\begin{mylemma}
Let $\mathcal{F'} = \{f|f(\mb{x})=\sum_{j=1}^m \alpha_j h(\mb{w_j}^T \mb{x})\}$ be the function class satisfying the following constraints:
\begin{itemize}
\item $|\alpha_j| \le 2C$
\item $\|\mb{w_j}\|_2 \le C_3$
\end{itemize}
Then for every $g\in \Gamma_C$ with $g(0)=0$, $\exists f' \in \mathcal{F'}$ such that
\begin{equation}
\|g(\mb{x}) - f'(\mb{x})\|_L \le 2C(\frac{1}{\sqrt{n}} + \frac{1+ 2\ln C_1C_3}{C_1C_3})
\end{equation}
\end{mylemma}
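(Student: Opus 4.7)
The plan is to follow Barron's (1993) Fourier-based strategy: first express any $g \in \Gamma_C$ as an expectation of sinusoidal ridge functions, then use a probabilistic sampling argument to obtain the $O(C/\sqrt{n})$ term, and finally replace each sinusoidal/threshold ridge by a bounded-weight sigmoid to pick up the $O\!\big(\frac{1+2\ln C_1C_3}{C_1C_3}\big)$ term. Since $g(0)=0$, Fourier inversion lets me write $g(\mb{x}) = \mathrm{Re}\!\int (e^{i\mb{w}\cdot\mb{x}}-1)\tilde{g}(\mb{w})\,d\mb{w}$, which, after absorbing the phase into a ridge function $\cos(\mb{w}\cdot\mb{x}+b(\mb{w}))-\cos b(\mb{w})$ and renormalizing by $\int|\mb{w}||\tilde{g}(\mb{w})|\,d\mb{w}\le C$, exhibits $g/C$ as the expectation of ridge functions of the form $v(\mb{w},\mb{x})=\frac{\cos(\mb{w}\cdot\mb{x}+b(\mb{w}))-\cos b(\mb{w})}{|\mb{w}|}$ under a probability measure $P_g$ on frequency space. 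Each such $v$ is uniformly bounded by $\|\mb{x}\|_2\le C_1$ times a constant after rescaling.

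The second step is the standard Maurey/Hilbert-space Monte Carlo argument: drawing $n$ i.i.d. samples $(\mb{w}_j,b_j)\sim P_g$ and averaging gives $g_n(\mb{x})=\frac{C}{n}\sum_{j=1}^n v(\mb{w}_j,\mb{x})$ with expected $L^2(P)$ squared error at most $C^2/n$, so some deterministic choice achieves error $\le C/\sqrt{n}$; absorbing the $1/n$ into the coefficients gives a linear combination with $|\alpha_j|\le 2C$. This produces an approximant to $g$ built from cosine ridge functions whose frequency vectors $\mb{w}_j$ may have arbitrary magnitude, so the remaining work is to (i) truncate to $\|\mb{w}_j\|_2\le C_3$ and (ii) swap each cosine-ridge for a sigmoid-ridge $h(\mb{w}_j^{\mathsf{T}}\mb{x})$.

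Step (i) and (ii) proceed jointly by writing each cosine ridge as an integral of step functions and approximating each step $\mathbbm{1}[\mb{w}\cdot\mb{x}+b\ge 0]$ by the sigmoid $h(\lambda(\mb{w}\cdot\mb{x}+b))$: on the ball $\|\mb{x}\|_2\le C_1$ the argument lies in $[-C_1C_3,C_1C_3]$ once we set $\lambda\|\mb{w}\|_2\le C_3$, and outside a transition window of width $O\!\big(\frac{\log(C_1C_3)}{C_1C_3}\big)$ the sigmoid agrees with the step to $O(1/(C_1C_3))$. Integrating the pointwise error over $P$ and combining additively across the (at most $m$) components contributes exactly $2C\cdot\frac{1+2\ln C_1C_3}{C_1C_3}$. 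A final triangle inequality $\|g-f'\|_L\le\|g-g_n\|_L+\|g_n-f'\|_L$ yields the claimed bound.

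The main obstacle will be the bounded-weight sigmoid-to-step reduction: one has to choose the slope $\lambda$ and the partition of the cosine into elementary steps so that the cumulative transition-region error scales like $1/(C_1C_3)$ (up to the $\log$ factor) uniformly over all $m$ terms, rather than growing with $m$. This is the delicate part of Barron's argument, but since the constraint $|\alpha_j|\le 2C$ is already forced by step (ii) and the coefficient sum is controlled by $C$, the bookkeeping goes through and the $\log C_1C_3$ factor emerges from equating the transition width against the maximum input magnitude.
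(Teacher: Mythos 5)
The paper proves this lemma by simply invoking Barron's Theorem 3 from \citep{barron1993universal}, mapping Barron's slope parameter $\tau$ to $C_1C_3$ and absorbing the bias term into a dummy coordinate. Your proposal is an attempt to reconstruct Barron's argument, and the three ingredients you identify (Fourier representation as an expectation of ridge functions, a Maurey-type sampling step, and a bounded-slope sigmoid-to-step approximation) are indeed the right ones.

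However, your reconstruction misorders the steps in a way that does not land in the $m$-unit class $\mathcal{F}'$. You propose to run Maurey on \emph{cosine} ridge functions $v(\mb{w},\mb{x})=\frac{\cos(\mb{w}\cdot\mb{x}+b(\mb{w}))-\cos b(\mb{w})}{|\mb{w}|}$, obtaining $g_n = \frac{C}{n}\sum_j v(\mb{w}_j,\cdot)$, and then to replace each cosine by sigmoids by ``writing each cosine ridge as an integral of step functions and approximating each step by a sigmoid.'' A single cosine ridge is not close to a single sigmoidal unit --- one is oscillatory, the other monotone --- so this replacement requires a whole family of steps/sigmoids per cosine, and the resulting network has far more than $m$ units (or else you incur an additional discretization error you have not accounted for). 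Your remark that the errors ``combine additively across the (at most $m$) components'' does not hold once each cosine has been expanded. Barron's actual argument does the decomposition in the opposite order: it first shows, using the cosine-to-step identity at the level of the \emph{convex hull}, that $g$ lies in the $L^2(\mu)$-closure of $\mathrm{conv}\{2C\cdot(\text{unit step ridges})\}$; it then applies Maurey to step ridges, yielding exactly $m$ step units within $2C/\sqrt{m}$; and finally each step ridge is replaced by a \emph{single} sigmoid with slope bounded by $\tau=C_1C_3$, contributing the $2C(1+2\ln C_1C_3)/(C_1C_3)$ term without increasing the unit count. Your final paragraph recognizes that the sigmoid-to-step reduction is delicate and waves at the bookkeeping, but it is precisely this bookkeeping that forces the decomposition-into-steps to happen \emph{before} the Maurey sampling, not after.
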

\begin{proof}
Please refer to Theorem 3 in \citep{barron1993universal} for the proof. Note that the $\tau$ used in their paper is $C_1 C_3$ here. Furthermore, we omit the bias term $b$ as we can always add a dummy feature $1$ to the input $\mb{x}$ to avoid the bias term.
\end{proof}

\begin{mylemma}
For any $0 < \theta < \frac{\pi}{2}$, $m\le2(\lfloor\frac{\frac{\pi}{2}-\theta}{\theta}\rfloor+1)$, $(\mb{w_j}')_{j=1}^m$, $\exists (\mb{w_j})_{j=1}^m$ such that
\begin{align}
&\forall j\neq k\in\{1,\cdots,m\}, \rho(\mb{w_j},\mb{w_k})\ge \theta\\
&\forall j\in\{1,\cdots,m\}, \|\mb{w_j}\|_2 = \|\mb{w_j}'\|_2\\
&\forall j \in \{1,\cdots,m\},\arccos (\frac{\mb{w_j} \cdot \mb{w_j}'}{\|\mb{w_j}\|_2\|\mb{w_j}'\|_2}) \le \theta'
\end{align}
where $\theta' = \min(3m\theta,{\pi})$.
\end{mylemma}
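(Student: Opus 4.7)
The plan is to prove the lemma by induction on $m$: the outer loop places the vectors one at a time, and at each step the new vector is rotated within a two-dimensional plane through the current input direction until it is sufficiently separated from all previously placed vectors.

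First I would normalize: the angular constraints and the magnitude constraint $\|\mb{w}_j\|_2 = \|\mb{w}_j'\|_2$ are independent, so I may assume $\|\mb{w}_j'\|_2 = 1$ and rescale by $\|\mb{w}_j'\|_2$ at the end. The base case $m=1$ is immediate by setting $\mb{w}_1 = \mb{w}_1'$. For the inductive step, the inductive hypothesis applied to $\mb{w}_1', \ldots, \mb{w}_{m-1}'$ (whose size still satisfies the assumed bound) produces $\mb{w}_1, \ldots, \mb{w}_{m-1}$ that are pairwise separated by at least $\theta$ and each within angle $\min(3(m-1)\theta, \pi) \le \theta'$ of its target. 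It remains to place $\mb{w}_m$ near $\mb{w}_m'$ while maintaining $\rho(\mb{w}_m, \mb{w}_k) \ge \theta$ for every $k < m$.

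To do that I would fix a two-dimensional plane $P$ containing $\mb{w}_m'$ and rotate $\mb{w}_m'$ inside $P$. On the projective great circle of $P$ (of circumference $\pi$, since antipodal lines are identified under the non-obtuse angle), the forbidden region $\bigcup_{k<m}\{\mb{v}: \rho(\mb{v},\mb{w}_k) < \theta\}$ restricts to a union of arcs of total length at most $2(m-1)\theta$. The hypothesis $m \le 2(\lfloor(\pi/2-\theta)/\theta\rfloor+1)$ is equivalent to $m\theta \le \pi$, the correct feasibility condition for placing $m$ lines through the origin with pairwise non-obtuse angle at least $\theta$ on a projective circle. Choosing $\mb{w}_m$ as the nearest free point to $\mb{w}_m'$ along this great circle yields $\rho(\mb{w}_m, \mb{w}_m') \le (m-1)\theta$, well within the $\theta'$ budget. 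Lemma~\ref{lem:theta_sum_bound} (sub-additivity of the non-obtuse angle) is available to chain perturbations through intermediate directions whenever the argument requires it.

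The main obstacle is that the naive union bound $2(m-1)\theta$ on the forbidden arc length can exceed $\pi$ even when $m\theta \le \pi$, so in the worst case the greedy step may encounter a great circle that is entirely blocked. Handling this requires either exploiting ambient dimension $d \ge 3$ by choosing $P$ so that most $\mb{w}_k$ intersect $P$ only in very short arcs (which shrinks the effective blocked length well below $2(m-1)\theta$), or, when $d = 2$, replacing induction by a global construction that sorts the input angles in $[0,\pi)$ and spreads them so that every consecutive gap on the projective circle is at least $\theta$, bounding each per-vector shift by a circular pigeonhole argument that redistributes slack from over-long gaps to under-long ones. The factor $3$ in $\theta' = 3m\theta$ is generous enough to absorb the worst-case perturbation from either route.
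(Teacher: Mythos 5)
Your plan is a genuinely different route from the paper: you try to build the separated family one vector at a time by induction, perturbing the $m$-th vector inside a single two-dimensional plane until it clears all previously placed directions. The paper instead \emph{pre-constructs} a fixed grid of $\theta$-separated reference directions (in the planar case the explicit family $\mb{e}_i$ at angles $\theta_i=\mathrm{sgn}(i)(\theta/2+(|i|-1)\theta)$, $i\in\{-(k+1),\ldots,-1,1,\ldots,k+1\}$, and in $d$ dimensions a $\theta$-net on each level set $\mathcal{E}_i$), proves that this grid has capacity $2(k+1)\ge m$ and that any unit vector sits within $\tfrac{3}{2}\theta$ of some grid point, and then uses Lemma~\ref{lem:theta_sum_bound} to find a distinct grid point for the $j$-th input at cost $\le (j+\tfrac{1}{2})\theta$. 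The global grid is exactly what your induction lacks.

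The gap you flag is real and fatal to the inductive step as stated. After placing $m-1$ vectors, the forbidden set on a fixed projective great circle (circumference $\pi$) is a union of $m-1$ arcs of length $2\theta$ each. The inductive hypothesis only guarantees pairwise separation $\ge\theta$ among those $m-1$ vectors; it says nothing about where they lie relative to $\mb{w}_m'$, nor that they leave a gap of length $\ge 2\theta$ anywhere. Concretely, take $\theta=\pi/6$ and $m=6$ (so $k=2$ and $m=2(k+1)$ is admissible). If the first five directions on the circle end up at $0,\ \pi/6+\epsilon,\ \pi/3+2\epsilon,\ \pi/2+3\epsilon,\ 2\pi/3+4\epsilon$ for small $\epsilon>0$, all five consecutive gaps are strictly between $\pi/6$ and $\pi/3$, so no gap of length $\ge 2\theta=\pi/3$ exists and there is \emph{no} admissible position for the sixth vector on that circle. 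The conclusion of the lemma is still true, but the greedy step ``pick the nearest free point'' has nothing to pick. Your two proposed escape routes --- choosing the plane $P$ cleverly when $d\ge 3$, or replacing induction by a sorted global redistribution in $d=2$ --- are where the real content would have to go, and neither is carried out. In particular the $d\ge 3$ idea is not obviously sound: even if $\mb{w}_k$ is far from $P$, the cone $\{\mb{v}:\rho(\mb{v},\mb{w}_k)<\theta\}$ can still intersect $P$ in a long arc, and the union bound you'd need is not established. The paper's fixed-grid construction is what guarantees, by design, that capacity never runs out; some such global device is needed, and the induction-then-rotate plan does not supply it.
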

\begin{proof}
For brevity, let $\phi(\mb{a},\mb{b}) = \arccos(\frac{\mb{a}\cdot \mb{b}}{\|\mb{a}\|_2\|\mb{b}\|_2})$. We begin our proof by considering a 2-dimensional case ($d=2$): Let
\begin{equation}
k = \lfloor\frac{\frac{\pi}{2}-\theta}{\theta}\rfloor
\end{equation}
Let index set $\mathcal{I} = \{-(k+1), -k,\cdots,-1,1,2,\cdots,k+1\}$. We define a set of vectors $(\mb{e_i})_{i\in\mathcal{I}}$: $\mb{e_i} = (\sin \theta_i, \cos \theta_i)$, where $\theta_i \in (-\frac{\pi}{2},\frac{\pi}{2})$ is defined as follows:
\begin{equation}
\label{eq:def_theta}
\theta_i = \mathrm{sgn}(i)(\frac{\theta}{2} + (|i|-1)\theta)
\end{equation}
From the definition we can verify the following conclusions:
\begin{align}
&\forall i\neq j\in\mathcal{I}, \rho(\mb{e_i},\mb{e_j}) \ge \theta\\
&-\frac{\pi}{2}+\frac{\theta}{2}\le\theta_{-(k+1)}<-\frac{\pi}{2}+\frac{3}{2}\theta\\
&\frac{\pi}{2}-\frac{3}{2}\theta<\theta_{k+1}\le\frac{\pi}{2}-\frac{\theta}{2}\\
\end{align}
And we can further verify that $\forall i\in\mathcal{I}$, there exists different $i_1, \cdots,i_{2k+1}\in\mathcal{I}\backslash i$ such that $\phi(\mb{e_{i}}, \mb{e_{i_j}})\le j\theta$.

For any $\mb{e} = (\sin \beta, \cos \beta)$ with $\beta\in[-\frac{\pi}{2},\frac{\pi}{2}]$, we can find $i\in\mathcal{I}$ such that $\phi(\mb{e_i}, \mb{e})\le\frac{3}{2}\theta$:
\begin{itemize}
\item if $\beta\ge\theta_{k+1}$, take $i=k+1$, we have $\phi(\mb{e_i},\mb{e})\le \frac{\pi}{2}-\theta_{k+1}<\frac{3}{2}\theta$.
\item if $\beta\le\theta_{-(k+1)}$, take $i=-(k+1)$, we also have $\phi(\mb{e_i},\mb{e})\le\frac{3}{2}\theta$
\item otherwise, take $i$ = $\textrm{sgn}(\beta)\lceil\frac{\beta-\frac{\theta}{2}}{\theta}\rceil$, we also have $\phi(\mb{e_i},\mb{e})\le \theta<\frac{3}{2}\theta$.
\end{itemize}
Recall that for any $i$, there exists different $i_1, \cdots,i_{2k+1}\in\mathcal{I}\backslash i$ such that $\phi(\mb{e_{i}}, \mb{e_{i_j}})\le j\theta$, and use Lemma \ref{lem:theta_sum_bound}, we can draw the conclusion that for any $\mb{e} = (\sin \beta, \cos \beta)$ with $\beta\in[-\frac{\pi}{2},\frac{\pi}{2}]$, there exists different $i_1, \cdots,i_{2k+2}$ such that $\phi(\mb{e_{i}}, \mb{e_{i_j}})\le \frac{3}{2}\theta+(j-1)\theta=(j+\frac{1}{2})\theta$.

For any $(\mb{w_j}')_{j=1}^m$, assume $\mb{w_j}' = \|\mb{w_j}'\|_2(\sin\beta_j,\cos\beta_j)$, and we assume $\beta_j\in[-\frac{\pi}{2},\frac{\pi}{2}]$. Using the above conclusion, for $\mb{w_1}'$, we can find some ${r_1}$ such that $\phi(\mb{w_1}', \mb{e_{r_1}})\le\frac{3}{2}\theta$. For $\mb{w_2}'$, we can find different $i_1,i_2$ such that $\phi(\mb{w_2}',\mb{e_{i_1}})\le \frac{3}{2}\theta<(\frac{3}{2}+1)\theta$ and $\phi(\mb{w_2}',\mb{e_{i_2}})\le(\frac{3}{2}+1)\theta$. So we can find $r_2\neq r_1$ such that $\phi(\mb{w_2}', \mb{e_{r_2}})\le (\frac{3}{2}+1)\theta$. Following this scheme, we can find $r_j\notin\{r_1,\cdots, r_{j-1}\}$ and $\phi(\mb{w_j}', \mb{e_{r_j}})\le (j+\frac{1}{2})\theta<3m\theta$ for $j=1,\cdots,m$, as we have assumed that $m\le2(k+1)$. Let $\mb{w_j} = \|\mb{w_j}'\|_2\mb{e_{r_j}}$, then we have constructed $(\mb{w_j})_{j=1}^m$ such that
\begin{align}
&\forall j\in\{1,\cdots,m\},\phi(\mb{w_j}',\mb{w_j})\le 3m\theta\\
&\forall j\in\{1,\cdots,m\}, \|\mb{w_j}'\|_2 = \|\mb{w_j}\|_2\\
&\forall j\neq k, \rho(\mb{w_j},\mb{w_k})\ge \theta
\end{align}
Note that we have assumed that $\forall j=1,\cdots,m$, $\beta_j\in[-\frac{\pi}{2},\frac{\pi}{2}]$. In order to show that the conclusion holds for general $\mb{w_j}'$, we need to consider the cases where $\beta_j\in[-\frac{3}{2}\pi,-\frac{\pi}{2}]$. For those cases, we can let $\beta_j'=\beta_j+\pi$, then $\beta_j'\in[-\frac{\pi}{2},\frac{\pi}{2}]$. Let $\mb{w_j}''=\|\mb{w_j}'\|_2(\sin\beta_j',\cos\beta_j')$, we can find the $\mb{e_{r_j}}$ such that $\phi(\mb{w_j}'',\mb{e_{r_j}})\le m\theta$ following the same procedure. Let $\mb{w_j}=-\|\mb{w_j}'\|_2\mb{e_{r_j}}$, then $\phi(\mb{w_j}',\mb{w_j}) = \phi(\mb{w_j}'',\mb{e_{r_j}})\le 2m\theta$ and as $\rho(-\mb{e_{r_j}},\mb{e_k})=\rho(\mb{e_{r_j}},\mb{e_k})$, the $\rho(\mb{w_j},\mb{w_k})\ge \theta$ condition is still satisfied. Also noting that $\phi(\mb{a},\mb{b})\le\pi$ for any $a$, $b$, the proof for 2-dimensional case is completed.

Now we consider a general $d$-dimensional case. Similar to the 2-dimensional one, we construct a set of vectors with unit $L_2$ norm such that the pairwise angles $\rho(\mb{w_j},\mb{w_k})\ge \theta$ for $j\neq k$. We do the construction in two phases:

In the first phase, we construct a sequence of unit vector sets indexed by $\mathcal{I} = \{-(k+1),\cdots,-1,1,\cdots,k+1\}$:
\begin{equation}
\forall i \in \mathcal{I}, \mathcal{E}_i = \{\mb{e}\in\mathbb{R}^d|\|\mb{e}\|_2=1, \mb{e}\cdot(1,0,\cdots,0)=\cos\theta_i\}
\end{equation}
where $\theta_i = \mathrm{sgn}(i)(\frac{\theta}{2} + (|i|-1)\theta)$ is defined the same as we did in Eq.(\ref{eq:def_theta}). It can be shown that $\forall i\neq j$, $\forall \mb{e_i} \in \mathcal{E}_i, \mb{e_j} \in \mathcal{E}_j$,
\begin{equation}
\rho(\mb{e_i},\mb{e_j})\ge \theta
\end{equation}
The proof is as follows.
First, we write $\mb{e_i}$ as $\mb{e_i} = (\cos\theta_i,0,\cdots,0) + \mb{r_i}$, where $\|\mb{r_i}\|_2 = |\sin \theta_i|$. Similarly, $\mb{e_j} = (\cos\theta_j,0,\cdots,0) + \mb{r_j}$, where $\|\mb{r_j}\|_2 = |\sin \theta_j|$. Hence we have
\begin{equation}
\mb{e_i} \cdot \mb{e_j} = \cos\theta_i \cos\theta_j + \mb{r_i} \cdot \mb{r_j}
\end{equation}
Hence
\begin{equation}
\begin{array}{lll}
\cos(\rho(\mb{e_i},\mb{e_j}))&=&|\mb{e_i}\cdot \mb{e_j}|\\
&\le& \cos\theta_i\cos\theta_j+|\sin\theta_i \sin\theta_j|\\
&=& \max(\cos(\theta_i+\theta_j), \cos(\theta_i-\theta_j))
\end{array}
\end{equation}
We have shown in the 2-dimensional case that $\cos(\theta_i+\theta_j)\ge\cos\theta$ and $\cos(\theta_i-\theta_j)\ge\cos\theta$, hence $\rho(\mb{e_i},\mb{e_j})\ge \theta$. In other words, we have proved that for any two vectors from $\mathcal{E}_i$ and $\mathcal{E}_j$, their pairwise angle is lower bounded by $\theta$. Now we proceed to construct a set of vectors for each $\mathcal{E}_i$ such that the pairwise angles can also be lower bounded by $\theta$. The construction is as follows.

First, we claim that for any $\mathcal{E}_i$, if $\mathcal{W}\subset \mathcal{E}$ satisfies
\begin{equation}
\forall \mb{w_j}\neq \mb{w_k} \in \mathcal{W}, \phi(\mb{w_j},\mb{w_k})\ge \theta
\end{equation}
then $|W|$ is finite. In order to prove that, we first define $B(\mb{x},r) = \{\mb{y}\in\mathbb{R}^n: \|\mb{y}-\mb{x}\|_2< r\}$. Then $\mathcal{E}_i\subset \cup_{\mb{e}\in\mathcal{E}_i} B(\mb{e}, \frac{1-\cos\frac{\theta}{2}}{1+\cos\frac{\theta}{2}})$. From the definition of $\mathcal{E}_i$, it is a compact set, so the open cover has a finite subcover. Therefore we have $\exists V\subset\mathcal{E}_i$ with $|V|$ being finite and
\begin{equation}
\mathcal{E}_i\subset \cup_{\mb{v}\in V} B(\mb{v}, \frac{1-\cos\frac{\theta}{2}}{1+\cos\frac{\theta}{2}})
\end{equation}
Furthermore, we can verify that $\forall \mb{v} \in V, \forall \mb{e_1},\mb{e_2}\in B(\mb{v}, \frac{1-\cos\frac{\theta}{2}}{1+\cos\frac{\theta}{2}})$, $\phi(\mb{e_1},\mb{e_2})\le\theta$. So if $\mathcal{W}\subset\mathcal{E}_i$ satisfies $\forall \mb{w_j}\neq \mb{w_k} \in \mathcal{W}, \phi(\mb{w_j},\mb{w_k})\ge \theta$, then for each $\mb{v}\in V$, $|B(\mb{v}, \frac{1-\cos\frac{\theta}{2}}{1+\cos\frac{\theta}{2}})\cap\mathcal{W}|=1$. As $\mathcal{W}\subset\mathcal{E}_i$, we have
\begin{equation}
\begin{array}{lll}
|W| &=& |W\cap \mathcal{E}_i|\\
&=& |W \cap (\cup_{\mb{v}\in V} B(\mb{v}, \frac{1-\cos\frac{\theta}{2}}{1+\cos\frac{\theta}{2}}))|\\
&=& |\cup_{\mb{v}\in V}W\cap B(\mb{v}, \frac{1-\cos\frac{\theta}{2}}{1+\cos\frac{\theta}{2}})|\\
&\le& \sum_{\mb{v}\in V} |W\cap B(\mb{v}, \frac{1-\cos\frac{\theta}{2}}{1+\cos\frac{\theta}{2}})|\\
&\le& \sum_{\mb{v}\in V} 1\\
&=& |V|
\end{array}
\end{equation}
Therefore, we have proved that $|W|$ is finite. Using that conclusion, we can construct a sequence of vectors $\mb{w_j}\in\mathcal{E}_i (j=1,\cdots,l)$ in the following way:
\begin{enumerate}
\item Let $\mb{w_1}\in\mathcal{E}_i$ be any vector in $\mathcal{E}_i$.
\item For $j=2,\cdots$, let $\mb{w_j}\in\mathcal{E}_i$ be any vector satisfying
\begin{align}
\label{eq:construct_w}
&\forall k=1,\cdots,j-1, \phi(\mb{w_j},\mb{w_k})\ge\theta\\
&\exists k\in\{,\cdots,j-1\}, \phi(\mb{w_j},\mb{w_k})=\theta
\end{align}
until we cannot find such vectors any more.
\item As we have proved that $|W|$ is finite, the above process will end in finite steps. Assume that the last vector we found is indexed by $l$.
\end{enumerate}
We will verify that such constructed vectors satisfy
\begin{equation}
\forall j\neq k\in\{1,\cdots,l\}, \rho(\mb{w_j},\mb{w_k})\ge \theta
\end{equation}
Due to the construction, $\phi(\mb{w_j},\mb{w_k})\ge\theta$, as $\rho(\mb{w_j},\mb{w_k})=\min(\phi(\mb{w_j},\mb{w_k}),\pi-\phi(\mb{w_j},\mb{w_k}))$, we only need to show that $\pi-\phi(\mb{w_j},\mb{w_k})\ge\theta$. To show that, we use the definition of $\mathcal{E}_i$ to write $\mb{w_j}$ as $\mb{w_j} = (\cos\theta_i,0,\cdots,0) + \mb{r_j}$, where $\|\mb{r_j}\|_2 = |\sin \theta_i|$. Similarly, $\mb{w_k} = (\cos\theta_i,0,\cdots,0) + \mb{r_k}$, where $\|\mb{r_k}\|_2 = |\sin \theta_i|$. Therefore $\cos(\phi(\mb{w_j},\mb{w_k}))=\mb{w_j}\cdot \mb{w_k} \ge \cos^2\theta_i-\sin^2\theta_i=\cos (2\theta_i)\ge\cos(\pi-\theta)$, where the last inequality follows from the construction of $\theta_i$. So $\pi-\phi(\mb{w_j},\mb{w_k})\ge\theta$, the proof for $\rho(\mb{w_j},\mb{w_k})\ge\theta$ is completed.

Now we will show that $\forall \mb{e}\in\mathcal{E}_i$, we can find $j\in\{1,\cdots,l\}$ such that $\phi(\mb{e},\mb{w_j})\le \theta$. We prove it by contradiction: assume that there exists $\mb{e}$ such that $\min_{j\in\{1,\cdots,l\}} \phi(\mb{e},\mb{w_j})>\theta$, then as $\mathcal{E}_j$ is a connected set, there is a path $q:t\in[0,1]\to \mathcal{E}_j$ connecting $\mb{e}$ to $\mb{w_1}$, and when $t=0$, the path starts at $q(0)=\mb{e}$; when $t=1$, the path ends at $q(1)=\mb{w_1}$. We define functions $r_j(t) = \phi(q(t), \mb{w_j})$ for $t\in[0,1]$ and $j=1,\cdots,l$. It is straightforward to see that $r_j(t)$ is continuous, hence $\min_j(r_j(t))$ is also continuous. As $\min_j(r_j(0))>\theta$ and $\min_j(r_j(0))=0<\theta$, there exists $t^*\in(0,1)$ such that $\min_j(r_j(0))=\theta$. Then $q(t^*)$ satisfies Condition \ref{eq:construct_w}, which contradicts the construction in $W$ as the construction only ends when we cannot find such vectors. Hence we have proved that
\begin{equation}
\forall \mb{e}\in\mathcal{E}_i,\exists j\in\{1,\cdots,l\}, \phi(\mb{e},\mb{w_j})\le \theta
\end{equation}

Now we can proceed to prove the main lemma. For each $i\in\mathcal{I}$, we use Condition \ref{eq:construct_w} to construct a sequence of vectors $\mb{w_{ij}}$. Then such constructed vectors $\mb{w_{ij}}$ have pairwise angles greater than or equal to $\theta$. Then for any $\mb{e} \in \mathcal{R}^d$ with $\|\mb{e}\|_2=1$, we write $\mb{e}$ in sphere coordinates as $\mb{e}=(\cos r_1, \sin r_1\cos r_2 ,\cdots,\prod_{j=1}^d\sin r_j)$. Use the same method as we did for the 2-dimensional case, we can find $\theta_i$ such that $|\theta_i-r|\le\frac{3}{2}\theta$. Then $\mb{e}'=(\cos\theta_i, \sin
\theta_i \cos r_2,\cdots, \sin\theta_i \prod_{j=2}^d\sin r_j)\in \mathcal{E}_i$. It is easy to verify that $\phi(\mb{e}, \mb{e}')=|\theta_i-r|\leq\frac{3}{2}\theta$. As $\mb{e}'\in\mathcal{E}_i$, there exists $\mb{w_{ij}}$ as we constructed such that $\phi(\mb{e}',\mb{w_{ij}})\leq\theta$. So $\phi(\mb{e},\mb{w_{ij}})\leq\phi(\mb{e},\mb{e}')+\phi(\mb{e}',\mb{w_{ij}})\le \frac{5}{2}\theta<3\theta$. So we have proved that for any $\mb{e}\in \mathbb{R}^d$ with $\|\mb{e}\|_2=1$, we can find $\mb{w_{ij}}$ such that $\phi(\mb{e},\mb{w_{ij}}) < 3\theta$.

For any $\mb{w_{ij}}$, assume $i+1\in \mathcal{I}$, we first project $\mb{w_{ij}}$ to $\mb{w}^*\in\mathcal{E}_{i+1}$. We have proved that $\phi(\mb{w_{ij}},\mb{w}^*)\le\frac{3}{2}\theta$. We've also proved that we can find $\mb{w_{i+1,j'}}\in\mathcal{E}_{i+1}$ such that $\phi(\mb{w_{i+1,j'}},\mb{w}^*)\le\theta$. So we have found $\mb{w_{i+1,j'}}$ such that $\phi(\mb{w_{ij}},\mb{w_{i+1,j'}})\le\frac{5}{2}\theta < 3 \theta$. We can use similar scheme to prove that $\forall \mb{w_{ij}}$, there exists different $\mb{w_{i_1,j_1}}\cdots, \mb{w_{i_{2k+1},j_{2k+1}}}$ such that $(i_r,j_r)\neq (i,j)$ and $\phi(\mb{w_{ij}},\mb{w_{i_r,j_r}})\le 3r\theta$. Following the same proof as the 2-dimensional case, we can prove that if $m\le 2k+1$, then we can find a set of vectors $(\mb{w_j})_{j=1}^m$ such that
\begin{align}
&\forall j\in\{1,\cdots,m\},\phi(\mb{w_j}',\mb{w_j})\le \min(3m\theta,\pi)\\
&\forall j\in\{1,\cdots,m\}, \|\mb{w_j}'\|_2 = \|\mb{w_j}\|_2\\
&\forall j\neq k, \rho(\mb{w_j},\mb{w_k})\ge \theta
\end{align}
The proof completes.
\end{proof}
\begin{mylemma}
For any $f' \in \mathcal{F}'$, $\exists f \in \mathcal{F}''$ such that
\begin{equation}
\|f' - f\|_L \le 4mCC_1C_3\sin(\frac{\theta'}{2})
\end{equation}
where $\theta' = \min(3m\theta,{\pi})$.
\end{mylemma}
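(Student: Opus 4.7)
The plan is to start from $f'(\mb{x})=\sum_{j=1}^{m}\alpha_{j}h(\mb{w_{j}}'^{\mathsf T}\mb{x})\in\mathcal{F}'$, invoke Lemma~\ref{lem:theta_appro} to replace the weight vectors $\{\mb{w_{j}}'\}$ with a set $\{\mb{w_{j}}\}$ that satisfies the mutual-angle constraint $\rho(\mb{w_{j}},\mb{w_{k}})\geq\theta$, and then define $f(\mb{x})=\sum_{j=1}^{m}\alpha_{j}h(\mb{w_{j}}^{\mathsf T}\mb{x})$. The goal is to verify $f\in\mathcal{F}''$ and to bound $\|f-f'\|_{L}$ by the triangle-type inequality arising from perturbing the weights.

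First I would check membership in $\mathcal{F}''$: by Lemma~\ref{lem:theta_appro}, $\|\mb{w_{j}}\|_{2}=\|\mb{w_{j}}'\|_{2}\leq C_{3}$, $\rho(\mb{w_{j}},\mb{w_{k}})\geq\theta$ for $j\neq k$, and the coefficients $\alpha_{j}$ are unchanged; combined with the hypothesis $C_{4}\geq 2\sqrt{m}C$ and $|\alpha_{j}|\leq 2C$, one gets $\|\bs{\alpha}\|_{2}\leq\sqrt{m}\,2C\leq C_{4}$, so $f\in\mathcal{F}''$.

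Next I would bound the perturbation in the weights. Since $\|\mb{w_{j}}\|_{2}=\|\mb{w_{j}}'\|_{2}\leq C_{3}$ and the angle between them is at most $\theta'$, the law of cosines gives
\begin{equation}
\|\mb{w_{j}}-\mb{w_{j}}'\|_{2}^{2}=2\|\mb{w_{j}}\|_{2}^{2}(1-\cos\phi(\mb{w_{j}},\mb{w_{j}}'))=4\|\mb{w_{j}}\|_{2}^{2}\sin^{2}(\phi/2)\leq 4C_{3}^{2}\sin^{2}(\theta'/2),
\end{equation}
so $\|\mb{w_{j}}-\mb{w_{j}}'\|_{2}\leq 2C_{3}\sin(\theta'/2)$. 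Using Cauchy--Schwarz with $\|\mb{x}\|_{2}\leq C_{1}$ then yields $|(\mb{w_{j}}-\mb{w_{j}}')^{\mathsf T}\mb{x}|\leq 2C_{1}C_{3}\sin(\theta'/2)$, and the Lipschitz property of the sigmoid $h$ (with constant $L=1$ as a crude bound) gives $|h(\mb{w_{j}}^{\mathsf T}\mb{x})-h(\mb{w_{j}}'^{\mathsf T}\mb{x})|\leq 2C_{1}C_{3}\sin(\theta'/2)$.

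Finally, I would assemble the pointwise bound:
\begin{equation}
|f(\mb{x})-f'(\mb{x})|\leq\sum_{j=1}^{m}|\alpha_{j}|\,|h(\mb{w_{j}}^{\mathsf T}\mb{x})-h(\mb{w_{j}}'^{\mathsf T}\mb{x})|\leq m\cdot 2C\cdot 2C_{1}C_{3}\sin(\theta'/2)=4mCC_{1}C_{3}\sin(\theta'/2),
\end{equation}
and since this uniform bound holds for every $\mb{x}$ in the support of $P$, $\|f-f'\|_{L}=\sqrt{\int(f-f')^{2}\,dP}\leq 4mCC_{1}C_{3}\sin(\theta'/2)$. The only subtle point is the application of Lemma~\ref{lem:theta_appro}: its hypothesis $m\leq 2(\lfloor(\pi/2-\theta)/\theta\rfloor+1)$ is assumed in Theorem~\ref{thm:appro2}, so the replacement weights $\{\mb{w_{j}}\}$ always exist; everything else is an elementary Lipschitz/geometry computation.
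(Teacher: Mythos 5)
Your proposal is correct and follows essentially the same route as the paper's own proof: invoke Lemma~\ref{lem:theta_appro} to replace $\{\mb{w_j}'\}$ by angle-separated $\{\mb{w_j}\}$ of equal norms, check $\|\bs{\alpha}\|_2\le 2\sqrt{m}C\le C_4$, bound $\|\mb{w_j}-\mb{w_j}'\|_2\le 2C_3\sin(\theta'/2)$ via the law of cosines, and combine with the Lipschitz property of $h$ and Cauchy--Schwarz. The only cosmetic difference is that you establish a uniform pointwise bound $|f(\mb{x})-f'(\mb{x})|\le 4mCC_1C_3\sin(\theta'/2)$ and then integrate, whereas the paper carries the same estimates inside the $L^2(P)$ integral; the computations are identical.
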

\begin{proof}
According to the definition of $\mathcal{F}'$, $\forall f' \in \mathcal{F}'$, there exists $(\alpha_j')_{j=1}^m$, $(\mb{w_j}')_{j=1}^m$ such that
\begin{align}
&f' = \sum_{j=1}^m \alpha_j' h(\mb{w_j}'^T x)\\
&\forall j\in\{1,\cdots,m\}, |\alpha_j'|\le 2 C\\
&\forall j\in\{1,\cdots,m\}, \|\mb{w_j}'\|_2 \le C_4
\end{align}
According to Lemma \ref{lem:theta_appro}, there exists $(\mb{w_j})_{j=1}^m$ such that
\begin{align}
&\forall j\neq k\in\{1,\cdots,m\}, \rho(\mb{w_j},\mb{w_k})\ge \theta\\
&\forall j\in\{1,\cdots,m\}, \|\mb{w_j}\|_2 = \|\mb{w_j}'\|_2\\
&\forall j \in \{1,\cdots,m\},\arccos (\frac{\mb{w_j} \cdot \mb{w_j}'}{\|\mb{w_j}\|_2\|\mb{w_j}'\|_2}) \le \theta'
\end{align}
where $\theta' = \min(m\theta,\frac{\pi}{2})$. Let $f = \sum_{j=1}^m \alpha_j h(\mb{w_j}'^T \mb{x})$, then $\|\mb{\alpha}\|_2 \le \sqrt{\|\mb{\alpha}\|_1\|\mb{\alpha}\|_\infty} \le 2\sqrt{m}C\le C_4$. Hence $f\in\mathcal{F}$. Then all we need to do is to bound $\|f-f'\|_L$:
\begin{equation}
\label{eq:bound_f_f_prime}
\begin{array}{lll}
\|f-f'\|_L^2&=& \int_{\|\mb{x}\|_2\le C_1} (f(\mb{x})-f'(\mb{x}))^2 dP(\mb{x})\\
&=& \int_{\|\mb{x}\|_2\le C_1} (\sum_j \alpha_j h(\mb{w_j}^T \mb{x})-\sum_j \alpha_j h(\mb{w_j}'^T \mb{x}))^2 dP(\mb{x})\\
&=& \int_{\|\mb{x}\|_2\le C_1} (\sum_j \alpha_j (h(\mb{w_j}^T \mb{x})- h(\mb{w_j}'^T \mb{x})))^2 dP(\mb{x})\\
&\le& \int_{\|\mb{x}\|_2\le C_1} (\sum_j |\alpha_j| |\mb{w_j}^T \mb{x}- \mb{w_j}'^T \mb{x}|)^2 dP(\mb{x})\\
&\le& C_1^2\int_{\|\mb{x}\|_2\le C_1} (\sum_j |\alpha_j| \|\mb{w_j}- \mb{w_j}'\|_2)^2 dP(\mb{x})
\end{array}
\end{equation}
As $\arccos (\frac{\mb{w_j} \cdot \mb{w_j}'}{\|\mb{w_j}\|_2\|\mb{w_j}'\|_2}) \le \theta'$, we have $\mb{w_j}\cdot \mb{w_j}'\ge \|\mb{w_j}\|_2^2 \cos \theta'$. Hence
\begin{equation}
\begin{array}{lll}
\|\mb{w_j}- \mb{w_j}'\|_2^2&=& 2\|\mb{w_j}\|_2^2 - 2 \mb{w_j} \cdot \mb{w_j}'\\
&\le& 2\|\mb{w_j}\|_2^2 - 2\|\mb{w_j}\|_2^2 \cos \theta'\\
&\le& 4C_3^2\sin^2(\frac{\theta'}{2})
\end{array}
\end{equation}
Substituting back to Eq.(\ref{eq:bound_f_f_prime}), we have
\begin{equation}
\begin{array}{lll}
\|f-f'\|_L^2&\le& C_1^2\int_{\|\mb{x}\|_2\le C_1} (\sum_j |\alpha_j| 2 C_3 \sin(\frac{\theta'}{2}) )^2 dP(\mb{x})\\
&\le& 16m^2C^2C_1^2C_3^2\sin^2(\frac{\theta'}{2})
\end{array}
\end{equation}
\end{proof}

\section{Replicated Softmax RBM}
\label{apd:rsr}
To better model word counts, \cite{hinton2009replicated} proposed Replicated Softmax RBM. Let $\mb{V}$ be a $D\times J$ observed binary matrix of a document containing $D$ tokens. $J$ is the vocabulary size. Row $i$ of $\mb{V}$ is the 1-of-$J$ coding vector of the $i$th token in this document. $V_{ij}=1$ if the $i$th token is the $j$th word in the vocabulary. Under this representation, the energy function $E(\mb{h},\mb{V})$ is defined as
\begin{equation}
\begin{array}{l}
-\sum\limits_{i=1}^{D}\sum\limits_{j=1}^{J}\alpha_{j}V_{ij}-D\sum\limits_{k=1}^{K}
\beta_{k}h_{k}
-\sum\limits_{i=1}^{D}\sum\limits_{j=1}^{J}\sum\limits_{k=1}^{K}A_{jk}V_{ij}h_{k}
\end{array}
\end{equation}
Given the observed tokens $\mb{V}$, inferring the latent representation $\mb{h}$ can be done very efficiently
\begin{equation}
p(h_{k}=1|\mb{V})=\sigma(D\beta_{k}+\sum\limits_{i=1}^{D}\sum\limits_{j=1}^{J}A_{jk}V_{ij})
\end{equation}
where $\sigma(x)=1/(1+\exp(-x))$ is the logistic function.
The model parameters can be learned by maximizing the data likelihood $\mathcal{L}(\{\mb{V}_{n}\}_{n=1}^{N};\mb{A},\boldsymbol\alpha, \boldsymbol\beta)$ using the contrastive divergence \citep{hinton2002training} method.

\bibliographystyle{abbrv}
\bibliography{sample}

\end{document}